\providecommand{\tabularnewline}{\\}
\numberwithin{equation}{section}
\numberwithin{figure}{section}
\theoremstyle{plain}
\newtheorem{thm}{\protect\theoremname}
  \theoremstyle{definition}
  \newtheorem{defn}[thm]{\protect\definitionname}
  \theoremstyle{plain}
  \newtheorem{lem}[thm]{\protect\lemmaname}
  \theoremstyle{plain}
  \newtheorem{cor}[thm]{\protect\corollaryname}
  \theoremstyle{plain}
  \newtheorem{fact}[thm]{\protect\factname}
\author{Daniel Soudry, Elad Hoffer \\
   Department of Electrical Engineering \\ 
   Technion \\ 
   Haifa, 320003, Israel \\
   \texttt{{daniel.soudry,elad.hoffer}@gmail.com}
}
\newcommand*{\QEDA}{\hfill\ensuremath{\blacksquare}}%
  \providecommand{\corollaryname}{Corollary}
  \providecommand{\definitionname}{Definition}
  \providecommand{\factname}{Fact}
  \providecommand{\lemmaname}{Lemma}
\providecommand{\theoremname}{Theorem}
\begin{document}
\global\long\def\E{\mathbb{E}}
\global\long\def\wb{{\mathbf{w}}}
\global\long\def\eps{{\varepsilon}}
\global\long\def\dS{{d_{S}}}

\title{Exponentially vanishing sub-optimal local minima in multilayer neural
networks }
\maketitle
\begin{abstract}
\textbf{Background: }Statistical mechanics results (Dauphin et al.
(2014); Choromanska et al. (2015)) suggest that local minima with
high error are exponentially rare in high dimensions. However, to
prove low error guarantees for Multilayer Neural Networks (MNNs),
previous works so far required either a heavily modified MNN model
or training method, strong assumptions on the labels (\emph{e.g.},
``near'' linear separability), or an unrealistically wide hidden
layer with $\Omega\left(N\right)$ units. 

\textbf{Results: }We examine a MNN with one hidden layer of piecewise
linear units, a single output, and a quadratic loss. We prove that,
with high probability in the limit of $N\rightarrow\infty$ datapoints,
the volume of differentiable regions of the empiric loss containing
sub-optimal differentiable local minima is exponentially vanishing
in comparison with the same volume of global minima, given standard
normal input of dimension $d_{0}=\tilde{\Omega}\left(\sqrt{N}\right)$,
and a more realistic number of $d_{1}=\tilde{\Omega}\left(N/d_{0}\right)$
hidden units. We demonstrate our results numerically: for example,
$0\%$ binary classification training error on CIFAR with only $N/d_{0}\approx16$
hidden neurons.
\end{abstract}

\section{Introduction\label{sec:Introduction}}

\paragraph*{Motivation. }

Multilayer Neural Networks (MNNs), trained with simple variants of
stochastic gradient descent (SGD), have achieved state-of-the-art
performances in many areas of machine learning \citep{LeCun2015}.
However, theoretical explanations seem to lag far behind this empirical
success (though many hardness results exist\emph{, e.g.}, \citep{Sima2002,Shamir2016}).
For example, as a common rule-of-the-thumb, a MNN should have at least
as many parameters as training samples. However, it is unclear why
such over-parameterized MNNs often exhibit remarkably small generalization
error (\emph{i.e.}, difference between ``training error'' and ``test
error''), even without explicit regularization \citep{Zhang2016}. 

Moreover, it has long been a mystery why MNNs often achieve low training
error \citep{Dauphin2014}. SGD is only guaranteed to converge to
critical points in which the gradient of the expected loss is zero
\citep{Bottou1998}, and, specifically, to local minima \citep{Pemantle1990}
(this is true also for regular gradient descent \citep{Lee2016}).
Since loss functions parameterized by MNN weights are non-convex,
it is unclear why does SGD often work well \textendash{} rather than
converging to sub-optimal local minima with high training error, which
are known to exist \citep{Fukumizu2000,Swirszcz2016}. Understanding
this behavior is especially relevant in important cases where SGD
does get stuck \citep{He2015} \textendash{} where training error
may be a bottleneck in further improving performance.

Ideally, we would like to quantify the probability to converge to
a local minimum as a function of the error at this minimum, where
the probability is taken with the respect to the randomness of the
initialization of the weights, the data and SGD. Specifically, we
would like to know, under which conditions this probability is very
small if the error is high, as was observed empirically (\emph{e.g.},
\citep{Dauphin2014,Goodfellow2014}). However, this seems to be a
daunting task for realistic MNNs, since it requires a characterization
of the sizes and distributions of the basins of attraction for all
local minima. 

Previous works \citep{Dauphin2014,Choromanska2014}, based on statistical
physics analogies, suggested a simpler property of MNNs: that with
high probability, local minima with high error diminish exponentially
with the number of parameters. Though proving such a geometric property
with realistic assumptions would not guarantee convergence to global
minima, it appears to be a necessary first step in this direction
(see discussion on section \ref{sec:Discussion}). It was therefore
pointed out as an open problem at the Conference of Learning Theory
(COLT) 2015. However, one has to be careful and use realistic MMN
architectures, or this problem becomes ``too easy''.

For example, one can easily achieve zero training error \citep{Nilsson1965,Baum1988}
\textendash{} if the MNN's last hidden layer has more neurons than
training samples. Such extremely wide MNNs are easy to optimize \citep{Yu1992,Huang2006,Livni2014,Shen2016,Nguyen2017}.
In this case, the hidden layer becomes linearly separable in classification
tasks, with high probability over the random initialization of the
weights. Thus, by training the last layer we get to a global minimum
(zero training error). However, such extremely wide layers are not
very useful, since they result in a huge number of weights, and serious
overfitting issues. Also, training only the last layer seems to take
little advantage of the inherently non-linear nature of MNNs.

Therefore, in this paper we are interested to understand the properties
of local and global minima, but at a more practical number of parameters
\textendash{} and when at least two weight layers are trained. For
example, Alexnet \citep{Krizhevsky2014} is trained using about 1.2
million ImageNet examples, and has about $60$ million parameters
\textendash{} $16$ million of these in the two last weight layers.
Suppose we now train the last two weight layers in such an over-parameterized
MNN. When do the sub-optimal local minima become exponentially rare
in comparison to the global minima?

\paragraph*{Main contributions.}

We focus on MNNs with a single hidden layer and piecewise linear units,
optimized using the Mean Square Error (MSE) in a supervised binary
classification task (Section \ref{sec:Preliminaries}). We define
$N$ as the number of training samples, $d_{l}$ as the width of the
$l$-th activation layer, and $g\left(x\right)\dot{<}h\left(x\right)$
as an asymptotic inequality in the leading order (formally: $\lim_{x\rightarrow\infty}\frac{\log g\left(x\right)}{\log h\left(x\right)}<1$).
We examine Differentiable Local Minima (DLMs) of the MSE: sub-optimal
DLMs where at least a fraction of $\epsilon>0$ of the training samples
are classified incorrectly, and global minima where all samples are
classified correctly. 

Our main result, Theorem \ref{thm: volume ratio}, states that, with
high probability, the total volume of the differentiable regions of
the MSE containing sub-optimal DLMs is exponentially vanishing in
comparison to the same volume of global minima, given that: 

{\assm{\label{asmp:Gaussian Input} The datapoints (MNN inputs) are
sampled from a standard normal distribution.}}

{\assm{\label{asmp: asymptotic} $N\rightarrow\infty$, $d_{0}\left(N\right)$
and $d_{1}\left(N\right)$ increase with $N$, while $\epsilon\in\left(0,1\right)$
is a constant\footnote{For brevity we will usually keep implicit the $N$ dependencies of
$d_{0}$ and $d_{1}$.}. }}

{\assm{\label{asmp: input dimension} The input dimension scales
as $\sqrt{N}\dot{<}d_{0}\dot{\leq}N$.}}

{\assm{\label{asmp: over-parameterization}The hidden layer width
scales as }}
\begin{equation}
\frac{N\log^{4}N}{d_{0}}\dot{<}d_{1}\dot{<}N\,.\label{eq: overparameterization}
\end{equation}
Importantly, we use a standard, unmodified, MNN model, and make no
assumptions on the target function. Moreover, as the number of parameters
in the MNN is approximately $d_{0}d_{1}$, we require only ``asymptotically
mild'' over-parameterization: $d_{0}d_{1}\dot{>}N\log^{4}N$ from
eq. (\ref{eq: overparameterization}). For example, if $d_{0}\propto N$,
we only require $d_{1}\dot{>}\log^{4}N$ neurons. This improves over
previously known results \citep{Yu1992,Huang2006,Livni2014,Shen2016,Nguyen2017}
\textendash{} which require an extremely wide hidden layer with $d_{1}\geq N$
neurons (and thus $Nd_{0}$ parameters) to remove sub-optimal local
minima with high probability. 

In section \ref{sec:Numerical-Experiments} we validate our results
numerically. We show that indeed the training error becomes low when
the number of parameters is close to $N$. For example, with binary
classification on CIFAR and ImageNet, with only 16 and 105 hidden
neurons (about $N/d_{0}$), respectively, we obtain less then $0.1\%$
training error. Additionally, we find that convergence to non-differentiable
critical points does not appear to be very common. 

Lastly, in section \ref{sec:Discussion} we discuss our results might
be extended, such as how to apply them to ``mildly'' non-differentiable
critical points.

\paragraph*{Plausibility of assumptions. }

Assumption \ref{asmp:Gaussian Input} is common in this type of analysis
\citep{Andoni2014,Choromanska2014,Xie2016,Tian2017,Brutzkus2017}.
At first it may appear rather unrealistic, especially since the inputs
are correlated in typical datasets. However, this no-correlation part
of the assumption may seem more justified if we recall that datasets
are many times whitened before being used as inputs. Alternatively,
if, as in our motivating question, we consider the input to the our
simple MNN to be the output of the previous layers of a deep MNN with
fixed random weights, this also tends to de-correlate inputs \citep[Figure 3]{Poole2016}.
The remaining part of assumption \ref{asmp:Gaussian Input}, that
the distribution is normal, is indeed strong, but might be relaxed
in the future, \emph{e.g.} using central limit theorem type arguments. 

In assumption \ref{asmp: asymptotic} we use this asymptotic limit
to simplify our proofs and final results. Multiplicative constants
and finite (yet large) $N$ results can be found by inspection of
the proofs. We assume a constant error $\epsilon$ since typically
the limit $\epsilon\rightarrow0$ is avoided to prevent overfitting.

In assumption \ref{asmp: input dimension}, for simplicity we have
$d_{0}\dot{\leq}N$, since in the case $d_{0}\geq N$ the input is
generically linearly separable, and sub-optimal local minima are not
a problem \citep{Gori1992,Safran2015}. Additionally, we have $\sqrt{N}\dot{<}d_{0},$
which seems very reasonable, since for example, $d_{0}/N\approx0.016$,
$0.061$ and $0.055$ MNIST, CIFAR and ImageNet, respectively. 

In assumption \ref{asmp: over-parameterization}, for simplicity we
have $d_{1}\dot{<}N$, since, as mentioned earlier, if $d_{1}\geq N$
the hidden layer is linearly separable with high probability, which
removes sub-optimal local minima. The other bound $N\log^{4}N\dot{<}d_{0}d_{1}$
is our main innovation \textendash{} a large over-parameterization
which is nevertheless asymptotically mild and improves previous results.

\paragraph*{Previous work.}

So far, general low (training or test) error guarantees for MNNs could
not be found \textendash{} unless the underlying model (MNN) or learning
method (SGD or its variants) have been significantly modified. For
example, \citep{Dauphin2014} made an analogy with high-dimensional
random Gaussian functions, local minima with high error are exponentially
rare in high dimensions; \citep{Choromanska2014,Kawaguchi2016} replaced
the units (activation functions) with independent random variables;
\citep{Pennington2017} replaces the weights and error residuals with
independent random variables; \citep{Baldi1989,Saxe2013,Hardt2016,Lu2017,Zhou2017}
used linear units; \citep{Zhang2017} used unconventional units (\emph{e.g.},
polynomials) and very large hidden layers ($d_{1}=\mathrm{poly}\left(d_{0}\right)$,
typically $\gg N$); \citep{Brutzkus2017,Du2017a,Shalev-Shwartz2017}
used a modified convnet model with less then $d_{0}$ parameters (therefore,
not a universal approximator \citep{Cybenko1989,Hornik1991}); \citep{Tian2017,Soltanolkotabi2017,Li2017}
assume the weights are initialized very close to those of the teacher
generating the labels; and \citep{Janzamin2015,Zhong2017} use a non-standard
tensor method during training. Such approaches fall short of explaining
the widespread success of standard MNN models and training practices.

Other works placed strong assumptions on the target functions. For
example, to prove convergence of the training error near the global
minimum, \citep{Gori1992} assumed linearly separable datasets, while
\citep{Safran2015} assumed strong clustering of the targets (``near''
linear-separability). Also, \citep{Andoni2014} showed a $p$-degree
polynomial is learnable by a MNN, if the hidden layer is very large
($d_{1}=\Omega\left(d_{0}^{6p}\right)$, typically $\gg N$) so learning
the last weight layer is sufficient. However, these are not the typical
regimes in which MNNs are required or used. In contrast, we make no
assumption on the target function. Other closely related results \citep{Soudry2016,Xie2016}
also used unrealistic assumptions, are discussed in section \ref{sec:Discussion},
in regards to the details of our main results. 

Therefore, in contrast to previous works, the assumptions in this
paper are applicable in \emph{some} situations (\emph{e.g.}, Gaussian
input) where a MNN trained using SGD might be used and be useful (\emph{e.g.},
have a lower test error then a linear classier). 

\section{Preliminaries and notation \label{sec:Preliminaries}}

\paragraph{Model.}

We examine a Multilayer Neural Network (MNN) with a single hidden
layer and a scalar output. The MNN is trained on a finite training
set of $N$ datapoints (features) $\mathbf{X}\triangleq\left[\mathbf{x}^{\left(1\right)},\dots,\mathbf{x}^{\left(N\right)}\right]\in\mathbb{R}^{d_{0}\times N}$
with their target labels $\mathbf{y}\triangleq\left[y^{\left(1\right)},\dots,y^{\left(N\right)}\right]^{\top}\in\left\{ 0,1\right\} {}^{N}$
\textendash{} each datapoint-label pair $\left(\mathbf{x}^{\left(n\right)},y^{\left(n\right)}\right)$
is independently sampled from some joint distribution $\mathbb{P}_{X,Y}$.
We define $\mathbf{W}=\left[\mathbf{w}_{1},\dots,\mathbf{w}_{d_{1}}\right]^{\top}\in\mathbb{R}^{d_{1}\times d_{0}}$
and $\mathbf{z}\in\mathbb{R}^{d_{1}}$ as the first and second weight
layers (bias terms are ignored for simplicity), respectively, and
$f\left(\cdot\right)$ as the common leaky rectifier linear unit (LReLU
\citep{Maas2013}) 
\begin{equation}
f\left(u\right)\triangleq ua\left(u\right)\,\,\mathrm{with}\,\,a\left(u\right)\triangleq\begin{cases}
1 & ,\,\mathrm{if}\,,u>0\\
\rho & ,\,\mathrm{if}\,u<0
\end{cases}\,,\label{eq: LReLU}
\end{equation}
for some $\rho\neq1$ (so the MNN is non-linear) , where both functions
$f$ and $a$ operate component-wise \emph{(e.g.}, for any matrix
$\mathbf{M}$: $\left(f\left(\mathbf{M}\right)\right)_{ij}=f\left(M_{ij}\right)$).
Thus, the output of the MNN  on the entire dataset can be written
as 
\begin{equation}
f\left(\mathbf{W}\mathbf{X}\right)^{\top}\mathbf{z}\in\mathbb{R}^{N}.\label{eq: MNN}
\end{equation}
We use the mean square error (MSE) loss for optimization
\begin{equation}
\mathrm{MSE}\triangleq\frac{1}{N}\left\Vert \mathbf{e}\right\Vert {}^{2}\,\,\mathrm{with}\,\,\mathbf{e}\triangleq\mathbf{y}-f\left(\mathbf{W}\mathbf{X}\right)^{\top}\mathbf{z}\,,\label{eq:MSE 2-layer}
\end{equation}
 where $\left\Vert \cdot\right\Vert $ is the standard euclidean norm.
Also, we measure the empiric performance as the fraction of samples
that are classified correctly using a decision threshold at $y=0.5$,
and denote this as the mean classification error, or MCE\footnote{Formally (this expression is not needed later): $\mathrm{MCE}\triangleq\frac{1}{2N}\sum_{n=1}^{N}\left[1+\left(1-2y^{\left(n\right)}\right)\mathrm{sign}\left(e^{\left(n\right)}-\frac{1}{2}\right)\right].$}.
Note that the variables $\mathbf{e}$, $\mathrm{MSE}$, $\mathrm{MCE}$
and other related variables (\emph{e.g.}, their derivatives) all depend
on $\mathbf{W},\mathbf{z},\mathbf{X},\mathbf{y}$ and $\rho$, but
we keep this dependency implicit, to avoid cumbersome notation.

\paragraph{Additional Notation.}

We define $g\left(x\right)\dot{<}h\left(x\right)$ if and only if
$\lim_{x\rightarrow\infty}\frac{\log g\left(x\right)}{\log h\left(x\right)}<1$
(and similarly $\dot{\leq}$ and $\dot{=}$). We denote ``$\mathbf{M}\sim\mathcal{N}$''
when $\mathbf{M}$ is a matrix with entries drawn independently from
a standard normal distribution (\emph{i.e.}, $\forall i,j$: $M_{ij}\sim\mathcal{N}\left(0,1\right)$).
The Khatari-rao product (\emph{cf. }\citep{Allman2009}) of two matrices,
$\mathbf{A}=\left[\boldsymbol{a}^{\left(1\right)},\dots,\boldsymbol{a}^{\left(N\right)}\right]\in\mathbb{R}^{d_{1}\times N}$
and $\mathbf{X}=\left[\mathbf{x}^{\left(1\right)},\dots,\mathbf{x}^{\left(N\right)}\right]\in\mathbb{R}^{d_{0}\times N}$
is defined as
\begin{equation}
\mathbf{A}\circ\mathbf{X}\triangleq\left[\boldsymbol{a}^{\left(1\right)}\otimes\mathbf{x}^{\left(1\right)},\dots,\boldsymbol{a}^{\left(N\right)}\otimes\mathbf{x}^{\left(N\right)}\right]\in\mathbb{R}^{d_{0}d_{1}\times N}\,,\label{eq:G}
\end{equation}
where $\boldsymbol{a}\otimes\mathbf{x}=\left[a_{1}\mathbf{x}^{\top},\dots,a_{d_{1}}\mathbf{x}^{\top}\right]^{\top}$is
the Kronecker product.

\section{Basic Properties of Differentiable Local minima }

MNNs are typically trained by minimizing the loss over the training
set, using Stochastic Gradient Descent (SGD), or one of its variants
(\emph{e.g.}, Adam \citep{Kingma2015}). Under rather mild conditions
\citep{Pemantle1990,Bottou1998}, SGD asymptotically converges to
local minima of the loss. For simplicity, we focus on differentiable
local minima (DLMs) of the MSE (eq. (\ref{eq:MSE 2-layer})). In section
\ref{sec:Main-Results} we will show that sub-optimal DLMs are exponentially
rare in comparison to global minima. Non-differentiable critical points,
in which some neural input (pre-activation) is exactly zero, are shown
to be numerically rare in section \ref{sec:Numerical-Experiments},
and are left for future work, as discussed in section \ref{sec:Discussion}. 

Before we can provide our results, in this section we formalize a
few necessary notions. For example, one has to define how to measure
the amount of DLMs in the over-parameterized regime: there is an infinite
number of such points, but they typically occupy only a measure zero
volume in the weight space. Fortunately, using the differentiable
regions of the MSE (definition \ref{def: differentiable region}),
the DLMs can partitioned to a finite number of equivalence groups,
so all DLMs in each region have the same error (Lemma \ref{lem: Ge=00003D0}).
Therefore, we use the volume of these regions (definition \ref{def: angular volume})
as the relevant measure in our theorems.

\paragraph{Differentiable regions of the MSE.}

The MSE is a piecewise differentiable function of $\mathbf{W}$, with
at most $2^{d_{1}N}$ differentiable regions, defined as follows.
\begin{defn}
\label{def: differentiable region}For any $\mathbf{A}\in\left\{ \rho,1\right\} ^{d_{1}\times N}$
we define the corresponding differentiable region
\begin{equation}
\mathcal{D}_{\mathbf{A}}\left(\mathbf{X}\right)\triangleq\left\{ \mathbf{W}|a\left(\mathbf{W}\mathbf{X}\right)=\mathbf{A}\right\} \subset\mathbb{R}^{d_{1}\times d_{0}}\,.\label{eq: differentiable region}
\end{equation}
Also, any DLM $\left(\mathbf{W},\mathbf{z}\right)$, for which $\mathbf{W}\in\mathcal{D}_{\mathbf{A}}\left(\mathbf{X}\right)$
is denoted as ``in $\mathcal{D}_{\mathbf{A}}\left(\mathbf{X}\right)$''. 
\end{defn}
Note that $\mathcal{D}_{\mathbf{A}}\left(\mathbf{X}\right)$ is an
open set, since $a\left(0\right)$ is undefined (from eq. \ref{eq: LReLU}).
Clearly, for all $\mathbf{W}\in\mathcal{D}_{\mathbf{A}}\left(\mathbf{X}\right)$
the MSE is differentiable, so any local minimum can be non-differentiable
only if it is not in any differentiable region. Also, all DLMs in
a differentiable region are equivalent, as we prove on appendix section
\ref{sec:First-order-condition proof}:
\begin{lem}
\label{lem: Ge=00003D0}At all DLMs in $\mathcal{D}_{\mathbf{A}}\left(\mathbf{X}\right)$
the residual error $\mathbf{e}$ is identical, and furthermore
\begin{equation}
\left(\mathbf{A}\circ\mathbf{X}\right)\mathbf{e}=0\,.\label{eq:Ge=00003D0}
\end{equation}
\end{lem}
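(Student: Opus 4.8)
The plan is to prove Lemma~\ref{lem: Ge=00003D0} by writing out the first-order stationarity conditions of the MSE with respect to both weight layers $\mathbf{W}$ and $\mathbf{z}$ inside a fixed differentiable region $\mathcal{D}_{\mathbf{A}}(\mathbf{X})$, and then observing that the condition coming from $\mathbf{W}$ already forces the identity $(\mathbf{A}\circ\mathbf{X})\mathbf{e}=0$. First I would note that on $\mathcal{D}_{\mathbf{A}}(\mathbf{X})$ the activation pattern is frozen: $a(\mathbf{W}\mathbf{X})=\mathbf{A}$, so that $f(\mathbf{W}\mathbf{X})=\mathbf{A}\odot(\mathbf{W}\mathbf{X})$ (entrywise product), and hence the MNN output in eq.~(\ref{eq: MNN}) becomes, for the $n$-th sample, $\sum_{i=1}^{d_1} z_i A_{in} (\mathbf{w}_i^\top \mathbf{x}^{(n)})$, which is a genuinely differentiable (indeed bilinear) function of $(\mathbf{W},\mathbf{z})$ throughout the region.

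Next I would compute $\partial \mathrm{MSE}/\partial \mathbf{w}_i$. Writing $\mathrm{MSE}=\frac1N\|\mathbf e\|^2$ with $e^{(n)} = y^{(n)} - \sum_j z_j A_{jn}\mathbf{w}_j^\top\mathbf{x}^{(n)}$, the chain rule gives $\partial \mathrm{MSE}/\partial \mathbf{w}_i = -\frac2N \sum_{n} e^{(n)} z_i A_{in}\mathbf{x}^{(n)}$. Setting this to zero at a DLM and stacking over $i=1,\dots,d_1$ yields $\sum_n e^{(n)} (z_i A_{in}) \mathbf{x}^{(n)} = 0$ for each $i$; collecting the $d_0 d_1$ scalar equations into one vector and recognizing $\boldsymbol a^{(n)}\otimes\mathbf x^{(n)}$ as the relevant Kronecker structure (eq.~(\ref{eq:G})) shows that $(\mathbf{A}\circ\mathbf{X})\,\mathrm{diag}(\mathbf z\text{-pattern})\,\mathbf e = 0$. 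The small technical point to handle cleanly is the appearance of the $z_i$ factors: one wants exactly $(\mathbf A\circ\mathbf X)\mathbf e=0$, not a $z$-weighted version. I would resolve this by instead differentiating with respect to the \emph{product} variables — or, equivalently, noting that the gradient condition with respect to $\mathbf{z}$ gives $f(\mathbf{W}\mathbf X)\mathbf e = 0$, and combining the two so that the $z_i$-scaling is absorbed; the cleanest route is to observe that the output depends on $\mathbf W$ only through the matrix $\mathbf G \triangleq (\mathbf A\circ\mathbf X)$ contracted against the vector $\mathrm{vec}(\text{appropriate combination of }\mathbf z,\mathbf W)$, and the stationarity in the $\mathbf W$-directions spans enough of $\mathbb{R}^{d_0d_1}$ to force $\mathbf G\mathbf e=0$ outright after accounting for the $\mathbf z$-equation.

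For the first claim — that $\mathbf e$ is identical at all DLMs in the region — I would argue that eq.~(\ref{eq:Ge=00003D0}) together with the output representation pins down $\mathbf e$: the output vector lies in $\mathrm{row}(\mathbf A\circ\mathbf X)$ (it is a linear image of the parameters through $(\mathbf A\circ\mathbf X)^\top$), while $\mathbf e = \mathbf y - \text{output}$ is orthogonal to $\mathrm{row}(\mathbf A\circ\mathbf X)$ by eq.~(\ref{eq:Ge=00003D0}); hence the output is the orthogonal projection of $\mathbf y$ onto that fixed subspace, which is unique, so $\mathbf e$ is the same for every DLM in $\mathcal D_{\mathbf A}(\mathbf X)$. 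The main obstacle is purely bookkeeping: keeping the Kronecker/Khatri--Rao indexing straight so that the $\mathbf W$-gradient really does assemble into $(\mathbf A\circ\mathbf X)\mathbf e$ (modulo the harmless $\mathbf z$-scaling handled via the second stationarity equation), rather than some transposed or reshuffled object; once the vectorization conventions are fixed this is routine linear algebra with no analytic difficulty.
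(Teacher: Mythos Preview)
Your approach is essentially the right one and matches the paper's strategy, but there is a genuine gap precisely at the point you flag as ``bookkeeping'': the $z_i$ factors are \emph{not} harmless. Differentiating in $\mathbf w_i$ gives $z_i\sum_n e^{(n)}A_{in}\mathbf x^{(n)}=0$, which is vacuous when $z_i=0$; and the $\mathbf z$-gradient $\partial\mathrm{MSE}/\partial z_i=0$ only yields the single scalar $\sum_n e^{(n)}A_{in}\,\mathbf w_i^\top\mathbf x^{(n)}=0$, not the $d_0$ equations $\sum_n e^{(n)}A_{in}\mathbf x^{(n)}=0$ that form the $i$-th block of $(\mathbf A\circ\mathbf X)\mathbf e$. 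So first-order stationarity in $(\mathbf W,\mathbf z)$ alone does \emph{not} force that block to vanish, and your claim that the $\mathbf W$-directions ``span enough of $\mathbb R^{d_0d_1}$'' fails exactly on the rows with $z_i=0$.

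The paper closes this gap by invoking the local-\emph{minimum} property rather than mere stationarity: when $z_i=0$, all pure $\mathbf w_i$-derivatives of the MSE vanish, so a joint perturbation $(\hat{\mathbf w}_i,\hat z_i)$ changes the MSE by $\hat z_i\,\hat{\mathbf w}_i^\top\!\bigl(\partial^2\mathrm{MSE}/\partial z_i\partial\mathbf w_i\bigr)+O(\hat z_i^2)$, and this mixed partial evaluated at $z_i=0$ equals (up to a constant) $\sum_n e^{(n)}A_{in}\mathbf x^{(n)}$. If this vector were nonzero one could choose $\hat{\mathbf w}_i$ against it and a small $\hat z_i$ of the right sign to strictly decrease the MSE, contradicting local minimality. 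Equivalently, the paper reparametrizes via $\tilde{\mathbf w}=\mathrm{vec}\bigl((\mathrm{diag}(\mathbf z)\mathbf W)^\top\bigr)$ and shows $\partial\mathrm{MSE}/\partial\tilde{\mathbf w}_i=0$ row by row, using the second-order argument precisely for the $z_i=0$ rows. Your projection argument for the uniqueness of $\mathbf e$ is fine and equivalent to the paper's; the only missing ingredient is this second-order step.
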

The proof is directly derived from the first order necessary condition
of DLMs ($\nabla\mathrm{MSE}=0$) and their stability. Note that Lemma
\ref{lem: Ge=00003D0} constrains the residual error $\mathbf{e}$
in the over-parameterized regime: $d_{0}d_{1}\geq N$. In this case
eq. (\ref{eq:Ge=00003D0}) implies $\mathbf{e}=0$, if $\mathrm{rank}\left(\mathbf{A}\circ\mathbf{X}\right)=N$.
Therefore, we must have $\mathrm{rank}\left(\mathbf{\mathbf{A}\circ\mathbf{X}}\right)<N$
for sub-optimal DLMs to exist.  Later, we use similar rank-based
constraints to bound the volume of differentiable regions which contain
DLMs with high error. Next, we define this volume formally.

\paragraph{Angular Volume.}

From its definition (eq. (\ref{eq: differentiable region})) each
region $\mathcal{D}_{\mathbf{A}}\left(\mathbf{X}\right)$ has an infinite
volume in $\mathbb{R}^{d_{1}\times d_{0}}$: if we multiply a row
of $\mathbf{W}$ by a positive scalar, we remain in the same region.
Only by rotating the rows of $\mathbf{W}$ can we move between regions.
We measure this ``angular volume''  of a region in a probabilistic
way: we randomly sample the rows of $\mathbf{W}$ from an isotropic
distribution,\emph{ e.g.}, standard Gaussian: $\mathbf{W}\sim\mathcal{N}$,
and measure the probability to fall in $\mathcal{D}_{\mathbf{A}}\left(\mathbf{X}\right)$,
arriving to the following
\begin{defn}
\label{def: angular volume}For any region $\mathcal{R}\subset\mathbb{R}^{d_{1}\times d_{0}}$.
The \emph{angular volume} of $\mathcal{R}$ is 
\begin{equation}
\mathcal{V}\left(\mathcal{R}\right)\triangleq\mathbb{P}_{\mathbf{W\sim\mathcal{N}}}\left(\mathbf{W}\in\mathcal{R}\right)\,.\label{eq: measure of volume}
\end{equation}
\end{defn}

\section{Main Results\label{sec:Main-Results}}

Some of the DLMs are global minima, in which $\mathbf{e}=0$ and so,
$\mathrm{MCE}=\mathrm{MSE}=0$, while other DLMs are sub-optimal local
minima in which $\mathrm{MCE>}\epsilon>0$. We would like to compare
the angular volume (definition \ref{def: angular volume}) corresponding
to both types of DLMs. Thus, we make the following definitions. 
\begin{defn}
\label{def: local minima regions}We define\footnote{More formally: if $\mathcal{A}\left(\mathbf{X},\mathbf{y},\epsilon\right)$
is the set of $\mathbf{A}\in\left\{ \rho,1\right\} ^{d_{1}\times N}$
for which $\mathcal{D}_{\mathbf{A}}\!\left(\mathbf{X}\right)$ contains
a DLM with $\mathrm{MCE}=\epsilon$, then $\forall\epsilon>0$, $\mathcal{\mathcal{L}}_{\epsilon}\left(\mathbf{X},\mathbf{y}\right)\triangleq\bigcup_{\epsilon^{\prime}\geq\epsilon}\left[\bigcup_{\mathbf{A}\in\mathcal{A}\left(\mathbf{X},\mathbf{y},\epsilon^{\prime}\right)}\mathcal{D}_{\mathbf{A}}\!\left(\mathbf{X}\right)\right]$
and $\mathcal{G}\left(\mathbf{X},\mathbf{y}\right)\triangleq\bigcup_{\mathbf{A}\in\mathcal{A}\left(\mathbf{X},\mathbf{y},0\right)}\mathcal{D}_{\mathbf{A}}\!\left(\mathbf{X}\right)$.} $\text{\ensuremath{\mathcal{\mathcal{L}}}}_{\epsilon}\subset\mathbb{R}^{d_{1}\times d_{0}}$
as the union of differentiable regions containing sub-optimal DLMs
with $\mathrm{MCE}\!>\!\epsilon$ , and $\mathcal{G}\subset\mathbb{R}^{d_{1}\times d_{0}}$
as the union of differentiable regions containing global minima with
$\mathrm{MCE}=0$. 
\end{defn}
\begin{defn}
\label{def: gamma epsilon}We define the constant $\gamma_{\epsilon}$
as $\gamma_{\epsilon}\triangleq0.23\max\left[\lim_{N\rightarrow\infty}\left(d_{0}\left(N\right)/N\right),\epsilon\right]^{3/4}$
if $\rho\neq\left\{ 0,1\right\} $, and $\gamma_{\epsilon}\triangleq0.23\epsilon^{3/4}$
if $\rho=0$.
\end{defn}
In this section, we use assumptions \ref{asmp:Gaussian Input}-\ref{asmp: over-parameterization}
(stated in section \ref{sec:Introduction}) to bound the angular volume
of the region $\mathcal{\mathcal{L}}_{\epsilon}$ encapsulating all
sub-optimal DLMs, the region $\mathcal{G}$, encapsulating all global
minima, and the ratio between the two.

\paragraph{Angular volume of sub-optimal DLMs.}

First, in appendix section \ref{sec: bad local minima proof} we
prove the following upper bound in expectation
\begin{thm}
\label{thm: Main theorem}Given assumptions \ref{asmp:Gaussian Input}-\ref{asmp: over-parameterization},
the expected angular volume of sub-optimal DLMs, with $\mathrm{MCE}>\epsilon>0$,
is exponentially vanishing in $N$ as 
\[
\E_{\mathbf{X}\sim\mathcal{N}}\mathcal{V}\left(\mathcal{L}_{\epsilon}\left(\mathbf{X},\mathbf{y}\right)\right)\dot{\leq}\exp\left(-\gamma_{\epsilon}N^{3/4}\left[d_{1}d_{0}\right]^{1/4}\right)\,.
\]
\end{thm}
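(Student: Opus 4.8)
The plan is to bound $\E_{\mathbf{X}}\mathcal{V}(\mathcal{L}_\epsilon)$ by a union bound over the sign patterns $\mathbf{A}\in\{\rho,1\}^{d_1\times N}$ that could support a sub-optimal DLM, and then, for each such $\mathbf{A}$, bound the expected angular volume $\E_{\mathbf{X}}\mathcal{V}(\mathcal{D}_{\mathbf{A}}(\mathbf{X}))$ by the probability that two independent Gaussian matrices $\mathbf{W},\mathbf{X}$ happen to realize that pattern, i.e. $a(\mathbf{W}\mathbf{X})=\mathbf{A}$. Since $2^{d_1 N}$ patterns is far too many for a naive union bound to survive against an $\exp(-\Theta(N^{3/4}(d_1d_0)^{1/4}))$ target, the crucial point is that Lemma~\ref{lem: Ge=00003D0} forces $\mathrm{rank}(\mathbf{A}\circ\mathbf{X})<N$ at any sub-optimal DLM, and moreover, since $\mathrm{MCE}>\epsilon$, the residual $\mathbf{e}$ must be ``large'' on at least $\epsilon N$ coordinates while still lying in the left null space of $\mathbf{A}\circ\mathbf{X}$. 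So effectively I only need to sum angular volumes over patterns $\mathbf{A}$ for which $\mathbf{A}\circ\mathbf{X}$ is rank-deficient in a quantitatively strong sense — this is a measure-zero event for generic $\mathbf{X}$ unless the columns of $\mathbf{A}$ are arranged to create the deficiency, which severely restricts the admissible $\mathbf{A}$.

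Concretely, I would first fix $\mathbf{X}$ and a candidate deficient pattern. The event $\{a(\mathbf{W}\mathbf{X})=\mathbf{A}\}$ factorizes over the rows of $\mathbf{W}$ (the hidden units are independent): for each row $\mathbf{w}_i$, I need $\mathrm{sign}^*(\mathbf{w}_i^\top \mathbf{x}^{(n)})$ to match the $i$-th row of $\mathbf{A}$ for all $n$, where $\mathrm{sign}^*$ maps to $\{\rho,1\}$. The probability that a single Gaussian vector lands in a prescribed cell of the hyperplane arrangement $\{\mathbf{x}^{(n)}\}$ is what must be controlled; the product over $d_1$ rows is where the $d_1$ in the exponent comes from, and the combinatorics of which joint sign patterns are even feasible (bounded by the number of cells, $\le 2(N)^{d_0}/d_0!$ type estimates via the Sauer–Shelah / hyperplane arrangement count) is where the $d_0$ enters. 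The key quantitative estimate I expect the authors to use: conditioned on a sub-optimal DLM existing with $\epsilon N$ misclassified points, the angular volume of the corresponding region is exponentially small because the row-probabilities multiply, and the number of ways to choose the ``bad'' subset and the deficiency structure only contributes a $\binom{N}{\epsilon N}2^{O(d_0 d_1 \log N)}$-type factor, which is dominated once $d_0 d_1 \dot> N\log^4 N$.

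The main obstacle — and the technical heart — will be turning the purely algebraic rank condition $(\mathbf{A}\circ\mathbf{X})\mathbf{e}=0$ with $\|\mathbf{e}\|$ bounded below on $\epsilon N$ coordinates into a genuinely exponential-in-$N^{3/4}(d_1d_0)^{1/4}$ small-ball / anti-concentration bound on the Gaussian $\mathbf{X}$ (and $\mathbf{W}$), uniformly over the admissible sign patterns. This requires showing that for a random Gaussian $\mathbf{X}$, the Khatri–Rao lift $\mathbf{A}\circ\mathbf{X}$ has its smallest singular values well-separated from zero on any fixed large coordinate subset with overwhelming probability, unless $\mathbf{A}$ is ``degenerate'' in a way that itself costs exponentially in angular volume — so one has to carefully trade off the rank-deficiency budget against the volume budget. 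I would handle this by (i) a net argument over the possible null-space directions $\mathbf{e}$ restricted to sign-and-magnitude constraints, (ii) for each fixed $\mathbf{e}$, a Gaussian concentration bound on $\|(\mathbf{A}\circ\mathbf{X})\mathbf{e}\|$ viewed as a (conditionally) Gaussian quantity in $\mathbf{X}$, exploiting that distinct columns contribute independently, and (iii) optimizing the resulting exponent, where the peculiar $3/4$ power and the constant $0.23$ in $\gamma_\epsilon$ presumably emerge from balancing the net cardinality ($\exp(\Theta(\epsilon N \log(1/\epsilon)))$ or $\exp(\Theta(N))$) against the per-point deviation probability ($\exp(-\Theta((d_0 d_1)^{1/2}\cdot\text{something}))$) — so the final step is a routine but delicate exponent bookkeeping, and the role of the $\max[d_0/N,\epsilon]$ in $\gamma_\epsilon$ is to capture whichever of the two regimes (dimension-limited vs.\ error-limited) dominates.
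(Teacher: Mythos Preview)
Your high-level outline is sound: start from Lemma~\ref{lem: Ge=00003D0}, observe that $\mathrm{MCE}>\epsilon$ forces $\|\mathbf{e}\|_0>\epsilon N$, and then argue that this together with $(\mathbf{A}\circ\mathbf{X})\mathbf{e}=0$ severely constrains the admissible activation patterns. But the technical engine you propose---a net over $\mathbf{e}$, then small-ball/anti-concentration for $\|(\mathbf{A}\circ\mathbf{X})\mathbf{e}\|$ viewed as a Gaussian in $\mathbf{X}$---is not what the paper does, and I do not see how it would produce the exponent $N^{3/4}(d_0d_1)^{1/4}$. Two concrete difficulties: first, once you union over a net of size $\exp(\Theta(N))$ on $\mathbf{e}$, you need each per-$\mathbf{e}$ probability to be $\exp(-\omega(N))$, but $(\mathbf{A}\circ\mathbf{X})\mathbf{e}$ is a sum of $N$ independent terms in $\mathbf{X}$ and standard anti-concentration gives at best $\exp(-\Theta(N))$-type bounds, not enough margin; second, and more fundamentally, $\mathbf{A}=a(\mathbf{W}\mathbf{X})$ is itself a function of $\mathbf{X}$, so ``Gaussian concentration in $\mathbf{X}$ with $\mathbf{A}$ fixed'' is not a clean conditioning.

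The paper bypasses anti-concentration entirely via two steps you are missing. The first (Lemma~\ref{lem: error lower bounds S-main paper}) is a \emph{deterministic} rank-amplification fact for the Khatri--Rao product: $\mathbf{X}$-a.e., if $(\mathbf{A}\circ\mathbf{X})\mathbf{v}=0$ with $v_n\neq 0$ on a set $S$, then necessarily $|S|>\mathrm{rank}(\mathbf{A}_S)\cdot d_0$. So the existence of a bad DLM forces $\mathbf{A}_S$ itself---not $\mathbf{A}\circ\mathbf{X}$---to have rank at most $|S|/d_0$ on some $S$ with $|S|\ge\epsilon N$. The second step (Lemmas~\ref{lem: Rank configurations bound-main paper} and~\ref{lem: Product of Gaussian Matrices-main paper}) bounds $\mathbb{P}_{\mathbf{W},\mathbf{X}}(\mathrm{rank}(a(\mathbf{W}\mathbf{X}_S))=r)$: low rank means most rows are determined by $r$ spanning rows, and after the Schl\"afli count this reduces to the \emph{orthant probability} $\mathbb{P}(\mathbf{W}\mathbf{X}_{[k/2]}>0)$ for a product of independent Gaussian matrices. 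That orthant probability is then handled by Slepian's inequality plus a Laplace-method computation, which is exactly where the $\alpha^{1/4}$ (and hence the $3/4$, $1/4$ exponents and the constant $0.23$) comes from---not from balancing a net cardinality against a deviation bound as you conjectured.
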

and, using Markov inequality, its immediate probabilistic corollary
\begin{cor}
\label{cor: main theorem corollary} Given assumptions \ref{asmp:Gaussian Input}-\ref{asmp: over-parameterization},
for any $\delta>0$ (possibly a vanishing function of $N$), we have,
with probability $1-\delta$, that the angular volume of sub-optimal
DLMs, with $\mathrm{MCE}>\epsilon>0$, is exponentially vanishing
in $N$ as 
\[
\mathbb{\mathcal{V}}\left(\mathcal{\mathcal{L}}_{\epsilon}\left(\mathbf{X},\mathbf{y}\right)\right)\dot{\leq}\frac{1}{\delta}\exp\left(-\gamma_{\epsilon}N^{3/4}\left[d_{1}d_{0}\right]^{1/4}\right)
\]
\end{cor}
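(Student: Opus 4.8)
The plan is to deduce Corollary~\ref{cor: main theorem corollary} directly from Theorem~\ref{thm: Main theorem} by one application of Markov's inequality; since all the analytic work is already contained in the theorem, the only content of this proof is the probabilistic passage from the bound in expectation to a high-probability bound, together with the (routine) bookkeeping of the asymptotic $\dot{\leq}$ notation and of a confidence parameter $\delta$ that is allowed to vary with $N$.

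Concretely, I would first note that for every fixed $\mathbf{X}$ (and $\mathbf{y}$) the angular volume $\mathcal{V}\left(\mathcal{L}_{\epsilon}\left(\mathbf{X},\mathbf{y}\right)\right)=\mathbb{P}_{\mathbf{W}\sim\mathcal{N}}\left(\mathbf{W}\in\mathcal{L}_{\epsilon}\left(\mathbf{X},\mathbf{y}\right)\right)$ of Definition~\ref{def: angular volume} is a probability, hence a measurable, $[0,1]$-valued --- in particular nonnegative --- function of $\mathbf{X}$, so Markov's inequality applies to it. Writing $B_{N}\triangleq\exp\left(-\gamma_{\epsilon}N^{3/4}\left[d_{1}d_{0}\right]^{1/4}\right)$, Theorem~\ref{thm: Main theorem} gives $\E_{\mathbf{X}\sim\mathcal{N}}\mathcal{V}\left(\mathcal{L}_{\epsilon}\left(\mathbf{X},\mathbf{y}\right)\right)\,\dot{\leq}\,B_{N}$. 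For any $\delta>0$, Markov's inequality yields
\[
\mathbb{P}_{\mathbf{X}\sim\mathcal{N}}\!\left(\mathcal{V}\left(\mathcal{L}_{\epsilon}\left(\mathbf{X},\mathbf{y}\right)\right)\ge\tfrac{1}{\delta}\,\E_{\mathbf{X}\sim\mathcal{N}}\mathcal{V}\left(\mathcal{L}_{\epsilon}\left(\mathbf{X},\mathbf{y}\right)\right)\right)\le\delta\,,
\]
so with probability at least $1-\delta$ one has $\mathcal{V}\left(\mathcal{L}_{\epsilon}\left(\mathbf{X},\mathbf{y}\right)\right)\le\tfrac{1}{\delta}\,\E_{\mathbf{X}\sim\mathcal{N}}\mathcal{V}\left(\mathcal{L}_{\epsilon}\left(\mathbf{X},\mathbf{y}\right)\right)\,\dot{\leq}\,\tfrac{1}{\delta}B_{N}$, which is exactly the asserted inequality. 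The last $\dot{\leq}$ is inherited from the theorem: multiplying both sides of a $\dot{\leq}$ relation by the common positive factor $1/\delta$ only adds $\log\left(1/\delta\right)$ to both logarithms, and hence preserves the comparison of their leading orders.

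The one point that requires a moment of care --- and the nearest thing to an obstacle --- is the interplay between the asymptotic notation and the fact that $\delta$ may itself vanish with $N$: if $\delta$ decays faster than $B_{N}$, the prefactor $1/\delta$ swamps $B_{N}$ and the right-hand side is no longer exponentially vanishing. I would handle this simply by reading the corollary's display literally, keeping the explicit $1/\delta$ factor: the inequality $\mathcal{V}\left(\mathcal{L}_{\epsilon}\left(\mathbf{X},\mathbf{y}\right)\right)\,\dot{\leq}\,\tfrac{1}{\delta}\exp\left(-\gamma_{\epsilon}N^{3/4}\left[d_{1}d_{0}\right]^{1/4}\right)$ then holds verbatim for any $\delta$, and the phrase ``exponentially vanishing in $N$'' is understood in the relevant regime where $\delta$ is fixed, or at worst decays sub-exponentially, so that $1/\delta$ does not affect the leading order. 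No additional estimate beyond Theorem~\ref{thm: Main theorem} is needed.
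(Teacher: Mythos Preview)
Your proof is correct and matches the paper's own argument essentially line for line: apply Markov's inequality to the nonnegative random variable $\mathcal{V}\left(\mathcal{L}_{\epsilon}\left(\mathbf{X},\mathbf{y}\right)\right)$ with threshold $\eta=\tfrac{1}{\delta}\,\E_{\mathbf{X}\sim\mathcal{N}}\mathcal{V}\left(\mathcal{L}_{\epsilon}\left(\mathbf{X},\mathbf{y}\right)\right)$, then invoke Theorem~\ref{thm: Main theorem}. Your remarks on the $\dot{\leq}$ bookkeeping and on $\delta$ possibly depending on $N$ are, if anything, more careful than what the paper writes.
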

\emph{Proof idea} \emph{of Theorem} \ref{thm: Main theorem}: we first
show that in differentiable regions with $\mathrm{MCE}>\epsilon>0$,
the condition in Lemma \ref{lem: Ge=00003D0}, $\left(\mathbf{A}\circ\mathbf{X}\right)\mathbf{e}=0$,
implies that $\mathbf{A}=a\left(\mathbf{W}\mathbf{X}\right)$ must
have a low rank. Then, we show that, when $\mathbf{X}\sim\mathcal{N}$
and $\mathbf{W}\sim\mathcal{N}$ , the matrix $\mathbf{A}=a\left(\mathbf{W}\mathbf{X}\right)$
has a low rank with exponentially low probability. Combining both
facts, we obtain the bound.

\paragraph{Existence of global minima.}

Next, to compare the volume of sub-optimal DLMs with that of global
minima, in appendix section \ref{subsec: Overfitting solution} we
show first that, generically, global minima do exist (using a variant
of the proof of \citep[Theorem 1]{Baum1988}):
\begin{thm}
\label{thm: overfitting solution}For any \textbf{$\mathbf{y}\in\left\{ 0,1\right\} {}^{N}$}
and $\mathbf{X}\in\mathbb{R}^{d_{0}\times N}$ almost everywhere\footnote{\emph{i.e.}, the set of entries of $\mathbf{X}$, for which the following
statement does not hold, has zero measure (Lebesgue).} we find matrices $\mathbf{W}^{*}\in\mathbb{R}^{d_{1}^{*}\times d_{0}}$
and $\mathbf{z}^{*}\in\mathbb{R}^{d_{1}^{*}}$, such that $\mathbf{y}=f\left(\mathbf{W}^{*}\mathbf{X}\right)^{\top}\mathbf{z}^{*}$
, where $d_{1}^{*}\triangleq4\left\lceil N/\left(2d_{0}-2\right)\right\rceil $
and $\forall i,n:\,\mathbf{w}_{i}^{\top}\mathbf{x}^{\left(n\right)}\neq0.$
Therefore, every MNN with $d_{1}\geq d_{1}^{*}$ has a DLM which achieves
zero error $\mathbf{e}=0$.
\end{thm}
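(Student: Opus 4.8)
The plan is to construct an explicit interpolating network, following the classical construction of Baum (1988). The key geometric fact is that $N$ generic points in $\mathbb{R}^{d_0}$ can be covered by $O(N/d_0)$ hyperplanes; more precisely, we can group the points into batches of roughly $2d_0-2$ points and place each batch on (or very near) an affine hyperplane, because a hyperplane in $\mathbb{R}^{d_0}$ has $d_0-1$ degrees of freedom after fixing a normalization, and generically $d_0-1$ points determine a hyperplane through them — so with $2(d_0-1)$ points per group we have enough slack to separate, within each group, the points labeled $1$ from the points labeled $0$ by a parallel shift of that hyperplane. Each such ``slab'' (the region between two parallel hyperplanes) can be implemented as a difference of LReLU units: a narrow bump function that is (up to the linear leakage, which we correct globally) equal to $1$ on the points inside the slab and $0$ on all the others.

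\textbf{Step 1: Partition and hyperplane placement.} I would index the $N$ datapoints and split them into $\lceil N/(2d_0-2)\rceil$ groups of size at most $2d_0-2$. For each group $g$, I would pick a nonzero vector $\mathbf{v}_g\in\mathbb{R}^{d_0}$ and thresholds so that the affine functions $\mathbf{v}_g^\top \mathbf{x}^{(n)}$ separate, inside group $g$, the ``$y=1$'' points from the ``$y=0$'' points and from all points in other groups — i.e. the $y=1$ points of group $g$ fall in an interval $(\alpha_g,\beta_g)$ that contains no other datapoint's value of $\mathbf{v}_g^\top\mathbf{x}$. This is where genericity of $\mathbf{X}$ enters: for $\mathbf{X}$ outside a measure-zero set, no $d_0$ of the columns are affinely dependent in a way that would obstruct the construction, and one can choose $\mathbf{v}_g$ avoiding the finitely many bad directions, also ensuring $\mathbf{v}_g^\top\mathbf{x}^{(n)}\neq$ any relevant threshold, which will give the ``$\mathbf{w}_i^\top\mathbf{x}^{(n)}\neq0$'' conclusion once we fold thresholds into (ignored) biases or handle them as in the paper. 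Since biases are omitted in the model (eq.~\ref{eq: MNN}), I would actually realize affine thresholds either by noting the paper's footnote-level convention or by a standard homogenization trick; I'd follow whatever the paper does, but the essential count is unchanged.

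\textbf{Step 2: Bump functions from LReLU pairs and assembling $\mathbf{W}^*,\mathbf{z}^*$.} For group $g$ I would build a ``localized'' response using four LReLU units with first-layer rows proportional to $\pm\mathbf{v}_g$ and appropriate thresholds placed at $\alpha_g$ and $\beta_g$; a suitable linear combination (the second-layer weights $z^*$) of these four units yields, at every datapoint, exactly $1$ if it is a $y=1$ point of group $g$ and exactly $0$ otherwise — here the factor $4$ in $d_1^*=4\lceil N/(2d_0-2)\rceil$ comes from needing two LReLU units per threshold, or equivalently two ``ramps'' to carve out a finite interval, times the two sides; the leakage $\rho\neq1$ does not obstruct this since a difference of two LReLUs with the same slope region is exactly affine/piecewise-linear in the controllable way, and any residual global linear term is itself expressible and can be cancelled. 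Summing over all groups $g$ gives $f(\mathbf{W}^*\mathbf{X})^\top\mathbf{z}^* = \mathbf{y}$ exactly. Stacking the per-group rows gives $\mathbf{W}^*\in\mathbb{R}^{d_1^*\times d_0}$ with $d_1^*=4\lceil N/(2d_0-2)\rceil$, and concatenating the per-group second-layer coefficients gives $\mathbf{z}^*$.

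\textbf{Step 3: From an interpolant to a DLM.} Finally, given any $d_1\geq d_1^*$, pad $\mathbf{W}^*$ with arbitrary extra rows (generic, so that $\mathbf{w}_i^\top\mathbf{x}^{(n)}\neq0$ still holds for all $i,n$, which holds outside a measure-zero set of the padding and can even be taken deterministic) and pad $\mathbf{z}^*$ with zeros. Then $\mathbf{e}=\mathbf{y}-f(\mathbf{W}\mathbf{X})^\top\mathbf{z}=0$, so $\mathrm{MSE}=0$; since $\mathrm{MSE}\geq0$ everywhere, this is a global minimum, and since all pre-activations are nonzero it lies in the interior of the differentiable region $\mathcal{D}_{a(\mathbf{W}\mathbf{X})}(\mathbf{X})$, hence is a DLM. \textbf{The main obstacle} I anticipate is Step 1 — making the ``generic position'' argument fully rigorous: one must exhibit an explicit measure-zero exceptional set of $\mathbf{X}$ and verify that for all other $\mathbf{X}$ the directions $\mathbf{v}_g$ and thresholds can be chosen simultaneously to separate within-group labels while strictly avoiding all other datapoints' projected values and the zero threshold; handling the last group of possibly fewer than $2d_0-2$ points, and handling the edge case $d_0=1$ separately, are minor but need care. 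Everything after the hyperplane placement — the bump-function algebra and the padding — is routine.
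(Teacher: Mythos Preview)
Your proposal has the right high-level architecture---group points, build a bump from four LReLUs per group, sum the bumps---but Step~1 contains a genuine gap, rooted in a misreading of where the denominator $2d_0-2$ in $d_1^*$ comes from.

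In the paper's construction, one does \emph{not} partition all $N$ points into groups of size $2d_0-2$ and then try to slab-separate the labels inside each group. Instead: without loss of generality the class $\{n:y^{(n)}=1\}$ has at most $N/2$ points, and \emph{only these} are partitioned, into groups $\mathcal{S}_i^+$ of size at most $d_0-1$ each. That is the origin of the count $K\le\lceil N/(2(d_0-1))\rceil$. For each such group, because $d_0-1$ generic vectors in $\mathbb{R}^{d_0}$ span a hyperplane through the origin, there exists $\tilde{\mathbf w}_i\neq0$ with $\tilde{\mathbf w}_i^\top\mathbf X_{\mathcal S_i^+}=0$; generically no other datapoint lies on this hyperplane. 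The trapezoid $\tau(\tilde{\mathbf w}_i^\top\mathbf x)$ is then made arbitrarily narrow so that it equals $1$ exactly on $\mathcal S_i^+$ and $0$ on every other sample. The crucial point is that the desired points sit \emph{exactly} at projection value $0$, so a width-$\epsilon$ slab excludes all $N-(d_0-1)$ remaining points for $\epsilon$ small enough; no separation argument between arbitrary labelings is needed. The absence of a bias is handled by introducing an auxiliary direction $\hat{\mathbf w}_i$ (orthogonal to $\tilde{\mathbf w}_i$, with $\hat{\mathbf w}_i^\top\mathbf x^{(n)}=1$ on $\mathcal S_i^+$) and using rows $\tilde{\mathbf w}_i\pm\epsilon_j\hat{\mathbf w}_i$; this is where the four rows per group, and the condition $\mathbf w_i^\top\mathbf x^{(n)}\neq0$, come out cleanly.

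Your Step~1, by contrast, asks for a direction $\mathbf v_g$ such that the $y=1$ points of a group of size $2d_0-2$ fall in an interval of $\mathbf v_g^\top\mathbf x$ containing \emph{no other} datapoint. This requires (i) slab-separating an \emph{arbitrary} labeling inside the group, and (ii) excluding all of the remaining $N-(2d_0-2)$ points from that slab. Neither is available for generic $\mathbf X$ with arbitrary $\mathbf y$: the slab has only $d_0+1$ real parameters, while there are up to $2^{2d_0-2}$ possible in-group labelings and $N$ exclusion constraints. The paper's trick of forcing the target points to projection value exactly $0$ is what makes the slab arbitrarily thin and hence makes (ii) automatic; without it the argument does not close. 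Steps~2 and~3 of your outline are essentially correct once Step~1 is fixed to the paper's grouping.
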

 Recently \citep[Theorem 1]{Zhang2016} similarly proved that a 2-layer
MNN with approximately $2N$ parameters can achieve zero error. However,
that proof required $N$ neurons (similarly to \citep{Nilsson1965,Baum1988,Yu1992,Huang2006,Livni2014,Shen2016}),
while Theorem \ref{thm: overfitting solution} here requires much
less: approximately $d_{1}^{*}\approx2N/d_{0}$. Also, \citep[Theorem 3.2]{Hardt2016}
showed a deep residual network with $N\log N$ parameters can achieve
zero error. In contrast, here we require just one hidden layer with
$2N$ parameters. 

Note the construction in Theorem \ref{thm: overfitting solution}
here achieves zero training error by overfitting to the data realization,
so it is not expected to be a ``good'' solution in terms of generalization.
To get good generalization, one needs to add additional assumptions
on the data ($\mathbf{X}$ and $\mathbf{y}$). Such a possible (common
yet insufficient for MNNs) assumption is that the problem is ``realizable'',
\emph{i.e.}, there exist a small ``solution MNN'', which achieves
low error. For example, in the zero error case: 

{\assm{ \label{asmp: Teacher} \textbf{\emph{(Optional)}} The labels
are generated by some teacher $\mathbf{y}=f\left(\mathbf{W}^{*}\mathbf{X}\right)^{\top}\mathbf{z}^{*}$
with weight matrices $\mathbf{W}^{*}\in\mathbb{R}^{d_{1}^{*}\times d_{0}}$
and $\mathbf{z}^{*}\in\mathbb{R}^{d_{1}^{*}}$ independent of $\mathbf{X}$,
for some $d_{1}^{*}\dot{<}N/d_{0}$.}}

This assumption is not required for our main result (Theorem \ref{thm: volume ratio})
\textendash{} it is merely helpful in improving the following lower
bound on $\mathcal{V}\left(\mathcal{G}\right)$.

\paragraph{Angular volume of global minima.}

We prove in appendix section \ref{sec: global minima proof}:
\begin{thm}
\label{thm: Main theorem 2}Given assumptions \ref{asmp:Gaussian Input}-\ref{asmp: input dimension},
we set $\delta\dot{=}\sqrt{\frac{8}{\pi}}d_{0}^{-1/2}+2d_{0}^{1/2}\sqrt{\log d_{0}}/N$
and $d_{1}^{*}=2N/d_{0}$ , or if assumption \ref{asmp: Teacher}
holds, we set $d_{1}^{*}$ as in this assumption. Then, with probability
$1-\delta$, the angular volume of global minima is lower bounded
as,
\[
\mathcal{V}\left(\mathcal{G}\left(\mathbf{X},\mathbf{y}\right)\right)\dot{>}\exp\left(-d_{1}^{*}d_{0}\log N\right)\dot{\geq}\exp\left(-2N\log N\right)\,.
\]
\end{thm}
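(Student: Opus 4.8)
The plan is to lower bound $\mathcal{V}(\mathcal{G})$ by the angular volume of one explicitly exhibited differentiable region that is certain to contain a global minimum. By Theorem~\ref{thm: overfitting solution} (or, under Assumption~\ref{asmp: Teacher}, directly from the teacher) there are weights $\mathbf{W}^{*}\in\mathbb{R}^{d_{1}^{*}\times d_{0}}$, $\mathbf{z}^{*}\in\mathbb{R}^{d_{1}^{*}}$ with $\mathbf{e}=\mathbf{0}$ and $\mathbf{w}_{i}^{*\top}\mathbf{x}^{(n)}\neq0$ for all $i,n$; since $\mathrm{MSE}\geq0$ this is a global minimum, it is differentiable, and its activation pattern $\mathbf{A}^{*}=a(\mathbf{W}^{*}\mathbf{X})$ lies in $\mathcal{A}(\mathbf{X},\mathbf{y},0)$. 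First I would show that \emph{every} $\mathbf{W}\in\mathbb{R}^{d_{1}\times d_{0}}$ (with $d_{1}\geq d_{1}^{*}$) whose first $d_{1}^{*}$ rows reproduce $\mathbf{A}^{*}$, i.e. $a(\mathbf{w}_{i}^{\top}\mathbf{X})=\mathbf{A}_{i}^{*}$ for $i\leq d_{1}^{*}$, already belongs to $\mathcal{G}$: inside $\mathcal{D}_{a(\mathbf{W}\mathbf{X})}(\mathbf{X})$ one may legally replace the first $d_{1}^{*}$ rows of $\mathbf{W}$ by $\mathbf{W}^{*}$ (they lie in the same cells, which are cones) and take $\mathbf{z}=(\mathbf{z}^{*},0,\dots,0)$, giving $\mathbf{e}=\mathbf{0}$, so that region contains a global minimum. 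Since the rows of $\mathbf{W}\sim\mathcal{N}$ are i.i.d., this yields
\[
\mathcal{V}(\mathcal{G})\;\geq\;\prod_{i=1}^{d_{1}^{*}}\mathbb{P}_{\mathbf{w}\sim\mathcal{N}(0,I_{d_{0}})}\!\bigl(a(\mathbf{w}^{\top}\mathbf{X})=\mathbf{A}_{i}^{*}\bigr)\;=\;\prod_{i=1}^{d_{1}^{*}}\mathcal{V}(\mathcal{C}_{i})\,,
\]
where $\mathcal{C}_{i}\subset\mathbb{R}^{d_{0}}$ is the cell of the central hyperplane arrangement $\{\mathbf{x}^{(n)}\}_{n=1}^{N}$ containing $\mathbf{w}_{i}^{*}$.

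It then remains to lower bound each $\mathcal{V}(\mathcal{C}_{i})$. The cell $\mathcal{C}_{i}$ contains the spherical cap about $\mathbf{w}_{i}^{*}/\|\mathbf{w}_{i}^{*}\|$ of angular radius $r_{i}=\arcsin\bigl(\min_{n}\gamma_{i,n}\bigr)$, where $\gamma_{i,n}=|\mathbf{w}_{i}^{*\top}\mathbf{x}^{(n)}|/(\|\mathbf{w}_{i}^{*}\|\,\|\mathbf{x}^{(n)}\|)$ is the normalized margin of $\mathbf{w}_{i}^{*}$ at sample $n$ --- any unit vector within angular distance $r_{i}$ of $\mathbf{w}_{i}^{*}$ stays on the same side of every $\mathbf{x}^{(n)}$. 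The normalized measure of that cap is at least $\exp\!\bigl(-(d_{0}-1)\log(1/\sin r_{i})-O(\log d_{0})\bigr)$, so the task reduces to showing $\min_{i\leq d_{1}^{*},\,n}\gamma_{i,n}$ is not too small. This is exactly where the Gaussian input assumption enters: for the teacher case $\mathbf{w}_{i}^{*}$ is independent of $\mathbf{X}$, so $\mathbf{w}_{i}^{*\top}\mathbf{x}^{(n)}/\|\mathbf{w}_{i}^{*}\|\sim\mathcal{N}(0,1)$, and Gaussian anti-concentration with a union bound over the $d_{1}^{*}N$ pairs $(i,n)$, plus $\chi^{2}$ concentration of $\|\mathbf{x}^{(n)}\|^{2}$ around $d_{0}$, gives --- with the failure probability $\delta$ quoted in the statement --- a bound $\min_{i,n}\gamma_{i,n}\geq1/\mathrm{poly}(N)$ with $(d_{0}-1)\log(1/\sin r_{i})\leq(1+o(1))\,d_{0}\log N$. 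For the no-teacher case I would instead argue that, for Gaussian $\mathbf{X}$, the construction underlying Theorem~\ref{thm: overfitting solution} generically has margins of the same order (its hyperplanes are fixed by small subsets of the data, in general position almost surely, and can be moved to the incenters of their cells). Multiplying over the $d_{1}^{*}$ units gives $-\log\mathcal{V}(\mathcal{G})\leq(1+o(1))\,d_{1}^{*}d_{0}\log N$, and the final $\dot{\geq}\exp(-2N\log N)$ follows by substituting $d_{1}^{*}=2N/d_{0}$ (or, under Assumption~\ref{asmp: Teacher}, $d_{1}^{*}\dot{<}N/d_{0}\dot{\leq}N$).

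The hard part is the last step --- pinning down the constant in the exponent. The naive inscribed-cap bound, combined with the crude margin estimate $\min_{i,n}\gamma_{i,n}\gtrsim\delta/(d_{1}^{*}N\sqrt{d_{0}})$, only gives $-\log\mathcal{V}(\mathcal{C}_{i})\lesssim(d_{0}-1)\bigl(\log N+\tfrac12\log d_{0}+\log(1/\delta)\bigr)$, which loses a constant factor because $\log d_{0}$ is comparable to $\log N$ under Assumption~\ref{asmp: input dimension}. Recovering the sharp rate requires either a sharper tail argument --- noting that $\log(1/\min_{n}\gamma_{i,n})-\log N$ has a sub-exponential tail, so $\sum_{i}\log(1/\min_{n}\gamma_{i,n})$ concentrates near $d_{1}^{*}\log N$ --- or a volume estimate showing $\mathcal{V}(\mathcal{C}_{i})$ is close to the typical cell volume $\asymp(d_{0}/N)^{d_{0}}$ rather than merely to its inscribed cap; the precise form of $\delta$ in the statement is dictated by exactly how much slack this calculation can afford. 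A secondary nuisance is upgrading the ``$\neq0$'' guarantee of Theorem~\ref{thm: overfitting solution} to a genuinely quantitative lower bound on the margins.
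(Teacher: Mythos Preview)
Your approach coincides with the paper's: bound $\mathcal{V}(\mathcal{G})$ below by the volume of $\tilde{\mathcal{G}}=\{\mathbf{W}:a(\mathbf{w}_i^\top\mathbf{X})=a(\mathbf{w}_i^{*\top}\mathbf{X}),\ i\le d_1^*\}$, factor over the independent Gaussian rows, and bound each factor by the measure of the spherical cap about $\mathbf{w}_i^*$ of radius equal to the angular margin of $\mathbf{X}$ from $\mathbf{w}_i^*$. The paper simply fixes the margin threshold at $\sin\alpha=1/(d_1^* d_0 N)$, applies the Beta-distribution cap bound, and records the result as $\exp(d_0 d_1^*\log\sin\alpha)$; it does not use any of the refinements you propose (sub-exponential concentration of $\sum_i\log(1/\min_n\gamma_{i,n})$, typical-cell-volume estimates). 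With this choice $\log(1/\sin\alpha)=\log(d_1^* d_0 N)$ rather than $\log N$, so the exponent carries an extra constant factor --- but the paper absorbs this into the $\dot{}$ notation, and in any case Theorem~\ref{thm: volume ratio} only needs $-\log\mathcal{V}(\mathcal{G})=O(N\log N)$ to be dominated by $N^{3/4}(d_0 d_1)^{1/4}$ under Assumption~\ref{asmp: over-parameterization}, so the constant you are chasing is immaterial.

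The actual gap in your proposal is what you call a ``secondary nuisance''. For the data-dependent $\mathbf{W}^*$ of Theorem~\ref{thm: overfitting solution}, establishing the margin event $\min_{i,n}|\mathbf{w}_i^{*\top}\mathbf{x}^{(n)}|/(\|\mathbf{w}_i^*\|\|\mathbf{x}^{(n)}\|)\ge\sin\alpha$ with the stated failure probability $\delta$ is the bulk of the work here (the paper's Lemma~\ref{lem: Overfitting Margin}). The paper does \emph{not} relocate rows to cell incenters; it keeps the explicit $\mathbf{w}_i^{(j)}=\tilde{\mathbf{w}}_i\pm\epsilon\hat{\mathbf{w}}_i$ from the construction and bounds the margin through three separate pieces: control of $\|\hat{\mathbf{w}}_i\|$ via the smallest singular value of a square Gaussian block, $\chi^2$ tails for $\max_n\|\mathbf{x}^{(n)}\|$, and a Cauchy-ratio bound governing the construction parameter $\epsilon_1$. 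The specific form $\delta\dot{=}\sqrt{8/\pi}\,d_0^{-1/2}+2d_0^{1/2}\sqrt{\log d_0}/N$ falls out of this analysis, not out of the cap-volume step. Your incenter alternative would still require a quantitative inradius bound for a cell of a random central arrangement of $N$ hyperplanes in $\mathbb{R}^{d_0}$, which is not obviously easier than the paper's direct route.
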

\emph{Proof idea:} First, we lower bound $\mathcal{V}\left(\mathcal{G}\right)$
with the angular volume of a single differentiable region of one global
minimum $\left(\mathbf{W^{*}},\mathbf{z}^{*}\right)$ \textendash{}
either from Theorem \ref{thm: overfitting solution}, or from assumption
\ref{asmp: Teacher}. Then we show that this angular volume is lower
bounded when $\mathbf{W}\sim\mathcal{N}$, given a certain angular
margin between the datapoints in $\mathbf{X}$ and the rows of $\mathbf{W}^{*}$.
We then calculate the probability of obtaining this margin when $\mathbf{X}\sim\mathcal{N}$.
Combining both results, we obtain the final bound.

\paragraph{Main result: angular volume ratio.}

Finally, combining Theorems \ref{thm: Main theorem} and \ref{thm: Main theorem 2}
it is straightforward to prove our main result in this paper, as we
do in appendix section \ref{sec:proof of Volume-ratio}:
\begin{thm}
\label{thm: volume ratio} Given assumptions \ref{asmp:Gaussian Input}-\ref{asmp: input dimension},
we set $\delta\doteq\sqrt{\frac{8}{\pi}}d_{0}^{-1/2}+2d_{0}^{1/2}\sqrt{\log d_{0}}/N$.
Then, with probability $1-\delta$, the angular volume of sub-optimal
DLMs, with $\mathrm{MCE}>\epsilon>0$, is exponentially vanishing
in N, in comparison to the angular volume of global minima with $\mathrm{MCE=0}$
\[
\frac{\mathcal{V}\left(\mathcal{L}_{\epsilon}\left(\mathbf{X},\mathbf{y}\right)\right)}{\mathcal{V}\left(\mathcal{G}\left(\mathbf{X},\mathbf{y}\right)\right)}\dot{\leq}\exp\left(-\gamma_{\epsilon}N^{3/4}\left[d_{1}d_{0}\right]^{1/4}\right)\dot{\leq}\exp\left(-\gamma_{\epsilon}N\log N\right)\,.
\]
\end{thm}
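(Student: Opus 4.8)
The plan is to simply divide the two bounds already established. By Theorem \ref{thm: Main theorem 2}, with probability $1-\delta$ over $\mathbf{X}\sim\mathcal{N}$ (with $\delta$ as given in the statement), the denominator satisfies $\mathcal{V}\left(\mathcal{G}\left(\mathbf{X},\mathbf{y}\right)\right)\dot{>}\exp\left(-2N\log N\right)$; note that this theorem requires only assumptions \ref{asmp:Gaussian Input}--\ref{asmp: input dimension}, and the choice $d_{1}^{*}=2N/d_{0}$ avoids invoking the optional Teacher assumption, matching the hypotheses claimed here. For the numerator, I would like to likewise get a high-probability bound, so I would first apply Corollary \ref{cor: main theorem corollary} to Theorem \ref{thm: Main theorem}: for any $\delta'>0$, with probability $1-\delta'$, $\mathcal{V}\left(\mathcal{L}_{\epsilon}\left(\mathbf{X},\mathbf{y}\right)\right)\dot{\leq}\frac{1}{\delta'}\exp\left(-\gamma_{\epsilon}N^{3/4}\left[d_{1}d_{0}\right]^{1/4}\right)$. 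The natural choice is $\delta'=\delta$ (or any polynomially-small quantity), since the factor $1/\delta$ is only polynomial in $N$ and $d_0$ and hence does not affect an asymptotic inequality $\dot{\leq}$ against an exponential in $N$.

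Next I would union-bound the two failure events. With probability at least $1-2\delta$, both the numerator upper bound and the denominator lower bound hold simultaneously; one then redefines $\delta\leftarrow\delta/2$ (or absorbs the constant $2$, since $\delta$ is already of the stated form up to constants) to match the statement. Dividing,
\[
\frac{\mathcal{V}\left(\mathcal{L}_{\epsilon}\left(\mathbf{X},\mathbf{y}\right)\right)}{\mathcal{V}\left(\mathcal{G}\left(\mathbf{X},\mathbf{y}\right)\right)}\dot{\leq}\frac{1}{\delta}\exp\left(-\gamma_{\epsilon}N^{3/4}\left[d_{1}d_{0}\right]^{1/4}+2N\log N\right).
\]
To conclude the first asymptotic inequality I need $2N\log N$ (and $\log(1/\delta)$) to be asymptotically negligible compared with $\gamma_{\epsilon}N^{3/4}\left[d_{1}d_{0}\right]^{1/4}$, so that the right-hand side is $\dot{\leq}\exp\left(-\gamma_{\epsilon}N^{3/4}\left[d_{1}d_{0}\right]^{1/4}\right)$. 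By Assumption \ref{asmp: over-parameterization}, $d_{0}d_{1}\dot{>}N\log^{4}N$, hence $N^{3/4}\left[d_{1}d_{0}\right]^{1/4}\dot{>}N^{3/4}\left(N\log^{4}N\right)^{1/4}=N\log N$, which is exactly what is needed: the exponent $\gamma_{\epsilon}N^{3/4}[d_1 d_0]^{1/4}$ dominates $2N\log N$. For the second (weaker) asymptotic inequality in the statement, I would use the same bound $N^{3/4}\left[d_{1}d_{0}\right]^{1/4}\dot{\geq}N\log N$ directly to replace the exponent by $-\gamma_{\epsilon}N\log N$.

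This argument is essentially bookkeeping; the substantive work is entirely in Theorems \ref{thm: Main theorem} and \ref{thm: Main theorem 2}, which I am permitted to assume. The only place requiring any care — the ``main obstacle,'' such as it is — is checking that the over-parameterization lower bound $d_{0}d_{1}\dot{>}N\log^{4}N$ from eq. (\ref{eq: overparameterization}) is precisely the threshold that makes $N^{3/4}[d_1d_0]^{1/4}$ outgrow the $N\log N$ term coming from the crude global-minimum volume lower bound, and confirming that the exponent $\log^4 N$ there (rather than, say, $\log^{3}N$) is the smallest that works under the $\dot{<}$ convention. I would also double-check that combining the probabilistic statements is legitimate: both Corollary \ref{cor: main theorem corollary} and Theorem \ref{thm: Main theorem 2} are statements over the randomness of $\mathbf{X}\sim\mathcal{N}$ for the fixed label vector $\mathbf{y}$, so a single union bound over $\mathbf{X}$ suffices and no independence assumption is needed.
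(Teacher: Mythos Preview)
Your proposal is correct and follows essentially the same strategy as the paper: combine the upper bound on $\mathcal{V}(\mathcal{L}_\epsilon)$ from Theorem~\ref{thm: Main theorem} with the lower bound on $\mathcal{V}(\mathcal{G})$ from Theorem~\ref{thm: Main theorem 2}, and then invoke Assumption~\ref{asmp: over-parameterization} to check that $N^{3/4}[d_0d_1]^{1/4}\dot{>}N\log N$ so the numerator exponent dominates the $2N\log N$ coming from the denominator. The only mechanical difference is in how the two probabilistic statements are merged: you take the high-probability versions of both bounds (Corollary~\ref{cor: main theorem corollary} and Theorem~\ref{thm: Main theorem 2}) and union-bound the failure events, whereas the paper conditions on the good event $\mathcal{M}$ of Theorem~\ref{thm: Main theorem 2}, bounds the \emph{conditional expectation} of the ratio via $\mathbb{E}[X\mid\mathcal{M}]\le\mathbb{E}[X]/\mathbb{P}(\mathcal{M})$ together with Theorem~\ref{thm: Main theorem}, and only then applies Markov's inequality to the ratio itself. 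Your route is slightly more direct and avoids the conditional-expectation manipulation at the cost of the harmless factor-of-$2$ in $\delta$ that you already noted; the paper's route keeps $\delta$ exactly as stated but needs the extra Fact~\ref{fact: Expectation condigional bound}. Both are equally valid bookkeeping around the same two substantive theorems.
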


\section{Numerical Experiments\label{sec:Numerical-Experiments}}

\begin{figure*}[t]
\begin{centering}
\includegraphics[width=0.7\columnwidth]{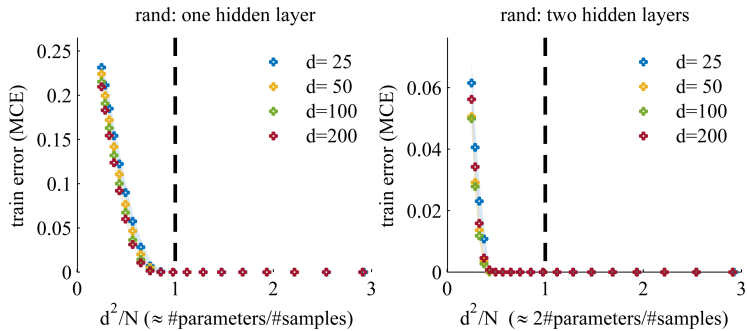}
\par\end{centering}
\caption{\textbf{Gaussian data: final training error (mean$\pm$std, 30 repetitions)
in the over-parameterized regime is low (right of the dashed black
line). }We trained MNNs with one and two hiddens layer (with widths
equal to $d=d_{0}$) on a synthetic random dataset in which $\forall n=1,\dots,N$,
\textbf{$\mathbf{x}^{\left(n\right)}$ }was drawn from a normal distribution
$\mathcal{N}\left(0,1\right)$,\textbf{ }and $y^{\left(n\right)}=\pm1$
with probability $0.5$. \label{fig:training error}}
\end{figure*}

Theorem \ref{thm: volume ratio} implies that, with ``asymptotically
mild'' over-parameterization (\emph{i.e. }in which \#parameters =$\tilde{\Omega}\left(N\right)$),
differentiable regions in weight space containing sub-optimal DLMs
(with high MCE) are exponentially small in comparison with the same
regions for global minima. Since these results are asymptotic in $N\rightarrow\infty$,
in this section we examine it numerically for a finite number of samples
and parameters. We perform experiments on random data, MNIST, CIFAR10
and ImageNet-ILSVRC2012. In each experiment, we used ReLU activations
($\rho=0$), a binary classification target (we divided the original
classes to two groups), MSE loss for optimization (eq. (\ref{eq:MSE 2-layer})),
and MCE to determine classification error. Additional implementation
details are given in appendix part \ref{sec:Implementation-details}. 

First, on the small synthetic Gaussian random data (matching our assumptions)
we perform a scan on various networks and dataset sizes. With either
one or two hidden layers (Figure \ref{fig:training error}) , the
error goes to zero when the number of non-redundant parameters (approximately
$d_{0}d_{1}$) is greater than the number of samples, as suggested
by our asymptotic results. Second, on the non-syntehtic datasets,
MNIST, CIFAR and ImageNet (In ImageNet we downsampled the images to
size $64\times64$, to allow input whitening) we only perform a simulation
with a single 1-hidden layer MNN for which $\#\mathrm{parameters}\approx N$
, and again find (Table \ref{tab:CIFAR-and-ImageNet}) that the final
error is zero (for MNIST and CIFAR) or very low (ImageNet).

Lastly, in Figure \ref{fig:Differntiability-of-local-minima} we find
that, on the Gaussian dataset, the inputs to the hidden neurons converge
to a distinctly non-zero value. This indicates we converged to DLMs
\textendash{} since non-differentiable critical points must have zero
neural inputs. Note that occasionally, during optimization, we could
find some neural inputs with very low values near numerical precision
level, so convergence to non-differentiable minima may be possible.
However, as explained in the next section, as long as the number of
neural inputs equal to zero are not too large, our bounds also hold
for these minima.

\begin{table}
\begin{centering}
\begin{tabular}{|c|c|c|c|c|c|}
\hline 
 & $\mathrm{MCE}$ & $d_{0}$ & $d_{1}$ & $N$ & \textbf{$\#\mathrm{parameters}/N$}\tabularnewline
\hline 
\hline 
MNIST & $\mathbf{0\%}$ & 784 & 89 & $7\cdot10^{4}$ & \textbf{$\mathbf{0.999}$}\tabularnewline
\hline 
CIFAR & $\mathbf{0\%}$ & 3072 & 16 & $5\cdot10^{4}$ & \textbf{$\mathbf{0.983}$}\tabularnewline
\hline 
ImageNet (downsampled to $64\times64$) & $\mathbf{0.1\%}$ & 12288 & 105 & $128\cdot10^{4}$ & \textbf{$\mathbf{1.008}$}\tabularnewline
\hline 
\end{tabular}
\par\end{centering}
\caption{\textbf{Binary classification of MNIST, CIFAR and ImageNet: 1-hidden
layer achieves very low training error (MCE)} \textbf{with a few hidden
neurons, so that $\mathrm{\#parameters}\approx d_{0}d_{1}\approx N$}.
In ImageNet we downsampled the images to allow input whitening. \label{tab:CIFAR-and-ImageNet}}
\end{table}

\begin{wrapfigure}{I}{0.5\columnwidth}%
\begin{centering}
\includegraphics[height=2.6cm]{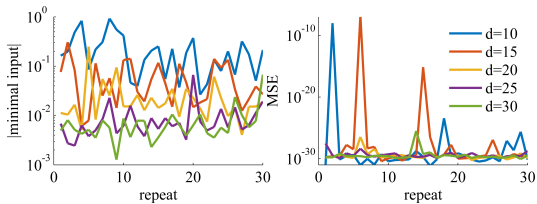}
\par\end{centering}
\caption{\textbf{Gaussian data: convergence of the MSE to }\textbf{\emph{differentiable}}\textbf{
local minima, as indicated by the convergence of the neural inputs
to distinctly non-zero values. }We trained MNNs with one hidden layer
on the Gaussian dataset from Figure \ref{fig:training error}, with
various widths $d=d_{0}=d_{1}$ and $N=\left\lfloor d^{2}/5\right\rfloor $
for $1000$ epochs, then decreased the learning rate exponentially
for another $1000$ epochs. This was repeated $30$ times. For all
$d$ and repeats, we see that \emph{(left}) the final absolute value
of the minimal neural input (\emph{i.e.}, $\min_{i,n}\left|\mathbf{w}_{i}^{\top}\mathbf{x}^{\left(n\right)}\right|$)
in the range of $10^{-3}-10^{0}$, which is much larger then (r\emph{ight})
the final MSE error for all $d$ and all repeats \textendash{} in
the range $10^{-31}-10^{-7}$.\label{fig:Differntiability-of-local-minima}}
\end{wrapfigure}%

\section{Discussion\label{sec:Discussion}}

In this paper we examine Differentiable Local Minima (DLMs) of the
empiric loss of Multilayer Neural Networks (MNNs) with one hidden
layer, scalar output, and LReLU nonlinearities (section \ref{sec:Preliminaries}).
We prove (Theorem \ref{thm: volume ratio}) that with high probability
the angular volume (definition \ref{def: angular volume}) of sub-optimal
DLMs is exponentially vanishing in comparison to the angular volume
of global minima (definition \ref{def: local minima regions}), under
assumptions \ref{asmp:Gaussian Input}-\ref{asmp: over-parameterization}.
This results from an upper bound on sub-optimal DLMs (Theorem \ref{thm: Main theorem})
and a lower bound on global minima (Theorem \ref{thm: Main theorem 2}).

\paragraph{Convergence of SGD to DLMs.}

These results suggest a mechanism through which low training error
is obtained in such MNNs. However, they do not guarantee it. One issue
is that sub-optimal DLMs may have exponentially large basins of attraction.
We see two possible paths that might address this issue in future
work, using additional assumptions on $\mathbf{y}$. One approach
is to show that, with high probability, \emph{no sub optimal DLM}
falls within the vanishingly small differentiable regions we bounded
in Theorem \ref{thm: Main theorem}. Another approach would be to
bound the size of these basins of attraction, by showing that sufficiently
large of number of differentiable regions near the DLM are also vanishingly
small (other methods might also help here \citep{Freeman2016a}).
Another issue is that SGD might get stuck near differentiable saddle
points, if their Hessian does not have strictly negative eigenvalues
(\emph{i.e.}, the strict saddle property \citep{Sun2015}). It should
be straightforward to show that such points also have exponentially
vanishing angular volume, similar to sub-optimal DLMs. Lastly, SGD
might also converge to non-differentiable critical points, which we
discuss next.

\paragraph{Non-differentiable critical points.}

The proof of Theorem \ref{thm: Main theorem} stems from a first order
necessary condition (Lemma \ref{lem: Ge=00003D0}): $\left(\mathbf{A}\circ\text{\textbf{X}}\right)\mathbf{e}=0$,
which is true for any DLM. However, non-differentiable critical points,
in which some neural inputs are exactly zero, may also exist (though,
numerically, they don't seem very common \textendash{} see Figure
\ref{fig:Differntiability-of-local-minima}). In this case, to derive
a similar bound, we can replace the condition with $\mathbf{P}\left(\mathbf{A}\circ\mathbf{X}\right)\mathbf{e}=0$,
where $\mathbf{P}$ is a projection matrix to the subspace orthogonal
to the non-differentiable directions. As long as there are not too
many zero neural inputs, we should be able to obtain similar results.
For example, if only a constant ratio $r$ of the neural inputs are
zero, we can simply choose $\mathbf{P}$ to remove all rows of $\left(\mathbf{A}\circ\mathbf{X}\right)$
corresponding to those neurons, and proceed with exactly the same
proof as before, with $d_{1}$ replaced with $\left(1-r\right)d_{1}$.
It remains a theoretical challenge to find reasonable assumptions
under which the number of non-differentiable directions (\emph{i.e.},
zero neural inputs) does not become too large.

\paragraph{Related results.}

Two works have also derived related results using the $\left(\mathbf{A}\circ\text{\textbf{X}}\right)\mathbf{e}=0$
condition from Lemma \ref{lem: Ge=00003D0}. In \citep{Soudry2016},
it was noticed that an infinitesimal perturbation of \textbf{$\mathbf{A}$}
makes the matrix $\mathbf{A}\circ\text{\textbf{X}}$ full rank with
probability 1 \citep[Lemma 13]{Allman2009} \textendash{} which entails
that $\mathbf{e}=0$ at all DLMs. Though a simple and intuitive approach,
such an infinitesimal perturbation is problematic: from continuity,
it cannot change the original MSE at sub-optimal DLMs \textendash{}
unless the weights go to infinity, or the DLM becomes non-differentiable
\textendash{} which are both undesirable results. An extension of
this analysis was also done to constrain $\mathbf{e}$ using the singular
values of $\mathbf{A}\circ\text{\textbf{X}}$ \citep{Xie2016}, deriving
bounds that are easier to combine with generalization bounds. Though
a promising approach, the size of the sub-optimal regions (where the
error is high) does not vanish exponentially in the derived bounds.
More importantly, these bounds require assumptions on the activation
kernel spectrum $\gamma_{m}$, which do not appear to hold in practice
(\emph{e.g.}, \citep[Theorems 1,3]{Xie2016} require $m\gamma_{m}\gg1$
to hold with high probability, while $m\gamma_{m}<10^{-2}$ in \citep[Figure 1]{Xie2016}).

\paragraph{Modifications and extensions.}

There are many relatively simple extensions of these results: the
Gaussian assumption could be relaxed to other near-isotropic distributions
(\emph{e.g.}, sparse-land model, \citep[Section 9.2]{Elad2010}) and
other convex loss functions are possible instead of the quadratic
loss. More challenging directions are extending our results to MNNs
with multi-output and multiple hidden layers, or combining our training
error results with novel generalization bounds which might be better
suited for MNNs (\emph{e.g.}, \citep{Feng2016,Sokolic2016,Dziugaite2017})
than previous approaches \citep{Zhang2016}.

\subsubsection*{Acknowledgments}

The authors are grateful to A. Z. Abassi, D. Carmon, R. Giryes, and
especially to Y. Carmon for all the insightful advice we have received
during this work, and to I. Hubara, I. Safran, and R. Meir for helpful
comments on the manuscript. The research was supported by the Gruss
Lipper Charitable Foundation, and by the Intelligence Advanced Research
Projects Activity (IARPA) via Department of Interior/ Interior Business
Center (DoI/IBC) contract number D16PC00003. The U.S. Government is
authorized to reproduce and distribute reprints for Governmental purposes
notwithstanding any copyright annotation thereon. Disclaimer: The
views and conclusions contained herein are those of the authors and
should not be interpreted as necessarily representing the official
policies or endorsements, either expressed or implied, of IARPA, DoI/IBC,
or the U.S. Government.

{\small{}\bibliographystyle{iclr2018_conference_no_url}

}{\small \par}

\newpage{}

\part*{Supplementary information - Appendix}

The appendix is divided into three parts. In part \ref{part:Proofs-of-Main-Results}
we prove all the main theorems mentioned in the paper. Some of these
rely on other technical results, which we prove later in part \ref{part:Technical-Results-and-Proofs}.
Lastly, in part \ref{sec:Implementation-details} we give additional
numerical details and results. First, however, we define additional
notation (some already defined in the main paper) and mention some
known results, which we will use in our proofs.

\section*{Extended Preliminaries }
\begin{itemize}
\item The indicator function $\mathcal{I}\left(\mathcal{A}\right)\triangleq\begin{cases}
1 & ,\mathrm{if}\,\mathcal{A}\\
0 & ,\mathrm{else}
\end{cases}$, for any event $\mathcal{A}$.
\item Kronecker's delta $\delta_{ij}\triangleq\mathcal{I}\left(i=j\right)$.
\item The Matrix $\mathbf{I}_{d}$ as the identity matrix in $\mathbb{R}^{d\times d}$,
and $\mathbf{I}_{d\times k}$ is the relevant $\mathbb{R}^{d\times k}$
upper left sub-matrix of the identity matrix.
\item $\left[L\right]\triangleq\left\{ 1,2,\dots,L\right\} $
\item The vector $\mathbf{m}_{n}$ as the $n$'th column of a matrix $\mathbf{M}$,
unless defined otherwise (then $\mathbf{m}_{n}$ will be a row of
$\mathbf{M}$).
\item $\mathbf{M}>0$ implies that $\forall i,j:\,M_{ij}>0$.
\item $\mathbf{M}_{S}$ is the matrix composed of the columns of $\mathbf{M}$
that are in the index set $S$.
\item A property holds ``$\mathbf{M}$-almost everywhere'' (a.e. for short),
if the set of entries of $\mathbf{M}$ for which the property does
not hold has zero measure (Lebesgue).
\item $\left\Vert \mathbf{v}\right\Vert _{0}=\sum_{i=1}^{d}\mathcal{I}\left(v_{i}>0\right)$
is the $L_{0}$ ``norm'' that counts the number of non-zero values
in $\mathbf{v}\in\mathbb{R}^{d}$. 
\item If $\mathbf{x}\sim\mathcal{N}\left(\boldsymbol{\mu},\boldsymbol{\Sigma}\right)$
the $\mathbf{x}$ is random Gaussian vector. 
\item $\phi\left(x\right)\triangleq\frac{1}{\sqrt{2\pi}}\exp\left(-\frac{1}{2}x^{2}\right)$
as the univariate Gaussian probability density function.
\item $\Phi\left(x\right)\triangleq\int_{-\infty}^{x}\phi\left(u\right)du$
as the Gaussian cumulative distribution function. 
\item $B\left(x,y\right)$ as the beta function.
\end{itemize}
Lastly, we recall the well known Markov Inequality:
\begin{fact}
\textbf{\emph{(Markov Inequality)\label{fact: Markov-Inequality}}}
For any random variable $X\geq0$, we have $\forall\eta>0$
\[
\mathbb{P}\left(X\geq\eta\right)\leq\frac{\mathbb{E}X}{\eta}\,.
\]
\end{fact}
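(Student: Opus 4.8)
The plan is to prove Markov's inequality by the standard indicator-bounding argument, exploiting the nonnegativity of $X$ together with monotonicity of expectation. The entire proof rests on a single pointwise inequality between random variables, which after taking expectations yields the claim directly.

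First I would fix $\eta>0$ and introduce the indicator $\mathcal{I}\left(X\geq\eta\right)$ (defined in the Extended Preliminaries). The key observation is the deterministic, sample-wise inequality $\eta\,\mathcal{I}\left(X\geq\eta\right)\leq X$, which holds for every realization of $X$: on the event $\left\{X\geq\eta\right\}$ the left side equals $\eta\leq X$, while on its complement the left side is $0\leq X$ since $X\geq0$ by hypothesis. This is exactly where the nonnegativity assumption enters --- it guarantees the bound on the complementary event.

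Next I would take expectations of both sides. By monotonicity of expectation the inequality is preserved, and by linearity $\mathbb{E}\left[\eta\,\mathcal{I}\left(X\geq\eta\right)\right]=\eta\,\mathbb{E}\left[\mathcal{I}\left(X\geq\eta\right)\right]=\eta\,\mathbb{P}\left(X\geq\eta\right)$, using the fact that the expectation of an indicator equals the probability of the corresponding event. This gives $\eta\,\mathbb{P}\left(X\geq\eta\right)\leq\mathbb{E}X$. Finally, dividing through by $\eta>0$ yields $\mathbb{P}\left(X\geq\eta\right)\leq\mathbb{E}X/\eta$, as desired.

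Since the argument is a two-line application of monotonicity of expectation, there is no substantive obstacle here; the only point requiring any care is verifying the pointwise inequality on the event $\left\{X<\eta\right\}$, where the hypothesis $X\geq0$ is essential and cannot be dropped. (Note that no integrability assumption is needed: if $\mathbb{E}X=+\infty$ the bound is vacuous, and otherwise the manipulations above are all finite and valid.)
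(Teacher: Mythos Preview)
Your proof is correct and is the standard indicator-bounding argument for Markov's inequality. The paper does not actually prove this statement: it is listed under ``Extended Preliminaries'' as a recalled well-known fact, so there is no paper proof to compare against.
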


\part{Proofs of the main results\label{part:Proofs-of-Main-Results}}

\section{First order condition: Proof of Lemma \emph{\ref{lem: Ge=00003D0}}\label{sec:First-order-condition proof}}
\begin{lem}
\textbf{\emph{(Lemma \ref{lem: Ge=00003D0} restated)}} At all DLMs
in $\mathcal{D}_{\mathbf{A}}\left(\mathbf{X}\right)$ the residual
error $\mathbf{e}$ is identical, and furthermore
\begin{equation}
\left(\mathbf{A}\circ\mathbf{X}\right)\mathbf{e}=0\,.\label{eq:Ge=00003D0-1}
\end{equation}
\end{lem}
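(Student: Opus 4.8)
The plan is to derive equation (\ref{eq:Ge=00003D0-1}) directly from the first-order stationarity condition $\nabla \mathrm{MSE} = 0$, specialized to the second weight layer $\mathbf{z}$, and then to use the stationarity condition on $\mathbf{W}$ together with the local minimality to pin down $\mathbf{e}$ itself. Throughout we fix $\mathbf{A} \in \{\rho,1\}^{d_1 \times N}$ and a DLM $(\mathbf{W},\mathbf{z})$ with $\mathbf{W} \in \mathcal{D}_{\mathbf{A}}(\mathbf{X})$, so that on an open neighbourhood of $\mathbf{W}$ we have $a(\mathbf{W}\mathbf{X}) = \mathbf{A}$ identically, hence $f(\mathbf{W}\mathbf{X}) = \mathbf{A} \odot (\mathbf{W}\mathbf{X})$ with $\odot$ the entrywise (Hadamard) product, and the MSE is a genuinely smooth function of $(\mathbf{W},\mathbf{z})$ near this point.

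First I would compute $\nabla_{\mathbf{z}}\mathrm{MSE}$. Writing $\mathbf{e} = \mathbf{y} - f(\mathbf{W}\mathbf{X})^{\top}\mathbf{z}$ and $\mathrm{MSE} = \tfrac1N\|\mathbf{e}\|^2$, we get $\nabla_{\mathbf{z}}\mathrm{MSE} = -\tfrac{2}{N} f(\mathbf{W}\mathbf{X})\,\mathbf{e}$, so stationarity gives $f(\mathbf{W}\mathbf{X})\,\mathbf{e} = 0$, i.e. $\sum_n f(\mathbf{W}\mathbf{x}^{(n)})\, e^{(n)} = 0$ in $\mathbb{R}^{d_1}$. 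Next I would compute the gradient with respect to a single row $\mathbf{w}_i$ of $\mathbf{W}$: since $\big(f(\mathbf{W}\mathbf{X})^{\top}\mathbf{z}\big)^{(n)} = \sum_i z_i A_{in} (\mathbf{w}_i^{\top}\mathbf{x}^{(n)})$ locally, we have $\nabla_{\mathbf{w}_i}\mathrm{MSE} = -\tfrac{2}{N} z_i \sum_n A_{in} e^{(n)} \mathbf{x}^{(n)}$, so stationarity gives $z_i \sum_n A_{in} e^{(n)} \mathbf{x}^{(n)} = 0$ for every $i$. The key observation is that if $z_i \neq 0$ then $\sum_n A_{in} e^{(n)} \mathbf{x}^{(n)} = 0$, which is exactly the $i$-th block of $(\mathbf{A}\circ\mathbf{X})\mathbf{e} = 0$ (recall the $i$-th block of $\mathbf{a}^{(n)}\otimes\mathbf{x}^{(n)}$ is $a_i^{(n)}\mathbf{x}^{(n)} = A_{in}\mathbf{x}^{(n)}$). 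So the only thing left to handle is the set of hidden units $i$ with $z_i = 0$.

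For the units with $z_i = 0$, this is where the local-minimum hypothesis (as opposed to mere criticality) enters, and I expect this to be the main obstacle. The idea is that if $z_i = 0$, the gradient in the $\mathbf{w}_i$ direction vanishes trivially, but the $\mathbf{z}_i$-direction can be exploited: consider perturbing $z_i$ to $t$ while keeping $\mathbf{w}_i$ fixed (so we stay in $\mathcal{D}_{\mathbf{A}}(\mathbf{X})$ for small $t$). The MSE along this line is a quadratic in $t$, $\tfrac1N\|\mathbf{e} - t\, f(\mathbf{W}\mathbf{x})_{\text{row }i}\|^2$ where the relevant vector has entries $A_{in}(\mathbf{w}_i^{\top}\mathbf{x}^{(n)})$; minimality forces the linear term to vanish, i.e. $\sum_n A_{in}(\mathbf{w}_i^{\top}\mathbf{x}^{(n)}) e^{(n)} = 0$. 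That is still not quite the block identity we want. The cleaner route is a two-parameter perturbation: move $z_i$ to $t$ and simultaneously $\mathbf{w}_i$ to $\mathbf{w}_i + \mathbf{v}$; the cross term in the Taylor expansion is $-\tfrac{2t}{N}\sum_n A_{in} e^{(n)} (\mathbf{v}^{\top}\mathbf{x}^{(n)})$ plus the $O(t^2)$ and $O(t\|\mathbf{v}\|^2)$ terms. If $\mathbf{g}_i := \sum_n A_{in} e^{(n)}\mathbf{x}^{(n)} \neq 0$, pick $\mathbf{v} = s\,\mathbf{g}_i$; then the dominant term for small $s,t$ with, say, $t = -s$, behaves like $+\tfrac{2s^2}{N}\|\mathbf{g}_i\|^2 + \cdots$, which is positive — so that particular sign choice does not immediately give a descent direction, and one has to be a little more careful, choosing the relative scaling of $s$ and $t$ (e.g. $t = s$, or $|t|\gg\|\mathbf{v}\|$) so that the genuinely cubic/indefinite structure produces a strict decrease unless $\mathbf{g}_i = 0$. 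Concretely: with $z_i$ moved to $t$ and $\mathbf{w}_i$ moved to $\mathbf{w}_i + \mathbf{v}$, the new residual is $\mathbf{e} - t\,\mathbf{A}_{i\cdot}\odot(\mathbf{W}_{i\cdot}\mathbf{X} + \mathbf{v}^{\top}\mathbf{X})^{\top}$; expanding $\|\cdot\|^2$, the term linear in the product $t\mathbf{v}$ reads $-\tfrac{2t}{N}\mathbf{v}^{\top}\mathbf{g}_i$ and there is no term linear in $t$ alone with $\mathbf{v} = 0$ beyond what we already used, nor linear in $\mathbf{v}$ alone (since $z_i=0$). Taking $\mathbf{v} = \mathbf{g}_i/\|\mathbf{g}_i\|$ scaled by $\sqrt{|t|}$ and letting $t\to 0$ with the sign of $t$ chosen to make $-\tfrac{2t}{N}\mathbf{v}^{\top}\mathbf{g}_i = -\tfrac{2}{N}\sqrt{|t|}\,\mathrm{sign}(t)\|\mathbf{g}_i\|$ negative, this $|t|^{3/2}$-order term dominates the remaining $O(t^2)$ terms, giving $\mathrm{MSE} < \mathrm{MSE}(\mathbf{W},\mathbf{z})$ arbitrarily close by — contradicting local minimality unless $\mathbf{g}_i = 0$. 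Hence $\mathbf{g}_i = 0$ for the $z_i = 0$ units as well, and combining with the $z_i\neq 0$ case yields $(\mathbf{A}\circ\mathbf{X})\mathbf{e} = 0$.

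Finally, for the claim that $\mathbf{e}$ is the \emph{same} at all DLMs in $\mathcal{D}_{\mathbf{A}}(\mathbf{X})$: from $(\mathbf{A}\circ\mathbf{X})\mathbf{e} = 0$ we know $\mathbf{e} \in \ker\big((\mathbf{A}\circ\mathbf{X})\big)$, and separately $\mathbf{e} = \mathbf{y} - f(\mathbf{W}\mathbf{X})^{\top}\mathbf{z}$ where, locally, $f(\mathbf{W}\mathbf{X})^{\top}\mathbf{z}$ is a vector in the row space of $\mathbf{A}\circ\mathbf{X}$ (its $n$-th entry is $\sum_i z_i A_{in}\mathbf{w}_i^{\top}\mathbf{x}^{(n)} = \langle \mathrm{vec}(\mathbf{z}\mathbf{w}^{\text{stacked}}), \mathbf{a}^{(n)}\otimes\mathbf{x}^{(n)}\rangle$, i.e. $f(\mathbf{W}\mathbf{X})^{\top}\mathbf{z} = (\mathbf{A}\circ\mathbf{X})^{\top}\mathrm{vec}(\mathbf{W}\odot(\mathbf{z}\mathbf{1}^\top))$ or similar). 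Hence $\mathbf{e}$ is the orthogonal-type projection of $\mathbf{y}$ determined purely by the subspace $\mathrm{row}(\mathbf{A}\circ\mathbf{X})$ and the orthogonality condition $\mathbf{e}\perp\mathrm{row}(\mathbf{A}\circ\mathbf{X})$ — so $\mathbf{e}$ is the projection of $\mathbf{y}$ onto $\ker(\mathbf{A}\circ\mathbf{X})$, which depends only on $\mathbf{A}$ and $\mathbf{X}$, not on the particular DLM. I would write this last step as: $\mathbf{y} = \mathbf{e} + \widehat{\mathbf{y}}$ with $\mathbf{e}\in\ker(\mathbf{A}\circ\mathbf{X})$ and $\widehat{\mathbf{y}}\in\mathrm{row}(\mathbf{A}\circ\mathbf{X}) = \ker(\mathbf{A}\circ\mathbf{X})^{\perp}$, an orthogonal decomposition, hence $\mathbf{e} = P_{\ker(\mathbf{A}\circ\mathbf{X})}\mathbf{y}$ is unique. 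I anticipate the perturbation argument for the $z_i = 0$ units being the only genuinely delicate point; the rest is bookkeeping with the Khatri–Rao structure.
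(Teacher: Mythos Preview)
Your proposal is correct and follows essentially the same approach as the paper: the $z_i\neq 0$ case is straightforward first-order stationarity, the $z_i=0$ case is handled by the same mixed $(\mathbf{w}_i,z_i)$ perturbation/cross-derivative argument (the paper phrases it via the reparametrization $\tilde{\mathbf{w}}_i=z_i\mathbf{w}_i$ and the identity $\partial_{z_i}\partial_{\mathbf{w}_i}\mathrm{MSE}=\partial_{\tilde{\mathbf{w}}_i}\mathrm{MSE}$ at $z_i=0$, while you spell out the $|t|^{3/2}$ scaling explicitly), and your orthogonal-projection argument for uniqueness of $\mathbf{e}$ is just the normal-equations argument the paper gives, rewritten geometrically.
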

\begin{proof}
Let $\mathbf{W}=\left[\mathbf{w}_{1},\dots,\mathbf{w}_{d_{1}}\right]^{\top}\in\mathcal{D}_{\mathbf{A}}\left(\mathbf{X}\right)$,
$\mathbf{G\triangleq}\mathbf{A}\circ\mathbf{X}\in\mathbb{R}^{d_{0}d_{1}\times N}$,
$\tilde{\mathbf{W}}=\mathrm{diag}\left(\mathbf{z}\right)\mathbf{W}=\left[\tilde{\mathbf{w}}_{1},\dots,\tilde{\mathbf{w}}_{d_{1}}\right]^{\top}$
and $\tilde{\mathbf{w}}\triangleq\mathrm{vec}\left(\tilde{\mathbf{W}}^{\top}\right)\in\mathbb{R}^{d_{0}d_{1}}$,
where $\mathrm{diag}\left(\mathbf{v}\right)$ is the diagonal matrix
with $\mathbf{v}$ in its diagonal, and $\mathrm{vec}\left(\mathbf{M}\right)$
is vector obtained by stacking the columns of the matrix $\mathbf{M}$
on top of one another. Then, we can re-write the MSE (eq. (\ref{eq:MSE 2-layer}))
as 
\begin{equation}
\mathrm{MSE}=\frac{1}{N}\left\Vert \mathbf{y}-\mathbf{G}^{\top}\tilde{\mathbf{w}}\right\Vert {}^{2}=\frac{1}{N}\left\Vert \mathbf{e}\right\Vert {}^{2},\label{eq: MSE tilde-1}
\end{equation}
where $\mathbf{G}^{\top}\tilde{\mathbf{w}}$ is the output of the
MNN. Now, if $\left(\mathbf{W},\mathbf{z}\right)$ is a DLM of the
MSE in eq. (\ref{eq:MSE 2-layer}), then there is no infinitesimal
perturbation of $\left(\mathbf{W},\mathbf{z}\right)$ which reduces
this MSE. 

Next, for each row $i$, we will show that $\partial\mathrm{MSE}/\partial\tilde{\mathbf{w}}_{i}=0$,
since otherwise we can find an infinitesimal perturbation of $\left(\mathbf{W},\mathbf{z}\right)$
which decreases the MSE, contradicting the assumption that $(\mathbf{W},\mathbf{z})$
is a local minimum. For each row $i$, we divide into two cases:

First, we consider the case $z_{i}\neq0$. In this case, any infinitesimal
perturbation $\mathbf{q}_{i}$ in $\mathbf{\tilde{\mathbf{w}}_{i}}$
can be produced by an infinitesimal perturbation in $\mathbf{w}_{i}$:
$\tilde{\mathbf{w}}_{i}+\mathbf{q}_{i}=(\mathbf{w}_{i}+\mathbf{q}_{i}/z_{i})z_{i}.$
Therefore, unless the gradient $\partial\mathrm{MSE}/\partial\tilde{\mathbf{w}}_{i}$
is equal to zero, we can choose an infinitesimal perturbation $\mathbf{q}_{i}$
in the opposite direction to this gradient, which will decrease the
MSE. 

Second, we consider the case $z_{i}=0$. In this case, the MSE is
not affected by changes made exclusively to $\mathbf{w}_{i}$. Therefore,
all $\mathbf{w}_{i}$ derivatives of the MSE are equal to zero ($\partial^{k}\mathrm{MSE}/\partial^{k}\mathbf{w}_{i}$,
to any order $k$) . Also, since we are at a differentiable local
minimum, $\partial\mathrm{MSE}/\partial z_{i}=0.$ Thus, using a Taylor
expansion, if we perturb $(\mathbf{w}_{i},z_{i})$ by ($\mathbf{\hat{w}}_{i}$,$\hat{z}_{i}$
) then the MSE is perturbed by
\[
\hat{z}_{i}\mathbf{\hat{w}}_{i}^{\top}\frac{\partial}{\partial\tilde{\mathbf{w}}_{i}}\frac{\partial}{\partial z_{i}}\mathrm{MSE}+O(\hat{z}_{i}^{2})
\]
Therefore, unless $\partial^{2}\mathrm{MSE}/\left(\partial\mathbf{w}_{i}\partial z_{i}\right)=0$
we can choose \textbf{$\mathbf{\hat{w}}_{i}$} and a sufficiently
small $\hat{z}_{i}$ such that the MSE is decreased. Lastly, using
the chain rule 
\begin{align*}
\frac{\partial}{\partial z_{i}}\frac{\partial}{\partial\mathbf{w}_{i}}\mathrm{MSE} & =\frac{\partial}{\partial z_{i}}\left[z_{i}\frac{\partial}{\partial\tilde{\mathbf{w}}_{i}}\mathrm{MSE}\right]=\frac{\partial}{\partial\tilde{\mathbf{w}}_{i}}\mathrm{MSE}\,.
\end{align*}

Thus, $\partial\mathrm{MSE}/\partial\tilde{\mathbf{w}}_{i}=0$. This
implies that $\tilde{\mathbf{w}}$ is also a DLM\footnote{Note that the converse argument is not true \textendash{} a DLM in
$\tilde{\mathbf{w}}$ might not be a DLM in $\left(\mathbf{W},\mathbf{z}\right)$.} of eq. (\ref{eq: MSE tilde-1}), which entails
\begin{equation}
0=-\frac{N}{2}\frac{\partial}{\partial\tilde{\mathbf{w}}_{i}}\mathrm{MSE}=\mathbf{G}\left(\mathbf{y}-\mathbf{G}^{\top}\tilde{\mathbf{w}}\right)\,.\label{eq: nabla MSE =00003D0-1}
\end{equation}
Since $\mathbf{G}=\mathbf{A}\circ\mathbf{X}$ and $\mathbf{e}=\mathbf{y}-\mathbf{G}^{\top}\tilde{\mathbf{w}}$
this proves eq. (\ref{eq:Ge=00003D0-1}). Now, for any two solutions
$\tilde{\mathbf{w}}_{1}$ and $\tilde{\mathbf{w}}_{2}$ of eq. (\ref{eq: nabla MSE =00003D0-1}),
we have
\[
0=\mathbf{G}\left(\mathbf{y}-\mathbf{G}^{\top}\tilde{\mathbf{w}}_{1}\right)-\mathbf{G}\left(\mathbf{y}-\mathbf{G}^{\top}\tilde{\mathbf{w}}_{1}\right)=\mathbf{G}\mathbf{G}^{\top}\left(\tilde{\mathbf{w}}_{2}-\tilde{\mathbf{w}}_{1}\right)\,.
\]
Multiplying by $\left(\tilde{\mathbf{w}}_{2}-\tilde{\mathbf{w}}_{1}\right)^{\top}$from
the left we obtain
\[
\left\Vert \mathbf{G}^{\top}\left(\tilde{\mathbf{w}}_{2}-\tilde{\mathbf{w}}_{1}\right)\right\Vert ^{2}=0\Rightarrow\mathbf{G}^{\top}\left(\tilde{\mathbf{w}}_{2}-\tilde{\mathbf{w}}_{1}\right)=0\,.
\]
Therefore, the MNN output and the residual error $\mathbf{e}$ are
equal for all DLMs in $\mathcal{D}_{\mathbf{A}}\left(\mathbf{X}\right)$. 
\end{proof}

\section{\label{sec: bad local minima proof}Sub-optimal differentiable local
minima: Proof of Theorem \ref{thm: Main theorem} and its corollary}
\begin{thm}
\emph{(}\textbf{\emph{Theorem \ref{thm: Main theorem} restated}}\emph{)}
Given assumptions \ref{asmp:Gaussian Input}-\ref{asmp: over-parameterization},
the expected angular volume of sub-optimal DLMs, with $\mathrm{MCE}>\epsilon>0$,
is exponentially vanishing in $N$ as 
\[
\E_{\mathbf{X}\sim\mathcal{N}}\mathcal{V}\left(\mathcal{L}_{\epsilon}\left(\mathbf{X},\mathbf{y}\right)\right)\dot{\leq}\exp\left(-\gamma_{\epsilon}N^{3/4}\left[d_{1}d_{0}\right]^{1/4}\right)\,,
\]
where $\gamma_{\epsilon}\triangleq0.23\max\left[\lim_{N\rightarrow\infty}\left(d_{0}\left(N\right)/N\right),\epsilon\right]^{3/4}$
if $\rho\neq\left\{ 0,1\right\} $, and $\gamma_{\epsilon}\triangleq0.23\epsilon^{3/4}$
if $\rho=0$.
\end{thm}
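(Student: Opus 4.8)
\emph{Proof plan for Theorem~\ref{thm: Main theorem}.} The strategy is to convert the geometric statement into a random-matrix statement: write the angular volume as a probability over a Gaussian first layer $\mathbf{W}$, average over the Gaussian data $\mathbf{X}$, use the first-order condition (Lemma~\ref{lem: Ge=00003D0}) to show that any sub-optimal differentiable region forces $\mathbf{A}=a(\mathbf{W}\mathbf{X})$ to be rank-deficient on a constant fraction of the columns, and then show that $a(\mathbf{W}\mathbf{X})$ has this property only with exponentially small probability.

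\textbf{Reduction and Step 1 (bad regions are rank-deficient).} By Definitions~\ref{def: angular volume}--\ref{def: local minima regions}, $\mathcal{V}(\mathcal{L}_\epsilon(\mathbf{X},\mathbf{y}))=\mathbb{P}_{\mathbf{W}\sim\mathcal{N}}(a(\mathbf{W}\mathbf{X})\in\mathcal{A}_\epsilon(\mathbf{X}))$, where $\mathcal{A}_\epsilon(\mathbf{X})$ is the finite set of patterns $\mathbf{A}\in\{\rho,1\}^{d_1\times N}$ whose region contains a DLM with $\mathrm{MCE}>\epsilon$. Taking $\E_{\mathbf{X}\sim\mathcal{N}}$ and applying Tonelli, it suffices to bound $\mathbb{P}_{\mathbf{W},\mathbf{X}\sim\mathcal{N}}(a(\mathbf{W}\mathbf{X})\in\mathcal{A}_\epsilon(\mathbf{X}))$ with $\mathbf{W},\mathbf{X}$ independent standard Gaussians; set $\mathbf{A}:=a(\mathbf{W}\mathbf{X})$. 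Now fix $\mathbf{A}\in\mathcal{A}_\epsilon(\mathbf{X})$ with residual $\mathbf{e}$. Lemma~\ref{lem: Ge=00003D0} gives $(\mathbf{A}\circ\mathbf{X})\mathbf{e}=0$, which, identifying $\boldsymbol{a}^{(n)}\otimes\mathbf{x}^{(n)}$ with $\boldsymbol{a}^{(n)}(\mathbf{x}^{(n)})^\top$, is the same as $\mathbf{A}\,\mathrm{diag}(\mathbf{e})\,\mathbf{X}^\top=0$. Let $S=\{n:e^{(n)}\neq0\}$; since a misclassified sample must have $e^{(n)}\neq0$, the condition $\mathrm{MCE}>\epsilon$ gives $|S|\geq\epsilon N$. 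Restricting to the columns in $S$ and cancelling the invertible factor $\mathrm{diag}(\mathbf{e}_S)$ yields $\mathbf{A}_S(\mathrm{diag}(\mathbf{e}_S)\mathbf{X}_S^\top)=0$, and since $\mathbf{X}\sim\mathcal{N}$ has all column-submatrices of full rank a.s., we get $\mathrm{rank}(\mathbf{A}_S)\leq(|S|-d_0)_+$. If $\rho\notin\{0,1\}$ then $\mathbf{A}_S$ is entrywise nonzero, hence $\mathbf{A}_S\neq 0$, forcing $|S|>d_0$; combined with $|S|\geq\epsilon N$ this shows that the rows of $\mathbf{A}$, restricted to some index set of size $m:=\lceil\max(\epsilon,\lim_{N}d_0/N)\,N\rceil$, span a space of dimension $\leq m-d_0$. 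If $\rho=0$ the block $\mathbf{A}_S$ may vanish and only $m:=\lceil\epsilon N\rceil$ survives; this is the origin of the $\max[\lim d_0/N,\epsilon]$ versus $\epsilon$ dichotomy in $\gamma_\epsilon$.

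\textbf{Step 2 (this is exponentially unlikely).} It remains to bound, for $\mathbf{W},\mathbf{X}\sim\mathcal{N}$, the probability of the event that some $S\subseteq[N]$ with $|S|\geq m$ has all $d_1$ rows of $a(\mathbf{W}\mathbf{X})_S$ inside a common subspace of $\mathbb{R}^{|S|}$ of codimension $\geq d_0$. Union-bounding over $S$ costs a factor $\leq 2^N$, whose exponent $N\log 2$ is negligible next to the target exponent: by Assumption~\ref{asmp: over-parameterization}, $N^{3/4}[d_1d_0]^{1/4}\geq N^{3/4}[N\log^4 N]^{1/4}=N\log N\gg N$. So fix $S$ and condition on $\mathbf{X}_S$. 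The rows of $a(\mathbf{W}\mathbf{X})_S$ are then i.i.d., each $a(\mathbf{w}^\top\mathbf{X}_S)$ with $\mathbf{w}\sim\mathcal{N}(0,\mathbf{I}_{d_0})$, and as $\mathbf{w}$ varies this takes at most $\approx|S|^{d_0}$ distinct sign patterns (a hyperplane-arrangement count). The event thus factorizes into a $d_1$-th power of a single-row event, and the heart of the matter is a quantitative anti-concentration bound: a single activation vector $a(\mathbf{w}^\top\mathbf{X}_S)$ lands in a fixed codimension-$d_0$ subspace only with probability bounded away from $1$ by an explicit amount, so the $d_1$-fold product decays like $\exp(-\Theta(d_1\cdot(\text{per-row exponent})))$. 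The per-row exponent is itself the value of a scalar optimization trading the logarithm of a covering number of the relevant subspace against a tail exponent; this value scales like $a^{3/4}b^{1/4}$ with $a\propto m$ and $b\propto d_1 d_0$, which is precisely what produces $\exp(-\gamma_\epsilon N^{3/4}[d_1d_0]^{1/4})$, the constant $0.23$ and the $m/N$-dependence of $\gamma_\epsilon$ being read off from the calculation. Combining the reduction, Step 1 and Step 2 gives the theorem; Corollary~\ref{cor: main theorem corollary} then follows from Markov's inequality (Fact~\ref{fact: Markov-Inequality}).

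\textbf{Main obstacle.} Everything except the anti-concentration bound in Step 2 is bookkeeping. The difficulty is that the relevant subspaces have codimension only $d_0\,\dot{\leq}\,N$, hence dimension comparable to $|S|$, so membership of even a single row is not \emph{a priori} a rare event; obtaining a usable quantitative gap requires understanding how the $\approx|S|^{d_0}$ achievable sign patterns distribute their angular mass relative to low-codimension subspaces, and then balancing the union bound (over those subspaces/patterns and over the covering resolution) against the per-pattern probability. It is this balancing act that forces the fractional exponents $3/4$ and $1/4$, and it is the technically heaviest part of the proof.
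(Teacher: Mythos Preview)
Your reduction is right in spirit, but Step~1 loses the decisive information and the resulting rank constraint is \emph{vacuous} in the regime of the theorem. From $\mathbf{A}_S\,\mathrm{diag}(\mathbf{e}_S)\,\mathbf{X}_S^\top=0$ you correctly deduce that every row of $\mathbf{A}_S$ lies in the left kernel of a rank-$d_0$ matrix, hence $\mathrm{rank}(\mathbf{A}_S)\le |S|-d_0$. But $\mathbf{A}_S$ is $d_1\times |S|$, so its rank is at most $d_1$ anyway, and under Assumptions~\ref{asmp: input dimension}--\ref{asmp: over-parameterization} one typically has $d_1\le |S|-d_0$ (e.g.\ $d_0\sim d_1\sim\sqrt{N}$, $|S|\ge\epsilon N$), making your constraint empty. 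This is exactly the difficulty you flag as the ``Main obstacle'', but it cannot be rescued by anti-concentration in Step~2: there is nothing to anti-concentrate, since the event you have isolated has probability~$1$. The paper's Lemma~\ref{lem: error lower bounds S-main paper} extracts the much stronger bound $\mathrm{rank}(\mathbf{A}_S)<|S|/d_0\le N/d_0$, which \emph{is} a rare event. The point you miss is that the single matrix identity hides $\mathrm{rank}(\mathbf{A}_S)\cdot d_0$ scalar constraints on $\mathbf{e}_S$ (one block of $d_0$ per independent row of $\mathbf{A}_S$), and for generic $\mathbf{X}$ these are linearly independent; the paper proves this by an inductive Gaussian-elimination argument tailored to the Khatri--Rao structure.

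Step~2 is also off-track even once the correct rank constraint is in hand. The paper does not bound the event via covering numbers of codimension-$d_0$ subspaces; with the right constraint the target is the genuinely low-rank event $\mathrm{rank}(a(\mathbf{W}\mathbf{X}_S))=r$ for $r\le N/d_0$, and the route is: (i) a combinatorial reduction (Lemma~\ref{lem: Rank configurations bound-main paper}) that controls the number of admissible sign patterns via Cover's dichotomy count and collapses the low-rank event to the orthant probability $\mathbb{P}(\mathbf{W}\mathbf{X}_{[k/2]}>0)$, followed by (ii) a bound on that orthant probability (Lemma~\ref{lem: Product of Gaussian Matrices-main paper}) using Slepian's inequality together with a mutual-coherence estimate for Gaussian columns. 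The $N^{3/4}(d_0d_1)^{1/4}$ exponent, and the constant $0.23$, come from optimizing an explicit Laplace-method integral in that orthant computation, not from a generic covering/entropy trade-off.
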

To prove this theorem we upper bound the angular volume of $\mathcal{\mathcal{L}_{\epsilon}}$
(definition \ref{def: local minima regions}), \emph{i.e.}, differentiable
regions in which there exist DLMs with $\mathrm{MCE}>\epsilon>0$.
Our proof uses the first order necessary condition for DLMs from Lemma
\ref{lem: Ge=00003D0}, $\left(\mathbf{A}\circ\mathbf{X}\right)\mathbf{e}=0,$
to find which configurations of $\mathbf{A}$ allow for a high residual
error $\mathbf{e}$ with $\mathrm{MCE}>\epsilon>0$. In these configurations
$\mathbf{A}\circ\mathbf{X}$ cannot have full rank, and therefore,
as we show (Lemma \ref{lem: error lower bounds S-main paper} below),
$\mathbf{A}=a\left(\mathbf{W}\mathbf{X}\right)$ must have a low rank.
However, $\mathbf{A}=a\left(\mathbf{W}\mathbf{X}\right)$ has a low
rank with exponentially low probability when $\mathbf{X}\sim\mathcal{N}$
and $\mathbf{W}\sim\mathcal{N}$ (Lemmas \ref{lem: Rank configurations bound-main paper}
and \ref{lem: Product of Gaussian Matrices-main paper} below). Thus,
we derive an upper bound on $\E_{\mathbf{X}\sim\mathcal{N}}\mathcal{V}\left(\mathcal{L}_{\epsilon}\left(\mathbf{X},\mathbf{y}\right)\right)$.

Before we begin, let us recall some notation: $\left[L\right]\triangleq\left\{ 1,2,\dots,L\right\} $,$\mathbf{M}>0$
implies that $\forall i,j:\,M_{ij}>0$, $\mathbf{M}_{S}$ is the matrix
composed of the columns of $\mathbf{M}$ that are in the index set
$S$, $\left\Vert \mathbf{v}\right\Vert _{0}$ as the $L_{0}$ ``norm''
that counts the number of non-zero values in $\mathbf{v}$. First
we consider the case $\rho\neq0$. Also, we denote $K_{r}\triangleq\max\left[N\epsilon,rd_{0}\right]$
.

First we consider the case $\rho\neq0$. 

From definition \ref{def: angular volume} of the angular volume
\begin{align}
\mathbb{E}_{\mathbf{X}\sim\mathcal{N}}\mathbb{\mathcal{V}}\left(\mathcal{\mathcal{L}}_{\epsilon}\left(\mathbf{X},\mathbf{y}\right)\right)= & \mathbb{P}_{\left(\mathbf{X,}\mathbf{y}\right)\mathbf{\sim\mathbb{P}}_{X,Y},\mathbf{W\sim\mathcal{N}}}\left(\mathbf{W}\in\mathcal{\mathcal{L}}_{\epsilon}\left(\mathbf{X},\mathbf{y}\right)\right)\nonumber \\
\overset{\left(1\right)}{\leq} & \mathbb{P}_{\left(\mathbf{X,}\mathbf{y}\right)\mathbf{\sim\mathbb{P}}_{X,Y},\mathbf{W\sim\mathcal{N}}}\left(\exists\mathbf{A}\in\left\{ \rho,1\right\} ^{d_{1}\times N},\,\mathbf{W}\in\mathcal{D}_{\mathbf{A}}\left(\mathbf{X}\right),\mathbf{v}\in\mathbb{R}^{N}:\,\left(\mathbf{A}\circ\mathbf{X}\right)\mathbf{v}=0\,,N\epsilon\leq\left\Vert \mathbf{\mathbf{v}}\right\Vert _{0}\right)\nonumber \\
\overset{\left(2\right)}{=} & \mathbb{P}_{\mathbf{X}\sim\mathcal{N},\mathbf{W\sim\mathcal{N}}}\left(\exists\mathbf{A}\in\left\{ \rho,1\right\} ^{d_{1}\times N},\,\mathbf{W}\in\mathcal{D}_{\mathbf{A}}\left(\mathbf{X}\right),\mathbf{v}\in\mathbb{R}^{N}:\,\left(\mathbf{A}\circ\mathbf{X}\right)\mathbf{v}=0\,,N\epsilon\leq\left\Vert \mathbf{\mathbf{v}}\right\Vert _{0}\right)\nonumber \\
\overset{\left(3\right)}{\leq} & \mathbb{P}_{\mathbf{X}\sim\mathcal{N},\mathbf{W\sim\mathcal{N}}}\left(\exists S\subset\left[N\right]:\,\left|S\right|\geq\max\left[N\epsilon,\mathrm{rank}\left(a\left(\mathbf{W}\mathbf{X}_{S}\right)\right)d_{0}+1\right]\right)\nonumber \\
= & \mathbb{E}_{\mathbf{X}\sim\mathcal{N}}\left[\mathbb{P}_{\mathbf{W\sim\mathcal{N}}}\left(\exists S\subset\left[N\right]:\,\left|S\right|\geq\max\left[N\epsilon,\mathrm{rank}\left(a\left(\mathbf{W}\mathbf{X}_{S}\right)\right)d_{0}+1\right]|\mathbf{X}\right)\right]\nonumber \\
\overset{\left(4\right)}{\leq} & \mathbb{E}_{\mathbf{X}\sim\mathcal{N}}\left[\sum_{r=1}^{N/d_{0}}\mathbb{P}_{\mathbf{W}\sim\mathcal{N}}\left(\exists S\subset\left[N\right]:\,\left|S\right|=K_{r},\mathrm{rank}\left(a\left(\mathbf{W}\mathbf{X}_{S}\right)\right)=r|\mathbf{X}\right)\right]\nonumber \\
\overset{\left(5\right)}{\leq} & \mathbb{E}_{\mathbf{X}\sim\mathcal{N}}\left[\sum_{r=1}^{N/d_{0}}\sum_{S:\left|S\right|=K_{r}}\mathbb{P}_{\mathbf{W}\sim\mathcal{N}}\left(\mathrm{rank}\left(a\left(\mathbf{W}\mathbf{X}_{S}\right)\right)=r|\mathbf{X}\right)\right],\label{eq: bad minima bound 1}
\end{align}
where
\begin{enumerate}
\item If we are at DLM a in $\mathcal{D}_{\mathbf{A}}\left(\mathbf{X}\right)$,
then Lemma \ref{lem: Ge=00003D0} implies $\left(\mathbf{A}\circ\mathbf{X}\right)\mathbf{\mathbf{e}}=0$.
Also, if $e^{\left(n\right)}=0$ on some sample, we necessarily classify
it correctly, and therefore $\mathrm{MCE}\leq\left\Vert \mathbf{e}\right\Vert _{0}/N$.
Since $\mathrm{MCE}>\epsilon$ in $\mathcal{\mathcal{L}}_{\epsilon}$
this implies that $N\epsilon<\left\Vert \mathbf{e}\right\Vert _{0}\,.$
Thus, this inequality holds for $\mathbf{v}=\mathbf{e}$.
\item We apply assumption \ref{asmp:Gaussian Input}, that $\mathbf{X}\sim\mathcal{N}$.
\item Assumption \ref{asmp: over-parameterization} implies $d_{0}d_{1}\dot{>}N\log^{4}N\geq N$.
Thus, we can apply the following Lemma, proven in appendix section
\ref{sec:Proof-of-rank-amplification}:
\begin{lem}
\label{lem: error lower bounds S-main paper}Let $\mathbf{X}\in\mathbb{R}^{d_{0}\times N}$,
$\mathbf{\mathbf{A}}\in\left\{ \rho,1\right\} ^{d_{1}\times N}$,
$S\subset\left[N\right]$ and $d_{0}d_{1}\geq N$. Then, simultaneously
for every possible $\mathbf{A}$ and $S$ such that 
\[
\left|S\right|\leq\mathrm{rank}\left(\mathbf{A}_{S}\right)d_{0}\,,
\]
we have that, $\mathbf{X}$-a.e., $\nexists\mathbf{v}\in\mathbb{R}^{N}$
such that $v_{n}\neq0$ $\forall n\in S$ and $\left(\mathbf{A}\circ\mathbf{X}\right)\mathbf{v}=0$
.
\end{lem}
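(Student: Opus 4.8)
\emph{Proof plan.}

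First I would reduce to a single pair $(\mathbf{A},S)$. Since $\{\rho,1\}^{d_{1}\times N}$ is finite and $[N]$ has finitely many subsets, and a finite union of Lebesgue-null sets is null, it suffices to fix $\mathbf{A}$ and $S$ with $r\triangleq\mathrm{rank}(\mathbf{A}_{S})$ and $|S|\le rd_{0}$ and show that the set of $\mathbf{X}\in\mathbb{R}^{d_{0}\times N}$ for which some $\mathbf{v}$ with support exactly $S$ lies in $\ker(\mathbf{A}\circ\mathbf{X})$ has measure zero. (This ``support exactly $S$'' reading is the one actually used in Section~\ref{sec: bad local minima proof}, where $S$ is taken to be the support of the error vector $\mathbf{e}$.) Next I would pass to coordinates adapted to $\mathbf{A}_{S}$: fixing a basis $\mathbf{u}_{1},\dots,\mathbf{u}_{r}$ of the column span $U$ of $\mathbf{A}_{S}$ and writing $\mathbf{a}^{(n)}=\sum_{j=1}^{r}\beta_{nj}\mathbf{u}_{j}$ for $n\in S$, with $\boldsymbol{\beta}_{n}\triangleq(\beta_{n1},\dots,\beta_{nr})^{\top}$, the vectors $\{\boldsymbol{\beta}_{n}\}_{n\in S}$ span $\mathbb{R}^{r}$; and for $\rho\neq0$, since the entries of $\mathbf{A}$ lie in $\{\rho,1\}$ and neither equals $0$, each $\mathbf{a}^{(n)}\neq0$, hence $\boldsymbol{\beta}_{n}\neq0$. (If $\rho=0$ a column $\mathbf{a}^{(n)}$ may vanish, but then column $n$ of $\mathbf{A}\circ\mathbf{X}$ is zero and $\mathbf{x}^{(n)}$ is irrelevant, so one removes those indices from $S$, which only makes $|S|\le rd_{0}$ strict.)

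The crux is the identity, valid for $\mathbf{v}$ supported on $S$,
\[
(\mathbf{A}\circ\mathbf{X})\mathbf{v}=\sum_{n\in S}v_{n}\,\mathbf{a}^{(n)}\otimes\mathbf{x}^{(n)}=\sum_{j=1}^{r}\mathbf{u}_{j}\otimes\Big(\sum_{n\in S}v_{n}\beta_{nj}\mathbf{x}^{(n)}\Big),
\]
so that, by independence of the $\mathbf{u}_{j}$, the condition $(\mathbf{A}\circ\mathbf{X})\mathbf{v}=0$ is equivalent to the $r\times d_{0}$ matrix equation $\sum_{n\in S}(v_{n}\boldsymbol{\beta}_{n})(\mathbf{x}^{(n)})^{\top}=0$. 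I would then study $E\triangleq\{(\mathbf{v},\mathbf{X}_{S}):v_{n}\neq0\ \forall n\in S,\ \sum_{n\in S}(v_{n}\boldsymbol{\beta}_{n})(\mathbf{x}^{(n)})^{\top}=0\}$, a quasi-affine variety in $\mathbb{R}^{S}\times\mathbb{R}^{d_{0}\times S}$, and bound the dimension of its projection $\pi(E)$ to the $\mathbf{X}_{S}$-coordinates, which is exactly the exceptional set (columns $\mathbf{x}^{(n)}$, $n\notin S$, being free). For each fixed $\mathbf{v}$ with all entries nonzero, the linear map $\mathbf{X}_{S}\mapsto\sum_{n\in S}(v_{n}\boldsymbol{\beta}_{n})(\mathbf{x}^{(n)})^{\top}$ is surjective onto $\mathbb{R}^{r\times d_{0}}$ (its image contains every $(v_{n}\boldsymbol{\beta}_{n})\mathbf{e}_{k}^{\top}$, and $\{v_{n}\boldsymbol{\beta}_{n}\}_{n}$ spans $\mathbb{R}^{r}$), so its kernel has dimension $d_{0}|S|-rd_{0}$ and $\dim E=|S|+d_{0}|S|-rd_{0}$.

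Finally --- the step that genuinely uses $|S|\le rd_{0}$ rather than merely absorbing it --- I would exploit scale invariance: $(\mathbf{v},\mathbf{X}_{S})\in E$ implies $(\lambda\mathbf{v},\mathbf{X}_{S})\in E$ for every $\lambda\neq0$, so every nonempty fiber of $\pi$ contains a punctured line and hence has dimension at least $1$. By the fiber-dimension theorem (complexifying so that the defining equations, which are real, live over an algebraically closed field), $\dim\overline{\pi(E)}\le\dim E-1=d_{0}|S|+|S|-rd_{0}-1\le d_{0}|S|-1$. Thus $\overline{\pi(E)}$ is a proper subvariety, contained in the zero set of a nonzero polynomial, hence Lebesgue-null; the finite union over $(\mathbf{A},S)$ then gives the lemma. (Equivalently, and without algebraic geometry, $\pi(E)$ is the image of a smooth map on $\mathbb{S}^{|S|-1}\times\mathbb{R}^{d_{0}|S|-rd_{0}}$ --- a point of the kernel, parametrized by a unit direction $\mathbf{v}$ and a kernel coordinate --- whose domain has dimension $|S|-1+d_{0}|S|-rd_{0}\le d_{0}|S|-1$, so its image is null by Sard's theorem.) I expect the only real obstacle to be the boundary case $|S|=rd_{0}$, where the naive estimate $\dim E\le d_{0}|S|$ is tight and the extra dimension must come from this scale invariance. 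I would also note that one should \emph{not} try instead to prove full column rank of $(\mathbf{A}\circ\mathbf{X})_{S}$: that would require the stronger Hall-type condition $|T|\le d_{0}\,\mathrm{rank}(\mathbf{A}_{T})$ for \emph{all} $T\subseteq S$ (otherwise the columns indexed by such a $T$ already lie in the $d_{0}\mathrm{rank}(\mathbf{A}_{T})$-dimensional space $\mathrm{colspan}(\mathbf{A}_{T})\otimes\mathbb{R}^{d_{0}}$), so working with the kernel condition is essential.
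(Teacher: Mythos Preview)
Your proof is correct and takes a genuinely different route from the paper's. The paper proceeds by an explicit inductive Gaussian elimination: after a row transformation that puts the first $r=\mathrm{rank}(\mathbf{A}_S)$ columns of $\mathbf{A}_S$ into identity form, the system $(\mathbf{A}\circ\mathbf{X})\mathbf{v}=0$ becomes $v_k\mathbf{x}_k+\sum_{n>r}v_n\tilde{a}_{k,n}\mathbf{x}_n=0$ for $1\le k\le r$, and one shows by induction on $d\le r$ that the first $d_0d$ scalar equations are linearly independent $\mathbf{X}$-a.e., which at $d=r$ forces $|S|>rd_0$. Your argument replaces this step-by-step elimination with a single global dimension count: rewriting the kernel condition as $B(\mathbf{v})\mathbf{X}_S^\top=0$ with $B(\mathbf{v})$ of constant rank $r$ on $\{v_n\neq0\ \forall n\}$, you obtain a smooth bundle of dimension $|S|+d_0|S|-rd_0$, quotient by the scaling symmetry in $\mathbf{v}$, and conclude via Sard (or the fiber-dimension theorem) that its projection to $\mathbf{X}_S$ has measure zero. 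Your approach is shorter and makes the role of the hypothesis $|S|\le rd_0$ transparent --- it is exactly what pushes the quotient dimension strictly below $d_0|S|$ --- while the paper's approach is more elementary, needing only linear algebra and no manifold or variety machinery. Both proofs implicitly read the statement as ``$\mathbf{v}$ supported on $S$'', which, as you correctly observe, is the version the application in Section~\ref{sec: bad local minima proof} actually requires.
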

\item Recall that $K_{r}\triangleq\max\left[N\epsilon,rd_{0}\right]$. We
use the union bound over all possible ranks $r\geq1$: we ignore the
$r=0$ case since for $\rho\neq0$ (see eq. (\ref{eq: LReLU})) there
is zero probability that $\mathrm{rank}\left(a\left(\mathbf{W}\mathbf{X}_{S}\right)\right)=0$
for some non-empty $S$. For each rank $r\geq1$, it is required that
$\left|S\right|>K_{r}=\max\left[N\epsilon,rd_{0}\right]$, so $\left|S\right|=K_{r}$
is a relaxation of the original condition, and thus its probability
is not lower. 
\item We again use the union bound over all possible subsets $S$ of size
$K_{r}$.
\end{enumerate}
Thus, from eq. (\ref{eq: bad minima bound 1}), we have
\begin{align}
 & \mathbb{E}_{\mathbf{X}\sim\mathcal{N}}\mathbb{\mathcal{V}}\left(\mathcal{\mathcal{L}}_{\epsilon}\left(\mathbf{X},\mathbf{y}\right)\right)\nonumber \\
\leq & \sum_{r=1}^{N/d_{0}}\sum_{S:\left|S\right|=K_{r}}\mathbb{E}_{\mathbf{X}\sim\mathcal{N}}\left[\mathbb{P}_{\mathbf{W}\sim\mathcal{N}}\left(\mathrm{rank}\left(a\left(\mathbf{W}\mathbf{X}_{S}\right)\right)=r|\mathbf{X}\right)\right]\nonumber \\
\overset{\left(1\right)}{=} & \sum_{r=1}^{N/d_{0}}\left(\begin{array}{c}
N\\
K_{r}
\end{array}\right)\mathbb{P}_{\mathbf{X}\sim\mathcal{N},\mathbf{W}\sim\mathcal{N}}\left(\mathrm{rank}\left(a\left(\mathbf{W}\mathbf{X}_{[K_{r}]}\right)\right)=r\right)\nonumber \\
\overset{\left(2\right)}{\dot{\leq}} & \sum_{r=1}^{N/d_{0}}\left(\begin{array}{c}
N\\
K_{r}
\end{array}\right)2^{K_{r}+rd_{0}\left(\log d_{1}+\log K_{r}\right)+r^{2}}\mathbb{P}_{\mathbf{X}\sim\mathcal{N},\mathbf{W}\sim\mathcal{N}}\left(\mathbf{W}\mathbf{X}_{[K_{r}/2]}>0\right)\nonumber \\
\overset{\left(3\right)}{\dot{\leq}} & \sum_{r=1}^{N/d_{0}}\left(\begin{array}{c}
N\\
K_{r}
\end{array}\right)2^{K_{r}+rd_{0}\left(\log d_{1}+\log K_{r}\right)+r^{2}}\exp\left(-0.2K_{r}\left(2\frac{d_{0}d_{1}}{K_{r}}\right)^{1/4}\right)\nonumber \\
\overset{\left(4\right)}{\dot{\leq}} & \sum_{r=1}^{N/d_{0}}2^{N\log N}\exp\left(-0.23N^{3/4}\left[d_{1}d_{0}\right]^{1/4}\max\left[\epsilon,rd_{0}/N\right]^{3/4}\right)\nonumber \\
\overset{\left(5\right)}{\dot{\leq}} & \exp\left(-\gamma_{\epsilon}N^{3/4}\left[d_{1}d_{0}\right]^{1/4}\right)\,.\label{eq: bad minima bound 2}
\end{align}

\begin{enumerate}
\item Since we take the expectation over $\mathbf{X}$, the location of
$S$ does not affect the probability. Therefore, we can set without
loss of generality $S=\left[K_{r}\right]$.
\item Note that $r\leq N/d_{0}\dot{<}\min\left[d_{0},d_{1}\right]$ from
assumptions \ref{asmp: input dimension} and \ref{asmp: over-parameterization}.
Thus, with $k=K_{r}\geq d_{0}$, we apply the following Lemma, proven
in appendix section \ref{sec:Proof-of-Lemma-rank-configurations}:
\begin{lem}
\label{lem: Rank configurations bound-main paper}Let $\mathbf{X}\in\mathbb{R}^{d_{0}\times k}$
be a random matrix with independent and identically distributed columns,
and $\mathbf{W}\in\mathbb{R}^{d_{1}\times d_{0}}$ an independent
standard random Gaussian matrix. Then, in the limit $\min\left[k,d_{0},d_{1}\right]\dot{>}r$,
\[
\mathbb{P}\left(\mathrm{rank}\left(a\left(\mathbf{W}\mathbf{X}\right)\right)=r\right)\dot{\leq}2^{k+rd_{0}\left(\log d_{1}+\log k\right)+r^{2}}\mathbb{P}\left(\mathbf{W}\mathbf{X}_{\left[\left\lfloor k/2\right\rfloor \right]}>0\right)\,.
\]
\end{lem}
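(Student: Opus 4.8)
The plan is to argue conditionally on $\mathbf{X}$, which we may take to be in general position (this holds for almost every realization of the i.i.d.\ columns, and degenerate configurations only strengthen the rank constraint), and to play off two opposing facts about $\mathbf{A}\triangleq a\left(\mathbf{W}\mathbf{X}\right)\in\left\{ \rho,1\right\} ^{d_{1}\times k}$. On one side, $\mathrm{rank}\left(\mathbf{A}\right)=r$ makes $\mathbf{A}$ combinatorially rigid: its columns all lie in an $r$-dimensional subspace $V\subset\mathbb{R}^{d_{1}}$, and $V\cap\left\{ \rho,1\right\} ^{d_{1}}$ has at most $2^{r}$ points (project $V$ onto $r$ coordinates on which the projection is a linear isomorphism; hypercube points then map injectively into $\left\{ \rho,1\right\} ^{r}$), so $\mathbf{A}$ has at most $2^{r}$ distinct columns, and dually at most $2^{r}$ distinct rows. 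On the other side, $\mathbf{W}\sim\mathcal{N}$ makes the rows ``rich'': each row $a\left(\mathbf{w}_{i}^{\top}\mathbf{X}\right)$, as $\mathbf{w}_{i}$ varies over $\mathbb{R}^{d_{0}}$, realizes only the cells of an arrangement of $k$ hyperplanes through the origin in $\mathbb{R}^{d_{0}}$, hence at most $2\sum_{j=0}^{d_{0}-1}\binom{k-1}{j}\dot{\le}k^{d_{0}}$ patterns by Cover's function-counting theorem \citep{Cover1965}; dually, for fixed $\mathbf{W}$ the columns realize at most $\dot{\le}d_{1}^{d_{0}}$ patterns. The $d_{0}$'s in the exponents of the target prefactor come from here.

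Given this, I would union-bound $\left\{ \mathrm{rank}\left(\mathbf{A}\right)=r\right\} $ over ``configurations'' that pin down the row- and column-space structure of $\mathbf{A}$: the sign-values of $r$ columns of $\mathbf{A}$ that span its column space and of $r$ rows that span its row space (costing $\dot{\le}\left(d_{1}^{d_{0}}\right)^{r}\left(k^{d_{0}}\right)^{r}=2^{rd_{0}\left(\log d_{1}+\log k\right)}$ by the cell counts above), the $r\times r$ invertible core linking them (cost $\dot{\le}2^{r^{2}}$), and a subset $S\subset\left[k\right]$ of $\left\lfloor k/2\right\rfloor$ column indices (cost $\le2^{k}$); this is exactly the announced factor $2^{k+rd_{0}\left(\log d_{1}+\log k\right)+r^{2}}$. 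For each fixed configuration the residual event forces the columns of $\mathbf{A}$ indexed by $S$ to all equal a single prescribed vector $\mathbf{b}\in\left\{ \rho,1\right\} ^{d_{1}}$ inside the pinned column space. I would then invoke the sign-flip invariance of $\mathbf{W}$, namely $\mathbf{D}\mathbf{W}\overset{d}{=}\mathbf{W}$ for any diagonal $\pm1$ matrix $\mathbf{D}$ (the rows of $\mathbf{W}$ are i.i.d.\ and symmetric), with $D_{ii}=-1$ precisely where $b_{i}=\rho$: this sends the event ``$a\left(\mathbf{W}\mathbf{X}_{S}\right)$ has every column equal to $\mathbf{b}$'' into ``$\mathbf{W}\mathbf{X}_{S}>0$''. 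Finally, since the columns of $\mathbf{X}$ are i.i.d.\ (hence exchangeable) I would replace $S$ by $\left[\left\lfloor k/2\right\rfloor\right]$ to obtain $\mathbb{P}\left(\mathbf{W}\mathbf{X}_{\left[\left\lfloor k/2\right\rfloor\right]}>0\right)$.

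The part I expect to be the real obstacle is the multiplicity: low rank only gives directly that \emph{some} column value is repeated with multiplicity of order $k/2^{r}$ (and some row value with multiplicity of order $d_{1}/2^{r}$), whereas the statement wants a positive block of width $\left\lfloor k/2\right\rfloor$. Bridging this while keeping the prefactor within $2^{k+rd_{0}\left(\log d_{1}+\log k\right)+r^{2}}$ in the asymptotic regime $\min\left[k,d_{0},d_{1}\right]\dot{>}r$ is the delicate step: natural routes are to iterate the argument on the $\approx k\left(1-2^{-r}\right)$ leftover columns, or to treat the $\le2^{r}$ coincidence classes jointly by additionally pinning the induced partition of the rows (affordable precisely because there are only $\le2^{r}$ classes and $\dot{\le}k^{d_{0}}$ realizable row patterns). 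Along the way I would also need the routine estimates $\sum_{j\le d_{0}}\binom{k}{j}\dot{\le}k^{d_{0}}$, $\binom{d_{1}}{r}\le d_{1}^{r}$ and $\binom{k}{\left\lfloor k/2\right\rfloor}\le2^{k}$ to keep every count inside the stated exponent, and would use that the $r=0$ term may be ignored, since it has probability $0$ for $\rho\neq0$, the case treated by the calling proof at this point.
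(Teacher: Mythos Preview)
Your counting of configurations (spanning rows and columns via Cover's theorem, the $r\times r$ core, the $\binom{k}{\lfloor k/2\rfloor}\le 2^{k}$ factor) is essentially the paper's, and your final sign-flip-plus-exchangeability step is also what the paper does. The gap is exactly the one you flagged and did not close: nothing in your setup forces a subset $S$ of size $\lfloor k/2\rfloor$ of columns of $\mathbf{A}$ to be \emph{identical}. Low rank only gives at most $2^{r}$ distinct column values, so the largest constant-column block has size roughly $k/2^{r}$, and your suggested fixes (iterate on the leftover columns, or additionally pin the full partition into coincidence classes) would either multiply the exponent by a factor of order $2^{r}$ or introduce a multinomial $\binom{k}{k/2^{r},\ldots,k/2^{r}}$ that is not absorbed by the announced prefactor.

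The paper sidesteps this by decoupling the ``half the columns'' from the rank constraint. It first proves, for \emph{any} deterministic $\mathbf{H}\in\{-1,1\}^{d_{1}\times k}$,
\[
\mathbb{P}\bigl(\mathrm{sign}(\mathbf{W}\mathbf{X})=\mathbf{H}\bigr)\;\le\;\binom{k}{\lfloor k/2\rfloor}\,\mathbb{P}\bigl(\mathbf{W}\mathbf{X}_{[\lfloor k/2\rfloor]}>0\bigr).
\]
Here the $\lfloor k/2\rfloor$ comes from a \emph{row-wise} pigeonhole: in each row $\mathbf{h}_{i}$ at least half the entries share a sign, so by the symmetry $\mathbf{w}_{i}\overset{d}{=}-\mathbf{w}_{i}$ one has, conditionally on $\mathbf{X}$, $\mathbb{P}(\mathrm{sign}(\mathbf{w}_{i}^{\top}\mathbf{X})=\mathbf{h}_{i}^{\top}\mid\mathbf{X})\le\mathbb{P}(\mathbf{w}_{i}^{\top}\mathbf{X}_{S_{*}}>0\mid\mathbf{X})$ with $S_{*}=\arg\max_{|S|=\lfloor k/2\rfloor}\mathbb{P}(\mathbf{w}_{i}^{\top}\mathbf{X}_{S}>0\mid\mathbf{X})$; this $S_{*}$ is the same for every row because the $\mathbf{w}_{i}$ are i.i.d., and one then bounds the maximum by the sum over all $S$ and uses exchangeability of the columns of $\mathbf{X}$. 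The rank-$r$ hypothesis enters \emph{only afterwards}, to bound how many patterns $\mathbf{H}$ the large block $\mathbf{Z}=a(\mathbf{W}_{1}\mathbf{X}_{1})$ can take: writing $\tilde{\mathbf{A}}=\left(\begin{smallmatrix}\mathbf{Z}&\mathbf{B}\\\mathbf{C}&\mathbf{D}\end{smallmatrix}\right)$ with $\mathbf{D}$ the invertible $r\times r$ core, rank $r$ forces $\mathbf{Z}=\mathbf{B}\mathbf{D}^{-1}\mathbf{C}$, and conditionally on $(\mathbf{W}_{1},\mathbf{X}_{1})$ the number of realizable $(\mathbf{B},\mathbf{C},\mathbf{D})$ is at most $2^{r^{2}}\cdot(2(d_{1}-r)^{d_{0}})^{r}\cdot(2(k-r)^{d_{0}})^{r}$ by Cover's counts. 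So the missing idea is: obtain the $\lfloor k/2\rfloor$ from row-wise majority signs applied to a \emph{fixed} $\mathbf{H}$, and use rank only to count the $\mathbf{H}$'s --- not to manufacture column repetition.
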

\item Note that $K_{r}\geq N\epsilon\dot{=}N>2d_{1}$, and $\mathrm{min}\left[K_{r},d_{0},d_{1}\right]\dot{>}d_{0}d_{1}/K_{r}\dot{>}1$
from assumptions \ref{asmp: asymptotic} and \ref{asmp: over-parameterization}.
Thus, we apply the following Lemma (with $\mathbf{C}=\mathbf{X^{\top}},$$\mathbf{B}=\mathbf{W}^{\top}$,
$M=d_{0}$, $L=d_{1}$ and $N=K_{r}/2$), proven in appendix section
\ref{sec:Proof-of-product of Gaussian}:
\begin{lem}
\label{lem: Product of Gaussian Matrices-main paper}Let $\mathbf{C}\in\mathbb{R}^{N\times M}$
and $\mathbf{B}\in\mathbb{R}^{M\times L}$ be two independent standard
random Gaussian matrices. Without loss of generality, assume $N\geq L$,
and denote $\alpha\triangleq ML/N$. Then, in the regime $M\leq N$
and in the limit $\mathrm{min}\left[N,M,L\right]\dot{>}\alpha\dot{>}1$,
we have
\[
\mathbb{P}\left(\mathbf{C}\mathbf{B}>0\right)\dot{\leq}\exp\left(-0.4N\alpha^{1/4}\right)\,.
\]
\end{lem}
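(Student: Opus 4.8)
The plan is to condition on one of the two Gaussian factors, reduce to a high moment of a Gaussian orthant probability, and then control that moment by a peeling argument over the \emph{coherence profile} of the conditioned factor, using Slepian's comparison inequality with a one‑factor model and a saddlepoint estimate for the resulting one‑dimensional integral. Since $N\ \dot{\ge}\ L$, I would condition on $\mathbf{B}$: writing $\mathbf{c}_1,\dots,\mathbf{c}_N$ for the i.i.d.\ rows of $\mathbf{C}$, the event $\mathbf{C}\mathbf{B}>0$ is $\{\mathbf{B}^\top\mathbf{c}_i>0\text{ for all }i\}$, so
\[
\mathbb{P}\left(\mathbf{C}\mathbf{B}>0\right)=\mathbb{E}_{\mathbf{B}}\left[\,p(\mathbf{B})^{N}\,\right],\qquad p(\mathbf{B})\triangleq\mathbb{P}_{\mathbf{c}\sim\mathcal{N}\left(0,\mathbf{I}_{M}\right)}\left(\mathbf{B}^\top\mathbf{c}>0\right),
\]
and $p(\mathbf{B})$ is the orthant probability of the $L$‑dimensional Gaussian $\mathcal{N}(0,\mathbf{B}^\top\mathbf{B})$ (choosing this factor so that the orthant is low‑dimensional while the exponent $N$ is large). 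The whole difficulty is to show $p(\mathbf{B})$ is tiny for all but a negligible fraction of $\mathbf{B}$, and that even on the exceptional set $p(\mathbf{B})^N$ is not too large.

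To bound $p(\mathbf{B})$, set $\hat{\mathbf{b}}_j\triangleq\mathbf{b}_j/\|\mathbf{b}_j\|$ and let $r_j\triangleq\max_{k\ne j}|\langle\hat{\mathbf{b}}_j,\hat{\mathbf{b}}_k\rangle|$ be the coherence of column $j$. Choosing $\cos\theta_j=\sqrt{r_j}$ gives $|\langle\hat{\mathbf{b}}_j,\hat{\mathbf{b}}_{j'}\rangle|\le\sqrt{r_j}\sqrt{r_{j'}}=\cos\theta_j\cos\theta_{j'}$ for all $j\ne j'$, so the one‑factor Gaussian $g_j=\cos\theta_j\,Z+\sin\theta_j\,W_j$ (with $Z,W_1,\dots,W_L$ i.i.d.\ standard normal) has unit variances and entrywise‑dominating off‑diagonal covariances; Slepian's inequality then yields
\[
p(\mathbf{B})\ \le\ \mathbb{E}_Z\Big[\textstyle\prod_{j=1}^{L}\Phi\big(\sqrt{r_j/(1-r_j)}\,Z\big)\Big].
\]
The point of letting $\theta_j$ vary is that a few strongly‑coherent columns can now be quarantined: if at most $\ell$ of the $r_j$ exceed a threshold $\tau\in(0,1)$, then for $Z>0$ one bounds those $\Phi$‑factors by $1$ and the remaining ones by $\Phi(\sqrt{\tau/(1-\tau)}\,Z)$, and for $Z\le 0$ one bounds the whole product by $2^{-L}$, giving $p(\mathbf{B})\le 2\,\Psi_{L-\ell}(\tau)$ with $\Psi_m(\tau)\triangleq\mathbb{E}_Z[\Phi(\sqrt{\tau/(1-\tau)}\,Z)^{m}]$. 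A Laplace/saddlepoint analysis of $\Psi_m$ (the integrand $\phi(z)\Phi(az)^m$ peaks where $z=ma\phi(az)$, i.e.\ near $z\approx a^{-1}\sqrt{2\log(a^2m)}$) gives $\Psi_m(\tau)\ \dot{\le}\ \exp\!\big(-\tfrac{1-\tau}{\tau}\log\tfrac{m\tau}{1-\tau}\big)$ whenever $\tfrac{m\tau}{1-\tau}\ \dot{>}\ 1$; in particular $p(\mathbf{B})\ \dot{\le}\ \exp(-c\sqrt{M\log L})$ for the typical $\mathbf{B}$ once $\tau\asymp\sqrt{\log L/M}$ and $\ell\le L/2$.

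The complementary ingredient is a tail bound on the coherences: $\langle\hat{\mathbf{b}}_j,\hat{\mathbf{b}}_k\rangle$ is distributed as a coordinate of a uniform point on $S^{M-1}$, so $\mathbb{P}(|\langle\hat{\mathbf{b}}_j,\hat{\mathbf{b}}_k\rangle|>\tau)\le 2e^{-(M-1)\tau^2/2}$, and a union bound over the $\binom{L}{2}$ pairs — passing to a spanning forest of the ``coherence graph'' to handle the dependence — gives $\mathbb{P}(\text{at least }\ell\text{ columns have }r_j>\tau)\ \dot{\le}\ \big(CL^{2}e^{-c'M\tau^{2}}/\ell\big)^{\ell}$. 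Combining,
\[
\mathbb{P}\left(\mathbf{C}\mathbf{B}>0\right)\ \dot{\le}\ \sum_{\ell=0}^{L}\big(CL^{2}e^{-c'M\tau^{2}}/\ell\big)^{\ell}\big(2\,\Psi_{L-\ell}(\tau)\big)^{N}
\]
for every $\tau\in(0,1)$, and I would then optimize $\tau$ using the hypotheses $M\ \dot{\ge}\ \alpha\ \dot{>}\ 1$ and $ML=\alpha N$. The $\ell=0$ (and more generally $\ell\le L/2$) terms have negative log‑exponent $\asymp N\tau^{-1}\log(L\tau)$, decreasing in $\tau$; the $\ell\approx L$ end of the sum — where one only has $(2\Psi_{L-\ell}(\tau))^{N}\le 2^{N}$ — has negative log‑exponent $\asymp ML\tau^{2}$ (after absorbing the $2^{N}$ and combinatorial factors), increasing in $\tau$, and intermediate $\ell$ are dominated by one of these two ends. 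Taking $\tau$ at the crossover of these two quantities (a suitably small negative power of $\alpha$, up to a $\mathrm{polylog}$ factor) makes both $\dot{\ge}\ N\alpha^{1/4}$, and the $L+1$ terms cost only a polynomial factor, absorbed by $\dot{\le}$; this yields $\mathbb{P}(\mathbf{C}\mathbf{B}>0)\ \dot{\le}\ \exp(-0.4\,N\alpha^{1/4})$. The boundary case $\alpha=O(1)$ (where the target is merely $e^{-\Theta(N)}$ and one can take $\tau\asymp 1$, or reduce directly to concentration of $\mathbf{B}^\top\mathbf{B}$) is handled separately.

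I expect the main obstacle to be exactly this last balancing act. Because $M$ may be as small as $\sqrt{N}$ while $N$ and $L$ may be polynomially large in $N$, the probability of the ``bad'' highly‑coherent configurations is at best $e^{-\mathrm{poly}(M)}$, which is \emph{larger} than the target, and on those configurations the only cheap bound $p(\mathbf{B})\le\tfrac12$ is useless; neither a crude equicorrelation Slepian bound on $p(\mathbf{B})$ nor a naive two‑level good/bad split of $\mathbf{B}$ survives. One genuinely needs the coherence‑indexed peeling together with the sharp (saddlepoint) asymptotics of $\Psi_m(\tau)$, and the exponent $\alpha^{1/4}$ emerges only from the optimization over $\tau$ — essentially from where $ML\tau^{2}$ meets $N(-\log\Psi_L(\tau))$. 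Tracking the numerical constant down to $0.4$ (as with $\gamma_\epsilon$ elsewhere in the paper) and checking the side conditions $M\le N$, $N\ge L$, $\min[N,M,L]\ \dot{>}\ \alpha\ \dot{>}\ 1$ are used only where needed is then bookkeeping.
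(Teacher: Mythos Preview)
Your approach shares the paper's main ingredients --- condition on $\mathbf{B}$ and write $\mathbb{P}(\mathbf{CB}>0)=\mathbb{E}_{\mathbf{B}}[p(\mathbf{B})^{N}]$, compare $p(\mathbf{B})$ to a one-factor/equicorrelated Gaussian via Slepian, evaluate the resulting orthant probability by Laplace/saddlepoint, and trade this off against a Gaussian mutual-coherence tail on the columns of $\mathbf{B}$ --- but the decomposition is genuinely different. The paper does \emph{not} peel over the number $\ell$ of high-coherence columns. Instead it introduces an integer parameter $K<L$, restricts attention to the single subset $S^{*}=\arg\min_{|S|=K}\gamma(\mathbf{B}_{S})$, and writes
\[
\mathbb{P}(\mathbf{CB}>0)\ \le\ \mathbb{P}\bigl(\mathbf{CB}_{S^{*}}>0\ \big|\ \gamma(\mathbf{B}_{S^{*}})\le\epsilon\bigr)+\mathbb{P}\Bigl(\min_{|S|=K}\gamma(\mathbf{B}_{S})>\epsilon\Bigr).
\]
The first term is bounded by your $\Psi_{K}$-type quantity (Slepian plus Laplace, parametrized by $\theta=K\epsilon$); the second is bounded by partitioning $[L]$ into $\lfloor L/K\rfloor$ \emph{disjoint} groups of size $K$ and using independence across groups, giving $(2K^{2}e^{-M\epsilon^{2}/24})^{L/K-1}$. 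Balancing $NK\psi(\theta)$ against $M\theta^{2}L/K^{3}$ yields $K\asymp\alpha^{1/4}$ and hence exponent $\asymp N\alpha^{1/4}$; a one-dimensional numerical optimization over the constant $\theta$ then gives the $0.4$.

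What the paper's route buys is simplicity: no sum over $\ell$, no spanning-forest step (the disjoint-groups trick delivers genuine independence for free), and no separate handling of the regime where almost all columns are bad. Your peeling has two places that are presently under-specified and would need real work: (i) the ``spanning forest'' reduction for $\mathbb{P}(\ge\ell\text{ columns have }r_{j}>\tau)$ --- the high-coherence edges need not be vertex-disjoint (a star configuration already shows this), so the claimed $(CL^{2}e^{-c'M\tau^{2}}/\ell)^{\ell}$ is not immediate from a union bound; and (ii) the large-$\ell$ end of the sum, where $2\Psi_{L-\ell}(\tau)\ge 1$ is vacuous and you must fall back on $p(\mathbf{B})\le 1/2$, so the $2^{N}$ in your balancing should actually be $2^{-N}$ and the crossover moves. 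If these are patched, your strategy should also deliver the lemma (your diagnosis of the key tension and of how a fractional power of $\alpha$ emerges is correct), but the paper's subset-selection trick is the shorter path to the stated bound and to the explicit constant.
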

\item We use $rd_{0}\leq N$, $\left(\begin{array}{c}
N\\
K_{r}
\end{array}\right)\leq2^{N}$,$K_{r}\leq N$, and $d_{1}\dot{<}N$ (from assumption \ref{asmp: over-parameterization})
and $r^{2}\leq N^{2}/d_{0}^{2}\dot{<}N$ (from assumption (\ref{asmp: input dimension}))
to simplify the combintaorial expressions.
\item First, note that $r=1$ is the maximal term in the sum, so we can
neglect the other, exponentially smaller, terms. Second, from assumption
\ref{asmp: input dimension} we have $d_{0}\dot{\leq}N$, so 
\[
\lim_{N\rightarrow\infty}0.23\max\left[\epsilon,d_{0}\left(N\right)/N\right]^{3/4}=0.23\max\left[\epsilon,\lim_{N\rightarrow\infty}d_{0}\left(N\right)/N\right]^{3/4}=\gamma_{\epsilon}\,.
\]
Third, from assumption \ref{asmp: over-parameterization} we have
$N\log^{4}N\dot{<}d_{0}d_{1}$, so the $2^{N\log N}$ term is negligible.
\end{enumerate}
Thus, 
\begin{equation}
\mathbb{E}_{\mathbf{X}\sim\mathcal{N}}\mathbb{\mathcal{V}}\left(\mathcal{\mathcal{L}}_{\epsilon}\left(\mathbf{X},\mathbf{y}\right)\right)\dot{\leq}\exp\left(-\gamma_{\epsilon}N^{3/4}\left[d_{1}d_{0}\right]^{1/4}\right)\,.\label{eq: bad areas upper bound}
\end{equation}
which proves the Theorem for the case $\rho\neq0$. 

Next, we consider the case $\rho=0$. In this case, we need to change
transition $\left(4\right)$ in eq. (\ref{eq: bad minima bound 1}),
so the sum starts from $r=0$, since now we can have $\mathrm{rank}\left(a\left(\mathbf{W}\mathbf{X}_{S}\right)\right)=0$.
Following exactly the same logic (except the modification to the sum),
we only need to modify transition $\left(5\right)$in eq. (\ref{eq: bad minima bound 2})
\textendash{} since now the maximal term in the sum is at $r=0$.
This entails $\gamma_{\epsilon}=0.23\epsilon^{3/4}$.

\QEDA
\begin{cor}
\textbf{\emph{(Corollary \ref{cor: main theorem corollary} restated)}}
Given assumptions \ref{asmp:Gaussian Input}-\ref{asmp: over-parameterization},
for any $\delta>0$ (possibly a vanishing function of $N$), we have,
with probability $1-\delta$, that the angular volume of sub-optimal
DLMs, with $\mathrm{MCE}>\epsilon>0$, is exponentially vanishing
in $N$ as 
\[
\mathbb{\mathcal{V}}\left(\mathcal{\mathcal{L}}_{\epsilon}\left(\mathbf{X},\mathbf{y}\right)\right)\dot{\leq}\frac{1}{\delta}\exp\left(-\gamma_{\epsilon}N^{3/4}\left[d_{1}d_{0}\right]^{1/4}\right)
\]
\end{cor}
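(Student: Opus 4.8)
The plan is to obtain Corollary \ref{cor: main theorem corollary} as an immediate probabilistic consequence of the expectation bound in Theorem \ref{thm: Main theorem}, using only the Markov inequality (Fact \ref{fact: Markov-Inequality}). The key observation is that $\mathcal{V}\left(\mathcal{L}_{\epsilon}\left(\mathbf{X},\mathbf{y}\right)\right)$ is, by Definition \ref{def: angular volume}, a probability, hence a non-negative quantity; moreover it is a (measurable) function of the random training data $\left(\mathbf{X},\mathbf{y}\right)\sim\mathbb{P}_{X,Y}$, so it is a bona fide non-negative random variable to which the Markov inequality applies. Theorem \ref{thm: Main theorem} already controls its mean over $\mathbf{X}\sim\mathcal{N}$ (equivalently, over $\left(\mathbf{X},\mathbf{y}\right)$ under Assumption \ref{asmp:Gaussian Input}).

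Concretely, first I would set $\mu_{N}\triangleq\E_{\mathbf{X}\sim\mathcal{N}}\mathcal{V}\left(\mathcal{L}_{\epsilon}\left(\mathbf{X},\mathbf{y}\right)\right)$, which by Theorem \ref{thm: Main theorem} satisfies $\mu_{N}\dot{\leq}\exp\left(-\gamma_{\epsilon}N^{3/4}\left[d_{1}d_{0}\right]^{1/4}\right)$. Then I would apply Fact \ref{fact: Markov-Inequality} with the random variable $X=\mathcal{V}\left(\mathcal{L}_{\epsilon}\left(\mathbf{X},\mathbf{y}\right)\right)$ and threshold $\eta=\mu_{N}/\delta$, which yields $\mathbb{P}\left(\mathcal{V}\left(\mathcal{L}_{\epsilon}\left(\mathbf{X},\mathbf{y}\right)\right)\geq\mu_{N}/\delta\right)\leq\delta$. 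Taking complements, with probability at least $1-\delta$ one has $\mathcal{V}\left(\mathcal{L}_{\epsilon}\left(\mathbf{X},\mathbf{y}\right)\right)<\mu_{N}/\delta$, and substituting the bound on $\mu_{N}$ from Theorem \ref{thm: Main theorem} gives the claimed inequality $\mathcal{V}\left(\mathcal{L}_{\epsilon}\left(\mathbf{X},\mathbf{y}\right)\right)\dot{\leq}\frac{1}{\delta}\exp\left(-\gamma_{\epsilon}N^{3/4}\left[d_{1}d_{0}\right]^{1/4}\right)$. Allowing $\delta$ to be a vanishing function of $N$ is harmless, since the factor $1/\delta$ is carried explicitly into the right-hand side of the stated bound.

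The only point that warrants a little care — and essentially the only obstacle, such as it is — is the bookkeeping of the asymptotic symbol $\dot{\leq}$ when it is combined with the possibly $N$-dependent multiplicative factor $1/\delta$. Since $g\dot{\leq}h$ compares $\log g$ and $\log h$ in leading order, I would either keep $1/\delta$ inside the target expression throughout (exactly as the statement does), or, if a clean leading-order statement is preferred, note that whenever $1/\delta$ grows at most polynomially in $N$ the product $\frac{1}{\delta}\exp\left(-\gamma_{\epsilon}N^{3/4}\left[d_{1}d_{0}\right]^{1/4}\right)$ still has logarithm dominated by the $-\gamma_{\epsilon}N^{3/4}\left[d_{1}d_{0}\right]^{1/4}$ term (using $d_{0}d_{1}\dot{>}N\log^{4}N$ from Assumption \ref{asmp: over-parameterization}), so the exponential decay is preserved. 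Either way the argument reduces to a one-line application of Markov once Theorem \ref{thm: Main theorem} is in hand, so no substantive difficulty remains.
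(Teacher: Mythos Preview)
Your proposal is correct and is essentially identical to the paper's own proof: both apply Markov's inequality (Fact \ref{fact: Markov-Inequality}) to the non-negative random variable $\mathcal{V}\left(\mathcal{L}_{\epsilon}\left(\mathbf{X},\mathbf{y}\right)\right)$ with threshold $\eta=\mu_{N}/\delta$, and then substitute the expectation bound from Theorem \ref{thm: Main theorem}. Your extra paragraph on the bookkeeping of $\dot{\leq}$ with an $N$-dependent $1/\delta$ is a welcome clarification that the paper glosses over, but the core argument is the same one-line Markov step.
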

\begin{proof}
Since $\mathbb{\mathcal{V}}\left(\mathcal{\mathcal{L}}_{\epsilon}\left(\mathbf{X},\mathbf{y}\right)\right)\geq0$
we can use Markov's Theorem (Fact \ref{fact: Markov-Inequality})
$\forall\eta>0$: 
\[
\mathbb{P}_{\mathbf{X}\sim\mathcal{N}}\left(\mathbb{\mathcal{V}}\left(\mathcal{\mathcal{L}}_{\epsilon}\left(\mathbf{X},\mathbf{y}\right)\right)<\eta\right)>1-\frac{\mathbb{E}_{\mathbf{X}\sim\mathcal{N}}\mathbb{\mathcal{V}}\left(\mathcal{\mathcal{L}}_{\epsilon}\left(\mathbf{X},\mathbf{y}\right)\right)}{\eta}
\]
denoting $\eta=\frac{1}{\delta}\mathbb{E}_{\mathbf{X}\sim\mathcal{N}}\mathbb{\mathcal{V}}\left(\mathcal{\mathcal{L}}_{\epsilon}\left(\mathbf{X},\mathbf{y}\right)\right)$,
and using Theorem (\ref{thm: Main theorem}) we prove the corollary.
\begin{align*}
1-\delta & <\mathbb{P}_{\mathbf{X}\sim\mathcal{N}}\left(\mathbb{\mathcal{V}}\left(\mathcal{\mathcal{L}}_{\epsilon}\left(\mathbf{X},\mathbf{y}\right)\right)<\frac{1}{\delta}\mathbb{E}_{\mathbf{X}\sim\mathcal{N}}\mathbb{\mathcal{V}}\left(\mathcal{\mathcal{L}}_{\epsilon}\left(\mathbf{X},\mathbf{y}\right)\right)\right)\\
 & <\mathbb{P}_{\mathbf{X}\sim\mathcal{N}}\left(\mathbb{\mathcal{V}}\left(\mathcal{\mathcal{L}}_{\epsilon}\left(\mathbf{X},\mathbf{y}\right)\right)\dot{\leq}\frac{1}{\delta}\exp\left(-\gamma_{\epsilon}N^{3/4}\left[d_{1}d_{0}\right]^{1/4}\right)\right)
\end{align*}
where we note that replacing a regular inequality$<$ with inequality
in the leading order$\dot{\leq}$ only removes constraints, and therefore
increases the probability.
\end{proof}

\section{Construction of global minima: Proof of Theorem \ref{thm: overfitting solution}:
\label{subsec: Overfitting solution}}

Recall the LReLU non-linearity 
\[
f\left(x\right)\triangleq\begin{cases}
\rho x & ,\,\mathrm{if}\,x<0\\
x & ,\,\mathrm{if}\,x\geq0
\end{cases}
\]
 in eq. (\ref{eq: LReLU}), where $\rho\neq1$.
\begin{thm}
\label{thr: consturction of overfitting solution} \textbf{\emph{(Theorem
\ref{thm: overfitting solution} restated)}} For any \textbf{$\mathbf{y}\in\left\{ 0,1\right\} {}^{N}$}
and $\mathbf{X}\in\mathbb{R}^{d_{0}\times N}$ almost everywhere we
can find matrices $\mathbf{W}^{*}\in\mathbb{R}^{d_{1}^{*}\times d_{0}}$
and $\mathbf{z}^{*}\in\mathbb{R}^{d_{1}^{*}}$, such that $\mathbf{y}=f\left(\mathbf{W}^{*}\mathbf{X}\right)^{\top}\mathbf{z}^{*}$
, where $d_{1}^{*}\triangleq4\left\lceil N/\left(2d_{0}-2\right)\right\rceil $
and $\forall i,n:\,\mathbf{w}_{i}^{\top}\mathbf{x}^{\left(n\right)}\neq0.$.
Therefore, every MNN with $d_{1}\geq d_{1}^{*}$ has a DLM which achieves
zero error ($\mathrm{MSE}=\mathrm{MCE}=0$).
\end{thm}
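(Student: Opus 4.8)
The plan is to build the overfitting network explicitly, by partitioning the $N$ datapoints into small groups and fitting each group with a fixed-size gadget of LReLU units. First I would reduce to the subproblem: given a subset of roughly $2(d_0-1)$ datapoints, find a constant number of hidden units (a ``gadget'') whose linear combination equals the desired labels on that subset while outputting zero on it for... actually, more carefully, I want each gadget to reproduce the target values on its own group and I will need to control what it does on the other groups. Since we only need to hit the correct values on all $N$ points, and the columns of $f(\mathbf{W}^*\mathbf{X})^\top$ that a gadget contributes can be made to vanish (or be cancelled) on points outside its group, the cleanest route is the classic construction à la Baum (1988): order the datapoints by projecting onto a generic direction, so that after a suitable affine-like reading each group of consecutive points is separated from the rest by a hyperplane; then use a few LReLU units per group to produce a ``bump'' supported essentially on that group. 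With $4$ units per group and $\lceil N/(2d_0-2)\rceil$ groups we get the stated $d_1^* = 4\lceil N/(2d_0-2)\rceil$.

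The key steps in order: (1) Use the genericity of $\mathbf{X}$ to choose a vector $\mathbf{u}$ such that the scalars $t_n = \mathbf{u}^\top\mathbf{x}^{(n)}$ are all distinct and nonzero — this holds $\mathbf{X}$-a.e. since the bad set is a finite union of hyperplanes. (2) Sort the $t_n$ and cut them into consecutive blocks of size $\le 2(d_0-1)$. (3) For a single block, show that one can choose $\lceil |\text{block}|/2\rceil$ hidden directions (living in the $d_0$-dimensional input space, exploiting the $d_0-1$ free directions orthogonal to $\mathbf{u}$ to separate points that are adjacent in the $t$-ordering into pairs) so that their pre-activations have prescribed signs on the block; combined with a small constant number of auxiliary units this lets a linear combination equal $\mathbf{y}$ restricted to that block and equal $0$ on all earlier blocks — this is where the LReLU slope $\rho\neq 1$ matters, because $f(x)-\rho x$ (or $f(x)-f(-x)\cdot\text{const}$) gives a genuine ``one-sided'' nonlinearity one can localize. (4) Sum the gadgets over blocks; telescoping/cancellation across blocks yields $\mathbf{y} = f(\mathbf{W}^*\mathbf{X})^\top \mathbf{z}^*$ exactly. (5) Finally, perturb $\mathbf{W}^*$ infinitesimally (still $\mathbf{X}$-a.e.) so that $\mathbf{w}_i^\top\mathbf{x}^{(n)}\neq 0$ for all $i,n$; since the exact-fit condition is an open condition once we have zero error and the network is differentiable there — actually zero error is preserved only if we also re-solve for $\mathbf{z}^*$, so more precisely: after fixing a generic $\mathbf{W}^*$ with no zero pre-activations, the matrix $f(\mathbf{W}^*\mathbf{X})$ has rank $N$ (by the block construction it contains an invertible $N\times N$ submatrix), hence $\mathbf{z}^*$ solving $f(\mathbf{W}^*\mathbf{X})^\top\mathbf{z}^* = \mathbf{y}$ exists. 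This also immediately gives the last sentence: any $d_1 \ge d_1^*$ contains this as a sub-network (pad with zero second-layer weights), and such a point is a DLM since all pre-activations are nonzero and $\mathbf{e}=0$ is a global minimum of the nonnegative MSE.

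The main obstacle I expect is step (3): getting the per-block gadget to have a genuinely bounded number of units ($4$, independent of $d_0$ and $N$) while simultaneously (a) fitting arbitrary $\{0,1\}$ labels on up to $2(d_0-1)$ points of the block and (b) being exactly zero on all previously-handled points. Point (a) needs one to use the $d_0-1$ ``transverse'' coordinates to resolve pairs of $t$-adjacent points — a dimension-counting argument à la Baum — and point (b) needs a careful choice of thresholds so the localized bump has support contained in $[t_{\min}^{\text{block}}, t_{\max}^{\text{block}}]$; the interaction of the two, together with the requirement that the $\rho$-slope pieces cancel correctly, is the delicate combinatorial-geometric core. Everything else (genericity of $\mathbf{X}$, the final perturbation, the rank/solvability of $\mathbf{z}^*$, and the embedding into wider networks) is routine.
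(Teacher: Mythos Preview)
Your step~(3) asks a $4$-unit LReLU gadget to match \emph{arbitrary} $\{0,1\}$ labels on a block of up to $2(d_0-1)$ points while vanishing on all other blocks; but four LReLUs can only build a one-dimensional trapezoidal bump, not shatter $\Theta(d_0)$ points carrying mixed labels. Your own parenthetical about ``$\lceil|\text{block}|/2\rceil$ hidden directions'' already betrays the inconsistency: that count is $d_0-1$, not $4$, so the unit budget does not close. Step~(5) cannot patch this either: $f(\mathbf{W}^*\mathbf{X})\in\mathbb{R}^{d_1^*\times N}$ with $d_1^*=4\lceil N/(2d_0-2)\rceil\ll N$, so its rank is at most $d_1^*$, and $f(\mathbf{W}^*\mathbf{X})^\top\mathbf{z}^*=\mathbf{y}$ is unsolvable for generic $\mathbf{y}$ once $\mathbf{W}^*$ is chosen independently of $\mathbf{y}$.

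The missing idea is to make the construction $\mathbf{y}$-dependent from the outset and never ask a gadget to handle mixed labels. The paper partitions only the minority-label set $\mathcal{S}^+=\{n:y^{(n)}=1\}$ (size $\le N/2$) into $K\le\lceil N/(2d_0-2)\rceil$ groups of at most $d_0-1$ points each. Generically, any $d_0-1$ points in $\mathbb{R}^{d_0}$ span a unique hyperplane \emph{through the origin}, with normal $\tilde{\mathbf{w}}_i$, and no other datapoint lies on it. A narrow trapezoid $\tau(\tilde{\mathbf{w}}_i^\top\mathbf{x})$---built from exactly $4$ LReLUs, using an auxiliary direction $\hat{\mathbf{w}}_i$ orthogonal to $\tilde{\mathbf{w}}_i$ to manufacture the missing ``bias''---then equals $1$ on that group and $0$ on every other datapoint. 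Summing the $K$ trapezoids gives $d_1^*=4K$, and the nonzero pre-activation condition $\mathbf{w}_i^\top\mathbf{x}^{(n)}\neq 0$ holds directly by construction (pre-activations are $\pm\epsilon_1,\pm\epsilon_2$ on the group and bounded away from zero outside it), so no after-the-fact perturbation or rank argument is needed.
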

We prove the existence of a solution ($\mathbf{W}^{*}$,$\mathbf{z}^{*}$),
by explicitly constructing it. This construction is a variant of \citep[Theorem 1]{Baum1988},
except we use LReLU without bias and MSE \textendash{} instead of
threshold units with bias and MCE. First, we note that for any $\epsilon_{1}>\epsilon_{2}>0$,
the following trapezoid function can be written as a scaled sum of
four LReLU:
\begin{align}
\tau\left(x\right) & \triangleq\begin{cases}
0 & ,\,\mathrm{if}\,\left|x\right|>\epsilon_{1}\\
1 & ,\,\mathrm{if}\,\left|x\right|\leq\epsilon_{2}\\
\frac{\epsilon_{1}-\left|x\right|}{\epsilon_{1}-\epsilon_{2}} & ,\,\mathrm{if}\,\epsilon_{2}<\left|x\right|\leq\epsilon_{1}
\end{cases}\label{eq: trapezoid function}\\
 & =\frac{1}{\epsilon_{1}-\epsilon_{2}}\frac{1}{1-\rho}\left[f\left(x+\epsilon_{1}\right)-f\left(x+\epsilon_{2}\right)-f\left(x-\epsilon_{2}\right)+f\left(x-\epsilon_{1}\right)\right]\,.\nonumber 
\end{align}
Next, we examine the set of data points which are classified to $1$:
$\mathcal{S}^{+}\triangleq\left\{ n\in\left[N\right]|y^{\left(n\right)}=1\right\} $
. Without loss of generality, assume $\left|\mathcal{S}^{+}\right|\leq\frac{N}{2}.$
We partition $\mathcal{S}^{+}$ to 
\[
K=\left\lceil \frac{\left|\mathcal{S}^{+}\right|}{d_{0}-1}\right\rceil \leq\left\lceil \frac{N}{2\left(d_{0}-1\right)}\right\rceil 
\]
 subsets $\left\{ \mathcal{S}_{i}^{+}\right\} _{i=1}^{K}$, each with
no more than $d_{0}-1$ samples. For almost any dataset we can find
$K$ hyperplanes passing through the origin, with normals $\left\{ \mathbf{\tilde{w}}_{i}\right\} _{i=1}^{K}$
such that each hyperplane contains all $d_{0}-1$ points in subset
$\mathcal{S}_{i}^{+}$, \emph{i.e.},
\begin{equation}
\tilde{\mathbf{w}}_{i}^{\top}\mathbf{X}_{\mathcal{S}_{i}^{+}}=0\,,\label{eq: w zero}
\end{equation}
but no other point, so $\forall n\notin\mathcal{S}_{i}^{+}:\,\tilde{\mathbf{w}}_{i}^{\top}\mathbf{x}^{\left(n\right)}\neq0\,,$

If $\epsilon_{1},\epsilon_{2}$ in eq. (\ref{eq: trapezoid function})
are sufficiently small ($\forall n\notin\mathcal{S}_{i}^{+}:\,\left|\tilde{\mathbf{w}}_{i}^{\top}\mathbf{x}^{\left(n\right)}\right|>\epsilon_{1}$)
then we have 
\[
\tau\left(\tilde{\mathbf{w}}_{i}^{\top}\mathbf{x}^{\left(n\right)}\right)=\begin{cases}
1 & ,\,\mathrm{if}\,n\in\mathcal{S}_{i}^{+}\\
0 & ,\,\mathrm{else}
\end{cases}\,.
\]
Then we have
\begin{equation}
\sum_{i=1}^{K}\tau\left(\tilde{\mathbf{w}}_{i}^{\top}\mathbf{x}^{\left(n\right)}\right)=\begin{cases}
1 & ,\,\mathrm{if}\,n\in\mathcal{S}^{+}\\
0 & ,\,\mathrm{else}
\end{cases}\label{eq: MNN output}
\end{equation}
which gives the correct classification on all the data points. Thus,
from eq. (\ref{eq: trapezoid function}), we can construct a MNN with
\[
d_{1}^{*}=4K
\]
 hidden neurons which achieves zero error. This is straightforward
to do if we have a bias in each neuron. To construct this MNN even
without bias, we first find a vector $\hat{\mathbf{w}}_{i}$ such
that 
\begin{equation}
\hat{\mathbf{w}}_{i}^{\top}\left[\mathbf{X}_{\mathcal{S}_{i}^{+}},\tilde{\mathbf{w}}_{i}\right]=\left[1,\dots,1,1,0\right].\label{eq: w_hat}
\end{equation}
Note that this is possible since $\left[\mathbf{X}_{\mathcal{S}_{i}^{+}},\tilde{\mathbf{w}}_{i}\right]$
has full rank $\mathbf{X}$-a.e. (the matrix $\mathbf{X}_{\mathcal{S}_{i}^{+}}\in\mathbb{R}^{d_{0}\times d_{0}-1}$
has, $\mathbf{X}$-a.e., one zero left eigenvector, which is $\tilde{\mathbf{w}}_{i}$,
according to eq. (\ref{eq: w zero})). Additionally, we can set 
\begin{equation}
\left\Vert \tilde{\mathbf{w}}_{i}\right\Vert =\left\Vert \hat{\mathbf{w}}_{i}\right\Vert \,,\label{eq: norm equality}
\end{equation}
since changing the scale of $\mathbf{w}_{i}$ would not affect the
validity of eq. (\ref{eq: w zero}). Then, we denote 
\begin{align*}
\mathbf{w}_{i}^{\left(1\right)} & \triangleq\tilde{\mathbf{w}}_{i}+\epsilon_{1}\hat{\mathbf{w}}_{i}\,\,;\,\,\mathbf{w}_{i}^{\left(2\right)}\triangleq\tilde{\mathbf{w}}_{i}+\epsilon_{2}\hat{\mathbf{w}}_{i}\\
\mathbf{w}_{i}^{\left(3\right)} & \triangleq\tilde{\mathbf{w}}_{i}-\epsilon_{2}\hat{\mathbf{w}}_{i}\,\,;\,\,\mathbf{w}_{i}^{\left(4\right)}\triangleq\tilde{\mathbf{w}}_{i}-\epsilon_{1}\hat{\mathbf{w}}_{i}\,.
\end{align*}
 Note, from eqs. (\ref{eq: w zero}) and (\ref{eq: w_hat}) that this
choice satisfies 
\begin{equation}
\forall n\in\mathcal{S}_{i}^{+}:\,\mathbf{w}_{i}^{\left(j\right)\top}\mathbf{x}^{\left(n\right)}=\begin{cases}
\epsilon_{1} & ,\,\mathrm{if}\,j=1\\
\epsilon_{2} & ,\,\mathrm{if}\,j=2\\
-\epsilon_{2} & ,\,\mathrm{if}\,j=3\\
-\epsilon_{1} & ,\,\mathrm{if}\,j=4
\end{cases}\,.\label{eq: cond 1}
\end{equation}
Also, to ensure that $\forall n\notin\mathcal{S}_{i}^{+}$ the sign
of $\mathbf{w}_{i}^{\left(j\right)^{\top}}\mathbf{x}^{\left(n\right)}$
does not change for different $j$, for some $\beta,\gamma<1$ we
define
\begin{equation}
\epsilon_{1}=\beta\frac{\min_{n\notin\mathcal{S}_{i}^{+}}\left|\tilde{\mathbf{w}}_{i}^{\top}\mathbf{x}^{\left(n\right)}\right|}{\max_{n\notin\mathcal{S}_{i}^{+}}\left|\hat{\mathbf{w}}_{i}^{\top}\mathbf{x}^{\left(n\right)}\right|}\,,\epsilon_{2}=\gamma\epsilon_{1}\,,\label{eq: epsilon 1,2}
\end{equation}
where with probability $1$, $\min_{n\notin\mathcal{S}_{i}^{+}}\left|\tilde{\mathbf{w}}_{i}^{\top}\mathbf{x}^{\left(n\right)}\right|>0$
and $\max_{n\notin\mathcal{S}_{i}^{+}}\left|\hat{\mathbf{w}}_{i}^{\top}\mathbf{x}^{\left(n\right)}\right|>0$.
Defining 
\begin{align}
\mathbf{W}_{i} & \triangleq\left[\mathbf{w}_{i}^{\left(1\right)},\mathbf{w}_{i}^{\left(2\right)},\mathbf{w}_{i}^{\left(3\right)},\mathbf{w}_{i}^{\left(4\right)}\right]^{\top}\in\mathbb{R}^{4K\times d_{0}}\label{eq: W_i}\\
\mathbf{z}_{i} & \triangleq\left[1,-1,-1,1\right]^{\top}\in\mathbb{R}^{4}\nonumber 
\end{align}
and combining all the above facts, we have
\begin{align*}
 & f\left(\mathbf{W}_{i}\mathbf{x}^{\left(n\right)}\right)^{\top}\mathbf{z}_{i}\\
 & =\frac{1}{\epsilon_{1}-\epsilon_{2}}\frac{1}{1-\rho}\left[f\left(\mathbf{w}_{i}^{\left(1\right)\top}\mathbf{x}^{\left(n\right)}\right)-f\left(\mathbf{w}_{i}^{\left(2\right)\top}\mathbf{x}^{\left(n\right)}\right)-f\left(\mathbf{w}_{i}^{\left(3\right)\top}\mathbf{x}^{\left(n\right)}\right)+f\left(\mathbf{w}_{i}^{\left(3\right)\top}\mathbf{x}^{\left(n\right)}\right)\right]\\
 & =\frac{1}{\epsilon_{1}-\epsilon_{2}}\frac{1}{1-\rho}\left[f\left(\mathbf{\tilde{w}}_{i}^{\top}\mathbf{x}^{\left(n\right)}+\epsilon_{1}\hat{\mathbf{w}}_{i}^{\top}\mathbf{x}^{\left(n\right)}\right)-f\left(\mathbf{\tilde{w}}_{i}^{\top}\mathbf{x}+\epsilon_{2}\hat{\mathbf{w}}_{i}^{\top}\mathbf{x}^{\left(n\right)}\right)\right.\\
 & -\left.f\left(\mathbf{\tilde{w}}_{i}^{\top}\mathbf{x}^{\left(n\right)}-\epsilon_{2}\hat{\mathbf{w}}_{i}^{\top}\mathbf{x}^{\left(n\right)}\right)+f\left(\mathbf{\tilde{w}}_{i}^{\top}\mathbf{x}^{\left(n\right)}-\epsilon_{1}\hat{\mathbf{w}}_{i}^{\top}\mathbf{x}^{\left(n\right)}\right)\right]\\
 & =\begin{cases}
1 & ,\,\mathrm{if}\,n\in\mathcal{S}_{i}^{+}\\
0 & ,\,\mathrm{else}
\end{cases}\,.
\end{align*}
Thus, for 
\begin{align*}
\mathbf{W}^{*} & =\left[\mathbf{W}_{1}^{\top},\dots,\mathbf{W}_{K}^{\top}\right]^{\top}\in\mathbb{R}^{4\times d_{0}}\\
\mathbf{z}^{*} & =\frac{1}{\epsilon_{1}-\epsilon_{2}}\frac{1}{1-\rho}\cdot\left[\mathbf{z}_{1},\dots,\mathbf{z}_{K}\right]\in\mathbb{R}^{4K}
\end{align*}
we obtain a MNN that implements 
\[
f\left(\mathbf{W}^{*}\mathbf{x}^{\left(n\right)}\right)^{\top}\mathbf{z}^{*}=\begin{cases}
1 & ,\,\mathrm{if}\,n\in\mathcal{S}^{+}\\
0 & ,\,\mathrm{else}
\end{cases}
\]
and thus achieves zero error. Clearly, from this construction, if
$\mathbf{w}_{i}$ is a row of $\mathbf{W}^{*}$, then $\forall n\in\mathcal{S}_{i}^{+}$,$\forall i:\,\left|\mathbf{w}_{i}^{\top}\mathbf{x}^{\left(n\right)}\right|\geq\epsilon_{2}$,
and with probability 1 $\forall n\notin\mathcal{S}_{i}^{+}$,$\forall i:\,\left|\mathbf{w}_{i}^{\top}\mathbf{x}^{\left(n\right)}\right|>0$,
so this construction does not touch any non-differentiable region
of the MSE.\QEDA

\section{\label{sec: global minima proof} Global minima: Proof of Theorem
\ref{thm: Main theorem 2}}
\begin{thm}
\textbf{\emph{(Theorem \ref{thm: Main theorem 2} restated).}} Given
assumptions \ref{asmp:Gaussian Input}-\ref{asmp: input dimension},
we set $\delta\dot{=}\sqrt{\frac{8}{\pi}}d_{0}^{-1/2}+2d_{0}^{1/2}\sqrt{\log d_{0}}/N$
and $d_{1}^{*}=2N/d_{0}$ , or if assumption \ref{asmp: Teacher}
holds, we set $d_{1}^{*}$ as in this assumption. Then, with probability
$1-\delta$, the angular volume of global minima is lower bounded
as,
\[
\mathcal{V}\left(\mathcal{G}\left(\mathbf{X},\mathbf{y}\right)\right)\dot{>}\exp\left(-d_{1}^{*}d_{0}\log N\right)\dot{\geq}\exp\left(-2N\log N\right)\,.
\]
\end{thm}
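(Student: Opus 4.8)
The plan is to lower bound $\mathcal{V}(\mathcal{G})$ by the angular volume of a \emph{single} differentiable region that contains a known zero-error DLM, reduce that volume to a product of single-neuron probabilities, and estimate each factor geometrically. Take a global minimum $(\mathbf{W}^{*},\mathbf{z}^{*})$ with all pre-activations nonzero --- either the explicit construction of Theorem \ref{thm: overfitting solution} (so $d_{1}^{*}=4\lceil N/(2d_{0}-2)\rceil\dot{\leq}2N/d_{0}$), or the teacher of Assumption \ref{asmp: Teacher} --- and adjoin $d_{1}-d_{1}^{*}$ generic extra rows with vanishing output weights (assuming, as elsewhere, $d_{1}\geq d_{1}^{*}$). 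For \emph{any} choice $\mathbf{W}'$ of those extra rows, the region $\mathcal{D}_{\mathbf{A}}(\mathbf{X})$ with $\mathbf{A}$ equal to $a(\mathbf{W}^{*}_{1:d_{1}^{*}}\mathbf{X})$ stacked over $a(\mathbf{W}'\mathbf{X})$ still contains a DLM with $\mathbf{e}=0$ (keep $\mathbf{W}^{*}_{1:d_{1}^{*}}$ and $\mathbf{z}^{*}_{1:d_{1}^{*}}$ in the first rows, use $\mathbf{W}'$ afterwards with the corresponding $z_{i}=0$), hence lies in $\mathcal{G}$ by Definition \ref{def: local minima regions}. Taking the union over $\mathbf{W}'$ and using that the rows of $\mathbf{W}\sim\mathcal{N}$ are independent,
\[
\mathcal{V}(\mathcal{G})\ \geq\ \mathbb{P}_{\mathbf{W}\sim\mathcal{N}}\!\bigl(a(\mathbf{W}_{1:d_{1}^{*}}\mathbf{X})=a(\mathbf{W}^{*}_{1:d_{1}^{*}}\mathbf{X})\bigr)\ =\ \prod_{i=1}^{d_{1}^{*}}p_{i},\qquad p_{i}\triangleq\mathbb{P}_{\mathbf{w}\sim\mathcal{N}}\!\bigl(a(\mathbf{w}^{\top}\mathbf{X})=a(\mathbf{w}_{i}^{*\top}\mathbf{X})\bigr).
\]

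\textbf{Geometric lower bound on $p_{i}$.} The event defining $p_{i}$ is that $\mathbf{w}$ lies in the polyhedral cone of directions agreeing in sign with $\mathbf{w}_{i}^{*}$ on every column of $\mathbf{X}$. Writing $\hat{\mathbf{u}}\triangleq\mathbf{u}/\|\mathbf{u}\|$, that cone contains every unit vector within Euclidean distance $s_{i}\triangleq\min_{n}|\langle\hat{\mathbf{w}}_{i}^{*},\hat{\mathbf{x}}^{(n)}\rangle|$ of $\hat{\mathbf{w}}_{i}^{*}$, since a perturbation of norm $<s_{i}$ changes each inner product with $\hat{\mathbf{x}}^{(n)}$ by less than its value and cannot flip a sign. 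As the Gaussian measure of a cone equals the normalized spherical area of its trace on $S^{d_{0}-1}$, and that area for a cap of angular radius $\nu$ is $\dot{=}\nu^{d_{0}-1}$ (the normalizer $\int_{0}^{\pi}\sin^{d_{0}-2}\theta\,d\theta=\Theta(d_{0}^{-1/2})$ is only polynomial in $d_{0}$), we get $p_{i}\dot{\geq}s_{i}^{\,d_{0}-1}$, i.e.\ $-\log p_{i}\leq(d_{0}-1)\log(1/s_{i})+O(d_{0})$.

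\textbf{Margin control with probability $1-\delta$.} It remains to show $s\triangleq\min_{i\leq d_{1}^{*},\,n\leq N}|\langle\hat{\mathbf{w}}_{i}^{*},\hat{\mathbf{x}}^{(n)}\rangle|$ is at most polynomially small. Under Assumption \ref{asmp: Teacher}, $\mathbf{w}_{i}^{*}$ is independent of $\mathbf{X}$, so $\langle\hat{\mathbf{w}}_{i}^{*},\hat{\mathbf{x}}^{(n)}\rangle$ is a coordinate of a uniform point on $S^{d_{0}-1}$ and obeys the anticoncentration bound $\mathbb{P}(|\langle\hat{\mathbf{w}}_{i}^{*},\hat{\mathbf{x}}^{(n)}\rangle|<t)\leq\sqrt{2d_{0}/\pi}\,t$; a union bound over the $Nd_{1}^{*}$ pairs with $t=N^{-2}$ and $d_{1}^{*}=2N/d_{0}$ gives $\mathbb{P}(s<N^{-2})\leq\sqrt{8/\pi}\,d_{0}^{-1/2}$, and adding a $\chi^{2}$-tail bound $\mathbb{P}\bigl(\max_{n}\|\mathbf{x}^{(n)}\|^{2}>d_{0}+O(\sqrt{d_{0}\log d_{0}})\bigr)\leq 2d_{0}^{1/2}\sqrt{\log d_{0}}/N$ (used to pass between $|\mathbf{w}_{i}^{*\top}\mathbf{x}^{(n)}|$ and $|\langle\hat{\mathbf{w}}_{i}^{*},\hat{\mathbf{x}}^{(n)}\rangle|$) yields the stated $\delta$. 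Without Assumption \ref{asmp: Teacher} one uses the construction of Theorem \ref{thm: overfitting solution}: each of its weight vectors is determined by only $d_{0}-1$ of the samples and hence independent of the rest, carrying the explicit nonzero margins $\epsilon_{1},\epsilon_{2}$ on its own subset by eqs.\ (\ref{eq: cond 1})--(\ref{eq: epsilon 1,2}); unioning the anticoncentration estimate over subsets $S\subset[N]$ of size $d_{0}-1$ (rather than over neurons) gives the same $\delta$.

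\textbf{Combining.} On the $1-\delta$ event $s\dot{\geq}N^{-2}$, so $-\log p_{i}\leq(d_{0}-1)\log(1/s)+O(d_{0})\dot{\leq}d_{0}\log N$ for every $i$, whence $-\log\mathcal{V}(\mathcal{G})\leq\sum_{i=1}^{d_{1}^{*}}(-\log p_{i})\dot{\leq}d_{1}^{*}d_{0}\log N\dot{\leq}2N\log N$ (the last step using $d_{1}^{*}\dot{\leq}2N/d_{0}$; under Assumption \ref{asmp: Teacher} the smaller $d_{1}^{*}$ from that assumption gives the sharper $\exp(-d_{1}^{*}d_{0}\log N)$), which is the claim. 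I expect the main obstacle to be the margin control in the \emph{no-teacher} case: there $\mathbf{W}^{*}$ depends on $\mathbf{X}$, so Gaussian anticoncentration is not directly available, and one must exploit the internal structure of the construction of Theorem \ref{thm: overfitting solution} --- that $\tilde{\mathbf{w}}_{i}$ is orthogonal to a $(d_{0}-1)$-subset of the data while $\epsilon_{1},\epsilon_{2}$ are explicit data-dependent quantities --- and carefully track the resulting on-subset ($\sim\epsilon_{2}$) and off-subset margins through a union bound over subsets so as to reach the precise $\delta$; verifying the $\chi^{2}$-tail term and the exact constant $\sqrt{8/\pi}$ is the remaining routine-but-delicate bookkeeping.
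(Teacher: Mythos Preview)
Your architecture is exactly the paper's: lower-bound $\mathcal{V}(\mathcal{G})$ by the angular volume of the single region $\tilde{\mathcal{G}}=\{a(\mathbf{W}_{1:d_1^*}\mathbf{X})=a(\mathbf{W}^*\mathbf{X})\}$, factor over neurons, bound each factor by a spherical-cap probability $\dot{=}(\text{margin})^{d_0}$, and control the margin $s=\min_{i,n}|\langle\hat{\mathbf{w}}_i^*,\hat{\mathbf{x}}^{(n)}\rangle|$ by anticoncentration (teacher case) or by dissecting the explicit construction of Theorem~\ref{thm: overfitting solution} (no-teacher case). This matches Lemmas~\ref{lem: Good solution}, \ref{lem: Fixed Margin}, and \ref{lem: Overfitting Margin} of the paper.

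One correction on the bookkeeping you flagged: the second term $2d_0^{1/2}\sqrt{\log d_0}/N$ in $\delta$ does \emph{not} arise in the teacher case (there the paper gets only $\sqrt{2/\pi}\,d_0^{-1/2}$, Lemma~\ref{lem: Fixed Margin}), and it does not come from a $\chi^2$ tail on $\|\mathbf{x}^{(n)}\|$. It appears only in the no-teacher analysis and comes from controlling $\|\hat{\mathbf{w}}_i\|$ (the auxiliary vector in the construction) via a smallest-singular-value bound on the $(d_0-1)$-column Gaussian submatrix $\mathbf{X}_{\mathcal{S}_i^+}$; the $\chi^2$ tail on $\max_n\|\mathbf{x}^{(n)}\|$ is used elsewhere in that same argument but contributes a term that is exponentially small in $d_0$. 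Your instinct that the no-teacher case is the delicate part is correct, and the ``union over subsets'' you mention should be read as the union over the $K=d_1^*/4$ subsets $\mathcal{S}_i^+$ of the construction (a union over all $\binom{N}{d_0-1}$ subsets would be far too large).
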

In this section we lower bound the angular volume of $\mathcal{G}$
(definition \ref{def: local minima regions}), \emph{i.e.}, differentiable
regions in which there exist DLMs with $\mathrm{MCE}=0$. We lower
bound $\mathcal{V}\left(\mathcal{G}\right)$ using the angular volume
corresponding to the differentiable region containing a single global
minimum. 

From assumption \ref{asmp: over-parameterization}, we have $d_{0}d_{1}\dot{>}N$,
so we can apply Theorem \ref{thm: overfitting solution} and say that
the labels are generated using a $\left(\mathbf{X},\mathbf{y}\right)$
-dependent MNN: $\mathbf{y}=f\left(\mathbf{W}^{*}\mathbf{X}\right)^{\top}\mathbf{z}^{*}$
with target weights $\mathbf{W}^{*}=\left[\mathbf{w}_{1}^{*\top},\dots,\mathbf{w}_{d_{1}^{*}}^{*\top}\right]^{\top}\in\mathbb{R}^{d_{1}^{*}\times d_{0}}$
and $\mathbf{z}^{*}\in\mathbb{R}^{d_{1}}$. If, in addition, assumption
\ref{asmp: Teacher} holds then we can assume $\mathbf{W}^{*}$ and
$\mathbf{z}^{*}$ are independent from $\left(\mathbf{X},\mathbf{y}\right)$.
In both cases, the following differentiable region
\begin{equation}
\tilde{\mathcal{G}}\left(\mathbf{X},\mathbf{W}^{*}\right)\triangleq\left\{ \mathbf{W}\in\mathbb{R}^{d_{1}\times d_{0}}|\forall i\leq d_{1}^{*}:\,\,\mathrm{sign}\left(\mathbf{w}_{i}^{\top}\mathbf{X}\right)=\mathrm{sign}\left(\mathbf{w}_{i}^{*\top}\mathbf{X}\right)\right\} \,,\label{eq: global minima-1}
\end{equation}
also contains a differentiable global minimum (just set\textbf{ $\mathbf{w}_{i}=\mathbf{w}_{i}^{*}$},
$z_{i}=z_{i}^{*}$ $\forall i\leq d_{1}^{*}$, and $z_{i}=0$ $\forall i>d_{1}^{*}$),
and therefore $\forall\mathbf{X},\mathbf{y}$ and their corresponding
$\mathbf{W}^{*}$, we have 
\begin{equation}
\mathcal{G}\left(\mathbf{X},\mathbf{y}\right)\supset\tilde{\mathcal{G}}\left(\mathbf{X},\mathbf{W}^{*}\right)\label{eq: G and  G tilde}
\end{equation}
Also, we will make use of the following definition.
\begin{defn}
Let $\mathbf{X}$ have an \emph{angular margin} $\alpha$ from $\mathbf{W}^{*}$
if all datapoints (columns in $\mathbf{X}$) are at an angle of at
least $\alpha$ from all the weight hyperplanes (rows of $\mathbf{W}^{*}$)
, \emph{i.e.}, $\mathbf{X}$ is in the set
\begin{equation}
\mathcal{M}^{\alpha}\left(\mathbf{W}^{*}\right)\triangleq\left\{ \mathbf{X}\in\mathbb{R}^{d_{0}\times N}|\forall i,n:\left|\frac{\mathbf{x}^{\left(n\right)\top}\mathbf{w}_{i}^{*}}{\left\Vert \mathbf{x}^{\left(n\right)}\right\Vert \left\Vert \mathbf{w}_{i}^{*}\right\Vert }\right|>\mathrm{sin}\alpha\right\} \,.\label{eq: angular margin}
\end{equation}
\end{defn}
Using the definitions in eqs. (\ref{eq: angular margin}) and (\ref{eq: global minima-1}),
we prove the Theorem using the following three Lemmas.

First, In appendix section \ref{subsec:Good:-parsimonious-global}
we prove
\begin{lem}
\label{lem: Good solution}For any $\alpha,$ if $\mathbf{W}^{*}$
is independent from $\mathbf{W}$ then, in the limit $N\rightarrow\infty$,
$\forall\mathbf{X}\in\mathcal{M}^{\alpha}\left(\mathbf{W}^{*}\right)$
with $\log\sin\alpha\dot{>}d_{0}^{-1}\log d_{0}$ 
\begin{align*}
\mathcal{V}\left(\tilde{\mathcal{G}}\right)=\mathbb{P}_{\mathbf{W}\sim\mathcal{N}}\left(\mathbf{W}\in\tilde{\mathcal{G}}\left(\mathbf{X},\mathbf{W}^{*}\right)\right) & \dot{\geq}\exp\left(d_{0}d_{1}^{*}\log\sin\alpha\right).
\end{align*}
\end{lem}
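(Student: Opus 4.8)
The plan is to exploit that $\tilde{\mathcal{G}}(\mathbf{X},\mathbf{W}^{*})$ constrains only the first $d_{1}^{*}$ rows of $\mathbf{W}$, and that these rows are independent. Fix $\mathbf{X}\in\mathcal{M}^{\alpha}(\mathbf{W}^{*})$ and set $\hat{\mathbf{u}}_{i}\triangleq\mathbf{w}_{i}^{*}/\|\mathbf{w}_{i}^{*}\|$ (nonzero, as the margin presupposes). Since the rows $\mathbf{w}_{1},\dots,\mathbf{w}_{d_{1}^{*}}$ of $\mathbf{W}\sim\mathcal{N}$ are i.i.d.\ $\mathcal{N}(0,\mathbf{I}_{d_{0}})$ and independent of $\mathbf{W}^{*}$, the event in eq.~(\ref{eq: global minima-1}) factorizes, so
\[
\mathcal{V}(\tilde{\mathcal{G}})=\prod_{i=1}^{d_{1}^{*}}q_{i}\,,\qquad q_{i}\triangleq\mathbb{P}_{\mathbf{w}\sim\mathcal{N}(0,\mathbf{I}_{d_{0}})}\!\left(\mathbf{w}\in C_{i}\right),\quad C_{i}\triangleq\{\mathbf{w}:\forall n,\ (\mathbf{w}^{\top}\mathbf{x}^{(n)})(\hat{\mathbf{u}}_{i}^{\top}\mathbf{x}^{(n)})>0\}\,,
\]
and it suffices to lower bound a single factor $q_{i}$, the standard Gaussian measure of the open polyhedral cone $C_{i}$, which contains $\hat{\mathbf{u}}_{i}$.

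The first key step is to convert the angular margin into a spherical cap inside $C_{i}$. For any $\mathbf{w}$ written as $\mathbf{w}=t\hat{\mathbf{u}}_{i}+\mathbf{v}$ with $t>0$ and $\|\mathbf{v}\|<t\sin\alpha$, we have for every $n$ that $|\mathbf{v}^{\top}\mathbf{x}^{(n)}|\le\|\mathbf{v}\|\,\|\mathbf{x}^{(n)}\|<t\sin\alpha\,\|\mathbf{x}^{(n)}\|\le t\,|\hat{\mathbf{u}}_{i}^{\top}\mathbf{x}^{(n)}|$ by the definition of $\mathcal{M}^{\alpha}(\mathbf{W}^{*})$, hence $\mathrm{sign}(\mathbf{w}^{\top}\mathbf{x}^{(n)})=\mathrm{sign}(\hat{\mathbf{u}}_{i}^{\top}\mathbf{x}^{(n)})$ and $\mathbf{w}\in C_{i}$. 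A short computation (for a unit $\mathbf{w}$ at angle $\theta$ from $\hat{\mathbf{u}}_{i}$, take $t=\cos\theta/\cos^{2}\alpha$) shows this set of $\mathbf{w}$ is exactly the open cone over the spherical cap of angular radius $\alpha$ about $\hat{\mathbf{u}}_{i}$; in particular $q_{i}\ge\Psi_{d_{0}}(\alpha)$, where $\Psi_{d_{0}}(\beta)=\big(\int_{0}^{\beta}\sin^{d_{0}-2}\theta\,d\theta\big)\big/\big(\int_{0}^{\pi}\sin^{d_{0}-2}\theta\,d\theta\big)$ is the normalized measure of a cap on $S^{d_{0}-1}$.

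The second step is a lower bound on this thin cap that is tight to leading exponential order. Decomposing $\mathbf{w}=w_{\parallel}\hat{\mathbf{u}}_{i}+\mathbf{w}_{\perp}$ with $w_{\parallel}\sim\mathcal{N}(0,1)$ and $\mathbf{w}_{\perp}\sim\mathcal{N}(0,\mathbf{I}_{d_{0}-1})$ independent, the cone over the cap contains $\{w_{\parallel}>0,\ \|\mathbf{w}_{\perp}\|<w_{\parallel}\sin\alpha\}$; conditioning on $w_{\parallel}$, using the small-argument bound $\mathbb{P}(\|\mathbf{w}_{\perp}\|<u)\ge u^{d_{0}-1}e^{-u^{2}/2}\big/\big((d_{0}-1)2^{(d_{0}-3)/2}\Gamma(\tfrac{d_{0}-1}{2})\big)$ with $u=w_{\parallel}\sin\alpha$, and integrating (restricting to $w_{\parallel}\le 1/\sin\alpha$ to keep $e^{-u^{2}/2}$ bounded below, which is licit in the relevant small-$\alpha$ regime) yields, after collecting Gamma ratios, $\log q_{i}\ge(d_{0}-1)\log\sin\alpha-O(\log d_{0})$. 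Multiplying over $i=1,\dots,d_{1}^{*}$,
\[
\log\mathcal{V}(\tilde{\mathcal{G}})=\sum_{i=1}^{d_{1}^{*}}\log q_{i}\ \ge\ d_{0}d_{1}^{*}\log\sin\alpha\ -\ d_{1}^{*}\log\sin\alpha\ -\ O\!\big(d_{1}^{*}\log d_{0}\big)\,;
\]
the middle term is positive and only helps, and the stated lower bound on $|\log\sin\alpha|$ (that it dominates $d_{0}^{-1}\log d_{0}$) makes $O(d_{1}^{*}\log d_{0})$ negligible next to $d_{0}d_{1}^{*}|\log\sin\alpha|$, so $\log\mathcal{V}(\tilde{\mathcal{G}})\dot{\geq}d_{0}d_{1}^{*}\log\sin\alpha$, which is the claim.

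I expect the main obstacle to be the cap estimate in the third step: getting a lower bound on the Gaussian mass of a narrow high-dimensional cone that loses only sub-exponential ($\mathrm{poly}(d_{0})$, $\sqrt{d_{0}}$) factors — rather than, say, a spurious $2^{d_{0}}$ from a crude bound on $\int_{0}^{\alpha}\sin^{d_{0}-2}\theta\,d\theta$ — and verifying that the hypothesis on $\alpha$ is exactly what absorbs those losses. The factorization over independent rows and the margin-to-cap inclusion are routine once set up.
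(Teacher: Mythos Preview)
Your proposal is correct and follows essentially the same route as the paper: factorize over the independent rows $\mathbf{w}_{1},\dots,\mathbf{w}_{d_{1}^{*}}$, use the angular margin to show each cone $C_{i}$ contains the spherical cap of radius $\alpha$ about $\hat{\mathbf{u}}_{i}$, and lower bound the Gaussian mass of that cap by $(\sin\alpha)^{d_{0}-1}$ times a $\mathrm{poly}(d_{0})^{-1}$ factor, which is then absorbed by the hypothesis on $\alpha$. The only cosmetic difference is that the paper packages the cap estimate as a Beta-distribution tail bound (their Lemma~\ref{lem: Beta bound}, eq.~(\ref{eq: Beta Bound 1}), using that $(\mathbf{w}^{\top}\hat{\mathbf{u}}_{i}/\|\mathbf{w}\|)^{2}\sim\mathcal{B}(\tfrac12,\tfrac{d_{0}-1}{2})$), whereas you re-derive the same $(\sin\alpha)^{d_{0}-1}/\mathrm{poly}(d_{0})$ bound directly via the $(w_{\parallel},\mathbf{w}_{\perp})$ decomposition and a chi-density integral; both arguments yield $\log q_{i}\ge (d_{0}-1)\log\sin\alpha-O(\log d_{0})$ and hence the stated result.
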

Second, in appendix section \ref{sec:Angular margin for a fixed target weights}
we prove
\begin{lem}
\label{lem: Fixed Margin} Let $\mathbf{W}^{*}\in\mathbb{R}^{d_{1}^{*}\times d_{0}}$
a fixed matrix independent of $\mathbf{X}.$ Then, in the limit $N\rightarrow\infty$
with $d_{1}^{*}\dot{\leq}d_{0}\dot{\leq}N$, the probability of not
having an angular margin $\sin\alpha=1/\left(d_{1}^{*}d_{0}N\right)$
(eq. (\ref{eq: angular margin})) is upper bounded by
\[
\mathbb{P}\left(\mathbf{X}\notin\mathcal{M}^{\alpha}\left(\mathbf{W}^{*}\right)\right)\dot{\leq}\sqrt{\frac{2}{\pi}}d_{0}^{-1/2}
\]
\end{lem}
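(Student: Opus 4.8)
The plan is to reduce the failure event to a union, over the $d_{1}^{*}N$ index pairs $(i,n)$, of the single-pair event that the cosine of the angle between a Gaussian datapoint $\mathbf{x}^{(n)}$ and a fixed weight vector $\mathbf{w}_{i}^{*}$ lands in the narrow band $[-\sin\alpha,\sin\alpha]$, and then to bound the probability of that band explicitly using the marginal law of a spherical coordinate.

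First I would fix a pair $(i,n)$ and set $C_{in}\triangleq\mathbf{x}^{(n)\top}\mathbf{w}_{i}^{*}/(\|\mathbf{x}^{(n)}\|\,\|\mathbf{w}_{i}^{*}\|)$. By Assumption \ref{asmp:Gaussian Input} each column $\mathbf{x}^{(n)}\sim\mathcal{N}(0,\mathbf{I}_{d_{0}})$ is isotropic, so its direction $\mathbf{x}^{(n)}/\|\mathbf{x}^{(n)}\|$ is uniform on $S^{d_{0}-1}$ independently of the fixed $\mathbf{w}_{i}^{*}$. Hence $C_{in}$ is distributed as a single coordinate of a uniform point on $S^{d_{0}-1}$, with density $p(c)=(1-c^{2})^{(d_{0}-3)/2}/B\!\left(\tfrac{1}{2},\tfrac{d_{0}-1}{2}\right)$ on $[-1,1]$. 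The key point is that this law is independent of $i$ and $n$ (and of the particular rows of $\mathbf{W}^{*}$), so the $d_{1}^{*}N$ single-pair events are identically distributed and only the row count $d_{1}^{*}$ of $\mathbf{W}^{*}$ enters.

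Next I would bound the band probability. Since $d_{0}\to\infty$ we have $d_{0}\geq3$, so $(1-c^{2})^{(d_{0}-3)/2}\leq1$ and thus $p(c)\leq 1/B\!\left(\tfrac{1}{2},\tfrac{d_{0}-1}{2}\right)$, giving $\mathbb{P}(|C_{in}|\leq\sin\alpha)\leq 2\sin\alpha/B\!\left(\tfrac{1}{2},\tfrac{d_{0}-1}{2}\right)$. Writing $B\!\left(\tfrac{1}{2},\tfrac{d_{0}-1}{2}\right)=\sqrt{\pi}\,\Gamma\!\left(\tfrac{d_{0}-1}{2}\right)/\Gamma\!\left(\tfrac{d_{0}}{2}\right)$ and invoking the standard gamma-ratio asymptotic $\Gamma(d_{0}/2)/\Gamma((d_{0}-1)/2)\sim\sqrt{d_{0}/2}$ yields the leading constant $1/B\!\left(\tfrac{1}{2},\tfrac{d_{0}-1}{2}\right)\sim\sqrt{d_{0}/(2\pi)}$, so each single-pair band probability is $\dot{\leq}\sin\alpha\,\sqrt{2d_{0}/\pi}$.

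Finally I would apply the union bound over all $d_{1}^{*}N$ pairs and substitute $\sin\alpha=1/(d_{1}^{*}d_{0}N)$:
\[
\mathbb{P}\!\left(\mathbf{X}\notin\mathcal{M}^{\alpha}(\mathbf{W}^{*})\right)\leq d_{1}^{*}N\,\mathbb{P}(|C_{11}|\leq\sin\alpha)\dot{\leq}d_{1}^{*}N\cdot\frac{1}{d_{1}^{*}d_{0}N}\sqrt{\frac{2d_{0}}{\pi}}=\sqrt{\frac{2}{\pi}}\,d_{0}^{-1/2},
\]
which is exactly the claimed bound. The only non-elementary ingredient is the gamma-ratio asymptotic, which is what produces the precise constant $\sqrt{2/\pi}$; everything else is the exact spherical-coordinate density together with the monotonicity $(1-c^{2})^{(d_{0}-3)/2}\leq1$. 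I do not expect a genuine obstacle here: the main care is the bookkeeping that isotropy renders the single-pair law independent of the specific $\mathbf{w}_{i}^{*}$, and that the lower-order gamma correction (e.g. $((d_{0}-1)/2)^{1/2}$ versus $(d_{0}/2)^{1/2}$) is absorbed cleanly into the $\dot{\leq}$ relation while leaving the leading constant intact.
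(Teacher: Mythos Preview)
Your proof is correct and follows essentially the same route as the paper: isotropy of $\mathbf{x}^{(n)}$ reduces each single-pair event to the Beta/spherical-coordinate bound $\mathbb{P}(|C_{in}|\leq\sin\alpha)\leq 2\sin\alpha/B(\tfrac{1}{2},\tfrac{d_{0}-1}{2})$, after which a union-type argument over the $d_{1}^{*}N$ pairs together with the gamma-ratio asymptotic $B(\tfrac{1}{2},x)\sim\sqrt{\pi/x}$ yields the constant $\sqrt{2/\pi}\,d_{0}^{-1/2}$. The only cosmetic difference is that the paper first takes a product over $n$ (independence) and a union over $i$, whereas you apply a single union bound over all $(i,n)$; to leading order these coincide.
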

Lastly, in appendix section \ref{sec:Angular Margin of over-fitting Global minima}
we prove
\begin{lem}
\label{lem: Overfitting Margin} Let $\mathbf{X}\in\mathbb{R}^{d_{0}\times N}$
be a standard random Gaussian matrix of datapoints. Then we can find,
with probability 1, $\left(\mathbf{X},\mathbf{y}\right)$-dependent
matrices $\mathbf{W}^{*}$ and $\mathbf{z^{*}}$ as in Theorem \ref{thm: overfitting solution}
(where $d_{1}^{*}\triangleq4\left\lceil N/\left(2d_{0}-2\right)\right\rceil )$.
Moreover, in the limit $N\rightarrow\infty$, where $N/d_{0}\dot{\leq}d_{0}\dot{\leq}N$,
for any $\mathbf{y}$, we can bound the probability of not having
an angular margin (eq. (\ref{eq: angular margin})) with $\sin\alpha=1/\left(d_{1}^{*}d_{0}N\right)$
by
\begin{align*}
\mathbb{P}\left(\mathbf{X}\notin\mathcal{M}^{\alpha}\left(\mathbf{W}^{*}\right)\right) & \dot{\leq}\sqrt{\frac{8}{\pi}}d_{0}^{-1/2}+\frac{2d_{0}^{1/2}\sqrt{\log d_{0}}}{N}
\end{align*}
\end{lem}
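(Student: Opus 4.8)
The plan is to prove the bound by working directly with the explicit construction of $\mathbf{W}^{*}$ from Theorem \ref{thm: overfitting solution} and then controlling, probabilistically, the geometry of its rows relative to the Gaussian datapoints. Recall that the construction fits $K=d_{1}^{*}/4$ hyperplanes, each to an index subset $\mathcal{S}_{i}^{+}\subset\mathcal{S}^{+}$ with $|\mathcal{S}_{i}^{+}|\leq d_{0}-1$, producing a normal $\tilde{\mathbf{w}}_{i}$ (with $\tilde{\mathbf{w}}_{i}^{\top}\mathbf{X}_{\mathcal{S}_{i}^{+}}=0$) and an auxiliary $\hat{\mathbf{w}}_{i}$ from eq. (\ref{eq: w_hat}), so that the four rows of $\mathbf{W}^{*}$ attached to hyperplane $i$ are $\mathbf{w}_{i}^{(j)}=\tilde{\mathbf{w}}_{i}\pm\epsilon_{1,2}\hat{\mathbf{w}}_{i}$. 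First I would normalize $\|\tilde{\mathbf{w}}_{i}\|=\|\hat{\mathbf{w}}_{i}\|=1$ (allowed by eq. (\ref{eq: norm equality}) and the scale-invariance of the hyperplane), so that $\|\mathbf{w}_{i}^{(j)}\|\leq 1+\epsilon_{1}\leq 2$. The angular-margin condition (eq. (\ref{eq: angular margin})) must then be verified for every row $\mathbf{w}_{i}^{(j)}$ against every $\mathbf{x}^{(n)}$, which I would split into off-plane pairs ($n\notin\mathcal{S}_{i}^{+}$) and on-plane pairs ($n\in\mathcal{S}_{i}^{+}$).

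For off-plane pairs, the choice of $\epsilon_{1}$ in eq. (\ref{eq: epsilon 1,2}) forces $\epsilon_{1}|\hat{\mathbf{w}}_{i}^{\top}\mathbf{x}^{(n)}|\leq\beta\min_{m\notin\mathcal{S}_{i}^{+}}|\tilde{\mathbf{w}}_{i}^{\top}\mathbf{x}^{(m)}|\leq\beta|\tilde{\mathbf{w}}_{i}^{\top}\mathbf{x}^{(n)}|$, hence $|\mathbf{w}_{i}^{(j)\top}\mathbf{x}^{(n)}|\geq(1-\beta)|\tilde{\mathbf{w}}_{i}^{\top}\mathbf{x}^{(n)}|\geq(1-\beta)M$, where $M\triangleq\min_{i}\min_{n\notin\mathcal{S}_{i}^{+}}|\tilde{\mathbf{w}}_{i}^{\top}\mathbf{x}^{(n)}|$. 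For on-plane pairs, eq. (\ref{eq: cond 1}) gives $|\mathbf{w}_{i}^{(j)\top}\mathbf{x}^{(n)}|\geq\epsilon_{2}=\gamma\beta\,M_{i}/H_{i}\geq\gamma\beta M/H$, where $H\triangleq\max_{i}\max_{n\notin\mathcal{S}_{i}^{+}}|\hat{\mathbf{w}}_{i}^{\top}\mathbf{x}^{(n)}|\geq 1$ with high probability. Since $H\geq1$ makes the on-plane projections the smaller ones, the full margin holds as soon as $\gamma\beta M/(2RH)>\sin\alpha$, where $R\triangleq\max_{n}\|\mathbf{x}^{(n)}\|$. Thus it suffices to lower bound the single anti-concentration quantity $M$ while upper bounding $H$ and $R$.

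For the probabilistic core I would exploit the fact that $\tilde{\mathbf{w}}_{i}$ and $\hat{\mathbf{w}}_{i}$ are measurable functions of the columns $\{\mathbf{x}^{(m)}:m\in\mathcal{S}_{i}^{+}\}$ only; hence for $n\notin\mathcal{S}_{i}^{+}$ the column $\mathbf{x}^{(n)}$ is independent of them, and by rotational invariance $\tilde{\mathbf{w}}_{i}^{\top}\mathbf{x}^{(n)}$ and $\hat{\mathbf{w}}_{i}^{\top}\mathbf{x}^{(n)}$ are conditionally standard Gaussian (this is the data-dependent analogue of the setup in Lemma \ref{lem: Fixed Margin}, applied after conditioning on $\mathbf{X}_{\mathcal{S}_{i}^{+}}$). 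Then $\mathbb{P}(M<M_{0})\leq KN\sqrt{2/\pi}\,M_{0}$ by Gaussian anti-concentration and a union bound over the $K$ hyperplanes and $N$ points; $\mathbb{P}(H>H_{0})\leq 2KN(1-\Phi(H_{0}))$ by the Gaussian tail; and $\mathbb{P}(R>R_{0})\leq N\,\mathbb{P}(\|\mathbf{x}^{(1)}\|>R_{0})$ by a $\chi^{2}$ tail. Choosing $R_{0}$ of order $\sqrt{d_{0}}$, $H_{0}$ of order $\sqrt{\log N}$, and $M_{0}$ the smallest value compatible with $\gamma\beta M_{0}/(2R_{0}H_{0})\geq\sin\alpha=1/(d_{1}^{*}d_{0}N)$, and using $K\approx N/(2d_{0})$ and $d_{1}^{*}\approx 2N/d_{0}$ (so $\sin\alpha\approx1/(2N^{2})$), the term $\mathbb{P}(M<M_{0})$ collapses to $\dot{\leq}d_{0}^{-1/2}$ (the $\sqrt{\log}$ factors absorbed by $\dot{\leq}$, since $d_{0}\dot{\geq}\sqrt{N}$ gives $\log\log N=o(\log d_{0})$), while the $H$- and $R$-tails contribute the lower-order $\dot{\leq}d_{0}^{1/2}\sqrt{\log d_{0}}/N$; summing yields the stated bound.

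The hard part will be handling the data-dependence of $\mathbf{W}^{*}$ honestly: unlike Lemma \ref{lem: Fixed Margin}, I cannot treat $\mathbf{W}^{*}$ as fixed, and the per-hyperplane independence above is the device that replaces it. The genuinely delicate point is that the scale $\epsilon_{2}$ of the on-plane projections is itself the ratio $M/H$ of extreme \emph{off-plane} projections, so the on-plane margin is not governed by an independent random variable but is coupled to the same minimum $M$ that controls the off-plane margin. I must verify that the extremely small target $\sin\alpha=1/(d_{1}^{*}d_{0}N)$ still leaves slack after dividing by $RH$ and after union bounds over $\Theta(KN)$ events — i.e., that $M_{0}$ can be taken large enough to satisfy the margin inequality yet small enough to keep $KN\sqrt{2/\pi}\,M_{0}$ at order $d_{0}^{-1/2}$. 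Tracking this slack is precisely where the constant $\sqrt{8/\pi}$ and the secondary term $2d_{0}^{1/2}\sqrt{\log d_{0}}/N$ arise, and is the step demanding the most care.
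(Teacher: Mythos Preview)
Your high–level decomposition (off–plane vs.\ on–plane pairs, conditional independence of $\mathbf{x}^{(n)}$ and $(\tilde{\mathbf w}_i,\hat{\mathbf w}_i)$ for $n\notin\mathcal S_i^+$, Gaussian anti–concentration for $M$, Gaussian/$\chi^2$ tails for $H$ and $R$) matches the paper's strategy closely. However, there is one concrete gap that breaks the on–plane step.

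You normalize $\|\tilde{\mathbf w}_i\|=\|\hat{\mathbf w}_i\|=1$ so that $\|\mathbf w_i^{(j)}\|\le 2$, and then invoke eq.~(\ref{eq: cond 1}) to get $|\mathbf w_i^{(j)\top}\mathbf x^{(n)}|\ge\epsilon_2$ for $n\in\mathcal S_i^+$. These two are incompatible. Eq.~(\ref{eq: cond 1}) relies on the paper's choice $\hat{\mathbf w}_i^{\top}\mathbf x^{(n)}=1$ for $n\in\mathcal S_i^+$ (eq.~(\ref{eq: w_hat})), which fixes the \emph{scale} of $\hat{\mathbf w}_i$; under that scale $\|\hat{\mathbf w}_i\|$ is a random quantity, not $1$. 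Equivalently, in your normalization the on--plane projection is $|\mathbf w_i^{(j)\top}\mathbf x^{(n)}|=\epsilon_{1,2}\,\hat{\mathbf w}_i^{\top}\mathbf x^{(n)}$, and the common value $\hat{\mathbf w}_i^{\top}\mathbf x^{(n)}$ for $n\in\mathcal S_i^+$ is $1/\|\hat{\mathbf w}_i^{\mathrm{paper}}\|$, which can be arbitrarily small when the columns $\mathbf X_{\mathcal S_i^+}$ are nearly degenerate. Your bound $\gamma\beta M/(2RH)>\sin\alpha$ therefore omits a genuinely random factor that must be controlled.

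The paper handles exactly this via the smallest nonzero singular value of $\mathbf X_{\mathcal S_i^+}$: from $\hat{\mathbf w}_i^{\top}\mathbf X_{\mathcal S_i^+}=[1,\dots,1]$ it deduces $\|\hat{\mathbf w}_i\|^2\le (d_0-1)/\sigma_2^2$ and then applies the Rudelson--Vershynin small--ball estimate $\mathbb P(\sigma_2<\eta d_0^{-1/2})\le\eta$. That singular--value term, with the choice $\eta=d_0^{1/2}\sqrt{\log d_0}$, is precisely what produces the second summand $2d_0^{1/2}\sqrt{\log d_0}/N$ in the statement; the first summand $\sqrt{8/\pi}\,d_0^{-1/2}$ comes from the anti--concentration you already identified. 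So your outline is right in spirit, but you must add a control on $\|\hat{\mathbf w}_i\|$ (or, in your normalization, a lower bound on $\hat{\mathbf w}_i^{\top}\mathbf x^{(n)}$ for $n\in\mathcal S_i^+$) before the on--plane margin argument goes through.
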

Recall that $\forall\mathbf{X},\mathbf{y}$ and their corresponding
$\mathbf{W}^{*}$, we have $\mathcal{G}\left(\mathbf{X},\mathbf{y}\right)\subset\tilde{\mathcal{G}}\left(\mathbf{X},\mathbf{W}^{*}\right)$
(eq. (\ref{eq: G and  G tilde})). Thus, combining Lemmas \ref{lem: Good solution}
with $\sin\alpha=1/\left(d_{1}^{*}d_{0}N\right)$ together with either
Lemma \ref{lem: Fixed Margin} or \ref{lem: Overfitting Margin},
we prove the first (left) inequality of Theorem \ref{thm: Main theorem 2}:

\[
\mathcal{V}\left(\mathcal{G}\left(\mathbf{X},\mathbf{y}\right)\right)\dot{\geq}\exp\left(-d_{1}^{*}d_{0}\log N\right)
\]
Next, if $d_{1}^{*}=2N/d_{0}$ or $d_{1}^{*}\dot{<}N/d_{0}$ (is assumption
\ref{asmp: Teacher} holds), we obtain the second (right) inequality
\[
\exp\left(-d_{1}^{*}d_{0}\log N\right)\dot{\geq}\exp\left(-2N\log N\right).
\]
\QEDA

\section{\label{sec:proof of Volume-ratio}Volume ratio of global and local
minima: Proof of Theorem \ref{thm: volume ratio}}
\begin{thm}
\textbf{\emph{(Theorem }}\textbf{\emph{\small{}\ref{thm: volume ratio}}}\textbf{\emph{
restated)}} Given assumptions \ref{asmp:Gaussian Input}-\ref{asmp: input dimension},
we set $\delta\doteq\sqrt{\frac{8}{\pi}}d_{0}^{-1/2}+2d_{0}^{1/2}\sqrt{\log d_{0}}/N$.
Then, with probability $1-\delta$, the angular volume of sub-optimal
DLMs, with $\mathrm{MCE}>\epsilon>0$, is exponentially vanishing
in N, in comparison to the angular volume of global minima with $\mathrm{MCE=0}$
\[
\frac{\mathcal{V}\left(\mathcal{L}_{\epsilon}\left(\mathbf{X},\mathbf{y}\right)\right)}{\mathcal{V}\left(\mathcal{G}\left(\mathbf{X},\mathbf{y}\right)\right)}\dot{\leq}\exp\left(-\gamma_{\epsilon}N^{3/4}\left[d_{1}d_{0}\right]^{1/4}\right)\dot{\leq}\exp\left(-\gamma_{\epsilon}N\log N\right)\,.
\]
\end{thm}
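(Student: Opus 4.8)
The plan is to obtain the statement as an immediate consequence of the two preceding main theorems: the numerator is controlled by the probabilistic upper bound on the angular volume of sub-optimal DLMs — Corollary \ref{cor: main theorem corollary}, the Markov version of Theorem \ref{thm: Main theorem} — and the denominator by the lower bound on the angular volume of global minima, Theorem \ref{thm: Main theorem 2}. After combining these two events by a union bound, I would divide and simplify the resulting exponents.

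Concretely, I would first apply Corollary \ref{cor: main theorem corollary} with an auxiliary confidence level $\delta_{1}$ (to be fixed below), so that with probability at least $1-\delta_{1}$ over $\mathbf{X}\sim\mathcal{N}$,
\[
\mathcal{V}\left(\mathcal{L}_{\epsilon}\left(\mathbf{X},\mathbf{y}\right)\right)\dot{\leq}\frac{1}{\delta_{1}}\exp\left(-\gamma_{\epsilon}N^{3/4}\left[d_{1}d_{0}\right]^{1/4}\right)\,,
\]
and in parallel apply Theorem \ref{thm: Main theorem 2} with $d_{1}^{*}=2N/d_{0}$ (or the smaller $d_{1}^{*}$ of the optional assumption \ref{asmp: Teacher}), so that with probability at least $1-\delta_{2}$, where $\delta_{2}\doteq\sqrt{\frac{8}{\pi}}\,d_{0}^{-1/2}+2d_{0}^{1/2}\sqrt{\log d_{0}}/N$,
\[
\mathcal{V}\left(\mathcal{G}\left(\mathbf{X},\mathbf{y}\right)\right)\dot{>}\exp\left(-d_{1}^{*}d_{0}\log N\right)\dot{\geq}\exp\left(-2N\log N\right)\,.
\]
A union bound over the two failure events makes both inequalities hold simultaneously with probability at least $1-\delta_{1}-\delta_{2}$; on that event, dividing the two displays and using $d_{1}^{*}d_{0}\leq 2N$,
\[
\frac{\mathcal{V}\left(\mathcal{L}_{\epsilon}\left(\mathbf{X},\mathbf{y}\right)\right)}{\mathcal{V}\left(\mathcal{G}\left(\mathbf{X},\mathbf{y}\right)\right)}\dot{\leq}\frac{1}{\delta_{1}}\exp\left(-\gamma_{\epsilon}N^{3/4}\left[d_{1}d_{0}\right]^{1/4}+2N\log N\right)\,.
\]

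It then remains to choose $\delta_{1}$ and to check the two extra additive terms in the exponent are negligible. I would take $\delta_{1}=1/N$; since $d_{0}\dot{\leq}N$ gives $\delta_{2}\geq\sqrt{\frac{8}{\pi}}\,d_{0}^{-1/2}\geq\sqrt{\frac{8}{\pi}}\,N^{-1/2}\geq N^{-1}=\delta_{1}$ for large $N$, we get $\delta_{1}+\delta_{2}\dot{=}\delta_{2}=\delta$, the probability in the statement, and $\log(1/\delta_{1})=\log N$. The key point is that, by the lower bound in assumption \ref{asmp: over-parameterization}, $d_{0}d_{1}/(N\log^{4}N)\to\infty$ — this follows from $N\log^{4}N/d_{0}\dot{<}d_{1}$ together with $d_{0}\dot{\leq}N$ — so that
\[
\frac{2N\log N}{N^{3/4}\left[d_{1}d_{0}\right]^{1/4}}=\frac{2\log N}{\left(d_{1}d_{0}/N\right)^{1/4}}\longrightarrow 0\,,
\]
and likewise $\log(1/\delta_{1})=\log N=o\!\left(N^{3/4}[d_{1}d_{0}]^{1/4}\right)$ since $N^{3/4}[d_{1}d_{0}]^{1/4}\geq N\log N$. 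Hence the two extra terms perturb the exponent only by a factor $1-o(1)$, which leaves the leading-order inequality unchanged and yields the first claim. The second claim, $\exp(-\gamma_{\epsilon}N^{3/4}[d_{1}d_{0}]^{1/4})\dot{\leq}\exp(-\gamma_{\epsilon}N\log N)$, is just the estimate $N^{3/4}[d_{1}d_{0}]^{1/4}\geq N^{3/4}(N\log^{4}N)^{1/4}=N\log N$, valid for large $N$ because $d_{0}d_{1}\geq N\log^{4}N$ there (again from assumption \ref{asmp: over-parameterization}).

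I do not expect a genuine obstacle: the theorem is bookkeeping on top of Theorems \ref{thm: Main theorem} and \ref{thm: Main theorem 2}. The only step requiring attention is the last one — verifying that the factor $\exp(d_{1}^{*}d_{0}\log N)=\exp(2N\log N)$ coming from the global-minima denominator is dominated by the shrinking factor $\exp(-\gamma_{\epsilon}N^{3/4}[d_{1}d_{0}]^{1/4})$ — and this is precisely where the ``asymptotically mild yet nontrivial'' over-parameterization (the lower bound in assumption \ref{asmp: over-parameterization}) is essential: with a weaker over-parameterization the $2N\log N$ term would fail to be sub-dominant and the ratio bound would degrade.
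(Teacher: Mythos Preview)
Your proposal is correct and follows essentially the same route as the paper: combine the upper bound on $\mathcal{V}(\mathcal{L}_\epsilon)$ from Theorem~\ref{thm: Main theorem} (via Markov) with the lower bound on $\mathcal{V}(\mathcal{G})$ from Theorem~\ref{thm: Main theorem 2}, and then use the lower bound in assumption~\ref{asmp: over-parameterization} to absorb the $2N\log N$ term coming from the denominator. The only cosmetic difference is that the paper works with the conditional expectation $\E_{\mathbf{X}}[\,\mathcal{V}(\mathcal{L}_\epsilon)/\mathcal{V}(\mathcal{G})\mid\mathcal{M}\,]$ (conditioning on the event $\mathcal{M}$ where Theorem~\ref{thm: Main theorem 2} holds) and applies Markov to the ratio, whereas you apply Markov to the numerator first (Corollary~\ref{cor: main theorem corollary}) and then union-bound the two failure events; both arrangements yield the same final $\delta$ and the same exponent comparison.
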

To prove this theorem we first calculate the expectation of the angular
volume ratio given the $\mathbf{X}$-event that the bound in Theorem
\ref{thm: Main theorem 2} holds (given assumptions \ref{asmp:Gaussian Input}-\ref{asmp: input dimension}),
\emph{i.e.}, $\mathcal{V}\left(\mathcal{G}\left(\mathbf{X},\mathbf{y}\right)\right)\dot{\geq}\exp\left(-2N\log N\right)$.
Denoting this event\footnote{This event was previously denoted as $\mathbf{X}\in\mathcal{M}^{\alpha}\left(\mathbf{W}^{*}\right)$
in the proof of Theorem \ref{thm: Main theorem 2}, but this is not
important for this proof, so we simplified the notation.} as $\mathcal{M}$, we find: 
\begin{align}
\E_{\mathbf{X}\sim\mathcal{N}}\left[\frac{\mathcal{V}\left(\mathcal{L}_{\epsilon}\left(\mathbf{X},\mathbf{y}\right)\right)}{\mathcal{V}\left(\mathcal{G}\left(\mathbf{X},\mathbf{y}\right)\right)}|\mathcal{M}\right] & \overset{\left(1\right)}{\dot{\leq}}\frac{\E_{\mathbf{X}\sim\mathcal{N}}\left[\mathcal{V}\left(\mathcal{L}_{\epsilon}\left(\mathbf{X},\mathbf{y}\right)\right)|\mathcal{M}\right]}{\exp\left(-2N\log N\right)}\overset{\left(2\right)}{\leq}\nonumber \\
\frac{\E_{\mathbf{X}\sim\mathcal{N}}\left[\mathcal{V}\left(\mathcal{L}_{\epsilon}\left(\mathbf{X},\mathbf{y}\right)\right)\right]}{\mathbb{P}_{\mathbf{X}\sim\mathcal{N}}\left(\mathcal{M}\right)\exp\left(-2N\log N\right)} & \overset{\left(3\right)}{\dot{\leq}}\frac{\exp\left(-\gamma_{\epsilon}N^{3/4}\left[d_{1}d_{0}\right]^{1/4}\right)}{\mathbb{P}_{\mathbf{X}\sim\mathcal{N}}\left(\mathcal{M}\right)\exp\left(-2N\log N\right)}\overset{\left(4\right)}{\dot{\leq}}\nonumber \\
\frac{\exp\left(-\gamma_{\epsilon}N^{3/4}\left[d_{1}d_{0}\right]^{1/4}\right)}{\exp\left(-2N\log N\right)} & \overset{\left(5\right)}{\dot{\leq}}\exp\left(-\gamma_{\epsilon}N^{3/4}\left[d_{1}d_{0}\right]^{1/4}\right)\label{eq: conditional expected ratio bound}
\end{align}
where
\begin{enumerate}
\item We apply Theorem \ref{thm: Main theorem 2}.
\item We use the following fact 
\begin{fact}
\label{fact: Expectation condigional bound} For any variable $X\geq0$
and event $\mathcal{A}$ (where $\bar{\mathcal{A}}$ is its complement)
\begin{align*}
\mathbb{E}\left[X\right] & =\mathbb{E}\left[X|\mathcal{A}\right]\mathbb{P}\left(\mathcal{A}\right)+\mathbb{E}\left[X|\bar{\mathcal{A}}\right]\left(1-\mathbb{P}\left(\mathcal{A}\right)\right)\geq\mathbb{E}\left[X|\mathcal{A}\right]\mathbb{P}\left(\mathcal{A}\right)
\end{align*}
\end{fact}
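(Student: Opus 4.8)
The plan is to prove this as a direct consequence of the law of total expectation together with the non-negativity of $X$. First I would partition the underlying probability space into the disjoint events $\mathcal{A}$ and $\bar{\mathcal{A}}$, whose union is the whole space. Writing the expectation through the indicator $\mathcal{I}$ (defined in the Extended Preliminaries) gives the decomposition
\[
\mathbb{E}\left[X\right]=\mathbb{E}\left[X\,\mathcal{I}\left(\mathcal{A}\right)\right]+\mathbb{E}\left[X\,\mathcal{I}\left(\bar{\mathcal{A}}\right)\right]\,,
\]
which is just linearity of expectation applied to the identity $X=X\,\mathcal{I}(\mathcal{A})+X\,\mathcal{I}(\bar{\mathcal{A}})$. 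I would then invoke the definition of conditional expectation, $\mathbb{E}[X\,\mathcal{I}(\mathcal{A})]=\mathbb{E}[X|\mathcal{A}]\,\mathbb{P}(\mathcal{A})$ and likewise for $\bar{\mathcal{A}}$, together with $\mathbb{P}(\bar{\mathcal{A}})=1-\mathbb{P}(\mathcal{A})$, to obtain the claimed equality.

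For the inequality, I would observe that since $X\geq0$ everywhere, its conditional expectation $\mathbb{E}[X|\bar{\mathcal{A}}]$ is also non-negative, and $1-\mathbb{P}(\mathcal{A})=\mathbb{P}(\bar{\mathcal{A}})\geq0$; hence the second summand $\mathbb{E}[X|\bar{\mathcal{A}}]\left(1-\mathbb{P}(\mathcal{A})\right)$ is non-negative and may be dropped to yield the lower bound $\mathbb{E}[X]\geq\mathbb{E}[X|\mathcal{A}]\,\mathbb{P}(\mathcal{A})$. This is exactly the form used in transition $(2)$ of eq. (\ref{eq: conditional expected ratio bound}), where it converts an unconditional bound on $\E_{\mathbf{X}\sim\mathcal{N}}\mathcal{V}(\mathcal{L}_\epsilon)$ into a conditional one divided by $\mathbb{P}(\mathcal{M})$.

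There is essentially no hard step here; the result is elementary. The only point requiring a word of care is the degenerate case $\mathbb{P}(\mathcal{A})=0$, in which $\mathbb{E}[X|\mathcal{A}]$ is formally undefined. I would handle this either by assuming $\mathbb{P}(\mathcal{A})>0$ (which holds in the application, since $\mathcal{M}$ has probability $1-\delta$ with $\delta<1$), or by reading the products $\mathbb{E}[X|\mathcal{A}]\,\mathbb{P}(\mathcal{A})$ throughout as shorthand for the always-well-defined quantity $\mathbb{E}[X\,\mathcal{I}(\mathcal{A})]$, in which case both the equality and the inequality hold verbatim with no exceptions.
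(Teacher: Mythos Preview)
Your proposal is correct; the paper states this fact without proof, treating it as an elementary consequence of the law of total expectation and the non-negativity of $X$, which is exactly what you supply. Your handling of the degenerate case $\mathbb{P}(\mathcal{A})=0$ is more careful than anything in the paper and is fine.
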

\item We apply Theorem \ref{thm: Main theorem}.
\item We apply Theorem \ref{thm: Main theorem 2}.
\item We use assumption \ref{asmp: over-parameterization}, which implies
$\gamma_{\epsilon}N^{3/4}\left[d_{1}d_{0}\right]^{1/4}\dot{>}2N\log N$.
\end{enumerate}
For simplicity, in the reminder of the proof we denote 
\[
R\left(\mathbf{X}\right)\triangleq\frac{\mathcal{V}\left(\mathcal{L}_{\epsilon}\left(\mathbf{X},\mathbf{y}\right)\right)}{\mathcal{V}\left(\mathcal{G}\left(\mathbf{X},\mathbf{y}\right)\right)}.
\]
From Markov inequality (Fact \ref{fact: Markov-Inequality}), since
$R\left(\mathbf{X}\right)\geq0$, we have $\forall\eta\left(N\right)>0$:
\begin{equation}
\mathbb{P}_{\mathbf{X}\sim\mathcal{N}}\left[R\left(\mathbf{X}\right)\geq\eta\left(N\right)|\mathcal{M}\right]\leq\frac{\E_{\mathbf{X}\sim\mathcal{N}}\left[R\left(\mathbf{X}\right)|\mathcal{M}\right]}{\eta\left(N\right)}\label{eq: bound final 1}
\end{equation}
On the other hand, from fact \ref{fact: Expectation condigional bound},
we have
\begin{equation}
1-\mathbb{P}_{\mathbf{X}\sim\mathcal{N}}\left[R\left(\mathbf{X}\right)<\eta\left(N\right)|\mathcal{M}\right]\geq1-\frac{\mathbb{P}_{\mathbf{X}\sim\mathcal{N}}\left[R\left(\mathbf{X}\right)<\eta\left(N\right)\right]}{\mathbb{P}_{\mathbf{X}\sim\mathcal{N}}\left(\mathcal{M}\right)}\,.\label{eq: bound final 2}
\end{equation}
Combining Eqs. (\ref{eq: bound final 1})-(\ref{eq: bound final 2})
we obtain 
\[
\frac{\E_{\mathbf{X}\sim\mathcal{N}}\left[R\left(\mathbf{X}\right)|\mathcal{M}\right]}{\eta\left(N\right)}\geq1-\frac{\mathbb{P}_{\mathbf{X}\sim\mathcal{N}}\left[R\left(\mathbf{X}\right)<\eta\left(N\right)\right]}{\mathbb{P}_{\mathbf{X}\sim\mathcal{N}}\left(\mathcal{M}\right)}\,,
\]
and so
\[
\mathbb{P}_{\mathbf{X}\sim\mathcal{N}}\left(\mathcal{M}\right)-\mathbb{P}_{\mathbf{X}\sim\mathcal{N}}\left(\mathcal{M}\right)\frac{\E_{\mathbf{X}\sim\mathcal{N}}\left[R\left(\mathbf{X}\right)|\mathcal{M}\right]}{\eta\left(N\right)}\leq\mathbb{P}_{\mathbf{X}\sim\mathcal{N}}\left[R\left(\mathbf{X}\right)<\eta\left(N\right)\right]\,.
\]
We choose 
\[
\eta\left(N\right)=N\mathbb{P}_{\mathbf{X}\sim\mathcal{N}}\left(\mathcal{M}\right)\E_{\mathbf{X}\sim\mathcal{N}}\left[R\left(\mathbf{X}\right)|\mathcal{M}\right]\dot{=}\exp\left(-\gamma_{\epsilon}N^{3/4}\left[d_{1}d_{0}\right]^{1/4}\right)
\]
so that
\[
\mathbb{P}_{\mathbf{X}\sim\mathcal{N}}\left(\mathcal{M}\right)-\frac{1}{N}\leq\mathbb{P}_{\mathbf{X}\sim\mathcal{N}}\left[R\left(\mathbf{X}\right)\dot{\leq}\exp\left(-\gamma_{\epsilon}N^{3/4}\left[d_{1}d_{0}\right]^{1/4}\right)\right]\,.
\]
Then, from Theorem \ref{thm: Main theorem 2} we have 
\begin{align}
1-\mathbb{P}_{\mathbf{X}\sim\mathcal{N}}\left(\mathcal{M}\right) & \dot{\leq}\sqrt{\frac{8}{\pi}}d_{0}^{-1/2}+\frac{2d_{0}^{1/2}\sqrt{\log d_{0}}}{N}\,.\label{eq: bound final 3}
\end{align}
so we obtain the first (left) inequality in the Theorem (\ref{thm: volume ratio})

\[
\sqrt{\frac{8}{\pi}}d_{0}^{-1/2}+\frac{2d_{0}^{1/2}\sqrt{\log d_{0}}}{N}\dot{\geq}1-\mathbb{P}_{\mathbf{X}\sim\mathcal{N}}\left[\frac{\mathcal{V}\left(\mathcal{L}_{\epsilon}\left(\mathbf{X},\mathbf{y}\right)\right)}{\mathcal{V}\left(\mathcal{G}\left(\mathbf{X},\mathbf{y}\right)\right)}\dot{\leq}\exp\left(-\gamma_{\epsilon}N^{3/4}\left[d_{1}d_{0}\right]^{1/4}\right)\right]\,.
\]
Lastly, we note that assumption \ref{asmp: over-parameterization}
implies $\gamma_{\epsilon}N^{3/4}\left[d_{1}d_{0}\right]^{1/4}\dot{>}N\log N$,
which proves the second (right) inequality of the theorem.

\QEDA

\newpage{}

\part{Proofs of technical results\label{part:Technical-Results-and-Proofs}}

In this part we prove the technical results used in part \ref{part:Proofs-of-Main-Results}.

\section{Upper bounding the angular volume of sub-optimal differentiable local
minima: Proofs of Lemmas used in Section \ref{sec: bad local minima proof}}

\subsection{Proof of Lemma \ref{lem: error lower bounds S-main paper} \label{sec:Proof-of-rank-amplification}}

In this section we will prove Lemma \ref{lem: error lower bounds S-main paper}
in subsection \ref{subsec:Final-proof:-Orthant-product-Gaussian-matrices}.
Recall the following definition
\begin{defn}
Let 
\begin{align*}
\mathbf{A} & =\left[\boldsymbol{a}_{1},\dots,\boldsymbol{a}_{N}\right]\,;\,\mathbf{X}=\left[\mathbf{x}_{1},\dots,\mathbf{x}_{N}\right],
\end{align*}
where $\mathbf{X}\in\mathbb{R}^{d_{0}\times N}$ and $\mathbf{A}\in\mathbb{R}^{d_{1}\times N}$.
The Khatari-Rao product between the two matrices is defined as
\begin{eqnarray}
\mathbf{A}\circ\mathbf{X} & \triangleq & [\boldsymbol{a}_{1}\otimes\mathbf{x}_{1},\boldsymbol{a}_{2}\otimes\mathbf{x}_{2},...\boldsymbol{a}_{N}\otimes\mathbf{x}_{N}]\label{eq: A}\\
 & = & \left(\begin{array}{ccc}
a_{11}\mathbf{x}_{1} & a_{12}\mathbf{x}_{2} & \dots\\
a_{21}\mathbf{x}_{1} & a_{22}\mathbf{x}_{2} & \ddots\\
\vdots & \ddots & \ddots
\end{array}\right)\,.\nonumber 
\end{eqnarray}
\end{defn}
\begin{lem}
\textbf{\emph{\small{}(Lemma \ref{lem: error lower bounds S-main paper}
restated) }}Let $\mathbf{X}\in\mathbb{R}^{d_{0}\times N}$, $\mathbf{\mathbf{A}}\in\left\{ \rho,1\right\} ^{d_{1}\times N}$,
$S\subset\left[N\right]$ and $d_{0}d_{1}\geq N$. Then, simultaneously
for every possible $\mathbf{A}$ and $S$ such that 
\[
\left|S\right|\leq\mathrm{rank}\left(\mathbf{A}_{S}\right)d_{0}\,,
\]
we have that, $\mathbf{X}$-a.e., $\nexists\mathbf{v}\in\mathbb{R}^{N}$
such that $v_{n}\neq0$ $\forall n\in S$ and $\left(\mathbf{A}\circ\mathbf{X}\right)\mathbf{v}=0$
.
\end{lem}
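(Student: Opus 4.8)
The plan is to reduce to a single pair $(\mathbf{A},S)$ and then bound the dimension of the set of ``bad'' $\mathbf{X}$ by elementary linear algebra. Since $\{\rho,1\}^{d_1\times N}$ and $2^{[N]}$ are finite and a finite union of Lebesgue-null sets is Lebesgue-null, it suffices to fix $\mathbf{A}\in\{\rho,1\}^{d_1\times N}$ and $S\subseteq[N]$ with $|S|\le\mathrm{rank}(\mathbf{A}_S)d_0$ — we may assume $S\ne\emptyset$, so $r:=\mathrm{rank}(\mathbf{A}_S)\ge 1$ — and show that the set of $\mathbf{X}$ for which there is a null vector $\mathbf{v}$ of $\mathbf{A}\circ\mathbf{X}$ with $v_n\ne 0$ for all $n\in S$ is Lebesgue-null. (In the application this is used with $S=\mathrm{supp}(\mathbf{v})$, so it is enough to treat $\mathbf{v}$ supported within $S$, which I do.) The key tool is the reshaping $\mathbb{R}^{d_0d_1}\cong\mathbb{R}^{d_0\times d_1}$, under which the $n$-th column $\boldsymbol{a}_n\otimes\mathbf{x}_n$ of $\mathbf{A}\circ\mathbf{X}$ becomes the rank-one matrix $\mathbf{x}_n\boldsymbol{a}_n^\top$; thus a null vector supported in $S$ is exactly a vector $\mathbf{v}_S\in\mathbb{R}^{|S|}$ with $\mathbf{X}_S\,\mathrm{diag}(\mathbf{v}_S)\,\mathbf{A}_S^\top=0$, and only the columns $\mathbf{X}_S$ of $\mathbf{X}$ enter.

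Fix a nonzero $\mathbf{v}_S$. The linear map $\mathbf{X}_S\mapsto\mathbf{X}_S\,\mathrm{diag}(\mathbf{v}_S)\,\mathbf{A}_S^\top$ on $\mathbb{R}^{d_0\times|S|}$ has image $\{\mathbf{M}\mathbf{A}_S^\top:\mathbf{M}\in\mathbb{R}^{d_0\times|S|}\}$ (because $\mathrm{diag}(\mathbf{v}_S)$ is invertible), which is a space of dimension $d_0\cdot\mathrm{rank}(\mathbf{A}_S)=d_0r$. Hence its kernel $L_{\mathbf{v}_S}:=\{\mathbf{X}_S:\mathbf{X}_S\,\mathrm{diag}(\mathbf{v}_S)\,\mathbf{A}_S^\top=0\}$ is a subspace of dimension $d_0(|S|-r)$, and the set of bad $\mathbf{X}_S$ is contained in $B:=\bigcup_{\mathbf{v}_S\ne 0}L_{\mathbf{v}_S}$.

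The crux is a dimension count on $B$. Because $L_{c\mathbf{v}_S}=L_{\mathbf{v}_S}$ for every $c\ne 0$, this union is indexed by $\mathbb{P}^{|S|-1}$, so the incidence set $\{([\mathbf{v}_S],\mathbf{X}_S):\mathbf{X}_S\in L_{\mathbf{v}_S}\}$ — a vector bundle over $\mathbb{P}^{|S|-1}$ with fibres of dimension $d_0(|S|-r)$ — has dimension $(|S|-1)+d_0(|S|-r)$, and therefore so does its image $B$ under projection to $\mathbf{X}_S$. This is strictly smaller than $\dim\mathbb{R}^{d_0\times|S|}=d_0|S|$ iff $|S|-1<d_0r$, which holds by the hypothesis $|S|\le d_0r$. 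An image of a smooth map from a manifold of dimension $<d_0|S|$ into $\mathbb{R}^{d_0\times|S|}$ has Lebesgue measure zero, so $B$ is null; pulling back to $\mathbb{R}^{d_0\times N}$ and taking the finite union over all admissible $(\mathbf{A},S)$ finishes the argument. (To keep the measure-zero conclusion self-contained, note that with $U:=\mathrm{rowspace}(\mathbf{A}_S)^\perp\subseteq\mathbb{R}^{|S|}$, $\dim U=|S|-r$, one has $L_{\mathbf{v}_S}=\{\mathbf{Y}\,\mathrm{diag}(\mathbf{v}_S)^{-1}:\text{all rows of }\mathbf{Y}\text{ lie in }U\}$, and the map $(\mathbf{v}_S,\mathbf{Y})\mapsto\mathbf{Y}\,\mathrm{diag}(\mathbf{v}_S)^{-1}$ is invariant under $(\mathbf{v}_S,\mathbf{Y})\mapsto(c\mathbf{v}_S,c\mathbf{Y})$, so its image has dimension at most $|S|+d_0(|S|-r)-1$.)

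The only delicate point is this bookkeeping: the family $\{L_{\mathbf{v}_S}\}$ is $(|S|-1)$-dimensional rather than $|S|$-dimensional, thanks to scaling invariance, and it is exactly this ``$-1$'' that turns the hypothesis $|S|\le\mathrm{rank}(\mathbf{A}_S)d_0$ into the strict inequality $(|S|-1)+d_0(|S|-\mathrm{rank}(\mathbf{A}_S))<d_0|S|$ needed to conclude, with the boundary case $|S|=\mathrm{rank}(\mathbf{A}_S)d_0$ being tight. Everything else — the reshaping identity, the rank of the linear map, and the measure-zero statement for images of lower-dimensional maps — is routine; in particular zero columns $\boldsymbol{a}_n=0$ of $\mathbf{A}$ (which can occur when $\rho=0$) cause no difficulty, since they drop out of $\mathbf{X}_S\,\mathrm{diag}(\mathbf{v}_S)\,\mathbf{A}_S^\top$ and leave the rank computation unchanged.
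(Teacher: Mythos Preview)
Your argument is correct and takes a genuinely different, more conceptual route than the paper. The paper fixes $(\mathbf{A},S)$, row-reduces $\mathbf{A}_S$ to isolate $d_S$ independent columns, and then runs an induction on $d=1,\dots,d_S$: at step $d$ it performs Gaussian elimination on the first $(d-1)d_0$ equations, shows that the next block forces $\mathbf{x}_d$ into a span determined by the remaining $\mathbf{x}_n$'s, and argues this is a proper subspace unless $|S|>d_0 d$. Your approach bypasses this elimination entirely by using the reshaping $\mathbf{a}_n\otimes\mathbf{x}_n\leftrightarrow\mathbf{x}_n\mathbf{a}_n^\top$ to rewrite the null-vector condition as $\mathbf{X}_S\,\mathrm{diag}(\mathbf{v}_S)\,\mathbf{A}_S^\top=0$, after which a single rank computation and a projective dimension count finish the job. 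What the paper's induction buys is that it stays entirely within elementary linear algebra over $\mathbb{R}$, never invoking manifold dimension or Sard-type facts; what your argument buys is brevity and a transparent explanation of why the bound $|S|\le d_0\,\mathrm{rank}(\mathbf{A}_S)$ is exactly the right threshold (the ``$-1$'' from projectivisation matches the non-strict inequality).

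One remark: both your proof and the paper's silently treat only the case $\mathrm{supp}(\mathbf{v})\subseteq S$ (the paper's equation $\sum_{n=1}^{|S|}v_n a_{k,n}\mathbf{x}_n=0$ truncates the sum at $|S|$), even though the lemma as stated allows $\mathbf{v}$ to be nonzero outside $S$. You flag this explicitly and correctly note that only the restricted version is used downstream, where one takes $S=\mathrm{supp}(\mathbf{e})$; so there is no gap in the application.
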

\begin{proof}
We examine specific $\mathbf{\mathbf{A}}\in\left\{ \rho,1\right\} ^{d_{1}\times N}$
and $S\subset\left[N\right]$, and such that $\left|S\right|\leq\dS d_{0}$,
where we defined $\dS\triangleq\mathrm{rank}\left(\mathbf{A}_{S}\right)$.
We assume that $d_{S}\geq1$, since otherwise the proof is trivial.
Also, we assume by contradiction that $\exists\mathbf{v}\in\mathbb{R}^{N}$
such that $v_{i}\neq0$ $\forall i\in S$ and $\left(\mathbf{A}\circ\mathbf{X}\right)\mathbf{v}=0$
. Without loss of generality, assume that $S=\left\{ 1,2,...,\left|S\right|\right\} $
and that $\boldsymbol{a}_{1},\boldsymbol{a}_{2},...,\boldsymbol{a}_{\dS}$
are linearly independent. Then
\begin{equation}
\left(\mathbf{A}\circ\mathbf{\mathbf{X}}\right)\mathbf{v}=\sum_{n=1}^{\left|S\right|}v_{n}a_{k,n}\mathbf{x}_{n}=0\label{eq:coldepeq}
\end{equation}
for every $1\leq k\leq d_{1}$. From the definition of $S$ we must
have $v_{n}\neq0$ for every $1\leq n\leq\left|S\right|$. Since $\boldsymbol{a}_{1},\boldsymbol{a}_{2},...,\boldsymbol{a}_{\dS}$
are linearly independent, the rows of $\mathbf{A}_{\dS}=\left[\boldsymbol{a}_{1},\boldsymbol{a}_{2},...,\boldsymbol{a}_{\dS}\right]$
span a $\dS$-dimensional space. Therefore, it is possible to find
a matrix $\mathbf{R}$ such that $\mathbf{R}\mathbf{A}_{\dS}=[\mathbf{I}_{\dS\times d_{S}},0_{\dS\times\left(d_{1}-d_{S}\right)}]^{\top}$,
where $0_{i\times j}$ is the all zeros matrix with $i$ columns and
$j$ rows. Consider now $\mathbf{A}_{S}\circ\mathbf{X}_{S}$, \emph{i.e.},\emph{
}the matrix composed of the columns of $\mathbf{A}\circ\mathbf{\mathbf{X}}$
in $S$. Applying $\mathbf{R}^{\prime}=\mathbf{R}\otimes\mathbf{I}_{d_{0}}$
to $\mathbf{A}_{S}\circ\mathbf{\mathbf{X}}_{S}$, turns (\ref{eq:coldepeq})
into $d_{0}\dS$ equations in the variables $v_{1},...,v_{\left|S\right|}$,
of the form
\begin{equation}
v_{k}\mathbf{x}_{k}+\sum_{n=\dS+1}^{\left|S\right|}v_{n}\tilde{a}_{k,n}\mathbf{x}_{n}=0\label{eq:coldepeq1}
\end{equation}
for every $1\leq k\leq\dS$. We prove by induction that for every
$1\leq d\leq\dS$, the first $d_{0}d$ equations are linearly independent,
except for a set of matrices $\mathbf{X}$ of measure 0. This will
immediately imply $\left|S\right|>\dS d_{0}$, or else eq. \ref{eq:coldepeq}
cannot be true for $\mathbf{v}\neq0$. which will contradict our assumption,
as required. The induction can be viewed as carrying out Gaussian
elimination of the system of equations described by (\ref{eq:coldepeq1}),
where in each elimination step we characterize the set of matrices
$\mathbf{X}$ that for which that step is impossible, and show it
has measure 0.

For $d=1$, the first $d_{0}$ equations read $v_{1}\mathbf{x}_{1}+\sum_{n=\dS+1}^{\left|S\right|}v_{n}\tilde{a}_{1,n}\mathbf{x}_{n}=0$,
and since $v_{1}\neq0$, we must have $\mathbf{x}_{1}\in\mathrm{Span}\left\{ \tilde{a}_{1,\dS+1}\mathbf{x}_{\dS+1},...,\tilde{a}_{1,\left|S\right|}\mathbf{x}_{\left|S\right|}\right\} $.
However, except for a set of measure 0 with respect to $\mathbf{x}_{1}$
(a linear subspace of $\mathbb{R}^{d_{0}}$ with dimension less than
$d_{0}$), this can only happen if $\dim\mathrm{Span}\left\{ \tilde{a}_{1,\dS+1}\mathbf{x}_{\dS+1},...,\tilde{a}_{1,\left|S\right|}\mathbf{x}_{\left|S\right|}\right\} =d_{0}$,
which implies $\left|S\right|\geq\dS-1+d_{0}>d_{0}$ and also that
the first $d_{0}$ rows are linearly independent (since there are
$d_{0}$ independent columns). 

For a general $d$, we begin by performing Gaussian elimination on
the first $\left(d-1\right)d_{0}$ equations, resulting in a new set
of $r_{d}$ equations, such that every new equation contains one variable
that appears in no other new equation. Let $C$ be the set of the
indices (equivalently, columns) of these variables $r_{d}$ variables.
From (\ref{eq:coldepeq1}) it is clear none of the variables $v_{d},v_{d+1},...,v_{\dS}$
appear in the first $\left(d-1\right)d_{0}$ equations, and therefore
$C\subseteq S'=S\setminus\left\{ d,d+1,...,\dS\right\} $. By our
induction assumptions, except for a set of measure 0, the first $\left(d-1\right)d_{0}$
are independent, which means that $\left|C\right|=r_{d}=\left(d-1\right)d_{0}$.
We now extend the Gaussian elimination to the next $d_{0}$ equations,
and eliminate all the variables in $C$ from them. The result of the
elimination can be written down as,
\begin{equation}
v_{d}\mathbf{x}_{d}+\sum_{n\in S'\setminus C}v_{n}\left(\tilde{a}_{d,n}\mathbf{I}_{d_{0}}-\mathbf{Y}\right)\mathbf{x}_{n}=0\,,\label{eq:roweleq}
\end{equation}
where $\mathbf{Y}$ is a square matrix of size $d_{0}$ whose coefficients
depend only on $\{\tilde{a}_{k,n}\}_{n\in C,d>k\geq1}$ and on $\left\{ \mathbf{x}_{n}\right\} _{n\in C}$
, and in particular do not depend on $\mathbf{x}_{d}$ and $\left\{ \mathbf{x}_{n}\right\} _{n\in S'\setminus C}$. 

Now set $\tilde{\mathbf{x}}_{n}=(\tilde{a}_{d,n}\mathbf{I}_{d_{0}}-\mathbf{Y})\mathbf{x}_{n}$
for $n\in S'\setminus C$. As in the case of $d=1$, since $v_{d}\neq0$,
$\mathbf{x}_{d}\in\mathrm{Span}\{\tilde{\mathbf{x}}_{n}\}_{n\in S'\setminus C}$.
Therefore, for all values of $\mathbf{x}_{d}\in\mathbb{R}^{d_{0}}$
but a set of measure zero (linear subspace of with dimension less
than $d_{0}$), we must have $\dim\mathrm{Span}\{\tilde{\mathbf{x}}_{n}\}_{n\in S'\setminus C}=d_{0}$.
From the independence of $\{\tilde{\mathbf{x}}_{n}\}_{n\in S'\setminus C}$
on $\mathbf{x}_{d}$ it follows that $\dim\mathrm{Span}\{\tilde{\mathbf{x}}_{n}\}_{n\in S'\setminus C}=d_{0}$
holds a.e. with respect to the Lebesgue measure over $\mathbf{x}$. 

Whenever $\dim\mathrm{Span}\{\tilde{\mathbf{x}}_{n}\}_{n\in S'\setminus C}=d_{0}$
we must have $\left|S'\setminus C\right|\geq d_{0}$ and therefore
\begin{equation}
\left|S\right|>\left|S'\right|=\left|C\right|+\left|S'\setminus C\right|\geq\left(d-1\right)d_{0}+d_{0}=d_{0}d\,.
\end{equation}
Moreover, $\dim\mathrm{Span}\{\tilde{\mathbf{x}}_{n}\}_{n\in S'\setminus C}=d_{0}$
implies that the $d_{0}$ equations $v_{d}\mathbf{x}_{d}+\sum_{n\in S'\setminus C}v_{n}\tilde{\mathbf{x}}_{n}=0$
are independent. Thus, we may perform another step of Gaussian elimination
on these $d_{0}$ equations, forming $d_{0}$ new equations each with
a variable unique to it. Denoting by $C'$ the set of these $d_{0}$
variables, it is seen from (\ref{eq:roweleq}) that $C'\subseteq\left(S'\cup\left\{ d\right\} \right)\setminus C$
and in particular $C'$ is disjoint from $C$. Thus, considering the
first $\left(d-1\right)d_{0}$ equations together with the new $d_{0}$
equations, we see that there is a set $C\cup C'$ of $d_{0}d$ variables,
such that each variable in $C\cup C'$ appears only in one of the
$d_{0}d$ equations, and each of the $d_{0}d$ contains only a single
variable in $C\cup C'$. This means that the first $d_{0}d$ must
be linearly independent for all values of $\mathbf{X}$ except for
a set of Lebesgue measure zero, completing the induction.

Thus, we have proven, that for some $\mathbf{\mathbf{A}}\in\left\{ \rho,1\right\} ^{d_{1}\times N}$
and $S\subset\left[N\right]$ such that $\left|S\right|\leq\mathrm{rank}\left(\mathbf{A}_{S}\right)d_{0}$
the event 
\[
\mathcal{E}\left(\mathbf{A},S\right)=\left\{ \mathbf{X\in\mathbb{R}}^{d_{0}\times N}|\exists\mathbf{v}\in\mathbb{R}^{N}:\left(\mathbf{A}\circ\mathbf{X}\right)\mathbf{v}=0\,\mathrm{and}\,v_{n}\neq0,\,\forall n\in S\right\} 
\]
has zero measure. The event discussed in the theorem is a union of
these events:
\[
\mathcal{E}_{0}\triangleq\bigcup_{\mathbf{A}\in\left\{ \rho,1\right\} ^{d_{1}\times N}}\left[\bigcup_{S\subset\left[N\right]:\left|S\right|\leq\mathrm{rank}\left(\mathbf{A}_{S}\right)d_{0}}\mathcal{E}\left(\mathbf{A},S\right)\right]\,,
\]
and it also has zero measure, since it is a finite union of zero measure
events. 
\end{proof}
For completeness we note the following corollary, which is not necessary
for a our main results.
\begin{cor}
\label{lem: rank amplification} If $N\leq d_{1}d_{0}$, then $\mathrm{rank}\left(\mathbf{\mathbf{A}\circ\mathbf{X}}\right)=N$,
$\mathbf{X}$-a.e., if and only if, 
\[
\forall S\subseteq\left[N\right]:\left|S\right|\leq\mathrm{rank}\left(\mathbf{A}_{S}\right)d_{0}\,.
\]
\end{cor}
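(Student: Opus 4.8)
The statement is an equivalence, so the plan is to prove both implications; the substantive direction reduces directly to Lemma \ref{lem: error lower bounds S-main paper} (which is where the hypothesis $N\le d_0d_1$ is used), while the converse is an elementary deterministic rank bound for Khatri--Rao products.

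\emph{Direction ($\Leftarrow$): if $|S|\le\mathrm{rank}(\mathbf{A}_S)d_0$ for all $S\subseteq[N]$, then $\mathrm{rank}(\mathbf{A}\circ\mathbf{X})=N$ $\mathbf{X}$-a.e.} I would apply Lemma \ref{lem: error lower bounds S-main paper} with the present $\mathbf{A}$, using $N\le d_0d_1$: the set of $\mathbf{X}$ for which there exist some $S\subseteq[N]$ with $|S|\le\mathrm{rank}(\mathbf{A}_S)d_0$ and some $\mathbf{v}\in\mathbb{R}^N$ with $v_n\ne0$ for all $n\in S$ and $(\mathbf{A}\circ\mathbf{X})\mathbf{v}=0$ is a finite union of zero-measure sets, hence has Lebesgue measure zero; call it $\mathcal{Z}$. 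Now fix $\mathbf{X}\notin\mathcal{Z}$ and suppose toward a contradiction that $\mathrm{rank}(\mathbf{A}\circ\mathbf{X})<N$, i.e.\ there is $\mathbf{v}\ne0$ with $(\mathbf{A}\circ\mathbf{X})\mathbf{v}=0$. Set $S=\{n:\,v_n\ne0\}$, which is nonempty and, by hypothesis, satisfies $|S|\le\mathrm{rank}(\mathbf{A}_S)d_0$. But then $\mathbf{X}$ lies in $\mathcal{Z}$, a contradiction. Hence $\mathrm{rank}(\mathbf{A}\circ\mathbf{X})=N$ for all $\mathbf{X}\notin\mathcal{Z}$.

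\emph{Direction ($\Rightarrow$), by contrapositive: if $|S_0|>\mathrm{rank}(\mathbf{A}_{S_0})d_0$ for some $S_0\subseteq[N]$, then $\mathrm{rank}(\mathbf{A}\circ\mathbf{X})<N$ for every $\mathbf{X}$.} Let $U\subseteq\mathbb{R}^{d_1}$ be the column span of $\mathbf{A}_{S_0}$, so $\dim U=\mathrm{rank}(\mathbf{A}_{S_0})$. For each $n\in S_0$ we have $\boldsymbol{a}_n\in U$, and by bilinearity of the Kronecker product $\boldsymbol{a}_n\otimes\mathbf{x}_n$ therefore lies in the subspace $\mathrm{span}\{\boldsymbol{u}\otimes\mathbf{e}:\boldsymbol{u}\in U,\ \mathbf{e}\in\mathbb{R}^{d_0}\}\subseteq\mathbb{R}^{d_0d_1}$, which has dimension $\dim U\cdot d_0=\mathrm{rank}(\mathbf{A}_{S_0})d_0$. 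Consequently
\[
\mathrm{rank}(\mathbf{A}_{S_0}\circ\mathbf{X}_{S_0})\le\mathrm{rank}(\mathbf{A}_{S_0})\,d_0<|S_0|\,,
\]
so the $|S_0|$ columns of $\mathbf{A}_{S_0}\circ\mathbf{X}_{S_0}$ are linearly dependent. As these are a sub-collection of the columns of $\mathbf{A}\circ\mathbf{X}$, we get $\mathrm{rank}(\mathbf{A}\circ\mathbf{X})<N$ for every $\mathbf{X}\in\mathbb{R}^{d_0\times N}$; in particular the property ``$\mathrm{rank}(\mathbf{A}\circ\mathbf{X})=N$ a.e.'' fails.

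There is no genuine obstacle in this corollary — all the work is already done in Lemma \ref{lem: error lower bounds S-main paper}. The two points needing care are: (i) in the ($\Leftarrow$) direction one must apply the lemma to $S=\{n:v_n\ne0\}$, the exact support of the kernel vector, so that the hypothesis $|S|\le\mathrm{rank}(\mathbf{A}_S)d_0$ is at hand and the lemma's conclusion directly contradicts the existence of $\mathbf{v}$; and (ii) the lemma's exceptional set is a \emph{finite} union over the relevant $S$, so a single a.e.\ statement handles all $S$ at once. The hypothesis $N\le d_1d_0$ enters only through the invocation of Lemma \ref{lem: error lower bounds S-main paper}; the converse implication is deterministic and holds for every $\mathbf{X}$.
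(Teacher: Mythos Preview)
Your proof is correct and follows essentially the same approach as the paper: the sufficiency direction is reduced to Lemma~\ref{lem: error lower bounds S-main paper} by taking $S$ to be the support of a kernel vector (the paper phrases this as ``the minimum set of linearly dependent columns''), and the necessity direction is the deterministic rank bound $\mathrm{rank}(\mathbf{A}_{S}\circ\mathbf{X}_{S})\le\mathrm{rank}(\mathbf{A}_{S})\,d_{0}$. The only cosmetic difference is that the paper proves this last bound via an explicit row transformation $\mathbf{R}\otimes\mathbf{I}_{d_{0}}$ that zeros out all but $d_{0}\,\mathrm{rank}(\mathbf{A}_{S})$ rows, whereas you observe directly that the columns lie in the tensor subspace $U\otimes\mathbb{R}^{d_{0}}$ of dimension $\mathrm{rank}(\mathbf{A}_{S})\,d_{0}$; these are two views of the same fact.
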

\begin{proof}
We define $\dS\triangleq\mathrm{rank}\left(\mathbf{A}_{S}\right)$
and $\mathbf{A}\circ\mathbf{X}$. The necessity of the condition $\left|S\right|\leq d_{0}\dS$
holds for every $\mathbf{X}$, as can be seen from the following counting
argument. Since the matrix $\mathbf{A}_{S}$ has rank $\dS$, there
exists an invertible row transformation matrix $\mathbf{R}$, such
that $\mathbf{R}\mathbf{A}_{S}$ has only $\dS$ non-zero rows. Consider
now $\mathbf{G}_{S}=\mathbf{A}_{S}\circ\mathbf{X}_{S}$, \emph{i.e.},\emph{
}the matrix composed of the columns of $\mathbf{G}$ in $S$. We have
\begin{equation}
\mathbf{G}_{S}^{\prime}=\left(\mathbf{R}\mathbf{A}_{S}\right)\circ\mathbf{X}_{S}=\mathbf{R}^{\prime}\left(\mathbf{A}_{S}\circ\mathbf{X}_{S}\right)=\mathbf{R}^{\prime}\mathbf{G}_{S}\,,
\end{equation}
where $\mathbf{R}^{\prime}=\mathbf{R}\otimes\mathbf{I}_{d_{0}}$ is
also an invertible row transformation matrix, which applies $\mathbf{R}$
separately on the $d_{0}$ sub-matrices of $\mathbf{G}_{S}$ that
are constructed by taking one every $d_{0}$ rows. Since $\mathbf{G}_{S}^{\prime}$
has at most $d_{0}\dS$ non-zero rows, the rank of $\mathbf{G}_{S}$
cannot exceed $d_{0}\dS$. Therefore, if $\left|S\right|>d_{0}\dS$,
$\mathbf{G}_{S}$ will not have full column rank, and hence neither
will $\mathbf{G}$. To demonstrate sufficiency a.e., suppose $\mathbf{G}$
does not have full column rank. Let $S$ be the minimum set of columns
of $\mathbf{G}$ which are linearly dependent. Since the columns of
$\mathbf{G}_{S}$ are assumed linearly dependent there exists $\mathbf{v}\in\mathbb{R}^{\left|S\right|}$
such $\left\Vert \mathbf{v}\right\Vert _{0}=\left|S\right|$ and $\mathbf{G}_{S}\mathbf{v}=0$.
Using Lemma \ref{lem: rank amplification} we complete the proof.
\end{proof}

\subsection{\label{sec:Proof-of-Lemma-rank-configurations}Proof of Lemma \ref{lem: Rank configurations bound-main paper}}

In this section we will prove Lemma \ref{lem: Rank configurations bound-main paper}
in subsection \ref{subsec:Final-proof:-Orthant-product-Gaussian-matrices}.
This proof relies on two rather basic results, which we first prove
in subsections \ref{subsec:Number-of-dichotomies} and \ref{subsec:A-basic-probabilistic}.

\subsubsection{Number of dichotomies induced by a hyperplane\label{subsec:Number-of-dichotomies}}
\begin{fact}
\label{fact: Schlafli}A hyperplane $\mathbf{w}\in d_{0}$ can separate
a given set of points $\mathbf{X}=\left[\mathbf{x}^{\left(1\right)},\dots,\mathbf{x}^{\left(N\right)}\right]\in\mathbb{R}^{d_{0}\times N}$
into several different dichotomies, i.e., different results for $\mathrm{sign}\left(\mathbf{w}^{\top}\mathbf{X}\right)$.
The number of dichotomies is upper bounded as follows:
\begin{align}
\sum_{\mathbf{h}\in\left\{ -1,1\right\} ^{N}}\mathbf{\mathcal{I}}\left(\exists\mathbf{w}:\mathrm{sign}\left(\mathbf{w}^{\top}\mathbf{X}\right)=\mathbf{h}^{\top}\right) & \leq2\sum_{k=0}^{d_{0}-1}\left(\begin{array}{c}
N-1\\
k
\end{array}\right)\leq2N^{d_{0}}\,.\label{eq: C count bound}
\end{align}
\end{fact}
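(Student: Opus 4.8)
The plan is to recognize Fact~\ref{fact: Schlafli} as the classical Schläfli/Cover hyperplane‑counting bound and prove it by the standard recursion. First I would reformulate the quantity being counted: a sign vector $\mathbf{h}\in\{-1,1\}^{N}$ contributes to the left‑hand side of~(\ref{eq: C count bound}) precisely when the open polyhedral cone $\left\{ \mathbf{w}\in\mathbb{R}^{d_{0}}:h_{n}\,\mathbf{w}^{\top}\mathbf{x}^{\left(n\right)}>0\ \forall n\right\}$ is nonempty, and such an open cone is nonempty if and only if it is one of the full‑dimensional chambers of the central hyperplane arrangement $\mathcal{H}=\{H_{n}\}_{n=1}^{N}$ with $H_{n}\triangleq\left\{ \mathbf{w}:\mathbf{w}^{\top}\mathbf{x}^{\left(n\right)}=0\right\}$. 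If some $\mathbf{x}^{\left(n\right)}=0$ then no strict sign vector is achievable and the left‑hand side of~(\ref{eq: C count bound}) is $0$, so I assume all $\mathbf{x}^{\left(n\right)}\neq0$. Thus the left‑hand side of~(\ref{eq: C count bound}) equals the number of chambers of $\mathcal{H}$, call it $C\left(\mathbf{X}\right)$, and it suffices to bound $C\left(N,d_{0}\right)\triangleq\max_{\mathbf{X}}C\left(\mathbf{X}\right)$.

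Next I would establish the recursion $C\left(N,d_{0}\right)\le C\left(N-1,d_{0}\right)+C\left(N-1,d_{0}-1\right)$. Fix $N$ points and look at the arrangement of the first $N-1$ hyperplanes, which has at most $C\left(N-1,d_{0}\right)$ chambers. Adding $H_{N}$ increases the chamber count by exactly the number of chambers whose interior $H_{N}$ meets (each such chamber is convex and is cut by $H_{N}$ into exactly two new chambers). Those chambers are in bijection with the chambers of the induced arrangement $\{H_{n}\cap H_{N}\}_{n<N}$ inside $H_{N}\cong\mathbb{R}^{d_{0}-1}$ (discarding the indices $n$ for which $H_{n}=H_{N}$, i.e. $\mathbf{x}^{\left(n\right)}$ parallel to $\mathbf{x}^{\left(N\right)}$), which is a central arrangement of at most $N-1$ hyperplanes in dimension $d_{0}-1$, hence has at most $C\left(N-1,d_{0}-1\right)$ chambers. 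The base cases are immediate: $C\left(N,1\right)=2$ for all $N\ge1$ (a nonempty family of copies of $\{0\}$ cuts $\mathbb{R}$ into two rays) and $C\left(1,d_{0}\right)=2$ for all $d_{0}\ge1$ (one central hyperplane gives two half‑spaces).

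Then I would solve the recursion by induction on $N$, checking the base cases $d_{0}=1$ and $N=1$, against $g\left(N,d_{0}\right)\triangleq2\sum_{k=0}^{d_{0}-1}\binom{N-1}{k}$: one has $g\left(N,1\right)=2$, $g\left(1,d_{0}\right)=2$ (since $\binom{0}{k}=0$ for $k\ge1$), and the identity $g\left(N-1,d_{0}\right)+g\left(N-1,d_{0}-1\right)=g\left(N,d_{0}\right)$, which follows termwise from Pascal's rule $\binom{N-1}{k}=\binom{N-2}{k}+\binom{N-2}{k-1}$ by pairing the $k$‑th summand of the first sum with the $(k-1)$‑th summand of the second. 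This gives $C\left(\mathbf{X}\right)\le C\left(N,d_{0}\right)\le g\left(N,d_{0}\right)$, which is the first inequality of~(\ref{eq: C count bound}). The final bound $g\left(N,d_{0}\right)\le2N^{d_{0}}$ is a crude estimate: when $d_{0}\ge N$ we have $\sum_{k=0}^{d_{0}-1}\binom{N-1}{k}=2^{N-1}\le N^{d_{0}}$, and when $d_{0}<N$ we have $\sum_{k=0}^{d_{0}-1}\binom{N-1}{k}\le\sum_{k=0}^{d_{0}-1}\frac{(N-1)^{k}}{k!}\le d_{0}(N-1)^{d_{0}-1}\le N^{d_{0}}$.

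I do not expect a genuine obstacle here — this is a textbook fact — but the step that deserves care is the counting claim inside the recursion: making precise that adding $H_{N}$ produces exactly one extra chamber per chamber of the restriction $\mathcal{H}'|_{H_N}$, via the bijection ``chamber $C$ of $\{H_{n}\}_{n<N}$ with $H_{N}\cap\mathrm{int}(C)\neq\emptyset$'' $\leftrightarrow$ ``connected component of $H_{N}\setminus\bigcup_{n<N}H_{n}$'' (using convexity of chambers so each such intersection is connected), and correctly handling the degenerate configurations where several $\mathbf{x}^{\left(n\right)}$ are parallel so that some $H_{n}$ coincide — which only removes constraints and hence only helps the bound.
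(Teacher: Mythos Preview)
Your proposal is correct and follows essentially the same approach as the paper: the first inequality is the Schl\"afli/Cover bound (the paper simply cites \cite{Cover1965}, whereas you supply the standard recursion proof), and the second inequality is an elementary estimate (the paper uses $\binom{N-1}{k}\le(N-1)^{k}$ and sums a geometric series for $N\ge3$, while you bound each term by $(N-1)^{d_{0}-1}$ after splitting on $d_{0}\gtrless N$). The only difference is that your argument is more self-contained; mathematically the two are the same.
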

\begin{proof}
See \citep[Theorem 1]{Cover1965} for a proof of the left inequality
as equality (the Schl{\"a}fli Theorem) in the case that the columns
of $\mathbf{X}$ are in ``general position'' (which holds $\mathbf{X}$-a.e,
see definition in \citep{Cover1965}) . If $\mathbf{X}$ is not in
general position then this result becomes an upper bound, since some
dichotomies might not be possible.

Next, we prove the right inequality. For $N=1$ and $N=2$ the inequality
trivially holds. For $N\geq3$, we have 
\[
2\sum_{k=0}^{d_{0}-1}\left(\begin{array}{c}
N-1\\
k
\end{array}\right)\overset{\left(1\right)}{\leq}2\sum_{k=0}^{d_{0}-1}\left(N-1\right)^{k}\overset{\left(2\right)}{\leq}2\frac{\left(N-1\right)^{d_{0}}-1}{N-2}\leq2N^{d_{0}}\,.
\]
where in $\left(1\right)$ we used the bound $\left(\begin{array}{c}
N\\
k
\end{array}\right)\leq N^{k}$ , in $\left(2\right)$ we used the sum of a geometric series.
\end{proof}

\subsubsection{A basic probabilistic bound\label{subsec:A-basic-probabilistic}}
\begin{lem}
\label{lem: Sum decomposition-1}Let $\mathbf{H}=\left[\mathbf{h}_{1}^{\top},\dots,\mathbf{h}_{d_{1}}^{\top}\right]^{\top}\in\left\{ -1,1\right\} ^{d_{1}\times k}$
be a deterministic binary matrix, $\mathbf{W}=\left[\mathbf{w}_{1}^{\top},\dots,\mathbf{w}_{d_{1}}^{\top}\right]^{\top}\in\mathbb{R}^{d_{1}\times d_{0}}$
be an independent standard random Gaussian matrix, and $\mathbf{X}\in\mathbb{R}^{d_{0}\times k}$
be a random matrix with independent and identically distributed columns.
\[
\mathbb{P}\left(\mathrm{sign}\left(\mathbf{W}\mathbf{X}\right)=\mathbf{H}\right)\leq\left(\begin{array}{c}
k\\
\left\lfloor k/2\right\rfloor 
\end{array}\right)\mathbb{P}\left(\mathbf{W}\mathbf{X}_{\left[\left\lfloor k/2\right\rfloor \right]}>0\right)\,.
\]
\end{lem}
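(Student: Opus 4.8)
The plan is to reduce the probability of realizing an arbitrary sign pattern $\mathbf{H}$ to the probability of realizing the all-positive pattern on a subset of roughly half the columns, at the cost of a binomial factor. The key observation is that each row is handled independently (since the rows $\mathbf{w}_i$ of $\mathbf{W}$ are i.i.d.\ and independent of $\mathbf{X}$, conditionally on $\mathbf{X}$), and that within a single row we can use a sign-flip symmetry of the standard Gaussian together with a pigeonhole argument on the column signs.

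First I would condition on $\mathbf{X}$ and factor
\[
\mathbb{P}\left(\mathrm{sign}\left(\mathbf{W}\mathbf{X}\right)=\mathbf{H}\mid\mathbf{X}\right)=\prod_{i=1}^{d_{1}}\mathbb{P}\left(\mathrm{sign}\left(\mathbf{w}_{i}^{\top}\mathbf{X}\right)=\mathbf{h}_{i}\mid\mathbf{X}\right).
\]
For a fixed row, write $\mathbf{h}_i\in\{-1,1\}^k$ and let $P_i=\{n: h_{in}=1\}$, $N_i=\{n: h_{in}=-1\}$; one of these has size at least $\lceil k/2\rceil$, so in particular it contains some subset $T_i$ with $|T_i|=\lfloor k/2\rfloor$. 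Next I would use that $\mathbf{w}_i$ is standard Gaussian, hence sign-symmetric: the event $\{\mathrm{sign}(\mathbf{w}_i^\top\mathbf{X})=\mathbf{h}_i\}$ has the same probability as the event where every sign is flipped, so without loss of generality the ``large'' set is $P_i$. Dropping the constraints on columns outside $T_i$ (relaxation only increases the probability) gives
\[
\mathbb{P}\left(\mathrm{sign}\left(\mathbf{w}_{i}^{\top}\mathbf{X}\right)=\mathbf{h}_{i}\mid\mathbf{X}\right)\le\mathbb{P}\left(\mathbf{w}_{i}^{\top}\mathbf{X}_{T_i}>0\mid\mathbf{X}\right).
\]
Since the columns of $\mathbf{X}$ are i.i.d., the distribution of $\mathbf{X}_{T_i}$ equals that of $\mathbf{X}_{[\lfloor k/2\rfloor]}$, so this bound is the same number for every choice of $T_i$; taking expectation over $\mathbf{X}$ and multiplying over $i$ yields $\mathbb{P}(\mathrm{sign}(\mathbf{W}\mathbf{X})=\mathbf{H})\le\big(\mathbb{P}(\mathbf{W}\mathbf{X}_{[\lfloor k/2\rfloor]}>0)\big)^{?}$ — but here one must be careful: the factor we actually get per row is $\mathbb{P}(\mathbf{w}_i^\top\mathbf{X}_{[\lfloor k/2\rfloor]}>0)$, whereas the target has the joint event over all rows. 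The cleaner route is to not factor prematurely: relax the full event $\{\mathrm{sign}(\mathbf{W}\mathbf{X})=\mathbf{H}\}$ directly to $\{\mathbf{w}_i^\top\mathbf{X}_{T_i}>0\ \forall i\}$ for a single common index set — which is impossible since different rows may have different ``large'' halves.

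Hence the actual argument must go the other way: sum over which half-subset is the all-positive one. Concretely, for each row the realized sign pattern forces, after the symmetry reduction, $\mathbf{w}_i^\top\mathbf{X}_S>0$ for \emph{some} $S$ with $|S|=\lfloor k/2\rfloor$ depending on $\mathbf{h}_i$ — but since $\mathbf{H}$ is \emph{deterministic}, all the $\mathbf{h}_i$ and hence all the relevant subsets are fixed, and I can pick one global index set only if the $\mathbf{h}_i$ agree. The resolution is that we do \emph{not} need a common set: after conditioning on $\mathbf{X}$ and factoring over rows, each factor is $\le\mathbb{P}(\mathbf{w}_i^\top\mathbf{X}_{T_i}>0\mid\mathbf{X})$ for a fixed $T_i$, and we then bound the product of these $d_1$ conditional probabilities by a single $\mathbb{P}(\mathbf{W}\mathbf{X}_{[\lfloor k/2\rfloor]}>0\mid\mathbf{X})$ by choosing $T_i=[\lfloor k/2\rfloor]$ — which is legitimate only if each $\mathbf{h}_i$ has its large half equal to $[\lfloor k/2\rfloor]$, which need not hold. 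I expect the correct statement uses the binomial factor precisely to absorb the choice of $T_i$: relax $\{\mathrm{sign}(\mathbf{W}\mathbf{X})=\mathbf{H}\}$ to $\bigcup_{S:|S|=\lfloor k/2\rfloor}\{\mathbf{W}\mathbf{X}_S>0 \text{ or }<0 \text{ row-wise}\}$, note there are $\binom{k}{\lfloor k/2\rfloor}$ such $S$, apply sign symmetry and a union bound, and use the i.i.d.\ columns to replace each $\mathbf{X}_S$ by $\mathbf{X}_{[\lfloor k/2\rfloor]}$. The \textbf{main obstacle} is exactly getting this relaxation/union-bound bookkeeping right so that a single binomial coefficient (not $\binom{k}{\lfloor k/2\rfloor}^{d_1}$) suffices — I would handle it by applying the half-subset reduction to the \emph{whole matrix at once} rather than row-by-row, i.e.\ observe that there is \emph{one} subset $S$ of size $\lfloor k/2\rfloor$ (namely the columns where $\mathbf{h}_1$ is constant) on which all constraints can be simultaneously sign-normalized, bound by the union over the $\binom{k}{\lfloor k/2\rfloor}$ choices of that subset, and only then condition on $\mathbf{X}$, factor over the i.i.d.\ columns/rows, and invoke the i.i.d.-column equality in distribution.
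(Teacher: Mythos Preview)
You have correctly identified the setup (condition on $\mathbf{X}$, factor over rows, use sign symmetry and pigeonhole to get a half-subset $T_i$ for each row) and you have correctly located the obstacle: the $T_i$ depend on $i$, so you cannot recombine the product $\prod_i \mathbb{P}(\mathbf{w}_i^\top\mathbf{X}_{T_i}>0\mid\mathbf{X})$ into a single joint probability $\mathbb{P}(\mathbf{W}\mathbf{X}_S>0\mid\mathbf{X})$ without paying $\binom{k}{\lfloor k/2\rfloor}^{d_1}$. Your proposed fixes at the end do not work: there is in general no single column subset on which all rows' sign patterns are simultaneously constant, so the ``whole-matrix-at-once'' reduction with a single pigeonhole cannot be carried out.

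The missing idea is an argmax trick. After bounding each factor by $\mathbb{P}(\mathbf{w}_i^\top\mathbf{X}_{T_i}>0\mid\mathbf{X})$ for some $T_i$ with $|T_i|=\lfloor k/2\rfloor$, further bound it by
\[
\max_{S:|S|=\lfloor k/2\rfloor}\mathbb{P}(\mathbf{w}_i^\top\mathbf{X}_{S}>0\mid\mathbf{X})
=\mathbb{P}(\mathbf{w}_i^\top\mathbf{X}_{S_*}>0\mid\mathbf{X}),
\]
where $S_*$ is the maximizer. The crucial point is that the $\mathbf{w}_i$ are identically distributed, so $S_*$ depends only on $\mathbf{X}$ and not on $i$; hence the \emph{same} $S_*$ works for every row, and the product recombines (by independence of the rows) into $\mathbb{P}(\mathbf{W}\mathbf{X}_{S_*}>0\mid\mathbf{X})$. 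Now bound the maximum by the sum $\sum_{S:|S|=\lfloor k/2\rfloor}\mathbb{P}(\mathbf{W}\mathbf{X}_{S}>0\mid\mathbf{X})$, take expectation over $\mathbf{X}$, and use the i.i.d.\ columns to replace every $S$ by $[\lfloor k/2\rfloor]$. This yields exactly one factor of $\binom{k}{\lfloor k/2\rfloor}$, as claimed.
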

\begin{proof}
By direct calculation
\begin{align*}
\mathbb{P}\left(\mathrm{sign}\left(\mathbf{W}\mathbf{X}\right)=\mathbf{H}\right) & =\mathbb{E}\left[\mathbb{P}\left(\mathrm{sign}\left(\mathbf{W}\mathbf{X}\right)=\mathbf{H}|\mathbf{X}\right)\right]\overset{\left(1\right)}{=}\mathbb{E}\left[\prod_{i=1}^{d_{1}}\mathbb{P}\left(\mathrm{sign}\left(\mathbf{w}_{i}^{\top}\mathbf{X}\right)=\mathbf{h}_{i}^{\top}|\mathbf{X}\right)\right]\\
 & \overset{\left(2\right)}{\leq}\mathbb{E}\left[\prod_{i=1}^{d_{1}}\mathbb{P}\left(\mathbf{w}_{i}^{\top}\mathbf{X}_{\hat{S}\left(\mathbf{h}_{i}\right)}>0|\mathbf{X}\right)\right]\overset{\left(3\right)}{\leq}\mathbb{E}\left[\prod_{i=1}^{d_{1}}\mathbb{P}\left(\mathbf{w}_{i}^{\top}\mathbf{X}_{S_{*}}>0|\mathbf{X}\right)\right]\\
 & \overset{\left(4\right)}{=}\mathbb{E}\left[\mathbb{P}\left(\mathbf{W}\mathbf{X}_{S_{*}}>0|\mathbf{X}\right)\right]\overset{\left(5\right)}{\leq}\mathbb{E}\left[\sum_{S\subset\left[k\right]:\left|S\right|=\left\lfloor k/2\right\rfloor }\mathbb{P}\left(\mathbf{W}\mathbf{X}_{S}>0|\mathbf{X}\right)\right]\\
 & =\sum_{S\subset\left[k\right]:\left|S\right|=\left\lfloor k/2\right\rfloor }\mathbb{E}\left[\mathbb{P}\left(\mathbf{W}\mathbf{X}_{S}>0|\mathbf{X}\right)\right]\overset{\left(6\right)}{=}\left(\begin{array}{c}
k\\
\left\lfloor k/2\right\rfloor 
\end{array}\right)\mathbb{P}\left(\mathbf{W}\mathbf{X}_{\left[\left\lfloor k/2\right\rfloor \right]}>0\right)\,.
\end{align*}
where 
\begin{enumerate}
\item We used the independence of the $\mathbf{w}_{i}$.
\item We define $\hat{S}_{\pm}\left(\mathbf{h}\right)\triangleq\left\{ S\subset\left[k\right]:\,\pm\mathbf{h}_{S}^{\top}>0\right\} $
as the sets in which $\mathbf{h}$ is always positive/negative, and
$\hat{S}\left(\mathbf{h}\right)$ as the maximal set between these
two. Note that $\mathbf{w}_{i}$ has a standard normal distribution
which is symmetric to sign flips, so $\forall S:$ $\mathbb{P}\left(\mathbf{w}_{i}^{\top}\mathbf{X}_{S}>0|\mathbf{X}\right)=\mathbb{P}\left(\mathbf{w}_{i}^{\top}\mathbf{X}_{S}<0|\mathbf{X}\right)$.
\item Note that $\left|\hat{S}\left(\mathbf{h}\right)\right|\geq\left\lfloor k/2\right\rfloor $.
Therefore, we define $S_{*}=\underset{S\subset\left[k\right]:\left|S\right|=\left\lfloor k/2\right\rfloor }{\mathrm{argmax}}\mathbb{P}\left(\mathbf{w}_{i}^{\top}\mathbf{X}_{S}>0|\mathbf{X}\right)$.
\item We used the independence of the $\mathbf{w}_{i}$.
\item The maximum is a single term in the following sum of non-negative
terms. 
\item Taking the expectation over $\mathbf{X}$, since the columns of $\mathbf{X}$
are independent and identically distributed, the location of $S$
does not affect the probability. Therefore, we can set without loss
of generality $S=\left[\left\lfloor k/2\right\rfloor \right]$.
\end{enumerate}
\end{proof}

\subsubsection{Main proof: Bound on the number of configurations for a binary matrix
with certain rank\label{subsec:Final-proof:rank-config-Bound}}

Recall the function $a\left(\cdot\right)$ from eq. (\ref{eq: LReLU}):
\[
a\left(u\right)\triangleq\begin{cases}
1 & ,\,\mathrm{if}\,,u>0\\
\rho & ,\,\mathrm{if}\,u<0
\end{cases}\,.
\]
where $\rho\neq1$.
\begin{lem}
\label{lem: Rank configurations bound} \textbf{\emph{(Lemma \ref{lem: Rank configurations bound-main paper}
restated).}} Let $\mathbf{X}\in\mathbb{R}^{d_{0}\times k}$ be a random
matrix with independent and identically distributed columns, and $\mathbf{W}\in\mathbb{R}^{d_{1}\times d_{0}}$
an independent standard random Gaussian matrix. Then, in the limit
$\min\left[k,d_{0},d_{1}\right]\dot{>}r$,
\[
\mathbb{P}\left(\mathrm{rank}\left(a\left(\mathbf{W}\mathbf{X}\right)\right)=r\right)\dot{\leq}2^{k+rd_{0}\left(\log d_{1}+\log k\right)+r^{2}}\mathbb{P}\left(\mathbf{W}\mathbf{X}_{\left[\left\lfloor k/2\right\rfloor \right]}>0\right)\,.
\]
\end{lem}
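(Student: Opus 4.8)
The plan is to convert the rank condition into a purely combinatorial statement: a rank-$r$ matrix of the form $a(\mathbf{W}\mathbf{X})$ is completely pinned down by a tiny amount of data, so there are few such sign patterns, and then to pay the price of one sign pattern via Lemma~\ref{lem: Sum decomposition-1}. Write $\mathbf{A}\triangleq a(\mathbf{W}\mathbf{X})\in\{\rho,1\}^{d_1\times k}$ and $\mathbf{H}\triangleq\mathrm{sign}(\mathbf{W}\mathbf{X})$; since $\rho\neq1$, each of $\mathbf{A}$ and $\mathbf{H}$ determines the other. If $\mathrm{rank}(\mathbf{A})=r$, there are index sets $I\subseteq[d_1]$, $J\subseteq[k]$ with $|I|=|J|=r$ and $\mathbf{A}_{I,J}$ invertible, and the standard rank factorization gives $\mathbf{A}=\mathbf{A}_{:,J}\,(\mathbf{A}_{I,J})^{-1}\,\mathbf{A}_{I,:}$. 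Hence $\mathbf{A}$ is a function of the quadruple $(I,J,\mathbf{A}_{I,:},\mathbf{A}_{:,J})$, and so $\mathbb{P}\!\left(\mathrm{rank}(a(\mathbf{W}\mathbf{X}))=r\right)\le\sum_{\mathbf{H}\in\mathcal{H}_r}\mathbb{P}(\mathrm{sign}(\mathbf{W}\mathbf{X})=\mathbf{H})$, where $\mathcal{H}_r$ is the set of such realizable rank-$r$ patterns.

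Next I would bound $|\mathcal{H}_r|$. For a fixed choice of $(I,J)$ (there are $\binom{d_1}{r}\binom{k}{r}\dot{\leq}d_1^{r}k^{r}$ of them), the intersection block $\mathbf{A}_{I,J}\in\{\rho,1\}^{r\times r}$ contributes at most $2^{r^2}$ possibilities --- this is the source of the $r^2$ term. The block $\mathbf{A}_{I,:}=a(\mathbf{W}_I\mathbf{X})$ consists of $r$ dichotomies of the $k$ columns of $\mathbf{X}$ by hyperplanes in $\mathbb{R}^{d_0}$, so by Fact~\ref{fact: Schlafli} it has at most $(2k^{d_0})^{r}$ possibilities, giving the $2^{rd_0\log k}$ factor. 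Dually, the block $\mathbf{A}_{:,J}=a(\mathbf{W}\mathbf{X}_J)$ can be read as $r$ dichotomies of the $d_1$ rows of $\mathbf{W}$ (viewed as points in $\mathbb{R}^{d_0}$) by the $r$ hyperplanes $\mathbf{x}^{(j)}$, $j\in J$, so again by Fact~\ref{fact: Schlafli} it has at most $(2d_1^{d_0})^{r}$ possibilities, giving the $2^{rd_0\log d_1}$ factor. Multiplying, $|\mathcal{H}_r|\dot{\leq}2^{r^2}k^{rd_0}d_1^{rd_0}$ (the leftover $r\log d_1+r\log k+O(r)$ in the exponent is lower order than $rd_0(\log d_1+\log k)$, hence absorbed by $\dot{\leq}$). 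Finally, for each $\mathbf{H}\in\mathcal{H}_r$, Lemma~\ref{lem: Sum decomposition-1} gives $\mathbb{P}(\mathrm{sign}(\mathbf{W}\mathbf{X})=\mathbf{H})\le\binom{k}{\lfloor k/2\rfloor}\mathbb{P}(\mathbf{W}\mathbf{X}_{[\lfloor k/2\rfloor]}>0)\le 2^{k}\mathbb{P}(\mathbf{W}\mathbf{X}_{[\lfloor k/2\rfloor]}>0)$; summing over $\mathcal{H}_r$ yields the claimed bound $2^{k+rd_0(\log d_1+\log k)+r^2}\mathbb{P}(\mathbf{W}\mathbf{X}_{[\lfloor k/2\rfloor]}>0)$.

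The delicate point --- which I expect to be the real work --- is the counting of $|\mathcal{H}_r|$: the Schläfli bound $2k^{d_0}$ on dichotomies applies to a \emph{fixed} point configuration, whereas in $a(\mathbf{W}_I\mathbf{X})$ the configuration $\mathbf{X}$ is itself random (ranging over all of $\mathbb{R}^{d_0\times k}$, one could realize far more than $2k^{d_0}$ row patterns), and the dual remark applies to the column block through $\mathbf{W}$. Reconciling this requires a joint argument: one must use that the chosen $r$ rows and $r$ columns come from the \emph{same} $(\mathbf{W},\mathbf{X})$ and are tied together through the fixed pivot block $\mathbf{A}_{I,J}$, so that conditioning on $\mathbf{X}$ (which tames the rows) still leaves the column block controlled --- or, alternatively, one carries the Schläfli bound for one of the two blocks through the probability term rather than through the cardinality count, at the cost of an extra $\mathbb{E}_\mathbf{W}$-conditioning step in the style of the proof of Lemma~\ref{lem: Sum decomposition-1} and using $d_1\dot{<}k$ to absorb the resulting $2^{d_1}$ into $2^{k}$. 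The algebraic rank factorization, the $2^{r^2}$ enumeration of the pivot block, and the closing invocation of Lemma~\ref{lem: Sum decomposition-1} are otherwise routine.
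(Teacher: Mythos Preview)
Your skeleton --- rank factorization, the $2^{r^2}$ pivot count, Schl\"afli for the two ``wing'' blocks, and Lemma~\ref{lem: Sum decomposition-1} to pay for one sign pattern --- is exactly the paper's, and you have correctly put your finger on the only nontrivial step: Schl\"afli bounds dichotomies of a \emph{fixed} point set, so it cannot be used to bound $|\mathcal{H}_r|$ as a purely combinatorial cardinality (indeed $|\mathcal{H}_r|$ does not even depend on $d_0$, while the target exponent does).

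The paper's resolution is close to, but not quite, either of your suggested fixes. After permuting so the full-rank $r\times r$ pivot $\mathbf{D}$ sits in the lower-right corner, split $\mathbf{W}=[\mathbf{W}_1^\top,\mathbf{W}_2^\top]^\top$ and $\mathbf{X}=[\mathbf{X}_1,\mathbf{X}_2]$ with $\mathbf{W}_2\in\mathbb{R}^{r\times d_0}$, $\mathbf{X}_2\in\mathbb{R}^{d_0\times r}$, so that $\mathbf{B}=a(\mathbf{W}_1\mathbf{X}_2)$, $\mathbf{C}=a(\mathbf{W}_2\mathbf{X}_1)$, $\mathbf{D}=a(\mathbf{W}_2\mathbf{X}_2)$ and $\mathbf{Z}=a(\mathbf{W}_1\mathbf{X}_1)=\mathbf{B}\mathbf{D}^{-1}\mathbf{C}$. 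Lemma~\ref{lem: Sum decomposition-1} is applied to the \emph{big} block $\mathrm{sign}(\mathbf{W}_1\mathbf{X}_1)$ (not to the full matrix), giving the uniform-in-$\mathbf{H}$ bound $\mathbb{P}(\mathrm{sign}(\mathbf{W}_1\mathbf{X}_1)=\mathbf{H})\le\binom{k-r}{\lfloor(k-r)/2\rfloor}\,\mathbb{P}(\mathbf{W}_1\mathbf{X}_{[\lfloor(k-r)/2\rfloor]}>0)$. The remaining factor is $\sum_{\mathbf{H}}\mathbb{P}\!\left(\mathbf{B}\mathbf{D}^{-1}\mathbf{C}=a(\mathbf{H})\,\middle|\,\mathrm{sign}(\mathbf{W}_1\mathbf{X}_1)=\mathbf{H}\right)$, and \emph{inside the conditional expectation over $(\mathbf{W}_1,\mathbf{X}_1)$} one counts: for fixed $\mathbf{W}_1$, the possible $\mathbf{B}=a(\mathbf{W}_1\mathbf{X}_2)$ number at most $(2(d_1{-}r)^{d_0})^r$ by Schl\"afli (each column of $\mathbf{X}_2$ induces a dichotomy of the $d_1{-}r$ rows of $\mathbf{W}_1$); for fixed $\mathbf{X}_1$, the possible $\mathbf{C}=a(\mathbf{W}_2\mathbf{X}_1)$ number at most $(2(k{-}r)^{d_0})^r$; and $\mathbf{D}$ contributes $2^{r^2}$. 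The key is that conditioning on \emph{both} large complementary blocks $(\mathbf{W}_1,\mathbf{X}_1)$ --- not just on $\mathbf{X}$ --- makes \emph{both} Schl\"afli counts legitimate simultaneously. Your first fix (``condition on $\mathbf{X}$'') tames only the row wing $\mathbf{A}_{I,:}$ and leaves $\mathbf{A}_{:,J}$ uncontrolled; your second fix (absorb a $2^{d_1}$ via $d_1\dot{<}k$) is neither used nor needed, and the lemma assumes no ordering between $d_1$ and $k$.
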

\begin{proof}
We denote $\mathbf{A}=a\left(\mathbf{W}\mathbf{X}\right)\in\left\{ \rho,1\right\} ^{d_{1}\times k}$.
For any such $\mathbf{A}$ for which $\mathrm{rank}\left(\mathbf{A}\right)=r$,
we have a collection of $r$ rows that span the remaining rows. There
are $\left(\begin{array}{c}
d_{1}\\
r
\end{array}\right)$ possible locations for these $r$ spanning rows. In these rows there
exist a collection of $r$ columns that span the remaining columns.
There are $\left(\begin{array}{c}
k\\
r
\end{array}\right)$ possible locations for these $r$ spanning columns. At the intersection
of the spanning rows and columns, there exist a full rank sub-matrix
$\mathbf{D}$. We denote $\tilde{\mathbf{A}}$ as the matrix \textbf{$\mathbf{A}$
}which rows and columns are permuted so that $\mathbf{D}$ is the
lower right block
\begin{equation}
\tilde{\mathbf{A}}\triangleq\left(\begin{array}{cc}
\mathbf{Z} & \mathbf{B}\\
\mathbf{C} & \mathbf{D}
\end{array}\right)=a\left(\begin{array}{cc}
\mathbf{W}_{1}\mathbf{X}_{1} & \mathbf{W}_{1}\text{\textbf{X}}_{2}\\
\mathbf{W}_{2}\text{\textbf{X}}_{1} & \mathbf{W}_{2}\text{\textbf{X}}_{2}
\end{array}\right)\,,\label{eq: M tilde}
\end{equation}
where \textbf{$\mathbf{D}$} is an invertible $r\times r$ matrix,
and we divided $\mathbf{X}$ and $\mathbf{W}$ to the corresponding
block matrices
\[
\mathbf{W}\triangleq\left[\mathbf{W}_{1}^{\top},\mathbf{W}_{2}^{\top}\right]^{\top},\mathbf{X}\triangleq\left[\mathbf{X}_{1},\mathbf{X}_{2}\right]\,,
\]
with $\mathbf{W}_{2}\in\mathbb{R}^{r\times d_{0}}$ rows and $\mathbf{X}_{2}\in\mathbb{R}^{d_{0}\times r}$. 

Since $\mathrm{rank}\left(\tilde{\mathbf{A}}\right)=r$, the first
$d_{1}-r$ rows are contained in the span of the last $r$ rows. Therefore,
there exists a matrix $\mathbf{Q}$ such that $\mathbf{Q}\mathbf{C}=\mathbf{Z}$
and $\mathbf{Q}\mathbf{D}=\mathbf{B}$. Since $\mathbf{D}$ is invertible,
this implies that $\mathbf{Q}=\text{\textbf{B}}\mathbf{D}^{-1}$ and
therefore 
\begin{equation}
\mathbf{Z}=\mathbf{B}\mathbf{D}^{-1}\mathbf{C}\,,\label{eq: A span}
\end{equation}
\emph{i.e.}, $\mathbf{B},\mathbf{C}$ and $\mathbf{D}$ uniquely determine
$\mathbf{Z}$. 

Using the union bound over all possible permutations from $\mathbf{A}$
to $\tilde{\mathbf{A}}$, and eq. (\ref{eq: A span}), we have

\begin{align}
 & \mathbb{P}\left(\mathrm{rank}\left(\mathbf{A}\right)=r\right)\label{eq: rank(M)}\\
 & \leq\left(\begin{array}{c}
d_{1}\\
r
\end{array}\right)\!\!\left(\begin{array}{c}
k\\
r
\end{array}\right)\mathbb{P}\left(\mathrm{rank}\left(\tilde{\mathbf{A}}\right)=r\right)\nonumber \\
 & \leq\left(\begin{array}{c}
d_{1}\\
r
\end{array}\right)\!\!\left(\begin{array}{c}
k\\
r
\end{array}\right)\mathbb{P}\left(\mathbf{Z}=\mathbf{B}\mathbf{D}^{-1}\mathbf{C}\right)\nonumber \\
 & =\left(\begin{array}{c}
d_{1}\\
r
\end{array}\right)\!\!\left(\begin{array}{c}
k\\
r
\end{array}\right)\mathbf{\mathbb{P}}\left(a\left(\mathbf{W}_{1}\mathbf{X}_{2}\right)\left[a\left(\mathbf{W}_{2}\mathbf{X}_{2}\right)\right]^{-1}a\left(\mathbf{W}_{2}\mathbf{X}_{1}\right)=a\left(\mathbf{W}_{1}\mathbf{X}_{1}\right)\right)\nonumber \\
 & =\left(\begin{array}{c}
d_{1}\\
r
\end{array}\right)\!\!\left(\begin{array}{c}
k\\
r
\end{array}\right)\!\!\!\!\!\!\!\!\!\!\!\!\!\!\!\!\!\!\!\!\!\!\!\!\!\!\!\!\!\!\!\!\sum_{\quad\quad\quad\mathbf{\quad\quad H}\in\left\{ -1,1\right\} ^{\left(d_{1}-r\right)\times\left(k-r\right)}}\!\!\!\!\!\!\!\!\!\!\!\!\!\!\!\!\!\!\!\!\!\!\!\!\!\!\!\!\!\!\mathbf{\mathbb{P}}\!\left(a\left(\mathbf{W}_{1}\mathbf{X}_{2}\right)\!\left[a\left(\mathbf{W}_{2}\mathbf{X}_{2}\right)\right]^{-1}\!a\left(\mathbf{W}_{2}\mathbf{X}_{1}\right)\!=\!a\left(\mathbf{H}\right)|\mathrm{sign}\left(\mathbf{W}_{1}\mathbf{X}_{1}\right)\!=\!\mathbf{H}\right)\!\mathbf{\mathbb{P}}\!\left(\mathrm{sign}\left(\mathbf{W}_{1}\mathbf{X}_{1}\right)=\mathbf{H}\right)\nonumber 
\end{align}
Using Lemma \ref{lem: Sum decomposition-1}, we have 
\begin{align}
\mathbf{\mathbb{P}}\left(\mathrm{sign}\left(\mathbf{W}_{1}\mathbf{X}_{1}\right)=\mathbf{H}\right)\leq & \left(\begin{array}{c}
k-r\\
\left\lfloor \left(k-r\right)/2\right\rfloor 
\end{array}\right)\mathbb{P}\left(\mathbf{W}_{1}\mathbf{X}_{\left[\left\lfloor \left(k-r\right)/2\right\rfloor \right]}>0\right)\,,\label{eq: matirx product bound}
\end{align}
an upper bound which does not depend on $\mathbf{H}.$ So all that
remains is to compute the sum:
\begin{align}
 & \!\!\!\!\!\!\sum_{\mathbf{\quad\quad H}\in\left\{ -1,1\right\} ^{\left(d_{1}-r\right)\times\left(k-r\right)}}\!\!\!\!\!\!\!\!\!\!\!\!\!\!\!\!\!\!\!\!\!\!\!\!\mathbf{\mathbb{P}}\left(a\left(\mathbf{W}_{1}\mathbf{X}_{2}\right)\left[a\left(\mathbf{W}_{2}\mathbf{X}_{2}\right)\right]^{-1}a\left(\mathbf{W}_{2}\mathbf{X}_{1}\right)=a\left(\mathbf{H}\right)|\mathrm{sign}\left(\mathbf{W}_{1}\mathbf{X}_{1}\right)=\mathbf{H}\right)\nonumber \\
= & \!\!\!\!\!\!\sum_{\mathbf{\quad\quad H}\in\left\{ -1,1\right\} ^{\left(d_{1}-r\right)\times\left(k-r\right)}}\!\!\!\!\!\!\!\!\!\!\!\!\!\!\!\!\!\!\!\!\!\!\!\!\mathbb{E}\left[\mathbf{\mathbb{P}}\left(a\left(\mathbf{W}_{1}\mathbf{X}_{2}\right)\left[a\left(\mathbf{W}_{2}\mathbf{X}_{2}\right)\right]^{-1}a\left(\mathbf{W}_{2}\mathbf{X}_{1}\right)=a\left(\mathbf{H}\right)|\mathbf{W}_{1},\mathbf{X}_{1}\right)|\mathrm{sign}\left(\mathbf{W}_{1}\mathbf{X}_{1}\right)=\mathbf{H}\right]\nonumber \\
\overset{\left(1\right)}{\leq} & \mathbb{E}\left[\left.\!\!\!\!\!\!\!\!\!\!\!\!\!\!\!\!\!\!\!\!\!\!\!\!\sum_{\mathbf{\quad\quad\quad\quad H}\in\left\{ -1,1\right\} ^{\left(d_{1}-r\right)\times\left(k-r\right)}}\!\!\!\!\!\!\!\!\!\!\!\!\!\!\!\!\!\!\!\!\!\!\!\mathbf{\mathcal{I}}\left(\exists\left(\mathbf{W}_{2},\mathbf{X}_{2}\right):\,a\left(\mathbf{W}_{1}\mathbf{X}_{2}\right)\left[a\left(\mathbf{W}_{2}\mathbf{X}_{2}\right)\right]^{-1}a\left(\mathbf{W}_{2}\mathbf{X}_{1}\right)=a\left(\mathbf{H}\right)\right)\right|\mathrm{sign}\left(\mathbf{W}_{1}\mathbf{X}_{1}\right)=\mathbf{H}\right]\label{eq: sum indicator}\\
\overset{\left(2\right)}{\leq} & \mathbb{E}\left[\left.2^{r^{2}}\left[\!\!\!\!\!\!\!\!\!\!\!\!\!\!\!\!\!\!\!\!\!\!\!\!\sum_{\quad\quad\mathbf{\quad\quad H}\in\left\{ -1,1\right\} ^{\left(d_{1}-r\right)\times r}}\!\!\!\!\!\!\!\!\!\!\!\!\!\!\!\!\!\!\!\!\!\!\!\!\mathbf{\mathcal{I}}\left(\exists\mathbf{X}_{2}:\,\mathrm{sign}\left(\mathbf{W}_{1}\mathbf{X}_{2}\right)=\mathbf{H}\right)\right]\left[\!\!\!\!\!\!\!\!\!\!\!\!\!\!\!\!\!\!\!\!\!\!\!\!\sum_{\quad\quad\quad\quad\mathbf{H}\in\left\{ -1,1\right\} ^{r\times\left(k-r\right)}}\!\!\!\!\!\!\!\!\!\!\!\!\!\!\!\!\!\mathbf{\mathcal{I}}\left(\exists\mathbf{W}_{2}:\mathrm{sign}\left(\mathbf{W}_{2}\mathbf{X}_{1}\right)=\mathbf{H}\right)\right]\right|\mathrm{sign}\left(\mathbf{W}_{1}\mathbf{X}_{1}\right)=\mathbf{H}\right]\nonumber \\
\leq & \mathbb{E}\left[\left.2^{r^{2}}\left[\!\!\!\!\!\!\!\!\!\!\!\!\!\!\!\!\!\!\!\!\!\!\!\!\sum_{\mathbf{\quad\quad\quad\quad h}\in\left\{ -1,1\right\} ^{\left(d_{1}-r\right)}}\!\!\!\!\!\!\!\!\!\!\!\!\!\!\!\!\!\!\!\!\!\!\!\!\mathbf{\mathcal{I}}\left(\exists\mathbf{x}:\mathrm{sign}\left(\mathbf{W}_{1}\mathbf{x}\right)=\mathbf{h}\right)\right]^{r}\left[\!\!\!\!\!\!\!\!\!\!\!\!\!\!\!\!\!\!\!\!\!\!\!\!\sum_{\quad\quad\mathbf{\quad\quad h}\in\left\{ -1,1\right\} ^{\left(k-r\right)}}\!\!\!\!\!\!\!\!\!\!\!\!\!\!\!\!\!\mathbf{\mathcal{I}}\left(\exists\mathbf{w}:\mathrm{sign}\left(\mathbf{w}^{\top}\mathbf{X}_{1}\right)=\mathbf{h}^{\top}\right)\right]^{r}\right|\mathrm{sign}\left(\mathbf{W}_{1}\mathbf{X}_{1}\right)=\mathbf{H}\right]\nonumber \\
\overset{\left(3\right)}{\leq} & \mathbb{E}\left[\left.2^{r^{2}}2^{rd_{0}\log\left(d_{1}-r\right)+r}2^{rd_{0}\log\left(k-r\right)+r}\right|\mathrm{sign}\left(\mathbf{W}_{1}\mathbf{X}_{1}\right)=\mathbf{H}\right]\nonumber \\
= & 2^{rd_{0}\left[\log\left(d_{1}-r\right)+\log\left(k-r\right)\right]+r^{2}+2r}\,,\label{eq: combi bound}
\end{align}
where
\begin{enumerate}
\item Given $\left(\mathbf{W}_{1},\mathbf{X}_{1}\right)$, and eq. (\ref{eq: M tilde}),
the indicator function in eq. (\ref{eq: sum indicator}) is equal
to zero only if $\mathbf{\mathbb{P}}\left(a\left(\mathbf{W}_{1}\mathbf{X}_{2}\right)\left[a\left(\mathbf{W}_{2}\mathbf{X}_{2}\right)\right]^{-1}a\left(\mathbf{W}_{2}\mathbf{X}_{1}\right)=\mathbf{A}|\mathbf{W}_{1},\mathbf{X}_{1}\right)=0$,
and one otherwise. 
\item This sum counts the number of values of $\mathbf{H}$ consistent with
$\mathbf{W}_{1}$ and $\mathbf{X}_{1}$. Conditioned on $\left(\mathbf{W}_{1},\mathbf{X}_{1}\right)$,
$\mathbf{D}=\left[a\left(\mathbf{W}_{2}\mathbf{X}_{2}\right)\right]^{-1}$,$\mathbf{B}=a\left(\mathbf{W}_{1}\mathbf{X}_{2}\right)$
and $\mathbf{C}=a\left(\mathbf{W}_{2}\mathbf{X}_{1}\right)$ can have
multiple values, depending on $\mathbf{W}_{2}$ and $\mathbf{X}_{2}$.
Also, any single value for $\left(\mathbf{D},\mathbf{B},\mathbf{C}\right)$
results in a single value of $\mathbf{H}$. Therefore, the number
of possible values of $\mathbf{H}$ in eq. (\ref{eq: sum indicator})
is upper bounded by the product of the number of possible values of
$\mathbf{D}$, $\mathbf{B}$ and $\mathbf{C}$, which is product in
the following equation.
\item The function $\sum_{\mathbf{h}\in\left\{ -1,1\right\} ^{\left(k-r\right)}}\mathbf{\mathcal{I}}\left(\exists\mathbf{w}:\mathrm{sign}\left(\mathbf{w}^{\top}\mathbf{X}_{1}\right)=\mathbf{h}^{\top}\right)$
counts the number of dichotomies that can be induced by the linear
classifier \textbf{w} on $\mathbf{X}_{1}$. Using eq. (\ref{eq: C count bound})
we can bound this number by $2\left(k-r\right)^{d_{0}}$. Similarly,
the other sum can be bounded by $2\left(d_{1}-r\right)^{r}$.
\end{enumerate}
Combining eqs. (\ref{eq: rank(M)}), (\ref{eq: matirx product bound})
and (\ref{eq: combi bound}) we obtain 
\begin{align*}
 & \mathbb{P}\left(\mathrm{rank}\left(\mathbf{A}\right)=r\right)\leq\\
 & \left(\begin{array}{c}
d_{1}\\
r
\end{array}\right)\left(\begin{array}{c}
k\\
r
\end{array}\right)\left(\begin{array}{c}
k-r\\
\left\lfloor \left(k-r\right)/2\right\rfloor 
\end{array}\right)2^{rd_{0}\left[\log\left(d_{1}-r\right)+\log\left(k-r\right)\right]+r^{2}+2r}\mathbb{P}\left(\mathbf{W}_{1}\mathbf{X}_{\left[\left\lfloor \left(k-r\right)/2\right\rfloor \right]}>0\right)\,.
\end{align*}
Next, we take the log. To upper bound $\left(\begin{array}{c}
N\\
k
\end{array}\right)$, for small $k$ we use $\left(\begin{array}{c}
N\\
k
\end{array}\right)\leq N^{k}$, while for $k=N/2$, we use $\left(\begin{array}{c}
N\\
N/2
\end{array}\right)\leq2^{N}$ . Thus, we obtain
\begin{align}
\log\mathbb{P}\left(\mathrm{rank}\left(\mathbf{A}\right)=r\right) & \leq\left(rd_{0}\left(\log\left(d_{1}-r\right)+\log\left(k-r\right)\right)+r^{2}+2r\right)\log2\label{eq: log(P(rank(A)<r)}\\
 & +r\log d_{1}+r\log k+\left(k-r\right)\log2+\log\mathbb{P}\left(\mathbf{W}_{1}\mathbf{X}_{\left[\left\lfloor \left(k-r\right)/2\right\rfloor \right]}>0\right)\,.\nonumber 
\end{align}
Recalling that $\mathbf{W}_{1}\in\mathbb{R}^{\left(d_{1}-r\right)\times d_{0}}$
while $\mathbf{W}\in\mathbb{R}^{d_{1}\times d_{0}}$, we obtain from
Jensen's inequality
\begin{equation}
\log\mathbb{P}\left(\mathbf{W}_{1}\mathbf{X}_{\left[\left\lfloor \left(k-r\right)/2\right\rfloor \right]}>0\right)\leq\frac{\left\lfloor \left(k-r\right)/2\right\rfloor \left\lfloor d_{1}-r\right\rfloor }{\left\lfloor k/2\right\rfloor \left\lfloor d_{1}\right\rfloor }\log\mathbb{P}\left(\mathbf{W}\mathbf{X}_{\left[\left\lfloor k/2\right\rfloor \right]}>0\right)\,.\label{eq: Jensen inequality}
\end{equation}
Taking the limit $\min\left[k,d_{0},d_{1}\right]\dot{>}r$ on eqs.
(\ref{eq: log(P(rank(A)<r)}) and (\ref{eq: Jensen inequality}) we
obtain
\begin{align*}
\mathbb{P}\left(\mathrm{rank}\left(\mathbf{A}\right)=r\right) & \dot{\leq}2^{k+rd_{0}\left(\log d_{1}+\log k\right)+t^{2}}\mathbb{P}\left(\mathbf{W}\mathbf{X}_{\left[\left\lfloor k/2\right\rfloor \right]}>0\right)\,.
\end{align*}
\end{proof}

\subsection{Proof of Lemma \ref{lem: Product of Gaussian Matrices-main paper}\label{sec:Proof-of-product of Gaussian}}

In this section we will prove Lemma \ref{lem: Product of Gaussian Matrices-main paper}
in subsection \ref{subsec:Final-proof:-Orthant-product-Gaussian-matrices}.
This proof relies on more elementary results, which we first prove
in subsections \ref{subsec:Orthant-probability-of-Gaussian-Vector}
and \ref{subsec:Mutual-Coherence}.

\subsubsection{Orthant probability of a random Gaussian vector\label{subsec:Orthant-probability-of-Gaussian-Vector}}

Recall that $\phi\left(x\right)$ and $\Phi\left(x\right)$ are, respectively,
the probability density function and cumulative distribution function
for a scalar standard normal random variable.
\begin{defn}
We define the following functions $\forall x\geq0$ 
\begin{align}
g\left(x\right)\triangleq & \frac{x\Phi\left(x\right)}{\phi\left(x\right)}\,,\label{eq: g function}\\
\psi\left(x\right)\triangleq & \frac{\left(g^{-1}\left(x\right)\right)^{2}}{2x}-\log\left(\Phi\left(g^{-1}\left(x\right)\right)\right),\label{eq: Psi function}
\end{align}
where the inverse function $g^{-1}\left(x\right):\left[0,\infty\right)\rightarrow\left[0,\infty\right)$
is well defined since $g\left(x\right)$ monotonically increase from
$0$ to $\infty$, for $x\geq0$.
\end{defn}
\begin{lem}
\label{lem: Equivariant Gaussian Vector}Let $\mathbf{z}\sim\mathcal{N}\left(0,\boldsymbol{\Sigma}\right)$
be a random Gaussian vector in $\mathbb{R}^{K}$, with a covariance
matrix $\Sigma_{ij}=\left(1-\theta K^{-1}\right)\delta_{mn}+\theta K^{-1}$
where $K\gg\theta>0$. Then, recalling $\psi\left(\theta\right)$
in eq. (\ref{eq: Psi function}), we have
\begin{align*}
\,\log\mathbb{P}\left(\forall i:\,z_{i}>0\right) & \leq-K\psi\left(\theta\right)+O\left(\log K\right)\,.
\end{align*}
\end{lem}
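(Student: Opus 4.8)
The plan is to use the equicorrelated structure of $\boldsymbol{\Sigma}$ to collapse the $K$-dimensional orthant probability to a one-dimensional integral, and then bound that integral by a Laplace-type estimate made rigorous through log-concavity. First I would write $z_i=\sqrt{1-\theta/K}\,\xi_i+\sqrt{\theta/K}\,\xi_0$ with $\xi_0,\xi_1,\dots,\xi_K$ i.i.d.\ standard normal; since $\theta<K$ (as $K\gg\theta$) this is well defined, and a direct covariance computation reproduces $\boldsymbol{\Sigma}$. Conditioning on $\xi_0=s$ and using independence of the $\xi_i$, the event $\{z_i>0\}$ becomes $\{\xi_i>-cs\}$ with $c\triangleq\sqrt{\theta/(K-\theta)}$, so
\[
\mathbb{P}\left(\forall i:\,z_i>0\right)=\int_{-\infty}^{\infty}\phi(s)\,\Phi(cs)^{K}\,ds=:I\,.
\]

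Next I would bound $I$. The integrand $f(s)\triangleq\phi(s)\,\Phi(cs)^{K}$ is log-concave: $\log f(s)=-\tfrac12 s^{2}+K\log\Phi(cs)+\text{const}$, and $\log\Phi$ is concave because the Gaussian density is log-concave, hence $(\log f)''(s)=-1+Kc^{2}(\log\Phi)''(cs)\le-1$. Since $f\to0$ at $\pm\infty$ it attains a maximum at an interior point $s^{*}$ with $(\log f)'(s^{*})=0$, and a Taylor expansion with Lagrange remainder gives $\log f(s)\le\log f(s^{*})-\tfrac12(s-s^{*})^{2}$; integrating, $I\le\sqrt{2\pi}\,f(s^{*})$. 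To evaluate $f(s^{*})$, set $u^{*}\triangleq cs^{*}$; the stationarity condition becomes $u^{*}\Phi(u^{*})/\phi(u^{*})=Kc^{2}=\theta/(1-\theta/K)$, i.e.\ $u^{*}=g^{-1}\!\bigl(\theta/(1-\theta/K)\bigr)\ge0$, and for $K\ge2\theta$ we have $u^{*}\le g^{-1}(2\theta)$. Substituting $s^{*}=u^{*}/c$ and $1/c^{2}=K/\theta-1$,
\[
\log I\le\tfrac12\log(2\pi)+\log f(s^{*})=-K\left[\frac{(u^{*})^{2}}{2\theta}-\log\Phi(u^{*})\right]+\frac{(u^{*})^{2}}{2}\,.
\]
Now the function $v\mapsto\frac{v^{2}}{2\theta}-\log\Phi(v)$ is strictly convex and minimized at $v=g^{-1}(\theta)$ with minimum value exactly $\psi(\theta)$, by the definitions (\ref{eq: g function})--(\ref{eq: Psi function}); hence the bracket above is $\ge\psi(\theta)$, giving $\log I\le-K\psi(\theta)+\tfrac12\bigl(g^{-1}(2\theta)\bigr)^{2}$. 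Finally, from $g(t)\ge e^{t^{2}/2}$ for $t\ge1$ one gets $g^{-1}(x)\le\sqrt{2\log x}$ for $x$ large, and since $\theta<K$ this gives $\bigl(g^{-1}(2\theta)\bigr)^{2}=O(\log K)$, completing the estimate.

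I expect the main obstacle to be this last computation: the integrand's maximizer sits at $u^{*}=g^{-1}\!\bigl(\theta/(1-\theta/K)\bigr)$ rather than at $g^{-1}(\theta)$, so $\psi(\theta)$ cannot simply be read off. The resolution is to never perturb $g^{-1}$ in its argument, but instead invoke the \emph{variational} characterization of $\psi(\theta)$ as a minimum, which yields $\frac{(u^{*})^{2}}{2\theta}-\log\Phi(u^{*})\ge\psi(\theta)$ for free; the leftover $\frac{(u^{*})^{2}}{2}$ is then precisely the source of the $O(\log K)$ term, traced to the fact that the mode $u^{*}$ is at most $O(\sqrt{\log K})$. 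A secondary point needing care is the reduction $I\le\sqrt{2\pi}\,f(s^{*})$: one must check the maximum is attained in the interior and that the quadratic upper bound on $\log f$ is \emph{global}, both of which follow from $(\log f)''\le-1$ together with the decay of $f$ at infinity.
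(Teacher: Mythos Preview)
Your proof is correct and shares the paper's core reduction: decompose $z_i$ via a common Gaussian factor to collapse the orthant probability to the one-dimensional integral $\int\phi(s)\Phi(cs)^{K}\,ds$, then control this integral by a Laplace-type estimate. The technical execution differs in an instructive way. The paper first performs the substitution $\xi=cs$ and rewrites the integrand as $h(\xi)\exp\!\bigl(K[\log\Phi(\xi)-\xi^{2}/(2\theta)]\bigr)$, which is arranged so that the large-$K$ maximizer lands \emph{exactly} at $\xi_0=g^{-1}(\theta)$ and Laplace's method (quoted from the literature) yields $-K\psi(\theta)+O(\log K)$ directly. You instead keep the original parametrization, so your maximizer sits at $u^{*}=g^{-1}\!\bigl(\theta/(1-\theta/K)\bigr)$; you compensate by recognizing that $v\mapsto v^{2}/(2\theta)-\log\Phi(v)$ is convex with minimum value $\psi(\theta)$, which absorbs the shift for free and leaves the $(u^{*})^{2}/2=O(\log K)$ residual. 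Your route has the merit of being fully self-contained---the bound $I\le\sqrt{2\pi}\,f(s^{*})$ follows rigorously from $(\log f)''\le-1$ without appealing to an external Laplace theorem (where, incidentally, the paper's prefactor $h$ depends on $K$ and is unbounded in $\xi$, so some care would be needed). The paper's change of variables is slightly slicker in that the variational step is never needed.
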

\begin{proof}
Note that we can write $\mathbf{z}=\mathbf{u}+\eta$, where $\mathbf{u}\sim\mathcal{N}\left(0,\left(1-\theta K^{-1}\right)\mathbf{I}_{K}\right)$,
and $\eta\sim\mathcal{N}\left(0,\theta K^{-1}\right)$. Using this
notation, we have
\begin{align}
 & \mathbb{P}\left(\forall i:\,z_{i}>0\right)\nonumber \\
= & \int_{-\infty}^{\infty}d\eta\left[\prod_{i=1}^{K}\int_{-\infty}^{\infty}du_{i}\mathcal{I}\left(\sqrt{1-\theta K^{-1}}u_{i}+\sqrt{\theta K^{-1}}\eta>0\right)\phi\left(u_{i}\right)\right]\phi\left(\eta\right)\nonumber \\
= & \int_{-\infty}^{\infty}d\eta\left[\Phi\left(\sqrt{\frac{\theta K^{-1}}{1-\theta K^{-1}}}\eta\right)\right]^{K}\phi\left(\eta\right)\nonumber \\
\overset{\left(1\right)}{=} & \sqrt{\frac{\theta}{2\pi\left(K-\theta\right)}}\int_{-\infty}^{\infty}d\xi\left[\Phi\left(\xi\right)\right]^{K}\exp\left(-\frac{\left(K-\theta\right)\xi^{2}}{2\theta}\right)\nonumber \\
= & \sqrt{\frac{\theta}{2\pi\left(K-\theta\right)}}\int_{-\infty}^{\infty}d\xi\exp\left(\frac{\xi^{2}}{2}\right)\exp\left[K\left(\log\Phi\left(\xi\right)-\frac{\xi^{2}}{2\theta}\right)\right],\label{eq: integral for Laplace}
\end{align}
where in $\left(1\right)$ we changed the variable of integration
to $\xi=\sqrt{\theta/\left(K-\theta\right)}\eta$. We denote, for
a fixed $\theta$,
\begin{align}
q\left(\xi\right) & \triangleq\log\Phi\left(\xi\right)-\frac{\xi^{2}}{2\theta}\label{eq: f function}\\
h\left(\xi\right) & \triangleq\sqrt{\frac{\theta}{2\pi\left(K-\theta\right)}}\exp\left(\frac{\xi^{2}}{2}\right)
\end{align}
and $\xi_{0}$ as its global maximum. Since $q$ is twice differentiable,
we can use Laplace's method (\emph{e.g.}, \citep{Butler2007}) to
simplify eq. (\ref{eq: integral for Laplace})

\begin{equation}
\log\int_{-\infty}^{\infty}h\left(\xi\right)\exp\left(Kq\left(\xi\right)\right)d\xi=Kq\left(\xi_{0}\right)+O\left(\log K\right)\,.\label{eq: Laplace Method}
\end{equation}
To find $\xi_{0}$, we differentiate $q\left(\xi\right)$ and equate
to zero to obtain
\begin{equation}
q^{\prime}\left(\xi\right)=\frac{\phi\left(\xi\right)}{\Phi\left(\xi\right)}-\frac{1}{\theta}\xi=0.\label{eq: maximum equation}
\end{equation}
which implies (recall eq. (\ref{eq: g function}))
\begin{equation}
g\left(\xi\right)\triangleq\frac{\xi\Phi\left(\xi\right)}{\phi\left(\xi\right)}=\theta\,.\label{eq: v function}
\end{equation}
This is a monotonically increasing function from $0$ to $\infty$
in the range $\xi\geq0$. Its inverse function can also be defined
in that range $g^{-1}\left(\theta\right):\left[0,\infty\right]\rightarrow\left[0,\infty\right]$.
This implies that this equation has only one solution, $\xi_{0}=g^{-1}\left(\theta\right)$.
Since $\lim_{\xi\rightarrow\infty}q\left(\xi\right)=-\infty$, this
$\xi_{0}$ is indeed the global maximum of $q\left(\xi\right)$. Substituting
this solution into $q\left(\xi\right)$, we get (recall eq. (\ref{eq: Psi function}))
\begin{equation}
\forall\theta>0:\,q\left(\xi_{0}\right)=-\psi\left(\theta\right)=q\left(g^{-1}\left(\theta\right)\right)=\log\left(\Phi\left(g^{-1}\left(\theta\right)\right)\right)-\frac{\left(g^{-1}\left(\theta\right)\right)^{2}}{2\theta}.\label{eq: Psi function 2}
\end{equation}
Using eq. (\ref{eq: integral for Laplace}), (\ref{eq: Laplace Method})
and (\ref{eq: Psi function 2}) we obtain:
\begin{align*}
 & \log\mathbb{P}\left(\forall i:\,z_{i}>0\right)\\
 & =\log\left[\int_{-\infty}^{\infty}d\xi\exp\left(\frac{\xi^{2}}{2}\right)\exp\left[K\left(\log\Phi\left(\xi\right)-\frac{\xi^{2}}{2\theta}\right)\right]\right]+O\left(\log K\right)\\
 & =-K\psi\left(\theta\right)+O\left(\log K\right)\,.
\end{align*}
 
\end{proof}
Next, we generalize the previous Lemma to a general covariance matrix.
\begin{cor}
\label{cor: General Gaussian Vector}Let $\mathbf{u}\sim\mathcal{N}\left(0,\boldsymbol{\Sigma}\right)$
be a random Gaussian vector in $\mathbb{R}^{K}$ for which $\forall n:\,\Sigma_{nn}=1$,
and $\theta\geq K\max_{n,m:\,n\neq m}\Sigma_{nm}>0\,.$ Then, again,
for large $K$
\begin{align*}
\log\mathbb{P}\left(\forall i:\,u_{i}>0\right) & \leq-K\psi\left(\theta\right)+O\left(\log K\right)\,.
\end{align*}
\end{cor}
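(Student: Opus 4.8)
The plan is to reduce the general-covariance case to the equicorrelated case handled in Lemma \ref{lem: Equivariant Gaussian Vector} by a Gaussian comparison (Slepian-type) argument. First I would recall Slepian's inequality: if $\mathbf{u}\sim\mathcal{N}(0,\boldsymbol{\Sigma})$ and $\mathbf{z}\sim\mathcal{N}(0,\boldsymbol{\Sigma}')$ are two centered Gaussian vectors with the same variances, $\Sigma_{nn}=\Sigma'_{nn}$, and with $\Sigma_{nm}\le\Sigma'_{nm}$ for all $n\neq m$, then $\mathbb{P}(\forall i:\,u_i>0)\le\mathbb{P}(\forall i:\,z_i>0)$. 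Here I would take $\boldsymbol{\Sigma}'$ to be the equicorrelated matrix $\Sigma'_{nm}=(1-\theta K^{-1})\delta_{nm}+\theta K^{-1}$, i.e.\ the covariance of the vector in Lemma \ref{lem: Equivariant Gaussian Vector}. By the hypothesis $\theta\ge K\max_{n\neq m}\Sigma_{nm}$ we have $\Sigma_{nm}\le\theta K^{-1}=\Sigma'_{nm}$ for all off-diagonal entries, and the diagonals agree (both are $1$), so the comparison hypotheses are met.

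The one technical point to check before invoking Lemma \ref{lem: Equivariant Gaussian Vector} is that $\boldsymbol{\Sigma}'$ is a valid (positive semidefinite) covariance matrix: the equicorrelated matrix with common correlation $\theta K^{-1}\in[0,1)$ is $(1-\theta K^{-1})\mathbf{I}_K+\theta K^{-1}\mathbf{1}\mathbf{1}^\top$, which has eigenvalues $1-\theta K^{-1}>0$ (with multiplicity $K-1$) and $1-\theta K^{-1}+\theta>0$, so it is positive definite whenever $K>\theta$, which holds in the large-$K$ regime. Then Lemma \ref{lem: Equivariant Gaussian Vector} gives directly
\[
\log\mathbb{P}\left(\forall i:\,z_i>0\right)\le-K\psi(\theta)+O(\log K),
\]
and combining with the Slepian bound $\mathbb{P}(\forall i:\,u_i>0)\le\mathbb{P}(\forall i:\,z_i>0)$ yields
\[
\log\mathbb{P}\left(\forall i:\,u_i>0\right)\le-K\psi(\theta)+O(\log K),
\]
which is the claim.

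The main obstacle — really the only nontrivial ingredient — is justifying the Gaussian comparison step cleanly. Slepian's lemma in its classical form (\citep{Slepian1962}) is exactly an inequality for orthant probabilities $\mathbb{P}(\forall i:\,u_i>t_i)$ under coordinatewise comparison of covariances with matched variances, so it applies verbatim here with all thresholds $t_i=0$; one just has to state it and verify the direction of the inequality (more positive correlation makes the event "all coordinates positive" more likely, since the coordinates tend to move together). A secondary bookkeeping point is that $\psi(\theta)$ is monotone, so if one only knows $\theta\ge K\max_{n\neq m}\Sigma_{nm}$ rather than equality, the bound with this particular $\theta$ is the (weakest, hence safe) conclusion one wants; no extra argument is needed beyond plugging into the lemma. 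Everything else is immediate.
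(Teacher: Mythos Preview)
Your proposal is correct and follows exactly the same approach as the paper: define the equicorrelated comparison covariance $\Sigma'_{nm}=(1-\theta K^{-1})\delta_{nm}+\theta K^{-1}$, invoke Slepian's lemma \citep{Slepian1962} to bound the orthant probability of $\mathbf{u}$ by that of the equicorrelated vector, and then apply Lemma~\ref{lem: Equivariant Gaussian Vector}. Your additional checks (positive definiteness of $\boldsymbol{\Sigma}'$, direction of the Slepian inequality) are fine and the paper omits them, while your remark about monotonicity of $\psi$ is unnecessary since the lemma is applied directly with the chosen $\theta$.
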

\begin{proof}
We define $\tilde{\mathbf{u}}\sim\mathcal{N}\left(0,\tilde{\boldsymbol{\Sigma}}\right)$,
with $\tilde{\Sigma}_{mn}=\left(1-\theta K^{-1}\right)\delta_{mn}+\theta K^{-1}$.
Note that $\forall n:\,\Sigma_{nn}=\tilde{\Sigma}_{nn}=1$ and $\forall m\neq n$:
$\Sigma_{mn}\leq\tilde{\Sigma}_{mn}$. Therefore, from Slepian's Lemma
\citep[Lemma 1]{Slepian1962}, 
\[
\mathbb{P}\left(\forall n:\,\tilde{u}_{n}>0\right)\geq\mathbb{P}\left(\forall n:\,u_{n}>0\right).
\]
Using Lemma \ref{lem: Equivariant Gaussian Vector} on $\tilde{\mathbf{u}}$
completes the proof.
\end{proof}

\subsubsection{Mutual coherence bounds\label{subsec:Mutual-Coherence}}
\begin{defn}
\label{def: mutual coherence}We define the mutual coherence of the
columns of a matrix $\mathbf{A}=\left[\boldsymbol{a}_{1},\cdots,\boldsymbol{a}_{N}\right]\in\mathbb{R}^{M\times N}$
as the maximal angle between different columns
\[
\gamma\left(\mathbf{A}\right)\triangleq\max_{i,j:i\neq j}\frac{\left|\boldsymbol{a}_{i}^{\top}\boldsymbol{a}_{j}\right|}{\left\Vert \boldsymbol{a}_{i}\right\Vert \left\Vert \boldsymbol{a}_{j}\right\Vert }\,.
\]
Note that $\gamma\left(\mathbf{A}\right)\leq1$ and from \citep{Welch1974},
for $N\geq M$, $\gamma\left(\mathbf{A}\right)\geq\sqrt{\frac{N-M}{M\left(N-1\right)}}\,.$ 
\end{defn}
\begin{lem}
\label{lem: mutual coherence} Let $\mathbf{A}=\left[\boldsymbol{a}_{1},\cdots,\boldsymbol{a}_{N}\right]\in\mathbb{R}^{M\times N}$
be a standard random Gaussian matrix, and $\gamma\left(\mathbf{A}\right)$
is the mutual coherence of it columns (see definition \ref{def: mutual coherence}).
Then
\[
\mathbb{P}\left(\gamma\left(\mathbf{A}\right)>\epsilon\right)\leq2N^{2}\exp\left(-\frac{M\epsilon^{2}}{24}\right)\,.
\]
\end{lem}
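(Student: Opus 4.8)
The plan is a union bound over the $\binom{N}{2}$ unordered pairs of columns of $\mathbf{A}$, which reduces everything to a single estimate: for a fixed pair $i\neq j$,
\[
\mathbb{P}\left(\frac{|\boldsymbol{a}_i^\top\boldsymbol{a}_j|}{\left\Vert\boldsymbol{a}_i\right\Vert\left\Vert\boldsymbol{a}_j\right\Vert}>\epsilon\right)\le c\,\exp\!\left(-\frac{M\epsilon^2}{24}\right)
\]
for a small absolute constant $c$. Since $\gamma(\mathbf{A})\le1$ always (Cauchy--Schwarz), I may assume $\epsilon\in(0,1)$, as the event is empty otherwise. To estimate the per-pair probability I would condition on $\boldsymbol{a}_i$ and write $u\triangleq\boldsymbol{a}_i/\left\Vert\boldsymbol{a}_i\right\Vert$; by rotational invariance of the standard Gaussian, $Z\triangleq\boldsymbol{a}_j^\top u\sim\mathcal{N}(0,1)$ and $R^2\triangleq\left\Vert\boldsymbol{a}_j-Zu\right\Vert^2\sim\chi^2_{M-1}$ are independent, and $\left\Vert\boldsymbol{a}_j\right\Vert^2=Z^2+R^2$. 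Dividing through by $\left\Vert\boldsymbol{a}_i\right\Vert>0$ shows the per-pair event equals $\{(1-\epsilon^2)Z^2>\epsilon^2R^2\}$, whose probability no longer depends on $\boldsymbol{a}_i$.

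The core step is a Chernoff bound on the independent sum $(1-\epsilon^2)Z^2-\epsilon^2R^2$: for $0<\lambda<\tfrac{1}{2(1-\epsilon^2)}$,
\[
\mathbb{P}\!\left((1-\epsilon^2)Z^2>\epsilon^2R^2\right)\le\mathbb{E}\,e^{\lambda(1-\epsilon^2)Z^2}\cdot\mathbb{E}\,e^{-\lambda\epsilon^2R^2}=\bigl(1-2\lambda(1-\epsilon^2)\bigr)^{-1/2}\bigl(1+2\lambda\epsilon^2\bigr)^{-(M-1)/2}.
\]
Taking $\lambda=\tfrac{1}{4(1-\epsilon^2)}$ makes the first factor equal to $\sqrt2$ and turns the second into $\bigl(\tfrac{2-\epsilon^2}{2-2\epsilon^2}\bigr)^{-(M-1)/2}$; using $\tfrac{2-\epsilon^2}{2-2\epsilon^2}\ge1+\tfrac{\epsilon^2}{2}$ and the elementary inequality $\log(1+x)\ge\tfrac34x$ for $x\in[0,\tfrac12]$, this is at most $\sqrt2\exp\!\bigl(-\tfrac{3}{16}(M-1)\epsilon^2\bigr)$. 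Since $\epsilon\le1$ and $\tfrac{3}{16}>\tfrac1{24}$, one checks $\tfrac{3}{16}(M-1)\ge\tfrac{M}{24}$ for $M\ge2$, so the per-pair probability is at most $\sqrt2\exp(-M\epsilon^2/24)$. Multiplying by $\binom{N}{2}<N^2/2$ and noting $\sqrt2/2<2$ yields the stated bound. An equivalent route that avoids optimizing $\lambda$ is to use $\{|Z|>\epsilon\left\Vert\boldsymbol{a}_j\right\Vert\}\subseteq\{|Z|>\epsilon\sqrt{M/2}\}\cup\{R^2<M/2\}$ and bound the two pieces by the Gaussian tail inequality $\mathbb{P}(|Z|>t)\le2e^{-t^2/2}$ and a standard one-sided $\chi^2$ concentration bound, respectively; the constant $24$ is comfortably met either way.

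The genuinely substantive part here is only bookkeeping of constants: ensuring the exponent coefficient lands below $1/24$ and the prefactor below $2N^2$ uniformly over $\epsilon\in(0,1)$ and over $M$. The mild annoyance I would anticipate is the $\epsilon$-dependence of the Chernoff optimizer, which is why I would fix $\lambda=\tfrac1{4(1-\epsilon^2)}$ (a near-optimal but closed-form choice) rather than the exact minimizer, together with the degenerate small-$M$ case ($M=1$, where $\gamma(\mathbf{A})=1$ almost surely but the right-hand side already exceeds $1$), which must be disposed of separately.
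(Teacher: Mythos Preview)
Your argument is correct and self-contained, but the route differs from the paper's. The paper does not compute anything from scratch: it invokes a two-parameter tail bound from \cite{Chen2016} of the form
\[
\mathbb{P}\bigl(\gamma(\mathbf{A})>\epsilon\bigr)\le N(N-1)\left[\exp\!\left(-\frac{Ma^{2}\epsilon^{2}}{4(1+\epsilon/2)}\right)+\exp\!\left(-\frac{M}{4}(1-a)^{2}\right)\right],
\]
sets $a=1-\epsilon/2$, and simplifies. That cited bound is itself obtained by the ``split'' decomposition you mention as an alternative at the end: one bounds the unnormalized inner product and separately the probability that a column norm is atypically small. Your primary approach is different and arguably cleaner: you use rotational invariance to reduce the pairwise event exactly to $(1-\epsilon^{2})Z^{2}>\epsilon^{2}R^{2}$ with $Z\sim\mathcal{N}(0,1)$ and $R^{2}\sim\chi^{2}_{M-1}$ independent, and then apply a single Chernoff bound with the closed-form choice $\lambda=\tfrac{1}{4(1-\epsilon^{2})}$. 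This avoids the external reference, avoids introducing the auxiliary parameter $a$, and in fact yields a sharper exponent $\tfrac{3}{16}(M-1)\epsilon^{2}$ before you relax it to $M\epsilon^{2}/24$. The only places to be a little careful are exactly the ones you flag: the elementary inequality $\log(1+x)\ge\tfrac{3}{4}x$ on $[0,\tfrac12]$ (which does hold, with equality at $0$ and a positive margin at $x=\tfrac12$), the check $\tfrac{3}{16}(M-1)\ge\tfrac{M}{24}$ for $M\ge2$, and the degenerate case $M=1$, where the right-hand side already exceeds $1$ for all $N\ge2$ and $\epsilon\in(0,1)$.
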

\begin{proof}
In this case, we have from \citep[Appendix 1]{Chen2016}:
\[
\mathbb{P}\left(\gamma\left(\mathbf{A}\right)>\epsilon\right)\leq N\left(N-1\right)\left[\exp\left(-\frac{Ma^{2}\epsilon^{2}}{4\left(1+\epsilon/2\right)}\right)+\exp\left(-\frac{M}{4}\left(1-a\right)^{2}\right)\right]\,,
\]
for any $a\in\left(0,1\right)$. Setting $a=1-\epsilon/2$ 
\begin{align*}
\mathbb{P}\left(\gamma\left(\mathbf{A}\right)>\epsilon\right) & \leq N\left(N-1\right)\left[\exp\left(-\frac{M\left(1-\epsilon/2\right)^{2}\epsilon^{2}}{4\left(1+\epsilon/2\right)}\right)+\exp\left(-\frac{M}{16}\epsilon^{2}\right)\right]\\
 & \overset{\left(1\right)}{\leq}N\left(N-1\right)\left[\exp\left(-\frac{M\epsilon^{2}}{24}\right)+\exp\left(-\frac{M}{16}\epsilon^{2}\right)\right]\\
 & \leq2N^{2}\exp\left(-\frac{M\epsilon^{2}}{24}\right)\,,
\end{align*}
where in $\left(1\right)$ we can assume that $\epsilon\le1$, since
for $\epsilon\geq1$, we have $\mathbb{P}\left(\gamma\left(\mathbf{A}\right)>\epsilon\right)=0$
(recall $\gamma\left(\mathbf{A}\right)\leq1$). 
\end{proof}
\begin{lem}
\label{lem: subset mutual coherence}Let $\mathbf{B}=\left[\mathbf{b}_{1},\cdots,\mathbf{b}_{L}\right]\in\mathbb{R}^{M\times L}$
be a standard random Gaussian matrix and mutual coherence $\gamma$
as in definition \ref{def: mutual coherence}. Then, $\forall\epsilon>0$
and $\forall K\in\left[L\right]$: 
\[
\mathbb{P}\left(\min_{S\subset\left[N\right]:\left|S\right|=K}\gamma\left(\mathbf{B}_{S}\right)>\epsilon\right)\leq\exp\left[\left(2\log\left(2K\right)-\frac{M\epsilon^{2}}{24}\right)\left(\frac{L}{K}-1\right)\right]\,.
\]
\end{lem}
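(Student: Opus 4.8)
The plan is to reduce the minimum over all $\binom{L}{K}$ subsets of columns to a product over one fixed family of \emph{disjoint} subsets, exploiting the fact that distinct columns of $\mathbf{B}$ are independent. Concretely, I would choose $m\triangleq\left\lfloor L/K\right\rfloor$ pairwise disjoint index sets $S_{1},\dots,S_{m}\subset\left[L\right]$, each of size exactly $K$ (a partition of the first $mK\le L$ columns). If the event $\min_{\left|S\right|=K}\gamma\left(\mathbf{B}_{S}\right)>\epsilon$ occurs, then in particular $\gamma\left(\mathbf{B}_{S_{i}}\right)>\epsilon$ for every $i$, since each $S_{i}$ is one of the subsets in the minimum. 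Hence it suffices to bound the probability of this weaker conjunction.

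Because the sets $S_{i}$ are disjoint and the columns of $\mathbf{B}$ are i.i.d.\ standard Gaussian, the submatrices $\mathbf{B}_{S_{1}},\dots,\mathbf{B}_{S_{m}}$ are mutually independent, so
\[
\mathbb{P}\!\left(\min_{\left|S\right|=K}\gamma\left(\mathbf{B}_{S}\right)>\epsilon\right)\le\mathbb{P}\!\left(\bigcap_{i=1}^{m}\left\{ \gamma\left(\mathbf{B}_{S_{i}}\right)>\epsilon\right\} \right)=\prod_{i=1}^{m}\mathbb{P}\!\left(\gamma\left(\mathbf{B}_{S_{i}}\right)>\epsilon\right)\,.
\]
Each factor is a mutual-coherence tail probability for an $M\times K$ standard Gaussian matrix, so Lemma~\ref{lem: mutual coherence} (with its ``$N$'' set to $K$) gives $\mathbb{P}\!\left(\gamma\left(\mathbf{B}_{S_{i}}\right)>\epsilon\right)\le2K^{2}\exp\!\left(-M\epsilon^{2}/24\right)$. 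Using $2K^{2}\le\left(2K\right)^{2}$ I would weaken this to $\exp\!\left(2\log\left(2K\right)-M\epsilon^{2}/24\right)$, so that the product is at most $\exp\!\left[m\left(2\log\left(2K\right)-M\epsilon^{2}/24\right)\right]$.

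It then remains only to replace the exponent $m=\left\lfloor L/K\right\rfloor$ by $L/K-1$. If $2\log\left(2K\right)-M\epsilon^{2}/24\le0$, then since the base $\exp\!\left(2\log\left(2K\right)-M\epsilon^{2}/24\right)\le1$ and $m\ge L/K-1$, raising to the larger power $m$ only decreases the value, which yields the claimed bound. If instead $2\log\left(2K\right)-M\epsilon^{2}/24>0$, then (using $K\le L$, so $L/K-1\ge0$) the right-hand side of the claim is $\ge1$ and the inequality is trivial since a probability never exceeds $1$. I do not anticipate any genuine obstacle here; the only points needing care are the verification that disjoint column blocks give independent submatrices (immediate from the i.i.d.\ column structure) and the sign bookkeeping in this final floor-removal step.
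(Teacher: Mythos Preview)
Your proposal is correct and follows essentially the same argument as the paper: partition the columns into $\lfloor L/K\rfloor$ disjoint blocks of size $K$, use independence to factor the probability, and bound each factor via Lemma~\ref{lem: mutual coherence}. Your final case analysis on the sign of $2\log(2K)-M\epsilon^{2}/24$ to replace $\lfloor L/K\rfloor$ by $L/K-1$ is in fact more careful than the paper, which simply writes $\prod_{i=1}^{L/K-1}$ without comment.
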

\begin{proof}
We upper bound this probability by partitioning the set of column
vectors into $\left\lfloor L/K\right\rfloor $ subsets $S_{i}$ of
size $\left|S_{i}\right|=K$ and require that in each subset the mutual
coherence is lower bounded by $\epsilon$. Since the columns are independent,
we have
\begin{align*}
 & \mathbb{P}\left(\min_{S\subset\left[N\right]:\left|S\right|=K}\gamma\left(\mathbf{B}_{S}\right)>\epsilon\right)\\
\leq & \prod_{i=1}^{\left\lfloor L/K\right\rfloor }\mathbb{P}\left(\forall S=\left\{ 1+\left(i-1\right)K,2+\left(1-i\right)K,\dots,iK\right\} :\gamma\left(\mathbf{B}_{S}\right)>\epsilon\right)\\
\overset{\left(1\right)}{\leq} & \prod_{i=1}^{L/K-1}2K^{2}\exp\left(-\frac{M\epsilon^{2}}{24}\right)\\
\leq & \exp\left[\left(2\log\left(2K\right)-\frac{M\epsilon^{2}}{24}\right)\left(\frac{L}{K}-1\right)\right]\,,
\end{align*}
where in $\left(1\right)$ we used the bound from Lemma \ref{lem: mutual coherence}.
\end{proof}

\subsubsection{Main proof: Orthant probability of a product Gaussian matrices\label{subsec:Final-proof:-Orthant-product-Gaussian-matrices}}
\begin{lem}
\textbf{\emph{(Lemma \ref{lem: Product of Gaussian Matrices-main paper}
restated).}}\label{lem: Product of Gaussian Matrices} Let $\mathbf{C}=\left[\mathbf{c}_{1},\cdots,\mathbf{c}_{N}\right]^{\top}\in\mathbb{R}^{N\times M}$
and $\mathbf{B}\in\mathbb{R}^{M\times L}$ be two independent random
Gaussian matrices. Without loss of generality, assume $N\geq L$,
and denote $\alpha\triangleq ML/N$. Then, in the regime $M\leq N$
and in the limit $\mathrm{min}\left[N,M,L\right]\dot{>}\alpha\dot{>}1$,
we have 
\[
\mathbb{P}\left(\mathbf{C}\mathbf{B}>0\right)\dot{\leq}\exp\left(-0.4N\alpha^{1/4}\right)\,.
\]
\end{lem}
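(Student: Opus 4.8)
The plan is to bound $\mathbb{P}(\mathbf{CB}>0)$ by conditioning on $\mathbf{B}$ and using the orthant-probability machinery of Corollary \ref{cor: General Gaussian Vector}. Fix $\mathbf{B}$ and consider a single row $\mathbf{c}_n^\top$ of $\mathbf{C}$: the vector $\mathbf{B}^\top\mathbf{c}_n \in \mathbb{R}^L$ is, conditionally on $\mathbf{B}$, a centered Gaussian vector with covariance $\mathbf{B}^\top\mathbf{B}$. After normalizing its entries to unit variance, its off-diagonal correlations are exactly the pairwise cosines between columns of $\mathbf{B}$, i.e. they are controlled by the mutual coherence $\gamma(\mathbf{B})$ (Definition \ref{def: mutual coherence}). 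The rows $\mathbf{c}_n$ are i.i.d., so conditionally on $\mathbf{B}$ the events $\{\mathbf{B}^\top\mathbf{c}_n > 0\}$ are independent across $n$, giving
\[
\mathbb{P}(\mathbf{CB}>0 \mid \mathbf{B}) = \left[\mathbb{P}(\mathbf{B}^\top\mathbf{c}_1 > 0 \mid \mathbf{B})\right]^N \le \exp\left(N\left(-L\psi(\theta) + O(\log L)\right)\right),
\]
where $\theta = L\gamma(\mathbf{B})$, provided $\gamma(\mathbf{B})$ is small enough that this $\theta$ is $o(L)$ as required by Corollary \ref{cor: General Gaussian Vector}.

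The second ingredient is a high-probability upper bound on $\gamma(\mathbf{B})$. Since $\mathbf{B}\in\mathbb{R}^{M\times L}$ is standard Gaussian with $M \le N$ and $L \le N$, Lemma \ref{lem: mutual coherence} (applied with the roles adjusted, or rather applied to $\mathbf{B}$ directly with ambient dimension $M$ and $L$ columns) gives $\mathbb{P}(\gamma(\mathbf{B}) > \epsilon) \le 2L^2 \exp(-M\epsilon^2/24)$. I would choose $\epsilon$ so that $L\epsilon$ is a slowly growing quantity — concretely something like $\epsilon \asymp \sqrt{\alpha/M}\cdot(\text{polylog})$, making $\theta = L\epsilon \asymp \sqrt{\alpha}\cdot\text{polylog}$, which is $\dot{<}\min[N,M,L]$ by hypothesis — while keeping $M\epsilon^2/24$ large enough ($\dot{>}\log L$) that the bad event has negligible probability. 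Splitting on whether $\gamma(\mathbf{B}) \le \epsilon$,
\[
\mathbb{P}(\mathbf{CB}>0) \le \mathbb{P}(\gamma(\mathbf{B}) > \epsilon) + \exp\left(-NL\psi(\theta) + O(N\log L)\right).
\]

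The final step is to extract the exponent $0.4 N\alpha^{1/4}$ from $NL\psi(\theta)$ with $\theta \asymp \sqrt{\alpha}$ (up to logs). This requires the asymptotics of $\psi$ from eqs. (\ref{eq: g function})--(\ref{eq: Psi function}): as $\theta\to\infty$, one has $g^{-1}(\theta) = \sqrt{2\log\theta}\,(1+o(1))$ by inverting $g(x) = x\Phi(x)/\phi(x) \sim x\sqrt{2\pi}\,e^{x^2/2}/\sqrt{?}$... more carefully, $g(x)\sim \sqrt{2\pi}\,x\,e^{x^2/2}$ for large $x$ since $\Phi(x)\to 1$, so $g^{-1}(\theta)\sim\sqrt{2\log\theta}$, hence $\psi(\theta) = (g^{-1}(\theta))^2/(2\theta) - \log\Phi(g^{-1}(\theta)) \approx (\log\theta)/\theta + o(1/\theta) \sim (\log\theta)/\theta$. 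Then $L\psi(\theta) \asymp L(\log\theta)/(L\epsilon) = (\log\theta)/\epsilon$; with $\epsilon\asymp\sqrt{\alpha/M}$ this is $\asymp\sqrt{M/\alpha}\log\alpha = \sqrt{N/L}\log\alpha$, and $NL\psi(\theta)\asymp N\sqrt{NL}\log\alpha$, which is actually much larger than $N\alpha^{1/4}$ — so there is slack, and the bound $0.4N\alpha^{1/4}$ holds comfortably once the polylog factors and the constant $0.4$ are checked. The main obstacle I anticipate is precisely the bookkeeping here: choosing $\epsilon$ to simultaneously satisfy (i) $\theta = L\epsilon \dot{<}\min[N,M,L]$ so Corollary \ref{cor: General Gaussian Vector} applies, (ii) $M\epsilon^2 \dot{>}\log L$ so the coherence bad event is negligible, and (iii) the resulting $NL\psi(L\epsilon)$ dominates $N\log L$ and yields the stated $0.4N\alpha^{1/4}$ with the right constant — and carefully tracking the $O(\log)$ error terms in Laplace's method through the product over $N$ rows. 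Verifying that the constant works out to at least $0.4$ (rather than chasing the true sharp constant) will require a slightly delicate but routine estimate of $\psi$.
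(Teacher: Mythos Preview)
Your plan has a genuine gap: the split $\mathbb{P}(\mathbf{CB}>0)\le \mathbb{P}(\gamma(\mathbf{B})>\epsilon)+\exp(-NL\psi(\theta)+O(N\log L))$ is fine, but you treat the first term as ``negligible'' once $M\epsilon^{2}\dot{>}\log L$. That only makes it $o(1)$, whereas to reach $\dot{\le}\exp(-0.4N\alpha^{1/4})$ you need the bad-coherence term itself to be $\dot{\le}\exp(-0.4N\alpha^{1/4})$. From Lemma~\ref{lem: mutual coherence} this forces $M\epsilon^{2}/24\gtrsim 0.4N\alpha^{1/4}$, i.e.\ $\epsilon\gtrsim\sqrt{N\alpha^{1/4}/M}$. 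But then $\theta=L\epsilon$ is huge, $\psi(\theta)\sim(\log\theta)/\theta$, and the second exponent becomes $NL\psi(\theta)\sim N(\log\theta)/\epsilon\lesssim N(\log\theta)\sqrt{M/(N\alpha^{1/4})}$; requiring this to exceed $0.4N\alpha^{1/4}$ gives $(\log\theta)^{2}M\gtrsim N\alpha^{3/4}=N^{1/4}(ML)^{3/4}$, hence $(\log\theta)^{8}\gtrsim NL^{3}/M\ge L^{3}$, so $\theta\gtrsim\exp(L^{3/8})$ --- impossible since $\theta=L\epsilon\le L$. No choice of $\epsilon$ on the full column set balances the two terms. (Your side computation $\theta=L\sqrt{\alpha/M}\asymp\sqrt{\alpha}$ is also off: $L\sqrt{\alpha/M}=L\sqrt{L/N}$, not $\sqrt{\alpha}$.)

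The paper's fix is precisely the ingredient you are missing: instead of controlling the coherence of all $L$ columns, pick a subset $S\subset[L]$ of size $K$ and use Lemma~\ref{lem: subset mutual coherence}, which bounds $\mathbb{P}(\min_{|S|=K}\gamma(\mathbf{B}_{S})>\epsilon)$ by $\exp\bigl[(2\log(2K)-M\epsilon^{2}/24)(L/K-1)\bigr]$. The extra factor $L/K$ in the exponent is what makes both terms balance: with $\theta=K\epsilon$ held at a \emph{fixed constant} ($\theta_{\ast}\approx 23.25$, so $\psi(\theta_{\ast})\approx 0.106$), the two exponents become $NK\psi(\theta_{\ast})$ and $M\theta_{\ast}^{2}L/(24K^{3})$, and equating them gives $K\asymp(ML/N)^{1/4}=\alpha^{1/4}$, whence both exponents are $\asymp N\alpha^{1/4}$ with the numerical constant $(\psi(\theta_{\ast})^{3}\theta_{\ast}^{2}/24)^{1/4}\approx 0.405$. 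So the two ideas you need to add are (i) restrict to a well-chosen $K$-subset of columns and invoke Lemma~\ref{lem: subset mutual coherence}, and (ii) keep $\theta$ bounded rather than sending it to infinity.
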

\begin{proof}
For some $\theta>0$, and subset $S$ such that $\left|S\right|=K<L$,
we have
\begin{align*}
 & \mathbb{P}\left(\mathbf{C}\mathbf{B}>0\right)\\
\leq & \mathbb{P}\left(\mathbf{C}\mathbf{B}_{S}>0|\gamma\left(\mathbf{B}_{S}\right)\leq\epsilon\right)\mathbb{P}\left(\gamma\left(\mathbf{B}_{S}\right)\leq\epsilon\right)+\mathbb{P}\left(\mathbf{C}\mathbf{B}_{S}>0|\gamma\left(\mathbf{B}_{S}\right)>\epsilon\right)\mathbb{P}\left(\gamma\left(\mathbf{B}_{S}\right)>\epsilon\right)\\
\leq & \mathbb{P}\left(\mathbf{C}\mathbf{B}_{S}>0|\gamma\left(\mathbf{B}_{S}\right)\leq\epsilon\right)+\mathbb{P}\left(\gamma\left(\mathbf{B}_{S}\right)>\epsilon\right)\\
= & \mathbb{E}\left[\left[\mathbb{P}\left(\mathbf{c}_{1}^{\top}\mathbf{B}_{S}>0|\mathbf{B}_{S},\,\gamma\left(\mathbf{B}_{S}\right)\leq\epsilon\right)\right]^{N}|\gamma\left(\mathbf{B}_{S}\right)\leq\epsilon\right]+\mathbb{P}\left(\gamma\left(\mathbf{B}_{S}\right)>\epsilon\right)\,,
\end{align*}
where in the last equality we used the fact that the rows of $\mathbf{C}$
are independent and identically distributed.

We choose a specific subset
\[
S^{*}=\mathrm{argmin}_{S\subset\left[L\right]:\left|S\right|=K}\gamma\left(\mathbf{B}_{S}\right)
\]
 to minimize the second term and then upper bound it using Lemma \ref{lem: subset mutual coherence}
with $\theta=K\epsilon$; additionally, we apply Corollary \ref{cor: General Gaussian Vector}
on the first term with the components of the vector $\mathbf{u}$
being 
\[
u_{i}=\left(\mathbf{B}_{S}^{\top}\mathbf{c}_{1}\right)_{i}/\sqrt{\left(\mathbf{B}_{S}^{\top}\mathbf{B}_{S}\right)_{ii}}\in\mathbb{R}^{K}\,,
\]
which is a Gaussian random vector with mean zero and covariance $\boldsymbol{\Sigma}$
for which $\forall i:\,\Sigma_{ii}=1$ and $\forall i\neq j:$ $\Sigma_{ij}\leq\epsilon=\theta K^{-1}$.
Thus, we obtain
\begin{align}
\mathbb{P}\left(\mathbf{C}\mathbf{B}>0\right) & \leq\exp\left(-NK\psi\left(\theta\right)+O\left(N\log K\right)\right)+\exp\left[\left(\log\left(2K\right)^{2}-\frac{M\theta^{2}}{24K^{2}}\right)\left(\frac{L}{K}-1\right)\right]\:,\label{eq: AB  bound}
\end{align}
where we recall $\psi\left(\theta\right)$ is defined in eq. (\ref{eq: Psi function}). 

Next, we wish to select good values for $\theta$ and $K$, which
minimize this bound for large $\left(M,N,L,K\right)$. Thus, keeping
only the first order terms in each exponent (assuming $L\gg K\gg1)$,
we aim to minimize the function as much as possible
\begin{equation}
f\left(K,\theta\right)\triangleq\exp\left(-NK\psi\left(\theta\right)\right)+\exp\left(-\frac{M\theta^{2}L}{24K^{3}}\right).\label{eq: f(theta, K)}
\end{equation}
Note that the first term is decreasing in $K$, while the second term
increases. Therefore, for any $\theta$ the minimum of this function
in $K$ would be approximately achieved when both terms are equal,
\emph{i.e.},
\[
NK\psi\left(\theta\right)=\frac{M\theta^{2}L}{24K^{3}}\,,
\]
so we choose 
\begin{equation}
K\left(\theta\right)=\left(\frac{\theta^{2}ML}{24\psi\left(\theta\right)N}\right)^{1/4}\,.\label{eq: K}
\end{equation}
Substituting $K\left(\theta\right)$ into $f\left(K,\theta\right)$
yields 
\[
f\left(K\left(\theta\right),\theta\right)=2\exp\left(-N\left[\frac{\psi^{3}\left(\theta\right)\theta^{2}ML}{24N}\right]^{1/4}\right).
\]
To minimize this function in $\theta$, we need to maximize the function
$\psi^{3}\left(\theta\right)\theta^{2}$ (which has a single maximum).
Doing this numerically  gives us 
\begin{equation}
\theta_{*}\approx23.25\,;\,\psi\left(\theta_{*}\right)\approx0.1062;\,\psi^{3}\left(\theta_{*}\right)\theta_{*}^{2}\approx0.6478\,.\label{eq: optimal theta}
\end{equation}
 Substituting eqs. (\ref{eq: K}) and (\ref{eq: optimal theta}) into
eq. (\ref{eq: AB  bound}), we obtain
\begin{align*}
 & \mathbb{P}\left(\mathbf{C}\mathbf{B}>0\right)\\
 & \leq\exp\left(-N\left[\frac{ML}{37.05N}\right]^{1/4}+O\left(N\log K\right)\right)\\
 & +\exp\left[-N\left[\frac{ML}{37.05N}\right]^{1/4}+2L\frac{\log K}{K}+\frac{M\theta^{2}}{24K^{2}}-\log\left(2K^{2}\right)\right]\\
 & \leq\exp\left(-N\left[\frac{ML}{37.05N}\right]^{1/4}+O\left(N\log\left(\frac{ML}{N}\right)\right)\right)\,,
\end{align*}
where in the last line we used $N\geq L$,$N\geq M$ and $\mathrm{min}\left[N,M,L\right]\dot{>}\alpha\dot{>}1$.
Taking the log, and denoting $\alpha\triangleq ML/N$, we thus obtain
\[
\log\mathbb{P}\left(\mathbf{C}\mathbf{B}>0\right)\leq-0.4N\alpha^{1/4}+O\left(N\log\alpha\right)\,,
\]
Therefore, in the limit that $N\rightarrow\infty$ and $\alpha\left(N\right)\rightarrow\infty$,
with $\alpha\left(N\right)\dot{<}N$, we have
\[
\mathbb{P}\left(\mathbf{C}\mathbf{B}>0\right)\dot{\leq}\exp\left(-0.4N\alpha^{1/4}\right)\,.
\]
\end{proof}

\section{Lower bounding the angular volume of global minima: Proof of Lemmas
used in section \ref{sec: global minima proof}}

\subsection{Angles between random Gaussian vectors}

To prove the results in the next appendix sections, we will rely on
the following basic Lemma.
\begin{lem}
\label{lem: Beta bound}For any vector $\mathbf{y}$ and $\mathbf{x}\sim\mathcal{N}\left(0,\mathbf{I}_{d_{0}}\right)$,
we have 
\begin{equation}
\mathbb{P}\left(\left|\frac{\mathbf{x}^{\top}\mathbf{y}}{\left\Vert \mathbf{x}\right\Vert \left\Vert \mathbf{y}\right\Vert }\right|>\cos\left(\epsilon\right)\right)\geq\frac{2\sin\left(\epsilon\right)^{d_{0}-1}}{\left(d_{0}-1\right)B\left(\frac{1}{2},\frac{d_{0}-1}{2}\right)}\label{eq: Beta Bound 1}
\end{equation}
\begin{align}
\mathbb{P}\left(\left|\frac{\mathbf{x}^{\top}\mathbf{y}}{\left\Vert \mathbf{x}\right\Vert \left\Vert \mathbf{y}\right\Vert }\right|<u\right) & \leq\frac{2u}{B\left(\frac{1}{2},\frac{d_{0}-1}{2}\right)},\label{eq: Beta Bound 2}
\end{align}
where we recall that $B\left(x,y\right)$ is the beta function.
\end{lem}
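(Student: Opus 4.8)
The plan is to exploit the rotational invariance of the standard Gaussian. Since $\mathbf{x}/\|\mathbf{x}\|$ is uniform on the unit sphere $S^{d_0-1}$ (and independent of $\|\mathbf{x}\|$), the quantity $t\triangleq\mathbf{x}^{\top}\mathbf{y}/(\|\mathbf{x}\|\|\mathbf{y}\|)$ depends only on this uniform direction and on $\mathbf{y}/\|\mathbf{y}\|$, so by rotating coordinates I may assume $\mathbf{y}$ is the first standard basis vector and work with $t=x_1/\|\mathbf{x}\|$. First I would record the standard distributional fact that $t^2=x_1^2/(x_1^2+\cdots+x_{d_0}^2)\sim\mathrm{Beta}(\tfrac12,\tfrac{d_0-1}{2})$ — equivalently, $t$ has density $(1-t^2)^{(d_0-3)/2}/B(\tfrac12,\tfrac{d_0-1}{2})$ on $[-1,1]$. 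This follows either from the ratio-of-independent-chi-squared representation, or directly by computing the marginal density of one coordinate of a uniform point on $S^{d_0-1}$ and identifying the normalizing constant as $\int_{-1}^{1}(1-t^2)^{(d_0-3)/2}\,dt=B(\tfrac12,\tfrac{d_0-1}{2})$ via the substitution $u=t^2$.

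For the second inequality I would write $\mathbb{P}(|t|<u)=\mathbb{P}(t^2<u^2)=\frac{1}{B(\frac12,\frac{d_0-1}{2})}\int_0^{u^2}s^{-1/2}(1-s)^{(d_0-3)/2}\,ds$, bound the factor $(1-s)^{(d_0-3)/2}\le 1$ (valid for $d_0\ge 3$, the relevant regime here), and evaluate $\frac{1}{B}\int_0^{u^2}s^{-1/2}\,ds=\frac{2u}{B(\frac12,\frac{d_0-1}{2})}$, which is the claimed bound.

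For the first inequality I would use the density of $t$ directly: by symmetry $\mathbb{P}(|t|>\cos\epsilon)=\frac{2}{B(\frac12,\frac{d_0-1}{2})}\int_{\cos\epsilon}^{1}(1-t^2)^{(d_0-3)/2}\,dt$, and the substitution $t=\cos\theta$ (with $dt=-\sin\theta\,d\theta$, $1-t^2=\sin^2\theta$) turns this into $\frac{2}{B}\int_0^{\epsilon}(\sin\theta)^{d_0-2}\,d\theta$. I would then lower-bound the integrand by inserting a $\cos\theta$ factor, $\int_0^{\epsilon}(\sin\theta)^{d_0-2}\,d\theta\ge\int_0^{\epsilon}(\sin\theta)^{d_0-2}\cos\theta\,d\theta=\frac{(\sin\epsilon)^{d_0-1}}{d_0-1}$, giving exactly $\frac{2(\sin\epsilon)^{d_0-1}}{(d_0-1)B(\frac12,\frac{d_0-1}{2})}$.

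None of the steps is genuinely hard; this is a bookkeeping lemma rather than a conceptual one. The only points needing a little care are getting the normalizing constant of the marginal right (identifying it with the Beta function), applying the trigonometric substitution with the correct limits and Jacobian, and noting the mild restriction $d_0\ge 3$ (or small $u$) needed so that $(1-s)^{(d_0-3)/2}\le 1$ in the second part — the bound genuinely fails at $d_0=2$ since there $\mathbb{P}(|t|<u)=\tfrac{2}{\pi}\arcsin u>\tfrac{2u}{\pi}$, but $d_0\to\infty$ in our setting so this is irrelevant. The "main obstacle", such as it is, is simply keeping these details straight.
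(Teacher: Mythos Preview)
Your proposal is correct and is essentially the same as the paper's proof. Both reduce to the Beta distribution via rotational invariance; the paper works directly with $Z=t^{2}\sim\mathcal{B}(\tfrac12,\tfrac{d_0-1}{2})$ and bounds $x^{-1/2}\ge 1$ in the tail integral, which after the change of variables $x=\cos^{2}\theta$ is literally your ``insert a $\cos\theta$ factor'' step, and likewise bounds $(1-x)^{\beta-1}\le 1$ for the second inequality exactly as you do --- your observation about the $d_0\ge 3$ caveat is also noted there as the hypothesis $\beta>1$.
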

\begin{proof}
Since $\mathcal{N}\left(0,\mathbf{I}_{d_{0}}\right)$ is spherically
symmetric, we can set $\mathbf{y}=\left[1,0\dots,0\right]^{\top}$,
without loss of generality. Therefore, 
\[
\left|\frac{\mathbf{x}^{\top}\mathbf{y}}{\left\Vert \mathbf{x}\right\Vert \left\Vert \mathbf{y}\right\Vert }\right|^{2}=\frac{x_{1}^{2}}{x_{1}^{2}+\sum_{i=2}^{d_{0}}x_{i}^{2}}\sim\mathcal{B}\left(\frac{1}{2},\frac{d_{0}-1}{2}\right),
\]
the Beta distribution, since $x_{1}^{2}\sim\chi^{2}\left(1\right)$
and $\sum_{i=2}^{d_{0}}x_{i}^{2}\sim\chi^{2}\left(d_{0}-1\right)$
are independent chi-square random variables. 

Suppose $Z\sim\mathcal{B}\left(\alpha,\beta\right)$, $\alpha\in\left(0,1\right)$,
and $\beta>1$ .
\begin{align*}
\mathbb{P}\left(Z>u\right)= & \frac{\int_{u}^{1}x^{\alpha-1}\left(1-x\right)^{\beta-1}dx}{B\left(\alpha,\beta\right)}\geq\frac{\int_{u}^{1}1^{\alpha-1}\left(1-x\right)^{\beta-1}dx}{B\left(\alpha,\beta\right)}=\frac{\int_{0}^{1-u}x^{\beta-1}dx}{B\left(\alpha,\beta\right)}=\frac{\left(1-u\right)^{\beta}}{\beta B\left(\alpha,\beta\right)}\,.
\end{align*}
Therefore, for $\epsilon>0$,
\begin{align*}
\mathbb{P}\left(\left|\frac{\mathbf{x}^{\top}\mathbf{y}}{\left\Vert \mathbf{x}\right\Vert \left\Vert \mathbf{y}\right\Vert }\right|^{2}>\cos^{2}\left(\epsilon\right)\right) & \geq\frac{2\left(1-\cos^{2}\left(\epsilon\right)\right)^{\frac{d_{0}-1}{2}}}{\left(d_{0}-1\right)B\left(\frac{1}{2},\frac{d_{0}-1}{2}\right)}=\frac{2\sin\left(\epsilon\right)^{d_{0}-1}}{\left(d_{0}-1\right)B\left(\frac{1}{2},\frac{d_{0}-1}{2}\right)}\,,
\end{align*}
which proves eq. (\ref{eq: Beta Bound 1}). 

Similarly, for $\alpha\in\left(0,1\right)$ and $\beta>1$ 
\begin{align*}
\mathbb{P}\left(Z<u\right) & =\frac{\int_{0}^{u}x^{\alpha-1}\left(1-x\right)^{\beta-1}dx}{B\left(\alpha,\beta\right)}\leq\frac{\int_{0}^{u}x^{\alpha-1}1^{\beta-1}dx}{B\left(\alpha,\beta\right)}=\frac{u^{\alpha}}{\alpha B\left(\alpha,\beta\right)}\,.
\end{align*}
Therefore, for $\epsilon>0$,
\[
\mathbb{P}\left(\left|\frac{\mathbf{x}^{\top}\mathbf{y}}{\left\Vert \mathbf{x}\right\Vert \left\Vert \mathbf{y}\right\Vert }\right|^{2}<u^{2}\right)\leq\frac{2u}{B\left(\frac{1}{2},\frac{d_{0}-1}{2}\right)}\,,
\]
which proves eq. (\ref{eq: Beta Bound 2}). 
\end{proof}

\subsection{Proof of Lemma \ref{lem: Good solution}: \label{subsec:Good:-parsimonious-global}}

Given three matrices: datapoints, $\mathbf{X}=\left[\mathbf{x}^{\left(1\right)},\dots,\mathbf{x}^{\left(N\right)}\right]\in\mathbb{R}^{d_{0}\times N}$,
weights \textbf{$\mathbf{W}=\left[\mathbf{w}_{1}^{\top},\dots,\mathbf{w}_{d_{1}}^{\top}\right]^{\top}\in\mathbb{R}^{d_{1}\times d_{0}}$},
and target weights $\mathbf{W}^{*}=\left[\mathbf{w}_{1}^{*\top},\dots,\mathbf{w}_{d_{1}^{*}}^{*\top}\right]^{\top}\in\mathbb{R}^{d_{1}^{*}\times d_{0}}$,
with $d_{1}^{*}\leq d_{1}$,we recall the following definitions:

\begin{equation}
\mathcal{M}^{\alpha}\left(\mathbf{W}^{*}\right)\triangleq\left\{ \mathbf{X}\in\mathbb{R}^{d_{0}\times N}|\forall i,n:\left|\frac{\mathbf{x}^{\left(n\right)\top}\mathbf{w}_{i}^{*}}{\left\Vert \mathbf{x}^{\left(n\right)}\right\Vert \left\Vert \mathbf{w}_{i}^{*}\right\Vert }\right|>\mathrm{sin}\alpha\right\} \label{eq: angular margin-appendix}
\end{equation}
and

\begin{equation}
\tilde{\mathcal{G}}\left(\mathbf{X},\mathbf{W}^{*}\right)\triangleq\left\{ \mathbf{W}\in\mathbb{R}^{d_{1}\times d_{0}}|\forall i\leq d_{1}^{*}:\,\,\mathrm{sign}\left(\mathbf{w}_{i}^{\top}\mathbf{X}\right)=\mathrm{sign}\left(\mathbf{w}_{i}^{*\top}\mathbf{X}\right)\right\} \,.\label{eq: global minima}
\end{equation}
Using these definitions, in this section we prove the following Lemma. 
\begin{lem}
\textbf{\emph{(Lemma \ref{lem: Good solution} restated).}} For any
$\alpha,$ if $\mathbf{W}^{*}$ is independent from $\mathbf{W}$
then, in the limit $N\rightarrow\infty$, $\forall\mathbf{X}\in\mathcal{M}^{\alpha}\left(\mathbf{W}^{*}\right)$
with $\log\sin\alpha\dot{>}d_{0}^{-1}\log d_{0}$ 
\begin{align*}
\mathbb{P}_{\mathbf{W}\sim\mathcal{N}}\left(\mathbf{W}\in\tilde{\mathcal{G}}\left(\mathbf{X},\mathbf{W}^{*}\right)\right) & \dot{\geq}\exp\left(d_{0}d_{1}^{*}\log\sin\alpha\right).
\end{align*}
\end{lem}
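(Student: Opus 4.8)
The plan is to use the independence of the rows of $\mathbf{W}$ together with an elementary spherical-geometry fact: a row $\mathbf{w}$ lying within angle $\alpha$ of a target row $\mathbf{w}_{i}^{*}$ necessarily reproduces the sign pattern $\mathrm{sign}(\mathbf{w}_{i}^{*\top}\mathbf{X})$, as long as every datapoint sits at angular distance more than $\alpha$ from the hyperplane normal to $\mathbf{w}_{i}^{*}$ --- which is exactly what $\mathbf{X}\in\mathcal{M}^{\alpha}(\mathbf{W}^{*})$ guarantees. First I would observe that in the definition (\ref{eq: global minima}) of $\tilde{\mathcal{G}}$ only the first $d_{1}^{*}$ rows of $\mathbf{W}$ are constrained, and that $\mathbf{w}_{1},\dots,\mathbf{w}_{d_{1}}$ are i.i.d.\ standard Gaussian, so
\[
\mathcal{V}\bigl(\tilde{\mathcal{G}}(\mathbf{X},\mathbf{W}^{*})\bigr)=\prod_{i=1}^{d_{1}^{*}}\mathbb{P}_{\mathbf{w}_{i}\sim\mathcal{N}}\bigl(\mathrm{sign}(\mathbf{w}_{i}^{\top}\mathbf{X})=\mathrm{sign}(\mathbf{w}_{i}^{*\top}\mathbf{X})\bigr).
\]

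For a single factor I would prove the inclusion $\{\mathbf{w}:\angle(\mathbf{w},\mathbf{w}_{i}^{*})<\alpha\}\subseteq\{\mathbf{w}:\mathrm{sign}(\mathbf{w}^{\top}\mathbf{X})=\mathrm{sign}(\mathbf{w}_{i}^{*\top}\mathbf{X})\}$, valid whenever $\mathbf{X}\in\mathcal{M}^{\alpha}(\mathbf{W}^{*})$: writing $\beta_{n}$ for the angle between $\mathbf{x}^{(n)}$ and $\mathbf{w}_{i}^{*}$, the margin condition (\ref{eq: angular margin-appendix}) gives $|\pi/2-\beta_{n}|>\alpha$ for every $n$, while the triangle inequality for angles on the sphere gives $|\angle(\mathbf{x}^{(n)},\mathbf{w})-\beta_{n}|\le\angle(\mathbf{w},\mathbf{w}_{i}^{*})<\alpha$, so $\angle(\mathbf{x}^{(n)},\mathbf{w})$ remains on the same side of $\pi/2$ as $\beta_{n}$ and the sign of the inner product does not change. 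Consequently each factor above is at least $\mathbb{P}_{\mathbf{w}_{i}\sim\mathcal{N}}(\angle(\mathbf{w}_{i},\mathbf{w}_{i}^{*})<\alpha)$.

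Next I would bound this angular probability from below. Since $\mathcal{N}(0,\mathbf{I}_{d_{0}})$ is symmetric under $\mathbf{w}\mapsto-\mathbf{w}$, we have $\mathbb{P}(\angle(\mathbf{w}_{i},\mathbf{w}_{i}^{*})<\alpha)=\tfrac12\,\mathbb{P}(|\cos\angle(\mathbf{w}_{i},\mathbf{w}_{i}^{*})|>\cos\alpha)$, and applying Lemma \ref{lem: Beta bound}, eq.\ (\ref{eq: Beta Bound 1}), with the fixed vector $\mathbf{y}=\mathbf{w}_{i}^{*}$ and $\epsilon=\alpha$ gives $\mathbb{P}(\angle(\mathbf{w}_{i},\mathbf{w}_{i}^{*})<\alpha)\ge\sin(\alpha)^{d_{0}-1}/\bigl[(d_{0}-1)B(\tfrac12,\tfrac{d_{0}-1}{2})\bigr]$. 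Taking the product over the $d_{1}^{*}$ rows and then logarithms,
\[
\log\mathcal{V}\bigl(\tilde{\mathcal{G}}\bigr)\ \ge\ d_{1}^{*}\Bigl[(d_{0}-1)\log\sin\alpha-\log(d_{0}-1)-\log B\bigl(\tfrac12,\tfrac{d_{0}-1}{2}\bigr)\Bigr].
\]
A Stirling estimate gives $B(\tfrac12,\tfrac{d_{0}-1}{2})=\Theta(d_{0}^{-1/2})$, so the last two terms together contribute only $d_{1}^{*}\,O(\log d_{0})$, while $(d_{0}-1)\log\sin\alpha$ agrees with $d_{0}\log\sin\alpha$ up to a term of lower order; hence $\log\mathcal{V}(\tilde{\mathcal{G}})\ge d_{0}d_{1}^{*}\log\sin\alpha-d_{1}^{*}\,O(\log d_{0})$. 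Under the stated hypothesis (which I read as $-\log\sin\alpha$ dominating $d_{0}^{-1}\log d_{0}$) the correction $d_{1}^{*}\,O(\log d_{0})$ is $o\bigl(d_{0}d_{1}^{*}|\log\sin\alpha|\bigr)$, hence negligible against the leading exponent, which yields $\mathcal{V}(\tilde{\mathcal{G}})\dot{\geq}\exp(d_{0}d_{1}^{*}\log\sin\alpha)$.

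I do not expect a deep obstacle: the one step needing genuine care is the geometric inclusion --- checking that $\mathbf{X}\in\mathcal{M}^{\alpha}(\mathbf{W}^{*})$ really does translate into ``every datapoint at angular distance greater than $\alpha$ from each of the first $d_{1}^{*}$ weight hyperplanes'', and that the spherical triangle inequality then rules out any sign flip --- after which the remaining ingredients (independence of the rows, the Beta-distribution estimate of Lemma \ref{lem: Beta bound}, the Stirling asymptotics of $B$, and absorbing the polynomial-in-$d_{0}$ factors into the $\dot{\geq}$ comparison) are routine.
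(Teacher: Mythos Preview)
Your proposal is correct and follows essentially the same approach as the paper: factor over the independent rows, replace the sign-matching event by the smaller ``$\mathbf{w}_{i}$ within angle $\alpha$ of $\mathbf{w}_{i}^{*}$'' event, lower bound that cap probability via the Beta-distribution estimate (eq.\ (\ref{eq: Beta Bound 1})), and then absorb the $d_{1}^{*}\,O(\log d_{0})$ correction into the $\dot{\geq}$ comparison. Your treatment of the geometric inclusion (via the spherical triangle inequality) and of the factor $\tfrac12$ from restricting to a single cap is in fact more explicit than the paper's, but these are cosmetic differences only.
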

\begin{proof}
To lower bound $\mathbb{P}_{\mathbf{W}\sim\mathcal{N}}\left(\mathbf{W}\in\tilde{\mathcal{G}}\left(\mathbf{X},\mathbf{W}^{*}\right)\right)$
$\forall\mathbf{X}\in\mathcal{M}^{\alpha}\left(\mathbf{W}^{*}\right)$,
we define the event that all weight hyperplanes (with normals $\mathbf{w}_{i}$)
have an angle of at least $\alpha$ from the corresponding target
hyperplanes (with normals $\mathbf{w}_{i}^{*}$).
\[
\tilde{\mathcal{G}}_{i}^{\alpha}\left(\mathbf{W}^{*}\right)=\left\{ \mathbf{W}\in\mathbb{R}^{d_{1}\times d_{0}}|\left|\frac{\mathbf{w}_{i}^{\top}\mathbf{w}_{i}^{*}}{\left\Vert \mathbf{w}_{i}\right\Vert \left\Vert \mathbf{w}_{i}^{*}\right\Vert }\right|<\cos\left(\alpha\right)\right\} \,.
\]
In order that $\mathrm{sign}\left(\mathbf{w}_{i}^{\top}\mathbf{x}^{\left(n\right)}\right)\neq\mathrm{sign}\left(\mathbf{w}_{1}^{*\top}\mathbf{x}^{\left(n\right)}\right)$,
$\mathbf{w}_{i}$ must be rotated in respect to $\mathbf{w}_{i}^{*}$
by an angle greater then the angular margin $\alpha$, which is the
minimal the angle between $\mathbf{x}^{\left(n\right)}$ and the solution
hyperplanes (with normals $\mathbf{w}_{i}^{*}$). Therefore, we have
that, given $\mathbf{X}\in\mathcal{M}^{\alpha}\left(\mathbf{W}^{*}\right)$,
\begin{equation}
\forall\alpha:\,\bigcap_{i=1}^{d_{1}^{*}}\tilde{\mathcal{G}}_{i}^{\alpha}\left(\mathbf{W}^{*}\right)\subset\tilde{\mathcal{G}}\left(\mathbf{X},\mathbf{W}^{*}\right)\,.\label{eq: G subsets}
\end{equation}
And so, $\forall\mathbf{X}\in\mathcal{M}^{\alpha}\left(\mathbf{W}^{*}\right):$
\begin{align}
\mathbb{P}_{\mathbf{W}\sim\mathcal{N}}\left(\mathbf{W}\in\tilde{\mathcal{G}}\left(\mathbf{X},\mathbf{W}^{*}\right)\right) & \overset{\left(1\right)}{\geq}\mathbb{P}_{\mathbf{W}\sim\mathcal{N}}\left(\mathbf{W}\in\bigcap_{i=1}^{d_{1}^{*}}\tilde{\mathcal{G}}_{i}^{\alpha}\left(\mathbf{W}^{*}\right)\right)\,\label{eq: PG_W}\\
\quad\quad\overset{\left(2\right)}{=}\prod_{i=1}^{d_{1}^{*}}\mathbb{P}_{\mathbf{W}\sim\mathcal{N}}\left(\mathbf{W}\in\tilde{\mathcal{G}}_{i}^{\alpha}\left(\mathbf{W}^{*}\right)\right) & \overset{\left(3\right)}{\geq}\left[\frac{2\sin\left(\alpha\right)^{d_{0}-1}}{\left(d_{0}-1\right)B\left(\frac{1}{2},\frac{d_{0}-1}{2}\right)}\right]^{d_{1}^{*}}\,,\nonumber 
\end{align}
where in (1) we used eq. (\ref{eq: G subsets}), in (2) we used the
independence of $\left\{ \mathbf{w}_{i}\right\} _{i=1}^{d_{1}^{*}}$
and in (3) we used eq. (\ref{eq: Beta Bound 1}) from Lemma \ref{lem: Beta bound}.
Lastly, to simplify this equation we use the asymptotic expansion
of the beta function $B\left(\frac{1}{2},x\right)=\sqrt{\pi/x}+O\left(x^{-3/2}\right)$
for large $x$:
\begin{align*}
\log\mathbb{P}_{\mathbf{W}\sim\mathcal{N}}\left(\mathbf{W}\in\tilde{\mathcal{G}}\left(\mathbf{X},\mathbf{W}^{*}\right)\right) & \geq d_{0}d_{1}^{*}\log\sin\alpha+O\left(d_{1}^{*}\log d_{0}\right)\,.
\end{align*}
We obtain the Lemma in the limit $N\rightarrow\infty$ when $\log\sin\alpha\dot{>}d_{0}^{-1}\log d_{0}.$
\end{proof}

\subsection{Proof of Lemma \ref{lem: Fixed Margin}: \label{sec:Angular margin for a fixed target weights}}
\begin{lem}
\textbf{\emph{(Lemma \ref{lem: Fixed Margin} restated).}} \label{lem:fixed solution Margin}
Let $\mathbf{W}^{*}=\left[\mathbf{w}_{1}^{\top},\dots,\mathbf{w}_{d_{1}^{*}}^{\top}\right]^{\top}\in\mathbb{R}^{d_{1}^{*}\times d_{0}}$
a fixed matrix independent of $\mathbf{X}.$ Then, in the limit $N\rightarrow\infty$
with $d_{1}^{*}\dot{\leq}d_{0}\dot{\leq}N$, the probability of not
having an angular margin $\sin\alpha=1/\left(d_{1}^{*}d_{0}N\right)$
(eq. (\ref{eq: angular margin-appendix})) is upper bounded by
\[
\mathbb{P}\left(\mathbf{X}\notin\mathcal{M}^{\alpha}\left(\mathbf{W}^{*}\right)\right)\dot{\leq}\sqrt{\frac{2}{\pi}}d_{0}^{-1/2}
\]
\end{lem}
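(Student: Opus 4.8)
The plan is to handle the complement event directly by a union bound over the $d_{1}^{*}N$ scalar constraints that define $\mathcal{M}^{\alpha}(\mathbf{W}^{*})$, and to control each individual term using the elementary tail estimate for the angle between a fixed vector and a standard Gaussian vector, i.e. eq. (\ref{eq: Beta Bound 2}) of Lemma \ref{lem: Beta bound}. Since $\mathbf{W}^{*}$ is independent of (indeed fixed relative to) $\mathbf{X}$, and the columns $\mathbf{x}^{(n)}\sim\mathcal{N}(0,\mathbf{I}_{d_{0}})$, each ratio $\mathbf{x}^{(n)\top}\mathbf{w}_{i}^{*}/(\|\mathbf{x}^{(n)}\|\,\|\mathbf{w}_{i}^{*}\|)$ has exactly the distribution to which Lemma \ref{lem: Beta bound} applies, so no independence between the constraints is needed beyond the union bound itself.

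Concretely, I would write
\[
\mathbb{P}\!\left(\mathbf{X}\notin\mathcal{M}^{\alpha}(\mathbf{W}^{*})\right)=\mathbb{P}\!\left(\exists i\in[d_{1}^{*}],n\in[N]:\left|\frac{\mathbf{x}^{(n)\top}\mathbf{w}_{i}^{*}}{\|\mathbf{x}^{(n)}\|\,\|\mathbf{w}_{i}^{*}\|}\right|\le\sin\alpha\right)\le\sum_{i=1}^{d_{1}^{*}}\sum_{n=1}^{N}\mathbb{P}\!\left(\left|\frac{\mathbf{x}^{(n)\top}\mathbf{w}_{i}^{*}}{\|\mathbf{x}^{(n)}\|\,\|\mathbf{w}_{i}^{*}\|}\right|\le\sin\alpha\right),
\]
then apply eq. (\ref{eq: Beta Bound 2}) with $u=\sin\alpha=1/(d_{1}^{*}d_{0}N)$ to bound each summand by $2u/B(\tfrac12,\tfrac{d_{0}-1}{2})$. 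The $d_{1}^{*}N$ union-bound factor then cancels against $u$, leaving
\[
\mathbb{P}\!\left(\mathbf{X}\notin\mathcal{M}^{\alpha}(\mathbf{W}^{*})\right)\le\frac{2}{d_{0}\,B\!\left(\tfrac12,\tfrac{d_{0}-1}{2}\right)}\,.
\]
Finally I would use the asymptotic expansion $B(\tfrac12,x)=\sqrt{\pi/x}\,(1+O(1/x))$ for large $x$, with $x=(d_{0}-1)/2$, so that $d_{0}B(\tfrac12,\tfrac{d_{0}-1}{2})=\sqrt{2\pi d_{0}}\,(1+o(1))$ as $d_{0}\to\infty$; this yields $\mathbb{P}(\mathbf{X}\notin\mathcal{M}^{\alpha}(\mathbf{W}^{*}))\dot{\le}\sqrt{2/\pi}\,d_{0}^{-1/2}$, which is the claim.

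The computation is short, and the points needing care are minor: one checks that the hypotheses of Lemma \ref{lem: Beta bound} hold (we need $\beta=(d_{0}-1)/2>1$, i.e. $d_{0}\ge 4$, and $u=\sin\alpha\le1$, both automatic in the asymptotic regime $d_{1}^{*}\dot{\le}d_{0}\dot{\le}N$ with $N\to\infty$), and one keeps the beta-function expansion to first order so that the leading-order ($\dot{\le}$) bookkeeping lands on the exact constant $\sqrt{2/\pi}$ rather than a looser multiple. I do not anticipate any substantive obstacle here; the lemma is essentially a packaging of the single-pair angle estimate with a union bound, and the choice $\sin\alpha=1/(d_{1}^{*}d_{0}N)$ in the statement was evidently calibrated precisely so that the $d_{1}^{*}N$ factor cancels and only the extra $d_{0}^{-1/2}$ from the beta function survives.
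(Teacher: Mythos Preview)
Your argument is correct and is essentially the paper's own proof: both apply eq.~(\ref{eq: Beta Bound 2}) to each of the $d_{1}^{*}N$ angle constraints and then invoke the asymptotics $B(\tfrac12,x)\sim\sqrt{\pi/x}$. The only cosmetic difference is that the paper first factors over $n$ using the independence of the columns of $\mathbf{X}$ and then union-bounds over $i$, obtaining $[1-d_{1}^{*}\cdot 2\sin\alpha/B(\tfrac12,\tfrac{d_{0}-1}{2})]^{N}$, whereas you union-bound directly over all $(i,n)$ pairs; at leading order these coincide and yield the same constant $\sqrt{2/\pi}$.
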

\begin{proof}
We define 
\[
\mathcal{M}_{n,i}^{\alpha}\left(\mathbf{W}^{*}\right)\triangleq\left\{ \mathbf{X}\in\mathbb{R}^{d_{0}\times N}|\left|\frac{\mathbf{x}^{\left(n\right)\top}\mathbf{w}_{i}^{*}}{\left\Vert \mathbf{x}^{\left(n\right)}\right\Vert \left\Vert \mathbf{w}_{i}^{*}\right\Vert }\right|>\mathrm{sin}\left(\alpha\right)\right\} \,,
\]
and $\mathcal{M}_{n}^{\alpha}\left(\mathbf{W}^{*}\right)\triangleq\bigcap_{i=1}^{d_{1}^{*}}\mathcal{M}_{n,i}^{\alpha}\left(\mathbf{W}^{*}\right)$.
Since $\mathcal{M}\left(\mathbf{W}^{*}\right)=\bigcap_{n=1}^{N}\mathcal{M}_{n}^{\alpha}\left(\mathbf{W}^{*}\right)$,
we have
\begin{align*}
\mathbb{P}\left(\mathbf{X}\in\mathcal{M}^{\alpha}\left(\mathbf{W}^{*}\right)\right) & \overset{\left(1\right)}{=}\prod_{n=1}^{N}\mathbb{P}\left(\mathbf{X}\in\mathcal{M}_{n}^{\alpha}\left(\mathbf{W}^{*}\right)\right)=\prod_{n=1}^{N}\left[1-\mathbb{P}\left(\mathbf{X}\notin\mathcal{M}_{n}^{\alpha}\left(\mathbf{W}^{*}\right)\right)\right]\\
 & \overset{\left(2\right)}{\geq}\prod_{n=1}^{N}\left[1-\sum_{i=1}^{d_{1}^{*}}\mathbb{P}\left(\mathbf{X}\notin\mathcal{M}_{n,i}^{\alpha}\left(\mathbf{W}^{*}\right)\right)\right]\overset{\left(3\right)}{\geq}\left[1-d_{1}^{*}\frac{2\sin\left(\alpha\right)}{B\left(\frac{1}{2},\frac{d_{0}-1}{2}\right)}\right]^{N}\,,
\end{align*}
where in (1) we used the independence of $\left\{ \mathbf{x}^{\left(n\right)}\right\} _{n=1}^{N}$,
in (2) we use the union bound, and in (3) we use eq. (\ref{eq: Beta Bound 2})
from Lemma \ref{lem: Beta bound}. Taking the log and we using the
asymptotic expansion of the beta function $B\left(\frac{1}{2},x\right)=\sqrt{\pi/x}+O\left(x^{-3/2}\right)$
for large $x$, we get 
\begin{align*}
\log\mathbb{P}\left(\mathbf{X}\in\mathcal{M}^{\alpha}\left(\mathbf{W}^{*}\right)\right) & \geq N\log\left[1-\sqrt{\frac{2}{\pi}d_{0}}d_{1}^{*}\sin\alpha+O\left(d_{1}^{*}d_{0}^{-1/2}\sin\alpha\right)\right]\\
 & =-\sqrt{\frac{2}{\pi}}d_{0}^{-1/2}+O\left(d_{0}^{-3/2}/N+d_{0}^{-1}N^{-2}\right)\,,
\end{align*}
where in the last line we recalled $\sin\alpha=1/N$. Recalling that
$d_{1}^{*}\dot{\leq}d_{0}\dot{\leq}N$ , we find 
\[
\mathbb{P}\left(\mathbf{X}\notin\mathcal{M}^{\alpha}\left(\mathbf{W}^{*}\right)\right)\dot{\geq}1-\exp\left(-\sqrt{\frac{2}{\pi}}d_{0}^{-1/2}\right)\geq\sqrt{\frac{2}{\pi}}d_{0}^{-1/2}
\]
\end{proof}

\subsection{Proof of Lemma \ref{lem: Overfitting Margin}: \label{sec:Angular Margin of over-fitting Global minima}}
\begin{lem}
\textbf{\emph{(Lemma \ref{lem: Overfitting Margin} restated).}} Let
$\mathbf{X}\in\mathbb{R}^{d_{0}\times N}$ be a standard random Gaussian
matrix of datapoints. Then we can find, with probability 1, $\left(\mathbf{X},\mathbf{y}\right)$-dependent
matrices $\mathbf{W}^{*}$ and $\mathbf{z^{*}}$ as in Theorem \ref{thm: overfitting solution}
(where $d_{1}^{*}\triangleq4\left\lceil N/\left(2d_{0}-2\right)\right\rceil )$.
Moreover, in the limit $N\rightarrow\infty$, where $N/d_{0}\dot{\leq}d_{0}\dot{\leq}N$,
for any $\mathbf{y}$, we can bound the probability of not having
an angular margin (eq. (\ref{eq: angular margin-appendix})) with
$\sin\alpha=1/\left(d_{1}^{*}d_{0}N\right)$ by
\begin{align*}
\mathbb{P}\left(\mathbf{X}\notin\mathcal{M}^{\alpha}\left(\mathbf{W}^{*}\right)\right) & \dot{\leq}\sqrt{\frac{8}{\pi}}d_{0}^{-1/2}+\frac{2d_{0}^{1/2}\sqrt{\log d_{0}}}{N}
\end{align*}
\end{lem}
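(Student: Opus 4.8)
The plan is to instantiate the explicit construction of Theorem \ref{thm: overfitting solution} and then bound, by a union bound over the rows of $\mathbf{W}^{*}$ and the datapoints, the probability that some row--point pair violates the angular margin $\sin\alpha=1/(d_{1}^{*}d_{0}N)$. Recall the structure of that construction: $\mathbf{y}$ fixes $\mathcal{S}^{+}=\{n:y^{(n)}=1\}$ and a partition of it into $K\le\lceil N/(2d_{0}-2)\rceil$ blocks $\mathcal{S}_{i}^{+}$ of size $\le d_{0}-1$; each block gives $\tilde{\mathbf{w}}_{i}$ (the hyperplane normal, so $\tilde{\mathbf{w}}_{i}^{\top}\mathbf{X}_{\mathcal{S}_{i}^{+}}=0$) and $\hat{\mathbf{w}}_{i}$ (eq.~(\ref{eq: w_hat}), so $\hat{\mathbf{w}}_{i}^{\top}\mathbf{X}_{\mathcal{S}_{i}^{+}}=\mathbf{1}^{\top}$, $\hat{\mathbf{w}}_{i}\perp\tilde{\mathbf{w}}_{i}$, $\|\hat{\mathbf{w}}_{i}\|=\|\tilde{\mathbf{w}}_{i}\|$); the $4K=d_{1}^{*}$ rows are $\mathbf{w}_{i}^{(j)}=\tilde{\mathbf{w}}_{i}\pm\epsilon_{j}\hat{\mathbf{w}}_{i}$ with $\epsilon_{1}=\beta\min_{m\notin\mathcal{S}_{i}^{+}}|\tilde{\mathbf{w}}_{i}^{\top}\mathbf{x}^{(m)}|\big/\max_{m\notin\mathcal{S}_{i}^{+}}|\hat{\mathbf{w}}_{i}^{\top}\mathbf{x}^{(m)}|$ and $\epsilon_{2}=\gamma\epsilon_{1}$. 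I would fix $\beta,\gamma$ to be absolute constants (e.g.\ $\beta=\gamma=\tfrac12$). The structural fact replacing the ``fixed $\mathbf{W}^{*}$'' hypothesis of Lemma~\ref{lem: Fixed Margin} is that $\tilde{\mathbf{w}}_{i},\hat{\mathbf{w}}_{i}$ depend only on $\mathbf{X}_{\mathcal{S}_{i}^{+}}$, hence are independent of $\{\mathbf{x}^{(m)}\}_{m\notin\mathcal{S}_{i}^{+}}$, and $\tilde{\mathbf{w}}_{i}/\|\tilde{\mathbf{w}}_{i}\|$ is uniform on the sphere by rotational invariance of the Gaussian; the only residual coupling is that the scalars $\epsilon_{1},\epsilon_{2}$ still depend on \emph{all} of $\{\mathbf{x}^{(m)}\}_{m\notin\mathcal{S}_{i}^{+}}$.

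For a pair $(\mathbf{w}_{i}^{(j)},\mathbf{x}^{(n)})$ with $n\notin\mathcal{S}_{i}^{+}$ the perturbation is harmless: directly from the definition of $\epsilon_{1}$ one gets the deterministic domination $\epsilon_{j}|\hat{\mathbf{w}}_{i}^{\top}\mathbf{x}^{(n)}|\le\epsilon_{1}|\hat{\mathbf{w}}_{i}^{\top}\mathbf{x}^{(n)}|\le\beta|\tilde{\mathbf{w}}_{i}^{\top}\mathbf{x}^{(n)}|$ (using $\min_{m}|\tilde{\mathbf{w}}_{i}^{\top}\mathbf{x}^{(m)}|\le|\tilde{\mathbf{w}}_{i}^{\top}\mathbf{x}^{(n)}|$ and $|\hat{\mathbf{w}}_{i}^{\top}\mathbf{x}^{(n)}|\le\max_{m}|\hat{\mathbf{w}}_{i}^{\top}\mathbf{x}^{(m)}|$), so $|\mathbf{w}_{i}^{(j)\top}\mathbf{x}^{(n)}|\ge(1-\beta)|\tilde{\mathbf{w}}_{i}^{\top}\mathbf{x}^{(n)}|$, while $\|\mathbf{w}_{i}^{(j)}\|=\sqrt{1+\epsilon_{j}^{2}}\,\|\tilde{\mathbf{w}}_{i}\|$ by orthogonality; hence the angular margin of $\mathbf{w}_{i}^{(j)}$ against $\mathbf{x}^{(n)}$ is at least a fixed constant ($\ge\tfrac12$ for $\beta=\tfrac12$ and $\epsilon_{1}$ small) times that of $\tilde{\mathbf{w}}_{i}$ against $\mathbf{x}^{(n)}$. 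Since $\tilde{\mathbf{w}}_{i}$ is independent of $\mathbf{x}^{(n)}$ and isotropically distributed, Lemma~\ref{lem: Beta bound} (eq.~(\ref{eq: Beta Bound 2})) applies; running the union bound from the proof of Lemma~\ref{lem: Fixed Margin} over the $d_{1}^{*}N$ such pairs, but with $\sin\alpha$ replaced by $2\sin\alpha$ to absorb the constant, produces the $\sqrt{8/\pi}\,d_{0}^{-1/2}$ term.

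The pairs with $n\in\mathcal{S}_{i}^{+}$ are the source of the second term $2d_{0}^{1/2}\sqrt{\log d_{0}}/N$ and are the hard part: here the construction deliberately makes the pre-activation tiny, $|\mathbf{w}_{i}^{(j)\top}\mathbf{x}^{(n)}|\in\{\epsilon_{1},\epsilon_{2}\}$, so the worst-case margin is $\epsilon_{2}/(\|\mathbf{w}_{i}^{(j)}\|\,\|\mathbf{x}^{(n)}\|)$ with $\|\mathbf{w}_{i}^{(j)}\|=\sqrt{1+\epsilon_{j}^{2}}\,\|\hat{\mathbf{w}}_{i}\|$, and I must show this exceeds $\sin\alpha$. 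On a high-probability event I would (i) lower-bound $\epsilon_{2}=\gamma\beta\min_{m\notin\mathcal{S}_{i}^{+}}|\tilde{\mathbf{w}}_{i}^{\top}\mathbf{x}^{(m)}|/\max_{m\notin\mathcal{S}_{i}^{+}}|\hat{\mathbf{w}}_{i}^{\top}\mathbf{x}^{(m)}|$ by combining Gaussian anti-concentration for the numerator ($\tilde{\mathbf{w}}_{i}^{\top}\mathbf{x}^{(m)}/\|\tilde{\mathbf{w}}_{i}\|\sim\mathcal N(0,1)$, so the minimum over $\approx N$ of them is $\gtrsim1/N$ up to logarithms, by a union bound and eq.~(\ref{eq: Beta Bound 2})) with a Gaussian-maximum bound for the denominator; (ii) bound $\|\mathbf{x}^{(n)}\|=O(\sqrt{d_{0}})$ by $\chi^{2}$-concentration; and (iii) — the delicate step — upper-bound $\|\hat{\mathbf{w}}_{i}\|=\sqrt{\mathbf{1}^{\top}(\mathbf{X}_{\mathcal{S}_{i}^{+}}^{\top}\mathbf{X}_{\mathcal{S}_{i}^{+}})^{-1}\mathbf{1}}$, i.e.\ the reciprocal of the least singular value of the near-square Gaussian block $\mathbf{X}_{\mathcal{S}_{i}^{+}}\in\mathbb{R}^{d_{0}\times(d_{0}-1)}$. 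Each of these events is then union-bounded over the $K\approx N/(2d_{0})$ blocks (which use disjoint columns of $\mathbf{X}$, so are independent and the union bound is essentially tight), and the truncation levels in (i)--(iii) are optimized against $\sin\alpha=1/(d_{1}^{*}d_{0}N)$. Adding the two contributions, together with the $\mathbf{X}$-a.e.\ validity of the construction (Theorem~\ref{thm: overfitting solution}) and the exponentially small $\chi^{2}$/Gaussian tails, gives the bound.

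The main obstacle I anticipate is exactly the control of $\|\hat{\mathbf{w}}_{i}\|$ in step (iii): because each block has size essentially $d_{0}-1$, the Gram matrix $\mathbf{X}_{\mathcal{S}_{i}^{+}}^{\top}\mathbf{X}_{\mathcal{S}_{i}^{+}}$ is an \emph{almost-square} Wishart matrix, whose smallest eigenvalue is $\Theta(1/d_{0})$ typically but has a heavy lower tail (so $\|\hat{\mathbf{w}}_{i}\|$ has no finite second moment), and since the rows are normalized so that $\|\tilde{\mathbf{w}}_{i}\|=\|\hat{\mathbf{w}}_{i}\|$ this same norm also controls $\|\mathbf{w}_{i}^{(j)}\|$. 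Producing a smallest-singular-value tail bound for near-square Gaussian matrices (Rudelson--Vershynin-type) that is sharp enough that the union over the $\approx N/d_{0}$ blocks still leaves a contribution $o(d_{0}^{-1/2})$, and then balancing the several truncation scales and the $\beta,\gamma$ constants to land precisely on $\sqrt{8/\pi}$ and $2$, is where the quantitative work lies; the remainder is the Gaussian tail bookkeeping already exercised in Lemmas~\ref{lem: Beta bound} and \ref{lem: Fixed Margin}.
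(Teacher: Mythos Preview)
Your proposal is correct and follows essentially the same route as the paper: union bound over the $d_{1}^{*}$ rows, split into $n\in\mathcal{S}_{i}^{+}$ versus $n\notin\mathcal{S}_{i}^{+}$, deterministic domination via the definition of $\epsilon_{1}$ for the latter, and for the former control of $\epsilon_{2}$, $\max_{n}\|\mathbf{x}^{(n)}\|$, and $\|\hat{\mathbf{w}}_{i}\|$ via exactly the Rudelson--Vershynin smallest-singular-value bound you anticipate. The only cosmetic differences are that the paper reduces the $n\notin\mathcal{S}_{i}^{+}$ angle to $\hat{\mathbf{w}}_{i}$ rather than $\tilde{\mathbf{w}}_{i}$ (picking up an extra $\epsilon_{1}/\sqrt{1+\epsilon_{1}^{2}}$ factor instead of your $1/\sqrt{1+\epsilon_{1}^{2}}$), and handles the $\epsilon_{1}$ lower bound by rotating so that $\hat{\mathbf{w}}_{1},\tilde{\mathbf{w}}_{1}$ are coordinate axes and invoking the Cauchy distribution for the ratio $|\tilde{\mathbf{w}}_{1}^{\top}\mathbf{x}|/|\hat{\mathbf{w}}_{1}^{\top}\mathbf{x}|$, which is precisely your ``anti-concentration / Gaussian max'' step done in one stroke.
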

\begin{proof}
In this proof we heavily rely on the notation and results from the
proof of in appendix section \ref{subsec: Overfitting solution}.
Without loss of generality we assume $\mathcal{\mathcal{S}}_{1}^{+}=\left[d_{0}-1\right]$.
Unfortunately, we can't use Lemma \ref{lem:fixed solution Margin}
\textendash{} this proof is significantly more complicated since the
constructed solution $\mathbf{W}^{*}$ depends on $\mathbf{X}$ (we
keep this dependence implicit, for brevity). Similarly to the proof
of Lemma \ref{lem:fixed solution Margin}, we define, 
\[
\mathcal{M}_{i,n}^{\alpha}\left(\mathbf{W}^{*}\right)\triangleq\left\{ \mathbf{X\in\mathbb{R}}^{d_{0}\times N}|\left|\frac{\mathbf{x}^{\left(n\right)\top}\mathbf{w}_{i}^{*}}{\left\Vert \mathbf{x}^{\left(n\right)}\right\Vert \left\Vert \mathbf{w}_{i}^{*}\right\Vert }\right|>\mathrm{sin}\left(\alpha\right)\right\} 
\]
and $\mathcal{M}_{i}^{\alpha}\left(\mathbf{W}^{*}\right)\triangleq\bigcap_{n=1}^{N}\mathcal{M}_{i,n}^{\alpha}\left(\mathbf{W}^{*}\right)$,
so $\mathcal{M}\left(\mathbf{W}^{*}\right)=\bigcap_{i=1}^{d_{1}^{*}}\mathcal{M}_{i}^{\alpha}\left(\mathbf{W}^{*}\right)$.
We have
\begin{align}
\mathbb{P}\left(\mathbf{X}\in\mathcal{M}^{\alpha}\left(\mathbf{W}^{*}\right)\right) & =1-\mathbb{P}\left(\mathbf{X}\notin\mathcal{M}^{\alpha}\left(\mathbf{W}^{*}\right)\right)\overset{\left(1\right)}{\geq}1-\sum_{i=1}^{d_{1}}\mathbb{P}\left(\mathbf{X}\notin\mathcal{M}_{i}^{\alpha}\left(\mathbf{W}^{*}\right)\right)\nonumber \\
 & \overset{\left(2\right)}{=}1-d_{1}^{*}\mathbb{P}\left(\mathbf{X}\notin\mathcal{M}_{1}^{\alpha}\left(\mathbf{W}^{*}\right)\right)=1-d_{1}^{*}\left(1-\mathbb{P}\left(\mathbf{X}\in\mathcal{M}_{1}^{\alpha}\left(\mathbf{W}^{*}\right)\right)\right)\,,\label{eq: P(M_alpha)}
\end{align}
where in $\left(1\right)$ we used the union bound, and in $\left(2\right)$
we used the fact that, from symmetry, $\forall i:\,\mathbb{P}\left(\mathbf{X}\notin\mathcal{M}_{i}^{\alpha}\left(\mathbf{W}^{*}\right)\right)=\mathbb{P}\left(\mathbf{X}\notin\mathcal{M}_{1}^{\alpha}\left(\mathbf{W}^{*}\right)\right)$.
Next, we examine the minimal angular margin in $\mathcal{M}_{1,n}^{\alpha}$:
separately for $\forall n<d_{0}$ and $\forall n\geq d_{0}$. Recalling
the construction of $\mathbf{W}$ in appendix section \ref{subsec: Overfitting solution},
we have, for $\forall n<d_{0}$:
\begin{align}
 & \min_{i,n<d_{0}}\left|\frac{\mathbf{x}^{\left(n\right)\top}\mathbf{w}_{i}^{*}}{\left\Vert \mathbf{x}^{\left(n\right)}\right\Vert \left\Vert \mathbf{w}_{i}^{*}\right\Vert }\right|=\min_{n<d_{0},\pm}\frac{\left|\left(\tilde{\mathbf{w}}_{1}\pm\epsilon_{2}\hat{\mathbf{w}}_{1}\right)^{\top}\mathbf{x}^{\left(n\right)}\right|}{\left\Vert \mathbf{\tilde{w}}_{1}\pm\epsilon_{2}\hat{\mathbf{w}}_{1}\right\Vert \left\Vert \mathbf{x}^{\left(n\right)}\right\Vert }\nonumber \\
 & \overset{\left(1\right)}{=}\min_{n<d_{0},\pm}\frac{\epsilon_{2}}{\left\Vert \mathbf{\tilde{w}}_{1}\pm\epsilon_{2}\hat{\mathbf{w}}_{1}\right\Vert \left\Vert \mathbf{x}^{\left(n\right)}\right\Vert }\overset{\left(2\right)}{=}\frac{\gamma\epsilon_{1}/\sqrt{1+\gamma^{2}\epsilon_{1}^{2}}}{\left\Vert \hat{\mathbf{w}}_{1}\right\Vert \max_{n<d_{0}}\left\Vert \mathbf{x}^{\left(n\right)}\right\Vert }\,,\label{eq: n<d0}
\end{align}
where in $\left(1\right)$ we used $\forall n<d_{0}$: $\mathbf{x}^{\left(n\right)\top}\hat{\mathbf{w}}_{1}=1$
and $\mathbf{x}^{\left(n\right)\top}\mathbf{\tilde{w}}_{1}=0$ , from
the construction of $\mathbf{\tilde{w}}_{1}$ and $\hat{\mathbf{w}}_{1}$
(eqs. (\ref{eq: w zero}), (\ref{eq: norm equality}), and (\ref{eq: w_hat})),
and in $\left(2\right)$ we used the fact that $\hat{\mathbf{w}}_{1}^{\top}\mathbf{\tilde{w}}_{1}=0$
from eq. (\ref{eq: w_hat}) together with $\left\Vert \mathbf{\tilde{w}}_{1}\right\Vert =\left\Vert \hat{\mathbf{w}}_{1}\right\Vert $
from eq. (\ref{eq: norm equality}), and $\epsilon_{2}=\gamma\epsilon_{1}$
from eq. (\ref{eq: epsilon 1,2}).

For $\forall n\geq d_{0}:$
\begin{equation}
\min_{i,n\geq d_{0}}\left|\frac{\mathbf{x}^{\left(n\right)\top}\mathbf{w}_{i}^{*}}{\left\Vert \mathbf{x}^{\left(n\right)}\right\Vert \left\Vert \mathbf{w}_{i}^{*}\right\Vert }\right|=\min_{n\geq d_{0},\pm}\frac{\left|\left(\mathbf{\tilde{w}}_{1}\pm\epsilon_{1}\hat{\mathbf{w}}_{1}\right)^{\top}\mathbf{x}^{\left(n\right)}\right|}{\left\Vert \mathbf{\tilde{w}}_{1}\pm\epsilon_{1}\hat{\mathbf{w}}_{1}\right\Vert \left\Vert \mathbf{x}^{\left(n\right)}\right\Vert }\geq\frac{\left(1-\gamma\beta\right)\epsilon_{1}}{\gamma\beta\sqrt{1+\epsilon_{1}^{2}}}\min_{n\geq d_{0}}\frac{\left|\mathbf{\hat{w}}_{1}^{\top}\mathbf{x}^{\left(n\right)}\right|}{\left\Vert \mathbf{\hat{w}}_{1}\right\Vert \left\Vert \mathbf{x}^{\left(n\right)}\right\Vert }\,,\label{eq: n>=00003Dd0}
\end{equation}
where we used the fact that $\forall n\geq d_{0}:\,\epsilon_{2}\left|\hat{\mathbf{w}}_{1}^{\top}\mathbf{x}^{\left(n\right)}\right|\leq\gamma\beta\left|\mathbf{\tilde{w}}_{1}^{\top}\mathbf{x}^{\left(n\right)}\right|$,
from eq. (\ref{eq: epsilon 1,2}), and also that $\hat{\mathbf{w}}_{1}^{\top}\mathbf{\tilde{w}}_{1}=0$
from eq. (\ref{eq: w_hat}).

We substitute eqs. (\ref{eq: n<d0}) and (\ref{eq: n>=00003Dd0})
into $\mathbb{P}\left(\mathbf{X}\in\mathcal{M}_{1}^{\alpha}\left(\mathbf{W}^{*}\right)\right)$:
\begin{align}
 & \mathbb{P}\left(\mathbf{X}\in\mathcal{M}_{1}^{\alpha}\left(\mathbf{W}^{*}\right)\right)\nonumber \\
 & \geq\mathbb{P}\left(\frac{\gamma\epsilon_{1}/\sqrt{1+\gamma^{2}\epsilon_{1}^{2}}}{\left\Vert \mathbf{\hat{w}}_{1}\right\Vert \max_{n<d_{0}}\left\Vert \mathbf{x}^{\left(n\right)}\right\Vert }>\sin\alpha,\frac{\left(1-\gamma\beta\right)\epsilon_{1}}{\gamma\beta\sqrt{1+\epsilon_{1}^{2}}}\min_{n\geq d_{0}}\frac{\left|\hat{\mathbf{w}}_{1}^{\top}\mathbf{x}^{\left(n\right)}\right|}{\left\Vert \mathbf{\hat{w}}_{1}\right\Vert \left\Vert \mathbf{x}^{\left(n\right)}\right\Vert }>\sin\alpha\right)\nonumber \\
 & \overset{\left(1\right)}{\geq}\mathbb{P}\left(\frac{\gamma\kappa}{\left\Vert \mathbf{\hat{w}}_{1}\right\Vert \max_{n<d_{0}}\left\Vert \mathbf{x}^{\left(n\right)}\right\Vert }>\sin\alpha,\frac{\left(1-\gamma\beta\right)}{\gamma\beta}\kappa\min_{n\geq d_{0}}\frac{x_{1}^{\left(n\right)}}{\left\Vert \mathbf{x}^{\left(n\right)}\right\Vert }>\sin\alpha,\frac{\epsilon_{1}}{\sqrt{1+\epsilon_{1}^{2}}}>\kappa\right)\label{eq:rotate}\\
 & \overset{\left(2\right)}{\geq}\mathbb{P}\left(\frac{\gamma\kappa}{\eta\sin\alpha}>\left\Vert \mathbf{\hat{w}}_{1}\right\Vert ,\eta>\max_{n<d_{0}}\left\Vert \mathbf{x}^{\left(n\right)}\right\Vert \right)\mathbb{P}\left(\frac{\left(1-\gamma\beta\right)}{\gamma\beta}\kappa\min_{n\geq d_{0}}\frac{x_{1}^{\left(n\right)}}{\left\Vert \mathbf{x}^{\left(n\right)}\right\Vert }>\sin\alpha,\frac{\epsilon_{1}}{\sqrt{1+\epsilon_{1}^{2}}}>\kappa\right)\,,\nonumber 
\end{align}
where in $\left(1\right)$ we rotate the axes so that $\hat{\mathbf{w}}_{1}\propto\left[1,0,0\dots,0\right]$
axes $\tilde{\mathbf{w}}_{1}\propto\left[0,1,0,0\dots,0\right]$ \textendash{}
this is possible due to the spherical symmetry of $\mathbf{x}^{\left(n\right)}$,
and the fact that $\hat{\mathbf{w}}_{1}$ and $\tilde{\mathbf{w}}_{1}$
are functions of $\mathbf{x}^{\left(n\right)}$ for $n<d_{0}$ (from
eqs. (\ref{eq: w_hat}) and (\ref{eq: w zero})), and as such, they
are independent from $\mathbf{x}^{\left(n\right)}$ for $n\geq d_{0}$,
in $(2)$ we use that fact that $\left\Vert \mathbf{\hat{w}}_{1}\right\Vert $
and $\max_{n<d_{0}}\left\Vert \mathbf{x}^{\left(n\right)}\right\Vert $
are functions of $\mathbf{x}^{\left(n\right)}$ for $n<d_{0}$ , and
as such, they are independent from $\mathbf{x}^{\left(n\right)}$
for $n\geq d_{0}$. Thus, 
\begin{align}
 & \mathbb{P}\left(\mathbf{X}\in\mathcal{M}_{1}^{\alpha}\left(\mathbf{W}^{*}\right)\right)\nonumber \\
 & \geq\left(1-\mathbb{P}\left(\frac{\gamma\kappa}{\eta\sin\alpha}\leq\left\Vert \mathbf{\hat{w}}_{1}\right\Vert \,\,\mathrm{or}\,\,\eta\leq\max_{n<d_{0}}\left\Vert \mathbf{x}^{\left(n\right)}\right\Vert \right)\right)\nonumber \\
 & \cdot\left(1-\mathbb{P}\left(\frac{\left(1-\gamma\beta\right)}{\gamma\beta}\kappa\min_{n\geq d_{0}}\frac{x_{1}^{\left(n\right)}}{\left\Vert \mathbf{x}^{\left(n\right)}\right\Vert }\leq\sin\alpha\,\,\mathrm{or\,}\,\frac{\epsilon_{1}}{\sqrt{1+\epsilon_{1}^{2}}}\leq\kappa\right)\right)\nonumber \\
 & \overset{\left(1\right)}{\geq}\left(1-\mathbb{P}\left(\frac{\gamma\kappa}{\eta\sin\alpha}\leq\left\Vert \mathbf{\hat{w}}_{1}\right\Vert \right)-\mathbb{P}\left(\eta\leq\max_{n<d_{0}}\left\Vert \mathbf{x}^{\left(n\right)}\right\Vert \right)\right)\nonumber \\
 & \cdot\left(1-\mathbb{P}\left(\frac{\left(1-\gamma\beta\right)}{\gamma\beta}\kappa\min_{n\geq d_{0}}\frac{x_{1}^{\left(n\right)}}{\left\Vert \mathbf{x}^{\left(n\right)}\right\Vert }\leq\sin\alpha\right)-\mathbb{P}\left(\frac{\epsilon_{1}}{\sqrt{1+\epsilon_{1}^{2}}}\leq\kappa\right)\right)\nonumber \\
 & =\left(\mathbb{P}\left(\eta>\max_{n<d_{0}}\left\Vert \mathbf{x}^{\left(n\right)}\right\Vert \right)-\mathbb{P}\left(\frac{\gamma\kappa}{\eta\sin\alpha}\leq\left\Vert \mathbf{\hat{w}}_{1}\right\Vert \right)\right)\label{eq: P(margin)}\\
 & \cdot\left(\mathbb{P}\left(\frac{\left(1-\gamma\beta\right)}{\gamma\beta}\kappa\min_{n\geq d_{0}}\frac{x_{1}^{\left(n\right)}}{\left\Vert \mathbf{x}^{\left(n\right)}\right\Vert }>\sin\alpha\right)-\mathbb{P}\left(\frac{\epsilon_{1}}{\sqrt{1+\epsilon_{1}^{2}}}\leq\kappa\right)\right)\,,\nonumber 
\end{align}
where in $\left(1\right)$ we use the union bound on both probability
terms. 

All that remains is to calculate each remaining probability term in
eq. (\ref{eq: P(margin)}). First, we have
\begin{align}
 & \mathbb{P}\left(\frac{\epsilon_{1}}{\sqrt{1+\epsilon_{1}^{2}}}\leq\kappa\right)=1-\mathbb{P}\left(\frac{\kappa}{\sqrt{1-\kappa^{2}}}<\epsilon_{1}\right)\nonumber \\
 & \overset{\left(1\right)}{=}1-\mathbb{P}\left(\min_{n\geq d_{0}}\frac{\left|\tilde{\mathbf{w}}_{i}^{\top}\mathbf{x}^{\left(n\right)}\right|}{\left|\hat{\mathbf{w}}_{i}^{\top}\mathbf{x}^{\left(n\right)}\right|}>\frac{\kappa}{\sqrt{1-\kappa^{2}}}\frac{1}{\beta}\right)\overset{\left(2\right)}{=}1-\mathbb{P}\left(\min_{n\geq d_{0}}\left|\frac{x_{2}^{\left(n\right)}}{x_{1}^{\left(n\right)}}\right|>\frac{\kappa}{\sqrt{1-\kappa^{2}}}\frac{1}{\beta}\right)\nonumber \\
 & \overset{\left(3\right)}{=}1-\left[\mathbb{P}\left(\left|\frac{x_{2}^{\left(1\right)}}{x_{1}^{\left(1\right)}}\right|>\frac{\kappa}{\sqrt{1-\kappa^{2}}}\frac{1}{\beta}\right)\right]^{N-d_{0}-1}\overset{\left(4\right)}{\leq}1-\left[1-\frac{2}{\pi}\arctan\left(\frac{\kappa}{\sqrt{1-\kappa^{2}}}\frac{1}{\beta}\right)\right]^{N}\,,\label{eq: epsilon bound}
\end{align}
where in $\left(1\right)$ we used eq. (\ref{eq: epsilon 1,2}), in
$\left(2\right)$ we recall that in eq. (\ref{eq:rotate}) we rotated
the axes so that $\hat{\mathbf{w}}_{1}\propto\left[1,0,0\dots,0\right]$
axes $\tilde{\mathbf{w}}_{1}\propto\left[0,1,0,0\dots,0\right]$,
in $\left(3\right)$ we used the independence of different $\mathbf{x}^{\left(n\right)}$,
and in $\left(4\right)$ we used the fact that the ratio of two independent
Gaussian variables is distributed according to the symmetric Cauchy
distribution, which has the cumulative distribution function $\mathbb{P}\left(X>x\right)=\frac{1}{2}-\frac{1}{\pi}\arctan\left(x\right)$,
and therefore $\mathbb{P}\left(\left|X\right|>x\right)=1-\frac{2}{\pi}\arctan\left(x\right)$.

Second, we use eq. (\ref{eq: Beta Bound 2})

\begin{equation}
\mathbb{P}\left(\min_{n\geq d_{0}}\frac{x_{1}^{\left(n\right)}}{\left\Vert \mathbf{x}^{\left(n\right)}\right\Vert }>\frac{\gamma\beta\sin\alpha}{\left(1-\gamma\beta\right)\kappa}\right)>\left[1-\frac{2\gamma\beta\sin\alpha}{\left(1-\gamma\beta\right)\kappa B\left(\frac{1}{2},\frac{d_{0}-1}{2}\right)}\right]^{N}.\label{eq: using the beta bound}
\end{equation}
Third,$\left\Vert \mathbf{x}^{\left(n\right)}\right\Vert ^{2}$ is
distributed according to the chi-square distribution of order $d_{0}$,
so for $\eta^{2}>d_{0}$,
\begin{align*}
\mathbb{P}\left(\left\Vert \mathbf{x}^{\left(n\right)}\right\Vert ^{2}\geq\eta^{2}\right)\leq & \left(\eta^{2}\exp\left(1-\eta^{2}/d_{0}\right)/d_{0}\right)^{d_{0}/2}\,.
\end{align*}
Therefore, 
\begin{equation}
\mathbb{P}\left(\max_{n<d_{0}}\left\Vert \mathbf{x}^{\left(n\right)}\right\Vert ^{2}<\eta^{2}\right)>\left[1-\left(\eta^{2}\exp\left(1-\eta^{2}/d_{0}\right)/d_{0}\right)^{d_{0}/2}\right]^{d_{0}-1}\,.\label{eq: eta}
\end{equation}
Lastly, we bound $\left\Vert \tilde{\mathbf{w}}_{1}\right\Vert =\left\Vert \hat{\mathbf{w}}_{1}\right\Vert $
(from eq. (\ref{eq: norm equality})). From eq. (\ref{eq: w_hat}),
we have 
\begin{equation}
\hat{\mathbf{w}}_{1}^{\top}\mathbf{X}_{\left[d_{0}-1\right]}=\left[1,\dots,1,1\right]\,,\label{eq: w1 X}
\end{equation}
where $\mathbf{X}_{\left[d_{0}-1\right]}$ has a singular value decomposition
\[
\mathbf{X}_{\left[d_{0}-1\right]}=\sum_{i=1}^{d_{0}}\sigma_{i}\mathbf{u}_{i}\mathbf{v}_{i}^{\top}\,,
\]
with $\sigma_{i}$ being the singular values, and $\mathbf{u}_{i}$
and $\mathbf{v}_{i}$ being the singular vectors. The singular values
are ordered from smallest to largest, and $\sigma_{1}=0$ with $\mathbf{u}_{1}=\tilde{\mathbf{w}}_{1}$,
from eq. (\ref{eq: w zero}). With probability 1, the other $d_{0}-1$
singular value are non-zero: they are the square roots of the eigenvalues
of the random matrix $\mathbf{X}_{\left[d_{0}-1\right]}^{\top}\mathbf{X}_{\left[d_{0}-1\right]}\in\mathbb{R}^{d_{0}-1\times d_{0}-1}$.
Taking the squared norm of eq. (\ref{eq: w1 X}), we have
\begin{equation}
d_{0}-1=\hat{\mathbf{w}}_{1}^{\top}\mathbf{X}_{\left[d_{0}-1\right]}\mathbf{X}_{\left[d_{0}-1\right]}^{\top}\hat{\mathbf{w}}_{1}=\sum_{i=1}^{d_{0}}\sigma_{i}^{2}\left(\mathbf{u}_{i}^{\top}\hat{\mathbf{w}}_{1}\right)^{2}\geq\sigma_{2}^{2}\left\Vert \hat{\mathbf{w}}_{1}\right\Vert ^{2}\,,\label{eq: sigma w1}
\end{equation}
where the last inequality stems from the fact that $\mathbf{u}_{1}^{\top}\hat{\mathbf{w}}_{1}=\mathbf{\tilde{w}}_{1}^{\top}\hat{\mathbf{w}}_{1}=0$
(from eq. (\ref{eq: w_hat})), so the minimal possible value is attained
when $\mathbf{u}_{2}^{\top}\hat{\mathbf{w}}_{1}=\left\Vert \hat{\mathbf{w}}_{1}\right\Vert $.
The minimal nonzero singular value, $\sigma_{2}$, can be bounded
using the following result from \citep[eq. (3.2)]{Rudelson2010} 
\[
\mathbb{P}\left(\min_{\mathbf{r}\in\mathbb{R}^{d_{0}}}\left\Vert \mathbf{X}_{\left[d_{0}\right]}\mathbf{r}\right\Vert \leq\eta d_{0}^{-1/2}\right)\leq\eta.
\]
Since

\[
\sigma_{2}=\min_{\mathbf{r}\in\mathbb{R}^{d_{0}-1}}\left\Vert \mathbf{X}_{\left[d_{0}-1\right]}\mathbf{r}\right\Vert \geq\min_{\mathbf{r}\in\mathbb{R}^{d_{0}}}\left\Vert \mathbf{X}_{\left[d_{0}\right]}\mathbf{r}\right\Vert 
\]
we have, 
\[
\mathbb{P}\left(\sigma_{2}<\eta d_{0}^{-1/2}\right)\leq\eta.
\]
Combining this with eq. (\ref{eq: sigma w1}) we get
\begin{equation}
\mathbb{P}\left(\frac{\beta\kappa}{\eta\sin\alpha}<\left\Vert \mathbf{w}_{1}\right\Vert \right)\leq\frac{\eta d_{0}}{\beta\kappa}\sin\alpha.\label{eq: omega}
\end{equation}
Lastly, combining eqs. (\ref{eq: epsilon bound}), (\ref{eq: using the beta bound}),
(\ref{eq: eta}) and (\ref{eq: omega}) into eqs. (\ref{eq: P(M_alpha)})
and (\ref{eq: P(margin)}), we get, for $\eta^{2}>d_{0}$,
\begin{align*}
 & \mathbb{P}\left(\mathbf{X}\in\mathcal{M}^{\alpha}\left(\mathbf{W}^{*}\right)\right)\\
 & \geq1-d_{1}^{*}\left(1-\left(\left[1-\left(\eta^{2}\exp\left(1-\eta^{2}/d_{0}\right)/d_{0}\right)^{d_{0}/2}\right]^{d_{0}-1}-\frac{\eta d_{0}}{\gamma\kappa}\sin\alpha\right)\right.\\
 & \cdot\left.\left(\left[1-\frac{2\gamma\beta\sin\alpha}{\left(1-\gamma\beta\right)\kappa B\left(\frac{1}{2},\frac{d_{0}-1}{2}\right)}\right]^{N}-\left[1-\frac{2}{\pi}\arctan\left(\frac{\kappa}{\sqrt{1-\kappa^{2}}}\frac{1}{\beta}\right)\right]^{N}\right)\right)\\
 & \geq1-d_{1}^{*}\left(1-\left(\left[1-\left(\log d_{0}\exp\left(1-\log d_{0}\right)\right)^{d_{0}/2}\right]^{d_{0}-1}-\frac{2d_{0}^{3/2}\sqrt{\log d_{0}}}{d_{1}^{*}N}\right)\right.\\
 & \left.\left(\left[1-\sqrt{\frac{8}{\pi}}\frac{1}{d_{1}^{*}d_{0}^{1/2}N}+O\left(\frac{1}{Nd_{1}^{*}d_{0}^{3/2}}\right)\right]^{N}-0.45^{N}\right)\right)\,,
\end{align*}
where in the last line we take $\beta=\gamma=\kappa=1/\sqrt{2}$,
$\eta=d_{0}^{1/2}\sqrt{\log d_{0}}$, $\sin\alpha=1/\left(d_{1}^{*}d_{0}N\right)$.
Using the asymptotic expansion of the beta function $B\left(\frac{1}{2},x\right)=\sqrt{\pi/x}+O\left(x^{-3/2}\right)$
for large $x$, we obtain, for $\sin\alpha=1/\left(d_{1}^{*}d_{0}N\right)$
\begin{align*}
 & 1-\mathbb{P}\left(\mathbf{X}\in\mathcal{M}^{\alpha}\left(\mathbf{W}^{*}\right)\right)\\
 & \leq d_{1}^{*}\left(1-\left(\left[1-\exp\left(-\frac{d_{0}}{2}\log\left(\frac{d_{0}}{e\log d_{0}}\right)\right)\right]^{d_{0}-1}-\frac{2d_{0}^{1/2}\sqrt{\log d_{0}}}{d_{1}^{*}N}\right)\right.\\
 & \left.\cdot\left(\left[1-\sqrt{\frac{8}{\pi}}\frac{1}{Nd_{1}^{*}d_{0}^{1/2}}+O\left(\frac{1}{Nd_{1}^{*}d_{0}^{3/2}}\right)\right]^{N}-2^{-N}\right)\right)\\
 & =d_{1}^{*}\left(1-\left(1-\frac{2d_{0}^{1/2}\sqrt{\log d_{0}}}{d_{1}^{*}N}+O\left(d_{0}\exp\left(-\frac{d_{0}}{2}\log\left(\frac{d_{0}}{\log d_{0}}\right)\right)\right)\right)\right.\\
 & \cdot\left.\left(1-\sqrt{\frac{8}{\pi}}\frac{1}{d_{1}^{*}d_{0}^{1/2}}+O\left(\frac{1}{d_{1}^{*}d_{0}^{3/2}}+\frac{1}{d_{1}^{*2}d_{0}N}+d_{1}^{*}2^{-N}+d_{1}^{*}d_{0}\exp\left(-\frac{d_{0}}{2}\log\left(\frac{d_{0}}{\log d_{0}}\right)\right)\right)\right)\right)\\
 & =\sqrt{\frac{8}{\pi}}\frac{1}{d_{0}^{1/2}}+\frac{2d_{0}^{1/2}\sqrt{\log d_{0}}}{N}+O\left(\frac{1}{d_{0}^{3/2}}+\frac{d_{0}^{1/4}}{d_{1}^{*}N}+d_{1}^{*}2^{-N}+d_{1}^{*}d_{0}\exp\left(-\frac{d_{0}}{2}\log\left(\frac{d_{0}}{\log d_{0}}\right)\right)\right).
\end{align*}
Thus, taking the log, and using $\log\left(1-x\right)=-x+O\left(x^{2}\right)$,
we obtain, for$\sin\alpha=1/\left(d_{1}^{*}d_{0}N\right)$
\begin{align*}
 & \log\mathbb{P}\left(\mathbf{X}\in\mathcal{M}^{\alpha}\left(\mathbf{W}^{*}\right)\right)\\
 & \geq\log\left(1-\sqrt{\frac{8}{\pi}}\frac{1}{d_{0}^{1/2}}-\frac{2d_{0}^{1/2}\sqrt{\log d_{0}}}{N}+O\left(\frac{1}{d_{0}^{3/2}}+\frac{d_{0}^{1/4}}{d_{1}^{*}N}+d_{1}^{*}2^{-N}+d_{0}\exp\left(-\frac{d_{0}}{2}\log\left(\frac{d_{0}}{\log d_{0}}\right)\right)\right)\right)\\
 & =-\sqrt{\frac{8}{\pi}}\frac{1}{d_{0}^{1/2}}-\frac{2d_{0}^{1/2}\sqrt{\log d_{0}}}{N}+O\left(\frac{1}{d_{0}^{3/2}}+\frac{d_{0}^{1/4}}{d_{1}^{*}N}+d_{1}^{*}2^{-N}+d_{0}\exp\left(-\frac{d_{0}}{2}\log\left(\frac{d_{0}}{\log d_{0}}\right)\right)\right)\,.
\end{align*}
Recall that $d_{1}^{*}\triangleq4\left\lceil N/\left(2d_{0}-2\right)\right\rceil \dot{=}N/d_{0}$.
Taking the limit $N\rightarrow\infty$, $d_{0}\rightarrow\infty$
with $d_{1}^{*}\dot{\leq}d_{0}\dot{\leq}N$, we have
\[
\mathbb{P}\left(\mathbf{X}\notin\mathcal{M}^{\alpha}\left(\mathbf{W}^{*}\right)\right)\dot{\leq}1-\exp\left(-\sqrt{\frac{8}{\pi}}d_{0}^{-1/2}-\frac{2d_{0}^{1/2}\sqrt{\log d_{0}}}{N}\right)\leq\sqrt{\frac{8}{\pi}}d_{0}^{-1/2}+\frac{2d_{0}^{1/2}\sqrt{\log d_{0}}}{N}
\]
\end{proof}

\part{Numerical Experiments - implementation details\label{sec:Implementation-details}}

Code and trained models for CIFAR and ImageNet results is available
here \url{https://github.com/MNNsMinima/Paper}. In MNIST, CIFAR and
ImageNet we performed binary classification on between the original
odd and even class numbers. In we performed this binary classification
between digits $0-4$ and $5-9$. Weights were initialized to be uniform
with mean zero and variance $2/d$, where $d$ is fan-in (here the
width of the previous neuron layer), as suggested in \citep{He2015a}.
In each epoch we randomly permuted the dataset and used the Adam \citep{Kingma2015}
optimization method (a variant of SGD) with $\beta_{1}=0.9,\beta_{2}=0.99,\eps=10^{-8}$.
Different learning rates and mini-batch sizes were selected for each
dataset and architecture. In CIFAR10 and ImageNet we used a learning-rate
of $\alpha=10^{-3}$ and a mini-batch size of $1024$; also, ZCA whitening
of the training samples was done to remove correlations between the
input dimensions, allowing faster convergence. We define $L$ as the
number of weight layers. For the random dataset we use a mini-batch
size of $\left\lfloor \min\left(N/2,d/2\right)\right\rfloor $ with
learning rate $\alpha=0.1$ and $0.05$, for $L=2$ and $3$, respectively.
In the random data parameter scans the training was done for no more
than $4000$ epochs  \textendash{} we stopped if $\text{MCE}=0$
was reached.

\begin{thebibliography}{61}
\providecommand{\natexlab}[1]{#1}
\expandafter\ifx\csname urlstyle\endcsname\relax
  \providecommand{\doi}[1]{doi: #1}\else
  \providecommand{\doi}{doi: \begingroup \urlstyle{rm}\Url}\fi

\bibitem[Allman et~al.(2009)Allman, Matias, and Rhodes]{Allman2009}
Elizabeth~S. Allman, Catherine Matias, and John~A. Rhodes.
\newblock {Identifiability of parameters in latent structure models with many
  observed variables}.
\newblock \emph{Annals of Statistics}, 37\penalty0 (6 A):\penalty0 3099--3132,
  2009.
\newblock ISSN 00905364.
\newblock \doi{10.1214/09-AOS689}.

\bibitem[Andoni et~al.(2014)Andoni, Panigrahy, Valiant, and Zhang]{Andoni2014}
A~Andoni, R~Panigrahy, G~Valiant, and L~Zhang.
\newblock {Learning Polynomials with Neural Networks}.
\newblock In \emph{ICML}, 2014.

\bibitem[Baldi(1989)]{Baldi1989}
Pierre Baldi.
\newblock {Linear Learning: Landscapes and Algorithms}.
\newblock \emph{Advances in Neural Information Processing Systems 1}, \penalty0
  (1):\penalty0 65--72, 1989.

\bibitem[Baum(1988)]{Baum1988}
Eric~B. Baum.
\newblock {On the capabilities of multilayer perceptrons}.
\newblock \emph{Journal of Complexity}, 4\penalty0 (3):\penalty0 193--215,
  1988.
\newblock ISSN 10902708.
\newblock \doi{10.1016/0885-064X(88)90020-9}.

\bibitem[Bottou(1998)]{Bottou1998}
L~Bottou.
\newblock {Online learning and stochastic approximations}.
\newblock In \emph{On-line learning in neural networks}, pp.\  9--42. 1998.
\newblock ISBN 978-0521117913.

\bibitem[Brutzkus \& Globerson(2017)Brutzkus and Globerson]{Brutzkus2017}
Alon Brutzkus and Amir Globerson.
\newblock {Globally Optimal Gradient Descent for a ConvNet with Gaussian
  Inputs}.
\newblock \emph{arXiv}, 2017.

\bibitem[Butler(2007)]{Butler2007}
Ronald~W. Butler.
\newblock \emph{{Saddlepoint Approximations with Applications}}.
\newblock 2007.
\newblock ISBN 9780511619083.
\newblock \doi{10.1017/CBO9780511619083}.

\bibitem[Chen \& Peng(2016)Chen and Peng]{Chen2016}
Yingtong Chen and Jigen Peng.
\newblock {Influences of preconditioning on the mutual coherence and the
  restricted isometry property of Gaussian/Bernoulli measurement matrices}.
\newblock \emph{Linear and Multilinear Algebra}, 64\penalty0 (9):\penalty0
  1750--1759, 2016.
\newblock ISSN 0308-1087.
\newblock \doi{10.1080/03081087.2015.1116495}.

\bibitem[Choromanska et~al.(2015)Choromanska, Henaff, Mathieu, Arous, and
  LeCun]{Choromanska2014}
Anna Choromanska, Mikael Henaff, Michael Mathieu, G{\'{e}}rard~Ben Arous, and
  Y~LeCun.
\newblock {The Loss Surfaces of Multilayer Networks}.
\newblock \emph{AISTATS15}, 38, 2015.

\bibitem[Cover(1965)]{Cover1965}
T~M Cover.
\newblock {Geometrical and statistical properties of systems of linear
  inequalities with applications in pattern recognition}.
\newblock \emph{Electronic Computers, IEEE Transactions on}, \penalty0
  (3):\penalty0 326--334, 1965.

\bibitem[Cybenko(1989)]{Cybenko1989}
G~Cybenko.
\newblock {Approximation by superpositions of a sigmoidal function}.
\newblock \emph{Mathematics of Control, Signals, and Systems (MCSS)},
  2:\penalty0 303--314, 1989.

\bibitem[Dauphin et~al.(2014)Dauphin, Pascanu, and Gulcehre]{Dauphin2014}
YN~Dauphin, Razvan Pascanu, and Caglar Gulcehre.
\newblock {Identifying and attacking the saddle point problem in
  high-dimensional non-convex optimization}.
\newblock In \emph{NIPS}, pp.\  1--9, 2014.

\bibitem[Du et~al.(2017)Du, Lee, and Tian]{Du2017a}
Simon~S. Du, Jason~D. Lee, and Yuandong Tian.
\newblock {When is a Convolutional Filter Easy To Learn?}
\newblock \emph{arXiv}, sep 2017.

\bibitem[Dziugaite \& Roy(2017)Dziugaite and Roy]{Dziugaite2017}
Gintare~Karolina Dziugaite and Daniel~M. Roy.
\newblock {Computing Nonvacuous Generalization Bounds for Deep (Stochastic)
  Neural Networks with Many More Parameters than Training Data}.
\newblock \emph{ArXiv}, 2017.

\bibitem[Elad(2010)]{Elad2010}
Michael Elad.
\newblock \emph{{Sparse and redundant representations: from theory to
  applications in signal and image processing}}.
\newblock Springer New York, New York, NY, 2010.

\bibitem[Feng et~al.(2016)Feng, Zahavy, Kang, Xu, and Mannor]{Feng2016}
Jiashi Feng, Tom Zahavy, Bingyi Kang, Huan Xu, and Shie Mannor.
\newblock {Ensemble Robustness of Deep Learning Algorithms}.
\newblock \emph{ArXiv}, feb 2016.

\bibitem[Freeman \& Bruna(2016)Freeman and Bruna]{Freeman2016a}
C.~Daniel Freeman and Joan Bruna.
\newblock {Topology and Geometry of Deep Rectified Network Optimization
  Landscapes}.
\newblock \emph{ArXiv: 1611.01540}, 2016.

\bibitem[Fukumizu \& Amari(2000)Fukumizu and Amari]{Fukumizu2000}
K.~Fukumizu and S.~Amari.
\newblock {Local minima and plateaus in hierarchical structures of multilayer
  perceptrons}.
\newblock \emph{Neural Networks}, 13:\penalty0 317--327, 2000.
\newblock ISSN 08936080.
\newblock \doi{10.1016/S0893-6080(00)00009-5}.

\bibitem[Goodfellow et~al.(2015)Goodfellow, Vinyals, and Saxe]{Goodfellow2014}
Ian~J. Goodfellow, Oriol Vinyals, and Andrew~M. Saxe.
\newblock {Qualitatively characterizing neural network optimization problems}.
\newblock In \emph{ICLR}, 2015.

\bibitem[Gori \& Tesi(1992)Gori and Tesi]{Gori1992}
Marco Gori and Alberto Tesi.
\newblock {On the problem of local minima in backpropagation}.
\newblock \emph{IEEE Transactions on Pattern Analysis and Machine
  Intelligence}, 14\penalty0 (1):\penalty0 76--86, 1992.
\newblock ISSN 01628828.
\newblock \doi{10.1109/34.107014}.

\bibitem[Hardt \& Ma(2017)Hardt and Ma]{Hardt2016}
Moritz Hardt and Tengyu Ma.
\newblock {Identity Matters in Deep Learning}.
\newblock \emph{ICLR}, pp.\  1--19, 2017.

\bibitem[He et~al.(2016)He, Zhang, Ren, and Sun]{He2015}
K~He, X~Zhang, S~Ren, and J.~Sun.
\newblock {Deep Residual Learning for Image Recognition}.
\newblock In \emph{Proceedings of the IEEE Conference on Computer Vision and
  Pattern Recognition}, pp.\  770--778, 2016.

\bibitem[He et~al.(2015)He, Zhang, Ren, and Sun]{He2015a}
Kaiming He, Xiangyu Zhang, Shaoqing Ren, and Jian Sun.
\newblock {Delving Deep into Rectifiers: Surpassing Human-Level Performance on
  ImageNet Classification}.
\newblock In \emph{Proceedings of the IEEE International Conference on Computer
  Vision}, pp.\  1026--1034, 2015.
\newblock ISBN 978-1-4673-8391-2.
\newblock \doi{10.1109/ICCV.2015.123}.

\bibitem[Hornik(1991)]{Hornik1991}
K~Hornik.
\newblock {Approximation capabilities of multilayer feedforward networks}.
\newblock \emph{Neural networks}, 4\penalty0 (1989):\penalty0 251--257, 1991.

\bibitem[Huang et~al.(2006)Huang, Zhu, and Siew]{Huang2006}
Guang-Bin Huang, Qin-Yu Zhu, and Chee-Kheong Siew.
\newblock {Extreme learning machine: Theory and applications}.
\newblock \emph{Neurocomputing}, 70\penalty0 (1-3):\penalty0 489--501, 2006.
\newblock ISSN 09252312.
\newblock \doi{10.1016/j.neucom.2005.12.126}.

\bibitem[Janzamin et~al.(2015)Janzamin, Sedghi, and Anandkumar]{Janzamin2015}
M~Janzamin, H~Sedghi, and A~Anandkumar.
\newblock {Beating the Perils of Non-Convexity: Guaranteed Training of Neural
  Networks using Tensor Methods}.
\newblock \emph{ArXiv:1506.08473}, pp.\  1--25, 2015.

\bibitem[Kawaguchi(2016)]{Kawaguchi2016}
Kenji Kawaguchi.
\newblock {Deep Learning without Poor Local Minima}.
\newblock In \emph{NIPS}, 2016.

\bibitem[Kingma \& Ba(2015)Kingma and Ba]{Kingma2015}
Diederik~P Kingma and Jimmy~Lei Ba.
\newblock {Adam: a Method for Stochastic Optimization}.
\newblock In \emph{ICLR}, pp.\  1--13, 2015.

\bibitem[Krizhevsky(2014)]{Krizhevsky2014}
Alex Krizhevsky.
\newblock {One weird trick for parallelizing convolutional neural networks}.
\newblock \emph{arXiv:1404.5997}, 2014.

\bibitem[LeCun et~al.(2015)LeCun, Bengio, and Hinton]{LeCun2015}
Y~LeCun, Yoshua Bengio, and Geoffrey Hinton.
\newblock {Deep learning}.
\newblock \emph{Nature}, 521\penalty0 (7553):\penalty0 436--444, 2015.
\newblock ISSN 0028-0836.
\newblock \doi{10.1038/nature14539}.

\bibitem[Lee et~al.(2016)Lee, Simchowitz, Jordan, and Recht]{Lee2016}
Jason~D. Lee, Max Simchowitz, Michael~I. Jordan, and Benjamin Recht.
\newblock {Gradient Descent Converges to Minimizers}.
\newblock \emph{Conference on Learning Theory}, 2016.

\bibitem[Li \& Yuan(2017)Li and Yuan]{Li2017}
Yuanzhi Li and Yang Yuan.
\newblock {Convergence Analysis of Two-layer Neural Networks with ReLU
  Activation}.
\newblock \emph{arXiv}, may 2017.

\bibitem[Livni et~al.(2014)Livni, Shalev-Shwartz, and Shamir]{Livni2014}
Roi Livni, S~Shalev-Shwartz, and Ohad Shamir.
\newblock {On the Computational Efficiency of Training Neural Networks}.
\newblock \emph{NIPS}, 2014.

\bibitem[Lu \& Kawaguchi(2017)Lu and Kawaguchi]{Lu2017}
Haihao Lu and Kenji Kawaguchi.
\newblock {Depth Creates No Bad Local Minima}.
\newblock \emph{ArXiv}, \penalty0 (2014):\penalty0 1--9, 2017.

\bibitem[Maas et~al.(2013)Maas, Hannun, and Ng]{Maas2013}
Andrew~L. Maas, Awni~Y. Hannun, and Andrew~Y. Ng.
\newblock {Rectifier Nonlinearities Improve Neural Network Acoustic Models}.
\newblock In \emph{Proceedings of the 30 th International Conference on Machine
  Learning}, pp.\ ~6, 2013.

\bibitem[Nguyen \& Hein(2017)Nguyen and Hein]{Nguyen2017}
Quynh Nguyen and Matthias Hein.
\newblock {The loss surface of deep and wide neural networks}.
\newblock \emph{Arxiv}, 2017.

\bibitem[Nilsson(1965)]{Nilsson1965}
Nils~J. Nilsson.
\newblock \emph{{Learning machines}}.
\newblock McGraw-Hill New York, 1965.

\bibitem[Pemantle(1990)]{Pemantle1990}
R~Pemantle.
\newblock {Nonconvergence to unstable points in urn models and stochastic
  approximations}.
\newblock \emph{The Annals of Probability}, 18\penalty0 (2):\penalty0 698--712,
  1990.

\bibitem[Pennington \& Bahri(2017)Pennington and Bahri]{Pennington2017}
Jeffrey Pennington and Yasaman Bahri.
\newblock {Geometry of Neural Network Loss Surfaces via Random Matrix Theory}.
\newblock \emph{Proceedings of the 34th International Conference on Machine
  Learning}, 70:\penalty0 2798--2806, 2017.
\newblock ISSN 1938-7228.

\bibitem[Poole et~al.(2016)Poole, Lahiri, Raghu, Sohl-Dickstein, and
  Ganguli]{Poole2016}
Ben Poole, Subhaneil Lahiri, Maithra Raghu, Jascha Sohl-Dickstein, and Surya
  Ganguli.
\newblock {Exponential expressivity in deep neural networks through transient
  chaos}.
\newblock In \emph{NIPS}, 2016.

\bibitem[Rudelson \& Vershynin(2010)Rudelson and Vershynin]{Rudelson2010}
Mark Rudelson and Roman Vershynin.
\newblock {Non-asymptotic Theory of Random Matrices: Extreme Singular Values}.
\newblock \emph{Proceedings of the International Congress of Mathematicians},
  pp.\  1576--1602, 2010.
\newblock \doi{10.1142/9789814324359_0111}.

\bibitem[Safran \& Shamir(2016)Safran and Shamir]{Safran2015}
Itay Safran and Ohad Shamir.
\newblock {On the Quality of the Initial Basin in Overspecified Neural
  Networks}.
\newblock In \emph{ICML}, 2016.

\bibitem[Saxe et~al.(2014)Saxe, McClelland, and Ganguli]{Saxe2013}
A~M Saxe, J~L. McClelland, and S~Ganguli.
\newblock {Exact solutions to the nonlinear dynamics of learning in deep linear
  neural networks.}
\newblock \emph{ICLR}, 2014.

\bibitem[Shalev-Shwartz et~al.(2017)Shalev-Shwartz, Shamir, and
  Shammah]{Shalev-Shwartz2017}
Shai Shalev-Shwartz, Ohad Shamir, and Shaked Shammah.
\newblock {Weight Sharing is Crucial to Succesful Optimization}.
\newblock jun 2017.

\bibitem[Shamir(2016)]{Shamir2016}
Ohad Shamir.
\newblock {Distribution Specific Hardness of Learning Neural Networks}.
\newblock \emph{arXiv preprint arXiv:1609.01037}, pp.\  1--26, 2016.

\bibitem[Shen(2016)]{Shen2016}
Hao Shen.
\newblock {Designing and Training Feedforward Neural Networks: A Smooth
  Optimisation Perspective}.
\newblock \emph{ArXiv}, \penalty0 (i):\penalty0 1--19, 2016.

\bibitem[S{\'{i}}ma(2002)]{Sima2002}
Jir{\'{i}} S{\'{i}}ma.
\newblock {Training a single sigmoidal neuron is hard.}
\newblock \emph{Neural computation}, 14\penalty0 (11):\penalty0 2709--28, 2002.
\newblock ISSN 0899-7667.
\newblock \doi{10.1162/089976602760408035}.

\bibitem[Slepian(1962)]{Slepian1962}
D~Slepian.
\newblock {The One Sided Problem for Gaussian Noise}.
\newblock \emph{Bell System Technical Journal}, 1962.

\bibitem[Sokolic et~al.(2016)Sokolic, Giryes, Sapiro, and
  Rodrigues]{Sokolic2016}
Jure Sokolic, Raja Giryes, Guillermo Sapiro, and Miguel R.~D. Rodrigues.
\newblock {Robust Large Margin Deep Neural Networks}, 2016.

\bibitem[Soltanolkotabi et~al.(2017)Soltanolkotabi, Javanmard, and
  Lee]{Soltanolkotabi2017}
Mahdi Soltanolkotabi, Adel Javanmard, and Jason~D. Lee.
\newblock {Theoretical insights into the optimization landscape of
  over-parameterized shallow neural networks}.
\newblock \emph{arXiv}, jul 2017.

\bibitem[Soudry \& Carmon(2016)Soudry and Carmon]{Soudry2016}
D.~Soudry and Y~Carmon.
\newblock {No bad local minima: Data independent training error guarantees for
  multilayer neural networks}.
\newblock In \emph{arXiv:1605.08361}, 2016.

\bibitem[Sun et~al.(2015)Sun, Qu, and Wright]{Sun2015}
Ju~Sun, Qing Qu, and John Wright.
\newblock {When Are Nonconvex Problems Not Scary?}
\newblock \emph{arXiv:1510.06096 [cs, math, stat]}, pp.\  1--6, 2015.

\bibitem[Swirszcz et~al.(2016)Swirszcz, Czarnecki, and Pascanu]{Swirszcz2016}
Grzegorz Swirszcz, Wojciech~Marian Czarnecki, and Razvan Pascanu.
\newblock {Local minima in training of deep networks}.
\newblock \emph{arXiv:1611.06310}, pp.\  1--13, 2016.

\bibitem[Tian(2017)]{Tian2017}
Yuandong Tian.
\newblock {Symmetry-Breaking Convergence Analysis of Certain Two-layered Neural
  Networks with ReLU nonlinearity}.
\newblock \emph{Submitted to ICLR}, 2017.

\bibitem[Welch(1974)]{Welch1974}
L.~Welch.
\newblock {Lower bounds on the maximum cross correlation of signals}.
\newblock \emph{IEEE Transactions on Information Theory}, 20\penalty0
  (3):\penalty0 397--399, may 1974.
\newblock ISSN 0018-9448.
\newblock \doi{10.1109/TIT.1974.1055219}.

\bibitem[Xie et~al.(2016)Xie, Liang, and Song]{Xie2016}
Bo~Xie, Yingyu Liang, and Le~Song.
\newblock {Diversity Leads to Generalization in Neural Networks}.
\newblock pp.\  1--23, 2016.

\bibitem[Yu(1992)]{Yu1992}
Xiao~Hu Yu.
\newblock {Can Backpropagation Error Surface Not Have Local Minima}.
\newblock \emph{IEEE Transactions on Neural Networks}, 3\penalty0 (6):\penalty0
  1019--1021, 1992.
\newblock ISSN 19410093.
\newblock \doi{10.1109/72.165604}.

\bibitem[Zhang et~al.(2017{\natexlab{a}})Zhang, Bengio, Hardt, Recht, and
  Vinyals]{Zhang2016}
Chiyuan Zhang, Samy Bengio, Moritz Hardt, Benjamin Recht, and Oriol Vinyals.
\newblock {Understanding deep learning requires rethinking generalization}.
\newblock In \emph{ICLR}, 2017{\natexlab{a}}.

\bibitem[Zhang et~al.(2017{\natexlab{b}})Zhang, Panigrahy, Sachdeva, and
  Rahimi]{Zhang2017}
Qiuyi Zhang, Rina Panigrahy, Sushant Sachdeva, and Ali Rahimi.
\newblock {Electron-Proton Dynamics in Deep Learning}.
\newblock \emph{arXiv:1702.00458}, pp.\  1--31, 2017{\natexlab{b}}.

\bibitem[Zhong et~al.(2017)Zhong, {Zhao Song}, Jain, Bartlett, and
  Dhillon]{Zhong2017}
Kai Zhong, Ut-Austin {Zhao Song}, Prateek Jain, Peter~L. Bartlett, and
  Inderjit~S. Dhillon.
\newblock {Recovery Guarantees for One-hidden-layer Neural Networks}.
\newblock \emph{ICML}, jun 2017.

\bibitem[Zhou \& Feng(2017)Zhou and Feng]{Zhou2017}
Pan Zhou and Jiashi Feng.
\newblock {The Landscape of Deep Learning Algorithms}.
\newblock may 2017.

\end{thebibliography}
\end{document}